\title{\textbf{Joint Community Detection and Rotational Synchronization via Semidefinite Programming}}
\author{Yifeng Fan\thanks{Department of Electrical and Computer Engineering,  University of Illinois at Urbana-Champaign, Champaign, IL.}
\and Yuehaw Khoo\thanks{Department of Statistics, University of Chicago, Chicago, IL.}
\and Zhizhen Zhao\footnotemark[1]}
\newtheorem{theorem}{Theorem}[section]
\newtheorem{lemma}[theorem]{Lemma}
\newtheorem{remark}{Remark}
\newcommand{\T}{\operatorname{Tr}}
\newcommand{\SO}{\mathrm{SO}}
\DeclareMathOperator*{\argmax}{arg\,max}
\DeclareMathOperator{\diag}{diag}
\begin{document}

\maketitle

\begin{abstract}
In the presence of heterogeneous data, where randomly rotated objects fall into multiple underlying categories, it is challenging to simultaneously classify them into clusters and synchronize them based on pairwise relations. This gives rise to the joint problem of community detection and synchronization. We propose a series of semidefinite relaxations, and prove their exact recovery when extending the celebrated stochastic block model to this new setting where both rotations and cluster identities are to be determined. Numerical experiments demonstrate the efficacy of our proposed algorithms and confirm our theoretical result which indicates a sharp phase transition for exact recovery.
\end{abstract}

{\bf Keywords:} Community detection, synchronization, semidefinite programming, rotation group, stochastic block model

\section{Introduction}
\label{sec:intro}

In the presence of heterogeneous data, where randomly rotated objects fall into multiple underlying categories, it is often challenging to simultaneously synchronize the objects and classify them into clusters. This gives rise to the joint problem of clustering and synchronization considered in this work, which is an emerging area that connects community detection~(clustering)~\cite{abbe2017community, fortunato2010community, newman2003structure,abbe2015exact, hajek2016achievinga, hajek2016achievingb} and synchronization~\cite{singer2011angular,boumal2016nonconvex, gao2019multi} as two fundamental problems in data science. Recently, several works discussed simultaneous classification and mapping (alignment)~\cite{bajaj2018smac,lederman2019representation} and proposed different models and algorithms.  In~\cite{bajaj2018smac}, the authors addressed simultaneous permutation group synchronization and clustering via a spectral method with rounding and used the consistency of the mapping for clustering. In~\cite{lederman2019representation}, the authors noticed that both rotational alignment and classification are problems over compact groups and proposed a harmonic analysis and semidefinite programming~(SDP) based approach for solving alignment and classification simultaneously.

A motivating example of the joint problem is the 2D class averaging step for cryo-electron microscopy~(cryo-EM) image denoising~\cite{frank2006,singer2011viewing,zhao2014rotationally}~(see Figure~\ref{fig:cryo_intro}). Given a collection of single particle images with unknown viewing directions~(views), it first clusters them into different groups such that images within each cluster have similar views and are almost identical up to in-plane rotational alignments, then it denoises by rotationally aligning and averaging images for each cluster. Usually, the raw images obtained in practice are extremely noisy, which poses a great challenge on the robustness of any algorithms for 2D class averaging. 

% A motivating example of the joint problem is the 2D class averaging in cryo-electron microscopy single particle reconstruction~\cite{frank2006,singer2011viewing,zhao2014rotationally}, see Figure~\ref{fig:cryo_intro} as an illustration. It aims to cluster images taken from similar viewing directions, rotationally align and average those images in order to denoise the experimental data. Joint clustering and synchronization is an emerging area that connects community detection~\cite{abbe2017community, fortunato2010community, newman2003structure,abbe2015exact, hajek2016achievinga, hajek2016achievingb} and synchronization~\cite{singer2011angular,boumal2016nonconvex, gao2019multi}. 

\begin{figure}[t!]
    \centering
    \vspace{-0.1cm}
    \includegraphics[width = 0.7\textwidth]{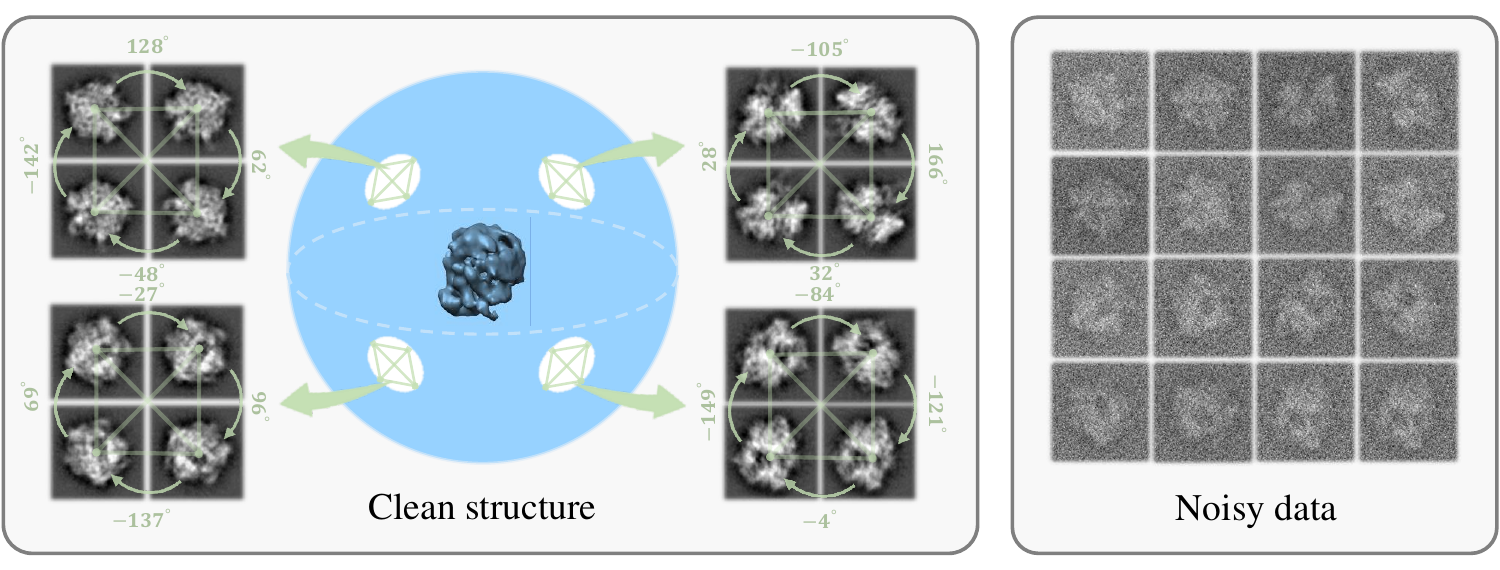}
    \vspace{-0.15cm}
    \caption{\textit{2D class averaging step in Cryo-EM image denoising}. \textit{Left}: Images of a single particle are taken from different viewing directions. Images with similar views are almost identical up to some in-plane rotational alignments which are listed. \textit{Right}: A collection of raw images in practice, which are extremely noisy.}
    \label{fig:cryo_intro}
\end{figure}

In this work, we study the \textit{joint community detection and rotational synchronization} problem under a specific probabilistic model, which extends the celebrated stochastic block model (SBM)~\cite{decelle2011asymptotic, doreian2005generalized, dyer1989solution, fienberg1985statistical, holland1983stochastic, karrer2011stochastic, massoulie2014community, mcsherry2001spectral, mossel2012stochastic, mossel2018proof} to incorporate both the community structure and pairwise rotations. In particular, we inherit the $G(n, p,q)$-SBM setting~\cite{li2018convex, hajek2016achievinga, abbe2015exact, hajek2016achievingb} as shown in Figure~\ref{fig:intro_model}. That is, given a network of size $n$ with $K$ underlying disjoint communities, a random graph $G = (V,E)$ with node set $V$ and edge set $E$ is generated such that each pair of vertices are independently connected with probability $p$ (resp.~$q$) if they belong to the same cluster (resp.~different clusters). In addition, each node $i$ is associated with an unknown rotation $\bm{R}_i \in \SO(d)$. For each edge $(i,j) \in E$, the pairwise alignment $\bm{R}_{ij}$ is observed in a way that, when $i$ and $j$ belong to the same cluster we have $\bm{R}_{ij} = \bm{R}_i\bm{R}_j^\top$ that is the clean measurement, otherwise $\bm{R}_{ij}$ is uniformly drawn from $\SO(d)$ that carries no information but noise. Notably, such noise model on $\bm{R}_{ij}$ is similar to the one considered in~\cite{singer2011viewing,fan2019representation,fan2019multi}, and it captures the key observations in cryo-EM images such that the rotational alignment for images with similar views can be estimated accurately, while for images with distant views, they cannot be well-aligned and thus the estimation is random and purely noisy.
% When $i$ and $j$ belong to the same cluster we observe the clean measurement $\bm{R}_{ij} = \bm{R}_i\bm{R}_j^\top$. When they are in different clusters, $\bm{R}_{ij}$ is assumed to be uniformly drawn from $\SO(d)$. Notably, such noise model on $\bm{R}_{ij}$ basically agrees with  the cryo-EM image model: For images with similar views, their rotational alignment $\bm{R}_{ij}$ is usually estimated accurately, while for images with different views, there does not exist a perfect alignment, then the resulting estimation of $\bm{R}_{ij}$ is purely noisy that carries no information. 

% The model considered here is different from the probabilistic model in~\cite{bajaj2018smac}. 
 
Given the model above, our goal is to recover the community structures and the rotations $\{\bm{R}_i\}_{i = 1}^n$. A naive two-stage approach is (1) applying existing methods for community detection, then (2) performing synchronization for each identified community separately. 
However, such an approach is sub-optimal since it does not take any advantage of the \emph{consistency} of alignments within each cluster and the \emph{inconsistency} of alignments across clusters. In other words, the identification of communities can benefit from exploiting such (in)consistency. For instance, given three nodes $i,j$ and $k$ within the same cluster, their rotational alignments should satisfy the \emph{cycle-consistency} as $\bm{R}_{ij}\bm{R}_{jk}\bm{R}_{ki} = \bm{I}_d$. In fact, the notion of cycle consistency has been used in synchronization problems~\cite{singer2011angular, nguyen2011optimization, singer2012vector, fan2019multi, fan2019unsupervised, shi2020message}.

% does not take advantage of the \emph{cycle consistency} of the alignments within each cluster and \emph{inconsistency} of the alignments across clusters, and thus the clustering result might be sub-optimal. 
% Instead, we can exploit such consistency of rotational alignments to further improve the identification of communities. For example, for three nodes $i,j$ and $k$, their pairwise alignments satisfy $\bm{R}_{ij}\bm{R}_{jk}\bm{R}_{ki} = \bm{I}_d$ if they belong to the same cluster. In fact, the notion of cycle consistency has already been used in synchronization problems~\cite{singer2011angular, nguyen2011optimization, singer2012vector, fan2019multi, fan2019unsupervised, shi2020message}.

\subsection{Our contributions}
To incorporate the consistency of alignments, we formulate a series of optimization problems in different settings which aim to simultaneously recover the community structures and individual rotations. To efficiently solve these non-convex programs, we apply semidefinite relaxation and each resulting \emph{semidefinite programming}~(SDP) is tailored to accommodate different cluster structures with known or unknown cluster sizes. For the case of two clusters, we provide dual certificates and derive sharp exact recovery conditions of the corresponding SDPs. In particular, we establish two novel concentration inequalities that our analysis relies on: (1) Given a series of i.i.d. random matrices $\{\bm{R}_i\}_{i=1}^n$ uniformly drawn from $\SO(d)$, we derive a sharp tail bound for the Frobenius norm $\|\sum_{i = 1}^n\bm{R}_i\|_\mathrm{F}$ at $d = 2$.
% \yf{According to the reviews, this needs to be rephrased} 
(2) Given an $m \times n$ block matrix $\bm{S} \in \mathbb{R}^{md \times nd}$ with i.i.d. blocks uniformly drawn from $\SO(d)$, we show the operator norm $\|\bm{S}\| \sim \sqrt{n} + \sqrt{m}$, which can be regarded as an extension of the operator norm bound of random matrices with i.i.d. entries~\cite{bandeira2016sharp, vershynin2018high,tao2012topics}. Our result improves the sharpness of the non-commutative Khintchine inequality by removing the multiplicative  $\sqrt{\log (n d)}$ factor. Similar phenomenon is also observed in~\cite{bandeira2021spectral} for bounding the operator norm of a different class of random block matrices, called random lifts of matrices.

\begin{figure}[t!]
    \centering
    \vspace{-0.2cm}
    \includegraphics[width = 0.63\textwidth]{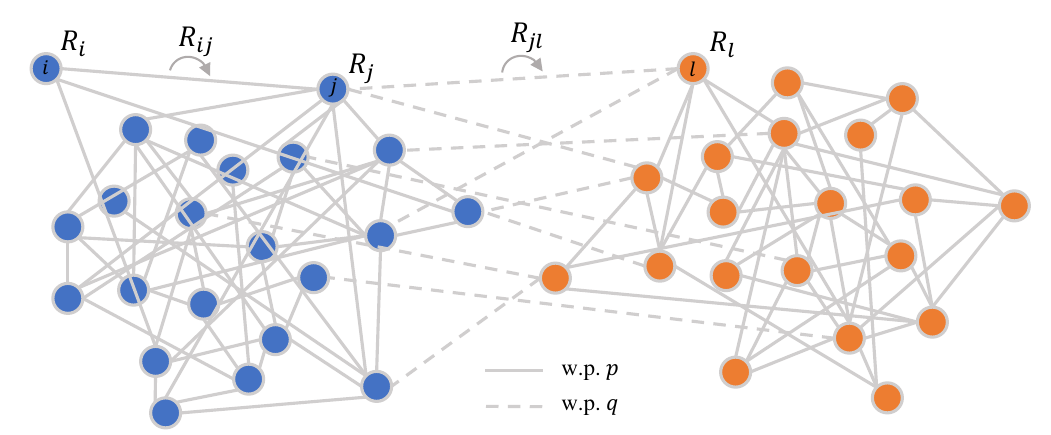}
    \vspace{-0.15cm}
    \caption{We show a network with two communities in blue and orange respectively. Each vertex $i$ is associated with a rotation $\bm{R}_i \in \SO(d)$. Vertices within the same cluster (resp.~across clusters) are independently connected with probability $p$ (resp.~$q$) that are shown in solid lines (resp.~dash lines). The pairwise alignment $\bm{R}_{ij}$ is observed on each edge $(i,j)$.}
    \label{fig:intro_model}
    % \vspace{-0.8cm}
\end{figure}

\subsection{Organization}
The rest of this paper is organized as follows. 
In Section~\ref{sec:model}, we formally define the model and formulate non-convex optimization problems for recovery. In Section~\ref{sec:method}, we propose SDP relaxations and establish the conditions for exact recovery in the case of two communities. Numerical results are given in Section~\ref{sec:exp} to evaluate the proposed SDPs and confirm our theorems. We conclude with some future directions in Section~\ref{sec:discussion_summary}. For clarity of presentation, most of the proofs are deferred to appendix and supplementary material. 

\subsection{Notations} 
\label{sec:notation}
Throughout this paper we use the following notations: given a matrix $\bm{X}$, its transpose is denoted by $\bm{X}^\top$. An $m \times n$ matrix of all zeros is denoted by $\bm{0}_{m \times n}$ (or $\bm{0}$ for brevity). An $n \times n$ identity matrix is denoted by $\bm{I}_n$. $\bm{1}_n \in \mathbb{R}^{n}$ represents a vector of length $n$ with all ones. 
For a real symmetric matrix $\bm{X}$, its maximum and minimum eigenvalues are denoted by $\lambda_{\text{max}}(\bm{X})$ and $\lambda_{\text{min}}(\bm{X})$ respectively. 
$\|\bm{X}\|$ and $\|\bm{X}\|_\mathrm{F}$ denote the operator norm and the Frobenius norm of $\bm{X}$ respectively. For two matrices $\bm{X}$ and $\bm{Y}$, their inner product is written as $\langle\bm{X}, \bm{Y}\rangle = \textrm{Tr}(\bm{X}^\top\bm{Y})$. For two non-negative functions $f(n)$ and $g(n)$, $f(n) = O(g(n))$ (resp.~$f(n) = \Omega(g(n))$) denotes if there exists a constant $C > 0$ such that $f(n) \leq Cg(n)$ (resp.~$f(n) \geq Cg(n)$) for all sufficiently large $n$, $f(n) = o(g(n))$ denotes for every constant $C > 0$ it holds $f(n) \leq Cg(n)$ for all sufficiently large $n$. Given a matrix $\bm{X}$ of size $md \times nd$, we usually treat it as an $m \times n$ block matrix such that $\bm{X}_{ij} \in \mathbb{R}^{d \times d}$ denotes its $(i,j)$-th block, and the $i$-th \textit{block row} (resp.~$j$-th \textit{block column}) of $\bm{X}$ is referred to as the sub-matrix that contains $\bm{X}_{ij}$ for all $j = 1, \ldots, n$ (resp.~$i = 1, \ldots, m$). 

\section{Our probabilistic model and problem formulation}
\label{sec:model}
Formally, we assume that $n$ agents in a network fall in $K$ underlying communities. Each agent $i$ has a cluster membership $\kappa(i) \in \{ 1, \dots, K\}$, and is associated with an unknown rotation $\bm{R}_i \in \SO(d)$. Let $C_k$ denotes the set of agents in the $k$-th cluster and $m_k = |C_k|$ is the cluster size. A random graph $G = (V, E)$ with node set $V$ and edge set $E$ is generated such that each pair of nodes $(i,j)$ is connected with probability $p$ if agents $i$ and $j$ are in the same cluster, i.e. $\kappa(i) = \kappa(j)$, and with probability $q$ if they are in different clusters, i.e. $\kappa(i) \neq \kappa(j)$. In addition, the rotational alignment $\bm{R}_{ij}$ is observed on each edge $(i,j) \in E$ in a way that if $\kappa(i) = \kappa(j)$, we obtain $\bm{R}_{ij} = \bm{R}_i\bm{R}_j^\top$ which equals to the truth alignment, while if $\kappa(i) \neq \kappa(j)$, we have $\bm{R}_{ij} \sim \text{Unif}(\mathrm{SO}(d))$ that is uniformly drawn from $\SO(d)$. 

Given the model, our observation on the graph $G$ can be represented by a symmetric block \textit{observation matrix} $\bm{A} \in \mathbb{R}^{nd \times nd}$, whose $(i,j)$-th block $\bm{A}_{ij} \in \mathbb{R}^{d \times d}$ for any $i \leq j$ satisfies 
\begin{equation}
  \bm{A}_{ij} = 
    \begin{cases}
    \bm{R}_i\bm{R}_j^\top,  &\; \text{with probability } p \text{ and when } \kappa(i) =  \kappa(j), \\
    \bm{R}_{ij} \sim \text{Unif}(\mathrm{SO}(d)), &\; \text{with probability } q \text{ and when } \kappa(i) \neq \kappa(j),\\
    \bm{0}, &\; \text{otherwise}. 
    \end{cases} 
    \label{eq:clean_observation}
\end{equation}
We set $\bm{A}_{ij} = \bm{A}_{ji}$, and the diagonal blocks $\bm{A}_{ii} = \bm{0}, i = 1,\ldots, n$.
As a result, $\bm{A}$ can be viewed as an extension of the adjacency matrix to incorporate not only the edge connections but also the pairwise rotational alignments. Besides, we denote the matrix $\bm{M}^* \in \mathbb{R}^{nd \times nd}$ as 
\begin{equation}
    \bm{M}_{ij}^* = 
    \begin{cases}
    \bm{R}_i\bm{R}_j^\top,  &  \kappa(i) = \kappa(j), \\
    \bm{0}, &\; \text{otherwise}.  
    \end{cases}
    \label{eq:ground_truth_def}
\end{equation}
As we can see, $\bm{M}^*$ is the \textit{clean observation matrix} such that $\bm{M}^*\!=\!\bm{A}$ when $p\!=\!1$ and $q\!=\!0$. 
Specifically, for the case of two clusters, assuming that the first $m_1$ nodes belong to $C_1$ and the remaining $m_2$ nodes belong to $C_2$, then $\bm{M}^*$ can be written as
\begin{equation}
\label{eq:M_ground_truth}
\bm{M}^* = \begin{pmatrix}
\widetilde{\bm{M}}_1^* & \bm{0} \\
\bm{0} & \widetilde{\bm{M}}_2^*
\end{pmatrix}
\end{equation}
where $\widetilde{\bm{M}}_1^* \in \mathbb{R}^{m_1 d \times m_1 d}$ and $\widetilde{\bm{M}}_2^* \in \mathbb{R}^{m_2 d \times m_2 d}$ represent $C_1$ and $C_2$ respectively.

% For each cluster $C_k$, if we define a block column matrix $\bm{V}^{(k)} \in \mathbb{R}^{nd \times d}$ whose $i$-th block $\bm{V}_{i}^{(k)} \in \mathbb{R}^{d \times d}$ satisfies
% \begin{equation}
%     \bm{V}_{i}^{(k)} = 
%     \begin{cases}
%     \bm{R}_i, &\; i \in C_k,\\
%     \bm{0}, &\; \text{otherwise},  
%     \end{cases}
%     \label{eq:V_k_definition}
% \end{equation}
% which indicates the cluster membership and individual rotations for nodes in $C_k$, then clearly $\bm{M}^* = \sum_{k = 1}^K \bm{V}^{(k)}(\bm{V}^{(k)})^\top$. Specifically, for the case of two clusters, assuming that the first $m_1$ nodes belong to $C_1$ and the remaining $m_2$ nodes belong to $C_2$, then $\bm{M}^*$ can be written as
% \begin{equation}
% \label{eq:M_ground_truth}
% \bm{M}^* = \bm{V}^{(1)} (\bm{V}^{(1)})^\top + \bm{V}^{(2)} (\bm{V}^{(2)})^\top = \begin{pmatrix}
% \widetilde{\bm{M}}_1^* & \bm{0} \\
% \bm{0} & \widetilde{\bm{M}}_2^*
% \end{pmatrix}
% \end{equation}
% where $\widetilde{\bm{M}}_1^* \in \mathbb{R}^{m_1 d \times m_1 d}$ and $\widetilde{\bm{M}}_2^* \in \mathbb{R}^{m_2 d \times m_2 d}$ represent $C_1$ and $C_2$ respectively.

\subsection{Problem formulation} 
Given the observation matrix $\bm{A}$, the maximum likelihood estimator for our probabilistic model in~\eqref{eq:clean_observation} is simply the set of rotations $\{ \bm{\mathcal{R}}_i\}_{i = 1}^n$, and the partition of nodes $\{\mathcal{C}_k\}_{k = 1}^K$ that satisfy as many equations $\bm{A}_{ij} = \bm{\mathcal{R}}_i \bm{\mathcal{R}}_j^\top $ as possible for $i$ and $j$ identified to be in the same community.
Therefore, we introduce the consistency score 
\begin{equation}
  CS(\bm{\mathcal{R}}_1, \dots, \bm{\mathcal{R}}_n; \; \mathcal{C}_1, \dots \mathcal{C}_K) =  \#\{(i,j) \in E \; | \; \bm{A}_{ij} = \bm{\mathcal{R}}_i\bm{\mathcal{R}}_j^\top \text{ and } \mathcal{K}(i) = \mathcal{K}(j)\},
    \label{eq:cs}
\end{equation}
where $\mathcal{K}(i)$ denotes the identified cluster for $i$. 
Then, maximizing~\eqref{eq:cs} is equivalent to maximizing the log-likelihood. We remark that \eqref{eq:cs} can be viewed as an extension of \cite[Eq.~(4)]{singer2011angular} which focuses on the angular synchronization problem with the random corruption model. 

The maximum likelihood estimation suffers from a major drawback that \eqref{eq:cs} is non-convex and thus computationally intractable to be exactly solved, especially when $n$ is large. In this paper, we consider a different approach that leads to an efficient solution by SDP, which can be solved in polynomial time complexity. 

Our approach starts from the following program which also measures the consistency of $\bm{A}_{ij}$ with $\bm{\mathcal{R}}_i \bm{\mathcal{R}}_j^\top$ for $i$ and $j$ identified to be in the same cluster:
% \zz{add ``which also measures the consistency of $\bm{A}_{ij}$ with $\bm{\mathcal{R}}_i \bm{\mathcal{R}}_j^\top$ for $i$ and $j$ identified to be in the same cluster,'' }
\begin{equation}
    \max_{\bm{\mathcal{R}}_1, \ldots, \bm{\mathcal{R}}_n \in \SO(d), \; \mathcal{C}_1, \ldots, \mathcal{C}_K} \sum_{i, j \in \mathcal{C}_k, \forall k} \left\langle \bm{A}_{ij}, \; \bm{\mathcal{R}}_{i} \bm{\mathcal{R}}_j^\top \right\rangle. 
    \label{eq:clustersync}
\end{equation}
As a result, \eqref{eq:clustersync} attempts to identify $\{\mathcal{C}_k\}_{k = 1}^K$ and $\{\bm{\mathcal{R}}_i\}_{i = 1}^n$ simultaneously such that the identification relies on both the edge connections and the consistency of alignments.  Furthermore, by introducing $\bm{M} \in \mathbb{R}^{nd \times nd}$ with its $(i,j)$-th block
\begin{equation}
    \bm{M}_{ij} = 
    \begin{cases}
    \bm{\mathcal{R}}_i \bm{\mathcal{R}}_j^\top,  &\; \mathcal{K}(i) =\mathcal{K}(j), \\
    \bm{0}, &\; \text{otherwise},
    \end{cases}
    \label{eq:Mform}
\end{equation} 
then we can recast \eqref{eq:clustersync} as the following program, where the objective is linear in $\bm{M}$:
\begin{equation}
    \begin{aligned}
        \max_{\bm{M}} &\; \left\langle \bm{A}, \bm{M}\right\rangle \quad 
        \mathrm{s.t.} \quad \text{$\bm{M}$ satisfies the form in \eqref{eq:Mform}}.
    \end{aligned}
    \label{eq:clustersync_matrix}
\end{equation}
If the cluster sizes $\{m_k\}_{k = 1}^K$ are known to us, additional constraints should be added as
\begin{equation}
    \begin{aligned}
        \max_{\bm{M}} \; \left\langle \bm{A}, \bm{M}\right\rangle \quad 
        \mathrm{s.t.} \quad \text{$\bm{M}$ satisfies the form in \eqref{eq:Mform},  $|\hat{C}_k| = m_k$ for $k = 1, \ldots, K$.}
    \end{aligned}
    \label{eq:clustersync_matrix_known}
\end{equation}
Since directly solving either \eqref{eq:clustersync_matrix} or \eqref{eq:clustersync_matrix_known} is still non-convex, in the following section we introduce semidefinite relaxations to efficiently solve them.

\section{Semidefinite relaxation}
\label{sec:method}
For simplicity, in Sections~\ref{sec:SDP_equal}--\ref{sec:rounding}, we consider the case of two clusters with the following three scenarios: (1) two cluster sizes $m_1\!=\!m_2$ are equal and known; (2) $m_1$ and $m_2$ are unequal and known; (3) $m_1$ and $m_2$ are unknown.
% \begin{itemize}
%     \vspace{0.1cm}
%     \item two cluster sizes $m_1 = m_2 = n/ 2$ are equal and known; 
%     \vspace{0.1cm}
%     \item two cluster sizes $m_1, m_2$ are unequal and known;
%     \vspace{0.1cm}
%     \item two cluster sizes are unknown.
%     \vspace{0.1cm}
% \end{itemize}
For each case, we develop its tailored SDP and provide corresponding performance guarantee. In Section~\ref{sec:kcluster}, we extend our approach to general cases with an arbitrary number of underlying clusters.

\subsection{Two equal-sized clusters with known cluster sizes}
\label{sec:SDP_equal}
In this case, let $m = n/2$ be the cluster size, then the ground truth $\bm{M}^*$ in \eqref{eq:M_ground_truth} satisfies the following convex constraints:
\begin{enumerate}
    \vspace{0.1cm}
    \item $\bm{M}^*$ is positive semi-definite, i.e. $\bm{M}^* \succeq 0$.
    \vspace{0.1cm}
    \item Each diagonal block of $\bm{M}^*$ is an identity matrix, i.e. $\bm{M}^*_{ii} = \bm{I}_d$, $i = 1, \ldots, n$.
    \vspace{0.1cm}
    \item The $i$-th block row of $\bm{M}^*$ satisfies $\sum_{j = 1}^n \|\bm{M}^*_{ij}\|_\mathrm{F} \leq m\sqrt{d}$,  $i = 1,\ldots, n,\;  m = n/2.$
    % \vspace{0.05cm}
\end{enumerate}
Notably, the equality in the last constraint holds as $\sum_{j = 1}^n \|\bm{M}^*_{ij}\|_\mathrm{F}\! = \!m\sqrt{d}$. However, the norm equality constraint is non-convex and therefore a convex relaxation by replacing ``$=$'' with ``$\leq$'' is necessary. Based on these, a semidefinite relaxation of \eqref{eq:clustersync_matrix_known} can be stated as 
\begin{equation}
    \begin{aligned}
    \bm{M}_{\text{SDP}} &= \argmax_{\bm{M}} \quad \left\langle \bm{A}, \bm{M} \right\rangle\\
    \mathrm{s.t.} &\quad \bm{M} \succeq 0, \quad \bm{M}_{ii} = \bm{I}_d, \quad \sum_{j= 1}^n \|\bm{M}_{ij}\|_\mathrm{F} \leq m\sqrt{d}, \quad i = 1,\ldots, n.
    \end{aligned}
    \label{eq:SDP_equal}
\end{equation}
Then the resulting semidefinite program~(SDP) can be solved with polynomial complexity~\cite{vandenberghe1996semidefinite}. We remark that in the context of SBM, SDPs similar to \eqref{eq:SDP_equal} have been studied in~\cite{abbe2015exact, hajek2016achievinga} on two equal-sized clusters, where $\bm{M}_{ij} \in \mathbb{R}$ is instead a scalar and a convex equality constraint $\sum_{i = 1}^n \bm{M}_{ij}\!=\!m$ is imposed. In contrast, we have to use the inequality for convexity.

To characterize the performance of \eqref{eq:SDP_equal}, we identify the conditions on the model parameters $(p,q,n)$ such that \eqref{eq:SDP_equal} achieves exact recovery, i.e.,~$\bm{M}_{\text{SDP}} \!=\! \bm{M}^*$ as follows.
\begin{theorem}
\label{thm:2}
Let $p = \alpha \log n/n$ and $q = \beta \log n/n$. For sufficiently large $n$, \eqref{eq:SDP_equal} achieves exact recovery with probability $1 - n^{-\Omega(1)}$ if 
\begin{equation}
    \alpha - \sqrt{2\ell_d\beta} \log\left(\frac{e\alpha}{\sqrt{2\ell_d\beta}}\right) > 2, \quad \text{where} \quad  \ell_d = 
    \begin{cases}
    1, &\quad \text{when $d = 2$},\\
    2, &\quad \text{when $d > 2$}.
    \end{cases}
    \label{eq:2_cluster_equal_d_2}
\end{equation}
\label{the:2_cluster_equal_d_2}
\end{theorem}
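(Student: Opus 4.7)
The plan is to use the dual-certificate approach. I will show that with high probability under the given model, one can construct dual multipliers $(\bm{Z}, \{\bm{D}_i\}, \{\lambda_i\})$ together with subgradient matrices $\{\bm{G}_{ij}\}$ satisfying the KKT optimality conditions of \eqref{eq:SDP_equal}, thereby certifying that $\bm{M}^*$ is the unique optimal solution. Strong duality for the convex SDP holds by Slater's condition, so $\bm{M}^*$ is optimal if and only if there exist $\bm{Z}\succeq 0$, $\{\bm{D}_i\}$, $\{\lambda_i\ge 0\}$, and $\{\bm{G}_{ij}\}$ with $\bm{G}_{ij}\in\partial\|\bm{M}^*_{ij}\|_\mathrm{F}$ such that $\bm{Z} = \mathrm{blockdiag}(\bm{D}_i) + \bm{G}^* - \bm{A}$ (stationarity), $\bm{Z}\bm{V}^{(1)} = \bm{Z}\bm{V}^{(2)} = \bm{0}$ (complementary slackness), and $\dim\ker(\bm{Z}) = 2d$ (uniqueness). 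Because $\bm{M}^*_{ij}$ equals the rotation $\bm{R}_i\bm{R}_j^\top$ on same-cluster pairs and vanishes on cross-cluster pairs, the subgradient $\bm{G}_{ij}$ is uniquely determined (and equals $\bm{R}_i\bm{R}_j^\top/\sqrt{d}$) for same-cluster pairs, while for cross-cluster pairs it is free inside the unit Frobenius ball.

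Guided by the structure of $\bm{M}^*$, I take $\lambda_i$ to be a common constant $\lambda$ and select the cross-cluster subgradients $\bm{G}_{ij}$ so that both (a) the complementary-slackness equation $\sum_{j\in C_k}\bm{Z}_{ij}\bm{R}_j = \bm{0}$ holds for $i\notin C_k$ and (b) the resulting $\bm{Z}$ is positive semidefinite. Concretely, (a) reduces to requiring that the cross-cluster subgradients absorb the random sum $\bm{T}_i := \sum_{j\in C_k}\bm{A}_{ij}\bm{R}_j$, namely $2\lambda\sum_{j\in C_k}\bm{G}_{ij}\bm{R}_j = \bm{T}_i$. The diagonal multipliers $\bm{D}_i$ are then pinned by the within-cluster complementary slackness so that $\bm{Z}_{ii}$ is a scalar multiple of $\bm{I}_d$ depending on $d_i^{\mathrm{in}}-(m-1)\mu$ with $\mu = 2\lambda/\sqrt{d}$. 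Conjugating the within-cluster block $\bm{Z}_k$ by the block-orthogonal matrix $\mathrm{blockdiag}(\bm{R}_i)_{i\in C_k}$ reveals the tensor form $L_k\otimes\bm{I}_d$ for a scalar $m\times m$ matrix $L_k$ whose entries mix the within-cluster adjacency, the degree sequence, and the cross-cluster correction arising from the chosen $\bm{G}_{ij}$.

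To verify $\bm{Z}\succeq 0$ on the orthogonal complement of $\mathrm{range}(\bm{M}^*)$ with high probability, I decompose $L_k = \mathbb{E}L_k + (L_k-\mathbb{E}L_k)$. The mean part $\mathbb{E}L_k$ has spectral gap $\Omega(\alpha\log n)$ on $\bm{1}^\perp$, since it tracks the expected within-cluster connectivity. The fluctuation is bounded in operator norm using three ingredients: standard binomial concentration of the in-cluster degrees $d_i^{\mathrm{in}}$ around $(m-1)p$; the paper's sharp tail bound for $\|\sum_i\bm{R}_i\|_\mathrm{F}$ with $\bm{R}_i\sim\mathrm{Unif}(\SO(d))$, which distinguishes the two-dimensional case ($\ell=1$) from the higher-dimensional case ($\ell=2$); and the paper's operator-norm bound $\|\bm{S}\|\lesssim\sqrt{n}+\sqrt{m}$ for random $\SO(d)$ block matrices. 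Balancing signal against noise and taking a union bound over nodes, the condition $\alpha - \sqrt{2\ell\beta}\log(e\alpha/\sqrt{2\ell\beta}) > 2$ is exactly what makes the spectral gap dominate the fluctuation with probability $1-n^{-\Omega(1)}$.

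The hardest part is constructing the cross-cluster subgradients $\bm{G}_{ij}$ so that complementary slackness is met without inflating the operator-norm fluctuation of $\bm{Z}$. Because $\bm{G}_{ij}$ must absorb $\bm{T}_i$ while respecting $\|\bm{G}_{ij}\|_\mathrm{F}\le 1$, the admissible range of $\lambda$ is squeezed between a lower bound of order $\|\bm{T}_i\|_\mathrm{F}/|N_i^{\mathrm{cross}}\cap C_k|$ (needed for (a)) and an upper bound set by the requirement that $\mu(m-1)$ not exceed the algebraic connectivity of the in-cluster graph (needed for (b)). The sharpness of the threshold in \eqref{eq:2_cluster_equal_d_2} therefore traces directly to large-deviation estimates for $\|\bm{T}_i\|_\mathrm{F}$, and its refinement at $d=2$ versus $d>2$ is exactly what produces the constant $\ell$ in the stated recovery condition.
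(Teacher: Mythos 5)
Your proposal is correct in outline and follows essentially the same route as the paper: a KKT dual certificate for \eqref{eq:SDP_equal}, reduction of positive semidefiniteness on $\mathcal{R}(\bm{M}^*)^{\perp}$ to a spectral-gap-versus-fluctuation comparison, and the same three probabilistic ingredients (binomial degree concentration, the sharp $d=2$ versus $d>2$ bound on $\bigl\|\sum_i \bm{R}_i\bigr\|_\mathrm{F}$ giving $\ell$, and the operator-norm bound for random $\SO(d)$ block matrices), which together produce the threshold \eqref{eq:2_cluster_equal_d_2} exactly as in the paper. The only differences are minor: you fix a common row-sum multiplier $\lambda$ pinned by $\max_i\|\bm{T}_i\|_\mathrm{F}$ whereas the paper uses per-node $\mu_i=\|\widetilde{\bm{\alpha}}_i\|_\mathrm{F}$ (the analysis uses $\max_i$ anyway, so the condition is unchanged), and your step (a) leaves implicit the explicit cross-block subgradient solving the coupled row/column null-space equations, which the paper exhibits in Lemma~\ref{lemma:guess_dual} via $\widetilde{\bm{\alpha}}_i$ with its $\frac{1}{2m^2}\sum_{j,s}\bm{A}_{sj}$ correction.
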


Here, the difference in \eqref{eq:2_cluster_equal_d_2} between the case $d\!=\!2$ and $d\!>\!2$ lies in the different concentration inequalities of random rotation matrices used in the proof (see Section~\ref{sec:sketch_proof_equal_main}). Also, we conjecture the result for $d > 2$ can be improved such that  $\ell_d$ can be less than $2$ (not necessarily an integer). When $d = 2$, empirically \eqref{eq:2_cluster_equal_d_2} is shown to be tight in a way that it sharply characterizes the phase transition boundary for exact recovery (see Figure~\ref{fig:two_equal_unequal_exp}). 

%In the noiseless case with $q = 0$, our model generates two disjoint Erd{\"o}s-R{\'e}nyi subgraphs $\mathcal{G}(m, p)$ corresponding to the two communities. One can see that when both subgraphs are disconnected, which is guaranteed almost surely if $p < \log m/m$~\cite{erdds1959random}, exact recovery of the pairwise alignments is impossible due to the lack of connections. Therefore, recovery becomes possible only if $p \geq \log m/m$. With this in mind, when $q = 0$ the condition \eqref{eq:2_cluster_equal_d_2} reduces to $\alpha > 2$, i.e. $p > 2\log n/n$, which coincides with the one given above (as $n$ is large). This implies the optimality of \eqref{eq:SDP_equal} when $q = 0$, as well as the sharpness of the condition \eqref{eq:2_cluster_equal_d_2}. \yf{Not sure if we need this paragraph}

\subsubsection{The sketch of proof}
\label{sec:sketch_proof_equal_main}
We outline the key steps for proving Theorem~\ref{the:2_cluster_equal_d_2} based on showing the dual certificate of the ground truth $\bm{M}_{\text{SDP}} = \bm{M}^*$ for \eqref{eq:SDP_equal}. The proofs of all the lemmas are deferred to Section~\ref{sec:proof_two_equal} in the supplementary material.
\vspace{0.05cm}
\paragraph{Step 1: Derive KKT conditions, uniqueness and optimality of $\bm{M}^*$} The Lagrangian of the optimization problem \eqref{eq:SDP_equal} is,
\begin{equation*}
    L(\bm{M}, \bm{\Lambda}, \bm{Z}, \bm{\mu}) = -\left\langle\bm{A}, \bm{M}\right\rangle - \left\langle \bm{\Lambda}, \bm{M} \right\rangle - \sum_{i = 1}^n \left\langle \bm{Z}_i, \bm{M}_{ii} - \bm{I}_d \right\rangle - \sum_{i = 1}^n \bigg \langle \mu_i, m\sqrt{d} - \sum_{j = 1}^n  \left \|\bm{M}_{ij} \right \|_\mathrm{F} \bigg \rangle,
    \label{eq:lagrangian_equal_main}
\end{equation*}  
where $\bm{\Lambda}$, $\bm{Z} = \mathrm{diag}\left(\left\{\bm{Z}_i\right\}_{i = 1}^n \right)$ and $\bm{\mu} = \left\{\mu_i\right\}_{i = 1}^n$ are the dual variables associated with the constraints $\bm{M} \succeq 0, \; \bm{M}_{ii} = \bm{I}_d$ and $\sum_{j=1}^n\|\bm{M}_{ij}\|_\mathrm{F} \leq m\sqrt{d}$ for $i = 1,\ldots, n$, respectively. Then the KKT conditions at $\bm{M} = \bm{M}^*$ are listed as,
\begin{alignat}{2}
    &\bullet \; \textit{Stationarity:} \; &&-\bm{A} - \bm{\Lambda} - \mathrm{diag}(\{\bm{Z}_i\}_{i = 1}^n) + \bm{\Theta} + \bm{\Theta}^{\top}  = \bm{0}, \label{eq:KKT_stationarity_main}\\
    & &&\bm{\Theta} = \left [ \bm{\Theta}_{ij} \right ]_{i, j = 1}^n, \quad
    \begin{cases} \displaystyle
    \bm{\Theta}_{ij} = \mu_i\bm{M}_{ij}^*/\sqrt{d}, & \bm{M}_{ij}^* \neq \bm{0}, \\
    \|\bm{\Theta}_{ij}\|_\mathrm{F} \leq \mu_i  & \bm{M}_{ij}^* = \bm{0}.
    \end{cases}
    \label{eq:theta_definition_main}\\
    &\bullet \; \textit{Comp. slackness:} \quad \quad  &&\left\langle \bm{\Lambda}, \bm{M}^* \right\rangle = 0, \quad \bigg \langle \mu_i, m\sqrt{d} - \sum_{j = 1}^n \left \|\bm{M}_{ij}^* \right \|_\mathrm{F} \bigg \rangle = 0, \; i = 1, \ldots, n. \label{eq:KKT_complementary_main}\\
    &\bullet \; \textit{Dual feasibility:} \; &&\bm{\Lambda} \succeq 0, \quad \mu_i \geq 0, \quad i = 1,\ldots, n. 
    \label{eq:KKT_dual_feasibility_main}
\end{alignat}
Notably, in \eqref{eq:theta_definition_main} we introduce a matrix of dual variables $\bm{\Theta} \in \mathbb{R}^{nd \times nd}$ that incorporates $\{\mu_i\}_{i = 1}^n$ when $\bm{M}^{*}_{ij} \neq \bm{0}$. When $\bm{M}_{ij}^* = \bm{0}$, $\bm{\Theta}_{ij}$ is not determined but can be chosen arbitrarily as long as $\|\bm{\Theta}_{ij}\|_\mathrm{F} \leq \mu_i$. This is because the derivative of $\|\bm{M}_{ij}\|_\mathrm{F}$ is undefined at $\bm{M}_{ij} = \bm{0}$, then we use subderivative instead.
% $\bm{\alpha}_{ij}$ in \eqref{eq:theta_definition_main} is an additional dual variable which represents the subderivative of $\|\bm{M}_{ij}\|_\mathrm{F}$ at $\bm{M}_{ij} = \bm{0}$ where the derivative is undefined. As a result, $\bm{\alpha}_{ij}$ is chosen arbitrarily as long as $\|\bm{\alpha}_{ij}\|_\mathrm{F} \leq 1$. Also, we introduce a matrix $\bm{\Theta}$ that incorporates both $\{\mu_i\}_{i = 1}^n$ and $\{\bm{\alpha}_{ij}\}_{i,j}$}. 
Given the above, the following establishes the uniqueness and optimality of $\bm{M}^*$:
\begin{lemma}
Given $\bm{M}^*$ defined in \eqref{eq:ground_truth_def}, suppose there exist dual variables $\bm{\Lambda}$, $\bm{Z}$, and $\bm{\Theta}$
% dual variables $\bm{\Lambda}$, $\bm{Z}$, $\{ \mu_i \}_{i = 1}^n$ and $\left\{\bm{\alpha}_{ij}\right\}_{(i, j): M^*_{ij} = \bm{0}}$ %, $\bm{\alpha} = \left\{\bm{\alpha}_{ij}\right\}$ and $\mu$ 
that satisfy the KKT conditions \eqref{eq:KKT_stationarity_main} - \eqref{eq:KKT_dual_feasibility_main} as well as 
\begin{equation}
    \mathcal{N}(\bm{\Lambda}) = \mathcal{R}(\bm{M}^*),
    \label{eq:1_main}
\end{equation}
where $\mathcal{N}(\cdot)$ and $\mathcal{R}(\cdot)$ denote the null space and column space respectively. Then $\bm{M}^*$ is the optimal and unique solution to \eqref{eq:SDP_equal}.
\label{lemma:1_main}
\end{lemma}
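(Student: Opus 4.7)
The plan is a standard SDP duality argument: use the stationarity equation to rewrite $\langle \bm{A},\bm{M}\rangle$ in terms of the dual variables, use feasibility of any candidate $\bm{M}$ together with dual feasibility to produce an upper bound, and then check via the complementary slackness conditions that this upper bound is attained at $\bm{M}^{*}$. Uniqueness comes from tracing through the case of equality and exploiting the null-space condition $\mathcal{N}(\bm{\Lambda}) = \mathcal{R}(\bm{M}^{*})$.

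First I would solve stationarity \eqref{eq:KKT_stationarity_main} for $\bm{A}$ and take an inner product with an arbitrary feasible $\bm{M}$. Because $\bm{M}$ is symmetric, $\langle \bm{\Theta}+\bm{\Theta}^{\top}, \bm{M}\rangle = 2\langle \bm{\Theta}, \bm{M}\rangle$, so
\begin{equation*}
\langle \bm{A}, \bm{M}\rangle \;=\; -\langle \bm{\Lambda}, \bm{M}\rangle \;-\; \sum_{i=1}^n \T(\bm{Z}_i) \;+\; 2\langle \bm{\Theta}, \bm{M}\rangle,
\end{equation*}
where I used $\bm{M}_{ii}=\bm{I}_d$ to simplify the diagonal term. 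Now $\bm{M},\bm{\Lambda}\succeq 0$ forces $\langle \bm{\Lambda}, \bm{M}\rangle \ge 0$; and $\|\bm{\Theta}_{ij}\|_{\mathrm F}\le \mu_i$ (because $\|\bm{M}^*_{ij}/\sqrt{d}\|_{\mathrm F}=1$ on the support and $\|\bm{\alpha}_{ij}\|_{\mathrm F}\le 1$ off the support), so Cauchy--Schwarz block-wise together with the row-sum constraint $\sum_j \|\bm{M}_{ij}\|_{\mathrm F}\le m\sqrt{d}$ yields $\langle \bm{\Theta}, \bm{M}\rangle \le m\sqrt{d}\sum_i \mu_i$. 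Plugging in gives $\langle \bm{A}, \bm{M}\rangle \le -\sum_i \T(\bm{Z}_i) + 2m\sqrt{d}\sum_i \mu_i$. Evaluating each step at $\bm{M}^{*}$ shows the bound is tight: $\langle \bm{\Lambda}, \bm{M}^{*}\rangle=0$ by \eqref{eq:KKT_complementary_main}, and $\langle \bm{\Theta}, \bm{M}^{*}\rangle = m\sqrt{d}\sum_i \mu_i$ since each nonzero block contributes $\mu_i\|\bm{M}^{*}_{ij}\|_{\mathrm F}^{2}/\sqrt{d}=\mu_i\sqrt{d}$ and there are $m$ such blocks per row. This establishes optimality.

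For uniqueness, suppose some feasible $\bm{M}$ also attains the maximum. Equality forces $\langle \bm{\Lambda}, \bm{M}\rangle = 0$. Since $\bm{\Lambda},\bm{M}\succeq 0$, writing $\bm{\Lambda}^{1/2}\bm{M}\bm{\Lambda}^{1/2}=0$ gives $\mathcal{R}(\bm{M})\subseteq \mathcal{N}(\bm{\Lambda})=\mathcal{R}(\bm{M}^{*})$. Using $\bm{M}^{*}=\bm{V}^{(1)}(\bm{V}^{(1)})^{\top}+\bm{V}^{(2)}(\bm{V}^{(2)})^{\top}$ with the two blocks of columns supported on disjoint index sets, I can parametrize any such PSD $\bm{M}$ as
\begin{equation*}
\bm{M} \;=\; \sum_{k,\ell\in\{1,2\}} \bm{V}^{(k)} \bm{Q}_{k\ell}\, (\bm{V}^{(\ell)})^{\top}, \qquad \bm{Q}=\begin{pmatrix}\bm{Q}_{11}&\bm{Q}_{12}\\ \bm{Q}_{12}^{\top}&\bm{Q}_{22}\end{pmatrix}\succeq 0.
\end{equation*}
The diagonal constraint $\bm{M}_{ii}=\bm{I}_d$ with $\bm{R}_i\in\SO(d)$ then forces $\bm{Q}_{11}=\bm{Q}_{22}=\bm{I}_d$. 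For the off-diagonal block, orthogonal invariance of the Frobenius norm gives $\|\bm{M}_{ij}\|_{\mathrm F}=\|\bm{Q}_{12}\|_{\mathrm F}$ whenever $i\in C_1, j\in C_2$, so for $i\in C_1$ the row-sum constraint becomes $m\sqrt{d}+m\|\bm{Q}_{12}\|_{\mathrm F}\le m\sqrt{d}$, forcing $\bm{Q}_{12}=\bm{0}$ and hence $\bm{M}=\bm{M}^{*}$.

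The main delicate point is the uniqueness step: one must be careful in parametrizing the PSD cone of matrices whose range lies in $\mathcal{R}(\bm{M}^{*})$, and then use both the diagonal identity constraint (to kill $\bm{Q}_{11},\bm{Q}_{22}$) and the Frobenius row-sum constraint (to kill the cross term $\bm{Q}_{12}$). The first half---weak duality from stationarity---is routine modulo book-keeping of the two-sided $\bm{\Theta}+\bm{\Theta}^{\top}$ and verifying tightness of $\langle \bm{\Theta},\bm{M}^{*}\rangle$ at the prescribed value $m\sqrt{d}\sum_i \mu_i$.
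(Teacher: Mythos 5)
Your proof is correct and follows the same overall strategy as the paper (stationarity plus complementary slackness to certify optimality, then $\langle\bm{\Lambda},\bm{M}\rangle=0$ and the range condition $\mathcal{N}(\bm{\Lambda})=\mathcal{R}(\bm{M}^*)$ to force uniqueness), but it differs in two places worth noting. For optimality, the paper simply invokes convexity of \eqref{eq:SDP_equal} so that the KKT conditions are sufficient, whereas you re-derive this by hand as a weak-duality bound $\langle\bm{A},\bm{M}\rangle\le-\sum_i\T(\bm{Z}_i)+2m\sqrt{d}\sum_i\mu_i$ attained at $\bm{M}^*$; this is more explicit but equivalent, and it also streamlines the uniqueness step, since equality in your chain immediately yields $\langle\bm{\Lambda},\bm{M}\rangle=0$ (the paper reaches the same point by separately showing $\langle\bm{\Theta},\widetilde{\bm{M}}-\bm{M}^*\rangle\le0$). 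For uniqueness, your treatment is actually more careful than the paper's: the paper asserts that any alternative optimum must have the form $\bm{V}^{(1)}\bm{\Sigma}_1(\bm{V}^{(1)})^\top+\bm{V}^{(2)}\bm{\Sigma}_2(\bm{V}^{(2)})^\top$ with \emph{diagonal} $\bm{\Sigma}_k\succeq0$, silently discarding both the cross-coupling block and non-diagonal Gram factors, and then only uses $\widetilde{\bm{M}}_{ii}=\bm{I}_d$. You instead parametrize the full PSD cone with range in $\mathcal{R}(\bm{M}^*)$ via a $2d\times2d$ matrix $\bm{Q}\succeq0$, kill $\bm{Q}_{11},\bm{Q}_{22}$ with the diagonal constraint, and then need the row-sum constraint $\sum_j\|\bm{M}_{ij}\|_\mathrm{F}\le m\sqrt{d}$ to eliminate $\bm{Q}_{12}$ --- a step the paper's argument glosses over but which is genuinely required, since a nonzero cross block is consistent with $\mathcal{R}(\bm{M})\subseteq\mathcal{R}(\bm{M}^*)$ and $\bm{M}_{ii}=\bm{I}_d$ alone. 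So your write-up closes a small gap in the published proof rather than introducing one.
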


% In the rest of analysis, without loss of generality we assume the rotation matrix $\bm{R}_i = \bm{I}_d$ for each node $i$. Then for any pair of $(i,j)$ that $\kappa(i) = \kappa(j)$, we have $\bm{M}_{ij}^* = \bm{I}_d$ and denote $\bm{A}_{ij} = r_{ij}\bm{I}_d$, where $r_{ij} \in \{0,1\}$ is a Bernoulli random variable such that $\mathbb{P}\{r_{ij} = 1\} = p$.

% \vspace{0.1cm}
\paragraph{Step 2: Construct dual variables} 
% if $i$ and $j$ are in the same cluster, and with probability $q$ if $i$ and $j$ are in different clusters, $r_{ij} = 0$ with probability $1-p$ if $i$ and $j$ are in the same cluster, and with probability $1-q$ if $i$ and $j$ are in different clusters. 
Our next step is to find dual variables that satisfy the conditions in Lemma~\ref{lemma:1_main}. 
Without loss of generality, we assume the rotation matrix $\bm{R}_i\!=\!\bm{I}_d$ for each $i$. Then for any pair of $(i,j)$ that $\kappa(i)\!=\! \kappa(j)$, we have $\bm{M}_{ij}^*\!=\!\bm{I}_d$ and $\bm{A}_{ij}\!=\!r_{ij}\bm{I}_d$, where $r_{ij} \in \{0,1\}$ is a Bernoulli random variable such that $\mathbb{P}\{r_{ij}\!=\!1\}\!=\!p$.  Based on these, inspired by~\cite{li2018convex} we make the following guess for the dual variables.

\begin{lemma} 
The following variables satisfy the KKT conditions \eqref{eq:KKT_stationarity_main} -- \eqref{eq:KKT_complementary_main} and $\mu_i \geq 0, \; i = 1, \ldots, n$ in \eqref{eq:KKT_dual_feasibility_main}:
\begin{alignat*}{2}
    \bm{\Theta}_{ij}  &= 
    \begin{cases} \displaystyle
    \mu_i\bm{I}_d/\sqrt{d}, & \kappa(i) = \kappa(j),\\
    \widetilde{\bm{\alpha}}_i,  & \kappa(i) \neq \kappa(j),
    \end{cases} \quad &&\widetilde{\bm{\alpha}}_i = \frac{1}{m}\sum_{j: \kappa(i) \neq \kappa(j) }\bm{A}_{ij} - \frac{1}{2m^2} \sum_{j: \kappa(i) \neq \kappa(j)}\sum_{s: C(s) = \kappa(i)} \bm{A}_{sj},\\
    \mu_i  &= \|\widetilde{\bm{\alpha}}_i\|_\mathrm{F}, &&\bm{Z}_i = \bigg(\frac{m\mu_i}{\sqrt{d}} +\sum_{s: C(s) = \kappa(i)}\bigg(\frac{\mu_s}{\sqrt{d}} - r_{is}\bigg)\bigg) \bm{I}_d,
    % \bm{\Lambda} &= -\bm{A} - \mathrm{diag} \left (\{\bm{Z}_i \}_{i = 1}^n \right) + \bm{\Theta}  + \bm{\Theta}^{\top}. &&
    % \label{eq:Lambda}
\end{alignat*}
$\hspace{0.85cm} \bm{\Lambda} = -\bm{A} - \mathrm{diag} \left (\{\bm{Z}_i \}_{i = 1}^n \right) + \bm{\Theta}  + \bm{\Theta}^{\top}.$
\label{lemma:guess_dual}
\end{lemma}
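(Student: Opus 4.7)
The plan is direct verification. The dual variables $\bm{\Theta}, \{\mu_i\}, \{\bm{Z}_i\}$ are specified by explicit formulas, and $\bm{\Lambda}$ is then obtained by isolating it in the stationarity equation \eqref{eq:KKT_stationarity_main}: $\bm{\Lambda} = -\bm{A} - \mathrm{diag}(\{\bm{Z}_i\}_{i=1}^n) + \bm{\Theta} + \bm{\Theta}^\top$. Stationarity therefore holds by construction, so the remaining tasks are to check (a) that $\bm{\Theta}$ has the block form in \eqref{eq:theta_definition_main}, (b) $\mu_i \geq 0$, and (c) the two complementary slackness identities in \eqref{eq:KKT_complementary_main}. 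The semidefiniteness $\bm{\Lambda} \succeq 0$ and the null-space condition \eqref{eq:1_main} are intentionally not addressed here; they belong to the subsequent step of the proof.

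Items (a) and (b) are essentially by inspection. After the reduction to $\bm{R}_i = \bm{I}_d$, the block $\bm{M}^*_{ij}$ equals $\bm{I}_d$ when $C(i)=C(j)$ and $\bm{0}$ otherwise, so $\mu_i \bm{I}_d/\sqrt{d} = \mu_i \bm{M}^*_{ij}/\sqrt{d}$ within a cluster; across clusters I would take $\bm{\alpha}_{ij} := \widetilde{\bm{\alpha}}_i/\|\widetilde{\bm{\alpha}}_i\|_\mathrm{F}$, giving $\bm{\Theta}_{ij} = \mu_i \bm{\alpha}_{ij}$ with $\|\bm{\alpha}_{ij}\|_\mathrm{F} = 1 \leq 1$ as required. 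Positivity $\mu_i = \|\widetilde{\bm{\alpha}}_i\|_\mathrm{F} \geq 0$ is immediate. The norm-slackness identity $\mu_i(m\sqrt{d} - \sum_j \|\bm{M}^*_{ij}\|_\mathrm{F}) = 0$ holds because $\bm{M}^*$'s $i$-th block row has exactly $m$ identity blocks of Frobenius norm $\sqrt{d}$, so the scalar factor vanishes regardless of $\mu_i$.

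The one identity requiring computation is $\langle \bm{\Lambda}, \bm{M}^*\rangle = 0$. Substituting the definition of $\bm{\Lambda}$, and exploiting that $\bm{M}^*$ is block-diagonal across clusters (so only within-cluster blocks contribute) and symmetric (so $\langle \bm{\Theta}^\top, \bm{M}^*\rangle = \langle \bm{\Theta}, \bm{M}^*\rangle$), the quantity decomposes into three sums: $\langle \bm{A}, \bm{M}^*\rangle = d \sum_{C(i)=C(j)} r_{ij}$, $2\langle \bm{\Theta}, \bm{M}^*\rangle = 2m\sqrt{d}\sum_i \mu_i$, and $\sum_i \mathrm{Tr}(\bm{Z}_i)$. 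A short calculation, using the index swap $\sum_i \sum_{s:\, C(s)=C(i)} \mu_s = m \sum_s \mu_s$ (each $s$ is paired with the $m$ indices $i$ in its own cluster), gives $\sum_i \mathrm{Tr}(\bm{Z}_i) = 2m\sqrt{d}\sum_i \mu_i - d\sum_{C(i)=C(s)} r_{is}$, at which point the three contributions cancel exactly. This cancellation is precisely what the expression for $\bm{Z}_i$ was reverse-engineered to produce.

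There is no real obstacle here: the construction, adapted from the SBM dual-certificate approach in \cite{li2018convex}, is designed so that every KKT condition in scope either holds by fiat or collapses to the one-line identity above. The genuine difficulty of the theorem, namely certifying $\bm{\Lambda} \succeq 0$ with kernel exactly $\mathcal{R}(\bm{M}^*)$, is deferred to the next step and will draw on the novel concentration inequalities for random rotation matrices highlighted in Section~\ref{sec:intro}.
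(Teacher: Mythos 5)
Your verification is correct, and every identity you compute checks out: the trace bookkeeping $\sum_i \operatorname{Tr}(\bm{Z}_i) = 2m\sqrt{d}\sum_i \mu_i - d\sum_{C(i)=C(s)} r_{is}$ and the resulting cancellation in $\langle \bm{\Lambda}, \bm{M}^*\rangle$ are exactly right, and the remaining conditions (stationarity by construction, the form \eqref{eq:theta_definition_main} with $\bm{\alpha}_{ij} = \widetilde{\bm{\alpha}}_i/\|\widetilde{\bm{\alpha}}_i\|_\mathrm{F}$, $\mu_i \ge 0$, and the norm-slackness identity) are handled as the paper does, modulo the trivial edge case $\widetilde{\bm{\alpha}}_i = \bm{0}$, where one just takes $\bm{\alpha}_{ij}$ arbitrary. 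The route differs from the paper's in one substantive way, though. The paper does not verify $\langle \bm{\Lambda}, \bm{M}^*\rangle = 0$ by a trace computation; instead it imposes the stronger block-row equations \eqref{eq:Z_equality}--\eqref{eq:Theta_equality}, i.e.\ $\bm{\Lambda}\bm{M}^* = \bm{0}$, and spends most of the proof \emph{solving} them under the ansatz $\bm{\Theta}_{ij} = \widetilde{\bm{\alpha}}_i$ for all $j$ with $C(j)\neq C(i)$ — this is precisely what forces the displayed formula for $\widetilde{\bm{\alpha}}_i$, and complementary slackness then falls out as a corollary of $\bm{\Lambda}\bm{M}^*=\bm{0}$. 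Your argument is more economical for the lemma as stated and in fact shows the stated KKT conditions hold for \emph{any} $\widetilde{\bm{\alpha}}_i$ with $\mu_i = \|\widetilde{\bm{\alpha}}_i\|_\mathrm{F}$, since the cross-cluster blocks never meet $\bm{M}^*$; what it buys in brevity it pays for by leaving the specific choice of $\widetilde{\bm{\alpha}}_i$ unexplained and unverified, and that choice is exactly what makes the cross-cluster block rows of $\bm{\Lambda}$ vanish so that Lemma~\ref{lemma:simplify_lambda} can write $\bm{\Lambda} = (\bm{I}_{nd}-\bm{\Pi})\widetilde{\bm{\Lambda}}(\bm{I}_{nd}-\bm{\Pi})$ and obtain $\mathcal{N}(\bm{\Lambda}) = \mathcal{R}(\bm{M}^*)$ in \eqref{eq:1_main}. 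Since you explicitly defer that to the next step, this is a legitimate division of labor, but be aware the deferred step will need the row-sum identity for $\widetilde{\bm{\alpha}}_i$ that the paper establishes here.
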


When constructing the dual certificate, for $i,j$ that $\kappa(i)\neq \kappa(j)$, $\bm{\Theta}_{ij}$ can have many choices. Here we simply assume $\bm{\Theta}_{ij} = \widetilde{\bm{\alpha}}_i$ that is identical for all $j$ when $\kappa(i) \neq \kappa(j)$. This choice allows us to determine all dual variables via the KKT conditions and \eqref{eq:1_main}. Now it remains ensure $\bm{\Lambda} \succeq 0$ and \eqref{eq:1_main}. To this end, we use the following property for $\bm{\Lambda}$.
\begin{lemma} Given the dual variables in Lemma~\ref{lemma:guess_dual}, $\bm{\Lambda}$ can be expressed as
\begin{equation}
    \bm{\Lambda} = (\bm{I}_{nd} - \bm{\Pi})\underbrace{(\mathbb{E}[\bm{A}] - \bm{A} - \bm{Z} + p\bm{I}_{nd} )}_{=:\widetilde{\bm{\Lambda}}}(\bm{I}_{nd} - \bm{\Pi}), \quad \text{where} \quad \bm{\Pi} = \left(\frac{1}{m}\right)\bm{M}^*.
    \label{eq:lambda_equal_main}
\end{equation}
Then, $\bm{\Lambda} \succeq 0$ and $\mathcal{N}(\bm{\Lambda}) = \mathcal{R}(\bm{M}^*)$ in \eqref{eq:1_main} are satisfied as long as $\widetilde{\bm{\Lambda}} \succ 0$.
\label{lemma:simplify_lambda}
\end{lemma}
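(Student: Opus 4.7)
The plan is to establish the factored expression for $\bm{\Lambda}$ first, and then read off the PSD and null-space claims as direct consequences. To begin, I would verify that $\bm{\Pi} = (1/m)\bm{M}^*$ is an orthogonal projection onto $\mathcal{R}(\bm{M}^*)$. Assuming without loss of generality that $\bm{R}_i = \bm{I}_d$ for every $i$, block multiplication gives $(\bm{M}^*)^2 = m\bm{M}^*$ (block $(i,j)$ sums $m$ copies of $\bm{I}_d$ when $C(i)=C(j)$ and vanishes otherwise), hence $\bm{\Pi}^2 = \bm{\Pi} = \bm{\Pi}^\top$ and $\mathcal{R}(\bm{\Pi}) = \mathcal{R}(\bm{M}^*)$. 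Using that the Haar expectation on $\SO(d)$ vanishes for $d\geq 2$ by left-invariance, $\mathbb{E}[\bm{A}_{ij}] = p\bm{I}_d\,\mathbf{1}\{C(i)=C(j),\,i\neq j\}$, so $\mathbb{E}[\bm{A}] = p\bm{M}^* - p\bm{I}_{nd}$ and hence $\widetilde{\bm{\Lambda}} = p\bm{M}^* - \bm{A} - \bm{Z}$.

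Next, since $\bm{\Pi}\bm{M}^* = \bm{M}^*$, the $p\bm{M}^*$ term is annihilated by $(\bm{I}_{nd}-\bm{\Pi})$ on either side, and the target identity $\bm{\Lambda} = (\bm{I}_{nd}-\bm{\Pi})\widetilde{\bm{\Lambda}}(\bm{I}_{nd}-\bm{\Pi})$ reduces to
\[
\bm{\Theta} + \bm{\Theta}^\top \;=\; \bm{\Pi}(\bm{A}+\bm{Z}) + (\bm{A}+\bm{Z})\bm{\Pi} - \bm{\Pi}(\bm{A}+\bm{Z})\bm{\Pi}.
\]
I would check this block-by-block. For $C(i)\neq C(j)$, the diagonal matrix $\bm{Z}$ contributes nothing, and using the two-cluster identity $\{k:C(k)\neq C(i)\} = \{k:C(k)=C(j)\}$ the right-hand side expands to exactly $\widetilde{\bm{\alpha}}_i + \widetilde{\bm{\alpha}}_j^\top$; the factor $1/(2m^2)$ in the definition of $\widetilde{\bm{\alpha}}_i$ is chosen precisely so that the two identical double sums symmetrize into the coefficient $1/m^2$. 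For $C(i)=C(j)$, replacing $\bm{A}_{ks}$ by $r_{ks}\bm{I}_d$ within the cluster and inserting the formula for $\bm{Z}_i$, one checks that the row-sums of $r$, the cluster sum $\sum_s \mu_s$, and the double sum $\sum_{k,l\in C_1} r_{kl}$ all cancel pairwise, leaving $(\mu_i+\mu_j)\bm{I}_d/\sqrt{d}$, which equals $\bm{\Theta}_{ij}+\bm{\Theta}_{ji}^\top$.

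Once the factored form is established, the remaining assertions follow in a few lines. For any $\bm{x}$, $\bm{x}^\top\bm{\Lambda}\bm{x} = ((\bm{I}_{nd}-\bm{\Pi})\bm{x})^\top \widetilde{\bm{\Lambda}}\,((\bm{I}_{nd}-\bm{\Pi})\bm{x}) \geq 0$ under $\widetilde{\bm{\Lambda}}\succ 0$, giving $\bm{\Lambda}\succeq 0$. For the null-space equality, $\mathcal{R}(\bm{M}^*)\subseteq\mathcal{N}(\bm{\Lambda})$ follows from $(\bm{I}_{nd}-\bm{\Pi})\bm{M}^* = \bm{0}$; conversely, $\bm{\Lambda}\bm{x}=\bm{0}$ forces $\bm{x}^\top\bm{\Lambda}\bm{x}=0$, which together with $\widetilde{\bm{\Lambda}}\succ 0$ gives $(\bm{I}_{nd}-\bm{\Pi})\bm{x}=\bm{0}$, and therefore $\bm{x}=\bm{\Pi}\bm{x}\in\mathcal{R}(\bm{\Pi})=\mathcal{R}(\bm{M}^*)$. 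The main obstacle is the in-cluster case of the block identity: the specific coefficients appearing in $\bm{Z}_i$ (namely $m\mu_i/\sqrt{d}$, $\sum_{s:C(s)=C(i)}\mu_s/\sqrt{d}$, and $-\sum_{s:C(s)=C(i)} r_{is}$) are tuned precisely to enable this cancellation, so the bookkeeping must be carried out carefully. Once that identity is verified, both $\bm{\Lambda}\succeq 0$ and $\mathcal{N}(\bm{\Lambda})=\mathcal{R}(\bm{M}^*)$ follow immediately.
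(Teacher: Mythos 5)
Your proposal is correct and takes essentially the same route as the paper: observe $\mathbb{E}[\bm{A}] = p\bm{M}^* - p\bm{I}_{nd}$ so that the $\mathbb{E}[\bm{A}] + p\bm{I}_{nd}$ term is annihilated by $\bm{I}_{nd}-\bm{\Pi}$, verify $\bm{\Lambda} = (\bm{I}_{nd}-\bm{\Pi})(-\bm{A}-\bm{Z})(\bm{I}_{nd}-\bm{\Pi})$ by a direct block computation using the explicit forms of $\bm{\Theta}$ and $\bm{Z}$ from Lemma~\ref{lemma:guess_dual}, and then read off $\bm{\Lambda}\succeq 0$ and $\mathcal{N}(\bm{\Lambda})=\mathcal{R}(\bm{M}^*)$ from the projector structure of $\bm{\Pi}$. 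Your $d\times d$ blockwise check of $\bm{\Theta}+\bm{\Theta}^\top = \bm{\Pi}(\bm{A}+\bm{Z})+(\bm{A}+\bm{Z})\bm{\Pi}-\bm{\Pi}(\bm{A}+\bm{Z})\bm{\Pi}$ is just a reorganization of the paper's clusterwise factorization, and the cancellations you identify (row sums of $r$, the cluster sum of $\mu_s$, and the double sums) do go through exactly as claimed.
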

% \yk{I suggest you move this entire thing into lemma 5.5, and state these as a proof. This is way too important to just mention in passing} 

% \vspace{0.1cm}
\paragraph{Step 3: Find the condition for $\widetilde{\bm{\Lambda}} \succ 0$} 
Applying Weyl's inequality~\cite{stewart1998perturbation} yields $\lambda_{\text{min}}(\widetilde{\bm{\Lambda}}) \geq  \lambda_{\text{min}}(p\bm{I}_{nd} - \bm{Z}) - \|\mathbb{E}[\bm{A}] - \bm{\bm{A}}\|$.
Then for $\widetilde{\bm{\Lambda}} \succ 0$, it suffices to show
\begin{equation}
    \lambda_{\text{min}}(p\bm{I}_{nd} - \bm{Z}) > \|\mathbb{E}[\bm{A}] - \bm{\bm{A}}\|.
    \label{eq:cond_equal}
\end{equation}
To this end, we have the following lower bound for $\lambda_{\text{min}}(p\bm{I}_{nd} - \bm{Z})$. 

\begin{lemma}
Let $p = \alpha \log n/n, q = \beta \log n/n$. 
For $n$ that is sufficiently large, 
\begin{equation*}
    % \begin{aligned}
    \lambda_{\text{min}}(p\bm{I}_{nd} - \bm{Z}) \geq \min_i x_i - \max_i y_i - \max\{\epsilon_1, \epsilon_2 \} + p %\geq  \bigg(\frac{\tau}{2} - \sqrt{\frac{\ell_d(c+1)\beta}{2}} - \frac{2c}{3}\bigg)\log n + o(\log n)
    % \end{aligned}
    \label{eq:lambda_min}
\end{equation*}
%with probability $1 - n^{-\Omega(1)}$ for $\tau \in (0, \alpha]$ such that $1 -\frac{1}{2}\left(\alpha - \tau \log \left(\frac{e\alpha}{\tau}\right)\right) < 0$. Individually, 
where $x_i$, $y_i$, $\epsilon_1$ and $\epsilon_2$ are defined as
\begin{enumerate}
    \vspace{0.1cm}
    \item $x_i := \sum_{s: C(s) = \kappa(i)}r_{is}$, which satisfies
    \begin{equation*}
        \min_{i}x_i \geq \frac{\tau}{2}\log n \quad \text{with probability} \quad 1 - n^{1 -\frac{1}{2}\left(\alpha - \tau \log \left(\frac{e\alpha}{\tau} \right) + o(1)\right)},
    \end{equation*}
    for $\tau \in [0, \alpha)$ such that $1 -\frac{1}{2}\left(\alpha - \tau \log \left(\frac{e\alpha}{\tau} \right) + o(1)\right) < 0$;
    % for any $\tau \in [0, \alpha)$ such that $1 -\frac{1}{2}\left(\alpha - \tau \log \left(\frac{e\alpha}{\tau} \right) + o(1)\right) < 0$.
    \vspace{0.1cm}
    \item $y_i := \frac{m\mu_i}{\sqrt{d}}$, which satisfies
    \begin{equation*}
        \max_i y_i \leq
    \sqrt{\frac{\ell_d(c+1)\beta}{2}}\log n + o(\log n) \quad \text{with probability} \quad 1 - n^{-c}, \quad 
    \ell_d = 
    \begin{cases}
    1, &\; d = 2,\\
    2, &\; d > 2;
    \end{cases}
    \end{equation*}
    \item $\epsilon_{1} := \frac{1}{\sqrt{d}} \sum_{s \in C_1} \mu_s$ and $\epsilon_{2} := \frac{1}{\sqrt{d}} \sum_{s \in C_2} \mu_s$, which satisfy 
    \begin{equation*}
    \max\{\epsilon_1, \epsilon_2\} \leq \frac{2c}{3}\log n + o(\log n) \quad \text{with probability} \quad 1 - n^{-c}.
    \end{equation*}
\end{enumerate}
\label{claim:x_i_y_i_equal}
\end{lemma}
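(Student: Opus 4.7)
The three claims concern different random quantities and I would treat them separately. Parts (1) and (3) reduce to Chernoff-type estimates on sums of independent Bernoulli variables, while part (2) is the genuinely new piece and uses the sharp concentration inequality for sums of Haar-random rotations announced in the introduction. This latter step is the main obstacle and the source of the dimension-dependent constant $\ell$.

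For (1), $x_i=\sum_{s:\,C(s)=C(i),\,s\neq i} r_{is}$ is a sum of $m-1$ independent $\mathrm{Bernoulli}(p)$ variables with mean roughly $(\alpha/2)\log n$. I would apply the sharp Chernoff lower tail in its Kullback--Leibler form, $\mathbb{P}(x_i\le k)\le\exp(-m\,D(k/m\,\|\,p))$, expand $D$ for small $p$, and substitute $k=(\tau/2)\log n$. A direct computation gives exponent $\tfrac{1}{2}\bigl(\alpha-\tau\log(e\alpha/\tau)+o(1)\bigr)\log n$, and a union bound over $i=1,\dots,n$ yields the stated probability.

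For (2), I would first rewrite $\widetilde{\bm{\alpha}}_i$ in terms of the independent cross-cluster blocks $\bm{A}_{sj}=r'_{sj}\bm{R}_{sj}$ with $r'_{sj}\sim\mathrm{Bernoulli}(q)$ and $\bm{R}_{sj}\sim\mathrm{Unif}(\SO(d))$. The dominant contribution is $m^{-1}\sum_{j:\,C(j)\neq C(i)}\bm{A}_{ij}$, a sum over $N_i\sim\mathrm{Bin}(m,q)$ nonzero terms; the second, subtracted average is smaller by a factor of $m$ and is handled by the same argument. Condition on $N_i\le (1+o(1))mq=(1+o(1))(\beta/2)\log n$, which holds with high probability by Chernoff. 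For $d=2$, parametrize each $\bm{R}_k$ by $\theta_k\sim\mathrm{Unif}[0,2\pi)$ and apply Hoeffding's inequality separately to $\sum\cos\theta_k$ and $\sum\sin\theta_k$; since $\|\bm{R}\|_{\mathrm F}^2=2$, this yields $\|\sum_k\bm{R}_k\|_{\mathrm F}\lesssim\sqrt{2(c+1)N_i\log n}$ with probability $1-n^{-c-1}$ and hence the sharp constant $\ell=1$. For $d>2$ the scalar-entry argument is unavailable, so I would invoke a matrix-Bernstein (or Azuma) bound on $\|\sum_k\bm{R}_k\|$; this costs an extra factor of $\sqrt{2}$ and produces $\ell=2$. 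Rescaling by $1/m$, multiplying by $m/\sqrt{d}$ to form $y_i$, and union-bounding over $i$ delivers the advertised tail with confidence $1-n^{-c}$. The main obstacle lives here: getting $\ell=1$ for $d>2$ would require a substantially more delicate anti-concentration estimate for $\SO(d)$-valued sums, which is why the paper conjectures but does not establish the improvement.

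For (3), I would avoid the wasteful bound $\sum_{s\in C_k}\mu_s\le m\max_s\mu_s$ (which merely recovers the $y_i$ estimate) and instead use the triangle inequality on the definition of $\widetilde{\bm{\alpha}}_s$ to split $\mu_s$ into a local piece $\|m^{-1}\sum_j\bm{A}_{sj}\|_{\mathrm F}$ and a global piece $\|(2m^2)^{-1}\sum_{t,j}\bm{A}_{tj}\|_{\mathrm F}$ independent of $s$. Applying $\|\cdot\|_{\mathrm F}\le\sqrt{d}\,\|\cdot\|$ together with the crude $\|\sum_j\bm{A}_{sj}\|\le\sum_j\|\bm{A}_{sj}\|$, both contributions to $\sum_{s\in C_k}\mu_s$ are linear in the total cross-cluster edge count $N_{\mathrm{cross}}\sim\mathrm{Bin}(m^2,q)$. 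A Bernstein-style upper tail on $N_{\mathrm{cross}}$, whose mean is $\Theta(n\log n)$, yields $N_{\mathrm{cross}}\le m^2 q+O(cm\log n)$ with probability $1-n^{-c}$; substituting and tracking the constants produces the claimed $\tfrac{2c}{3}\log n$ bound on $\max\{\epsilon_1,\epsilon_2\}$. This piece is essentially routine once part (2) has been established.
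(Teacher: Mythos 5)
Your part (1) and the $d>2$ half of part (2) follow essentially the paper's route (binomial Chernoff/KL tail plus union bound, and matrix Bernstein as in Theorem~\ref{lemma:large_deviation_random_orthogonal}), but the other two pieces have genuine gaps. For part (3), the step ``$\|\cdot\|_\mathrm{F}\le\sqrt{d}\,\|\cdot\|$ together with $\|\sum_j\bm{A}_{sj}\|\le\sum_j\|\bm{A}_{sj}\|$'' throws away exactly the cancellation the lemma lives on. With that bound, the local contribution to $\epsilon_1=\frac{1}{\sqrt d}\sum_{s\in C_1}\mu_s$ is at least controlled only by $\frac{1}{m}\sum_{s\in C_1}\sum_{j\in C_2}r_{sj}=N_{\mathrm{cross}}/m$, whose \emph{mean} is $mq=\tfrac{\beta}{2}\log n$; no upper-tail estimate on $N_{\mathrm{cross}}$ can take you below its mean, so this route yields a leading term $\tfrac{\beta}{2}\log n$ rather than $\tfrac{2c}{3}\log n+o(\log n)$. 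Since in the proof of Theorem~\ref{the:2_cluster_equal_d_2} one sends $c\to0$ while $\beta$ is a fixed model parameter, this is not a cosmetic loss: it would change the threshold. The paper's argument keeps the Frobenius norm intact: because the cross-cluster blocks have $\mathbb{E}[\bm{A}_{sj}]=\bm 0$, Jensen gives $\mathbb{E}\|\sum_{j\in C_2}\bm{A}_{sj}\|_\mathrm{F}\le\sqrt{qmd}$, so each normalized summand $\epsilon_s:=\|\sum_{j\in C_2}\bm{A}_{sj}\|_\mathrm{F}/(m\sqrt d)$ is a $[0,1]$-bounded variable with mean $O(\sqrt{q/m})$ and variance $O(q/m)$, and scalar Bernstein applied to $\sum_{s\in C_1}\epsilon_s$ produces the $\tfrac{2c}{3}\log n$ (from the $Kt/3$ term) plus $o(\log n)$; this is Lemma~\ref{lemma:bound_concentration_Z_i}.

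For part (2) with $d=2$, your coordinate-wise Hoeffding argument does not deliver $\ell=1$, for two reasons. First, the conditioning $N_i\le(1+o(1))mq$ is not available at the probability scale required: $N_i\sim\mathrm{Binom}(m,q)$ has mean $\tfrac{\beta}{2}\log n$, and to survive the union bound over $i$ you need the event at level $1-n^{-(c+1)}$, where Chernoff only gives $N_i\le C(\beta,c)\,mq$ for a constant $C>1$, not $1+o(1)$. Second, even granting the conditioning, Hoeffding on $S_c=\sum_k\cos\theta_k$ and $S_s=\sum_k\sin\theta_k$ (each term of range $2$) gives $|S_c|,|S_s|\le\sqrt{2(c+1)N\log n}$, hence $\tfrac{1}{\sqrt d}\|\sum_k\bm R_k\|_\mathrm{F}=\sqrt{S_c^2+S_s^2}\le 2\sqrt{(c+1)N\log n}\approx\sqrt{2(c+1)\beta}\,\log n$, which corresponds to $\ell=4$ — worse even than matrix Bernstein. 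The loss is structural: bounding the two coordinates separately costs a factor $2$ in $S_c^2+S_s^2$, and swapping Hoeffding for scalar Bernstein with the true per-term variance $q/2$ reintroduces a non-negligible additive $\Theta(\log n)$ term, because here the deviation $t$ and the variance are both of order $\log n$ (Poisson-type regime). This is precisely why the paper proves Theorem~\ref{lemma:sum_orth_2} by computing all moments of $x_m=\tfrac1d\|\sum_i\bm X_i\|_\mathrm{F}^2$ jointly (showing $\mathbb{E}[x_m^k]\lesssim k!\,(qm)^k$, i.e.\ an exponential tail with rate $\sim 1/(qm)$) and applying Markov with $k\asymp c\log n$; only this joint treatment of the squared norm yields the sharp constant behind $\ell=1$.
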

% \begin{equation}
%     x_i := \sum_{s: C(s) = \kappa(i)}r_{is},\quad y_i := \frac{m\mu_i}{\sqrt{d}}, \quad \epsilon_{\kappa(i)} := \frac{1}{\sqrt{d}} \sum_{s: C(s) = \kappa(i)} \mu_s, \quad i = 1, \ldots, n,
%     \label{eq:xyeps}
% \end{equation}
% we obtain 
% \begin{equation}
%     \lambda_{\text{min}}(p\bm{I}_{nd} - \bm{Z}) = \min_i \left (x_i - y_i - \epsilon_{\kappa(i)} \right ) + p\geq \min_i x_i - \max_i y_i - \max\{\epsilon_1, \epsilon_2 \} + p.
%     \label{eq:lambda_min}
% \end{equation}
% This implies we can bound $\lambda_{\text{min}}(p\bm{I}_{nd} - \bm{Z})$ separately as the following.
% \begin{lemma}
% Let $p = \alpha \log n/n$ and $q = \beta \log n/n$.
% For $n$ that is sufficiently large, $x_i$ defined in \eqref{eq:xyeps} satisfies
% \begin{equation*}
%     \mathbb{P}\left\{ \min_{i}x_i \leq \frac{\tau}{2} \log n\right\} = n^{1 -\frac{1}{2}\left(\alpha - \tau \log \left(\frac{e\alpha}{\tau} \right) + o(1)\right)},
% \end{equation*}
% for $\tau \in (0, \alpha]$ such that $1 -\frac{1}{2}\left(\alpha - \tau \log \left(\frac{e\alpha}{\tau}\right)\right) < 0$. Also,  with probability $1 - n^{-c}$ for $c > 0$,
% \begin{equation*}
%     \begin{aligned}
%     \quad \max_i y_i \leq \left(\sqrt{(c+1)\beta}\right)\log n + o(\log n), \quad \max\{\epsilon_1, \epsilon_2\} \leq \frac{3c}{2}\log n + o(\log n).
%     \end{aligned}
% \end{equation*}
% Specifically when $d = 2$, we have 
% $\max_iy_i \leq \left(\sqrt{(c+1)\beta/2}\right)\log n + o(\log n)$.
% \label{claim:x_i_y_i_equal}
% \end{lemma}

Here, $x_i \sim \textrm{Binom}(m,p)$ follows a binomial distribution and is bounded by \cite[Lemma 2]{hajek2016achievinga}. For $y_i$, the key point is bounding $\|\sum_{i = 1}^m\bm{R}_i\|_\mathrm{F}$ where $\{\bm{R}_i\}_{i = 1}^m$ are i.i.d. random matrices uniformly drawn from $\SO(d)$. When $d > 2$, $\|\sum_{i = 1}^m\bm{R}_i\|_\mathrm{F}$ is bounded by matrix Bernstein inequality~\cite{tropp2015introduction}. Specifically when $d = 2$, a sharper result is obtained by explicitly computing the moments of $\|\sum_{i = 1}^m\bm{R}_i\|_\mathrm{F}$ and then apply Markov inequality~(see Appendix~\ref{sec:sum_random_orthogonal}). Notice the $(1\!+\!c)$ factor in the bound of $\max_i{y_i}$ comes from the union bound over all $y_i$. The bound for $\max\{\epsilon_1, \epsilon_2\}$ relies on the concentration of $\mu_i$ and is bounded by Bernstein's inequality~\cite{vershynin2018high}.

% The random variables $x_i \sim \textrm{Binom}(m,p)$ follows the binomial distribution, which can be bounded by the result in \cite[Lemma 2]{hajek2016achievinga}. Since $\mu_i = \| \widetilde{\bm \alpha}_i \|_\mathrm{F}$ according to Lemma~\ref{lemma:guess_dual}, the upper bounds for $\max_i y_i$ and $\max\{ \epsilon_1, \epsilon_2\}$ depend on concentration inequalities for $\|\sum_{i = 1}^m\bm{R}_i\|_\mathrm{F}$ where $\{\bm{R}_i\}_{i = 1}^m$ are i.i.d. random matrices uniformly drawn from $\SO(d)$, which are detailed in Appendix~\ref{sec:sum_random_orthogonal}\yk{please be more explicit on the dependence, and talk about the novelty of the bound when d=2}. 
% Based on Lemma~\ref{claim:x_i_y_i_equal}, we obtain $\lambda_{\text{min}}(p\bm{I}_{nd} - \bm{Z})\!=\! O(\log n)$ with high probability.

It remains to derive an upper bound of $\|\mathbb{E}[\bm{A}] - \bm{A}\|$, which has been studied~(e.g. \cite{hajek2016achievinga, lei2015consistency}) in the context of SBM where $\bm{A}$ is an adjacency matrix such that $\bm{A}_{ij} \in \{0, 1\}$. However, here $\bm{A}_{ij}$ is a random rotation matrix for $\kappa(i) \neq \kappa(j)$ rather than being $\{0, 1\}$-valued. To handle this, a key ingredient is bounding the operator norm of a random block matrix with i.i.d blocks~(see Theorem~\ref{lemma:spec_bound_S_12} in Appendix~\ref{sec:norm_rand_matrix}), which leads to the following result for $\|\mathbb{E}[\bm{A}] - \bm{A}\|$.
\begin{lemma} Let the two clusters be of size $m_1$ and $m_2$ $(m_1+m_2\!=\!n)$, and $p, q = \Omega(\log n/n)$. Then for $\bm{A}$ defined in \eqref{eq:clean_observation}, it satisfies
\begin{equation*}
     \|\mathbb{E}[\bm{A}] - \bm{A}\| \leq c_1(\sqrt{pm_1} + \sqrt{pm_2}) + c_2(\sqrt{qm_1} + \sqrt{qm_2}) + O(\sqrt{\log n})
     \label{eq:EA_A_equal}
\end{equation*}
with probability $1 - n^{-c}$ for $c > 0$, where $c_1, c_2 > 0$ are some universal constants.
\label{lemma:concentration_A}
\end{lemma}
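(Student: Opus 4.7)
The plan is to decompose $\bm{A} - \mathbb{E}[\bm{A}]$ into its within-cluster and across-cluster pieces according to the $2\times 2$ block partition induced by $(C_1,C_2)$, bound each piece separately, and combine via the triangle inequality. Writing
\begin{equation*}
\bm{A} - \mathbb{E}[\bm{A}] \;=\; \underbrace{\begin{pmatrix} \bm{W}_1 & \bm{0} \\ \bm{0} & \bm{W}_2 \end{pmatrix}}_{=:\bm{W}} \;+\; \underbrace{\begin{pmatrix} \bm{0} & \bm{S} \\ \bm{S}^\top & \bm{0} \end{pmatrix}}_{\text{across clusters}},
\end{equation*}
where $\bm{W}_k$ collects the centered within-cluster blocks $(r_{ij}-p)\bm{R}_i\bm{R}_j^\top$ and $\bm{S}$ collects the across-cluster blocks $r_{ij}\bm{R}_{ij}$ (mean-zero, since uniform $\SO(d)$ has zero mean for $d\ge 2$), we get $\|\bm{A}-\mathbb{E}[\bm{A}]\| \leq \|\bm{W}_1\| + \|\bm{W}_2\| + \|\bm{S}\|$.

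For each within-cluster block I would factor out the underlying rotations. Setting $\bm{D}_k = \diag(\{\bm{R}_i\}_{i\in C_k})$, which is an $m_kd\times m_kd$ block-diagonal orthogonal matrix, one checks $\bm{W}_k = \bm{D}_k (\bm{E}_k \otimes \bm{I}_d) \bm{D}_k^\top$, where $\bm{E}_k\in\mathbb{R}^{m_k\times m_k}$ is a scalar symmetric centered Bernoulli matrix with entries $r_{ij}-p$ off the diagonal. Because conjugation by an orthogonal matrix is an isometry, $\|\bm{W}_k\| = \|\bm{E}_k\|$, and this is exactly the object whose spectral norm is controlled in the standard SBM literature~\cite{hajek2016achievinga, lei2015consistency}, giving $\|\bm{E}_k\| \leq c_1 \sqrt{pm_k} + O(\sqrt{\log n})$ with probability $1-n^{-c}$.

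For the across-cluster block $\bm{S}$, whose independent blocks are $r_{ij}\bm{R}_{ij}$ with $r_{ij}\sim\mathrm{Bern}(q)$ and $\bm{R}_{ij}\sim\mathrm{Unif}(\SO(d))$, I would invoke Theorem~\ref{lemma:spec_bound_S_12}. The theorem as advertised handles dense $\SO(d)$-valued block matrices and yields a bound of the form $\sqrt{m_1}+\sqrt{m_2}$; to accommodate the Bernoulli mask I would either (i) extend the underlying moment/$\epsilon$-net argument to the weighted setting by noting that the Bernoulli weights leave blocks independent and contribute variance $q$ per block, giving the target $c_2(\sqrt{qm_1}+\sqrt{qm_2}) + O(\sqrt{\log n})$, or (ii) condition on the random support, use a Chernoff bound to control the maximum row and column Bernoulli counts at $qm_2$ and $qm_1$ up to $O(\sqrt{\log n})$ slack, and then apply the dense-case theorem on the induced smaller block matrix. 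Either route delivers $\|\bm{S}\| \leq c_2(\sqrt{qm_1}+\sqrt{qm_2})+O(\sqrt{\log n})$ with probability $1-n^{-c}$, and summing the three estimates yields the lemma.

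The principal obstacle lies in Step 3. A naive matrix-Bernstein application produces the looser bound $\sqrt{q\max(m_1,m_2)\log n}$, which carries an extraneous $\sqrt{\log n}$ factor and spoils the tight $c_2(\sqrt{qm_1}+\sqrt{qm_2})$ rate required to match the $(\alpha,\beta)$ threshold in Theorem~\ref{the:2_cluster_equal_d_2}. Removing that logarithmic slack is the entire purpose of Theorem~\ref{lemma:spec_bound_S_12}, and faithfully propagating its sharpness through the Bernoulli thinning is the step that must be executed carefully; all other pieces are standard concentration arguments once this bound is in hand.
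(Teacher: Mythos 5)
Your proposal is correct and follows essentially the same route as the paper: split $\mathbb{E}[\bm{A}]-\bm{A}$ into the two within-cluster diagonal blocks and the across-cluster block, reduce each within-cluster block to a centered Erd\H{o}s--R\'enyi adjacency matrix (your conjugation by $\bm{D}_k$ is just a coordinate-free version of the paper's WLOG $\bm{R}_i=\bm{I}_d$, under which $\widetilde{\bm{A}}_{11}=\bm{E}_1\otimes\bm{I}_d$) and invoke the standard $c_1\sqrt{pm_k}$ bound, then control the off-diagonal block by the random block matrix theorem. The only point to correct is your final paragraph: Theorem~\ref{lemma:spec_bound_S_12} as actually stated is already for the Bernoulli-thinned blocks $\bm{S}_{ij}=\bm{R}_{ij}$ with probability $q$ (and $\bm{0}$ otherwise), yielding $c_1(\sqrt{qm_1}+\sqrt{qm_2})+c_2\sqrt{\log n}$ directly, so no extension of its moment argument or conditioning on the support is needed --- your proposed route (i) is in fact how that theorem is proved in the paper, and here you may simply cite it.
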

% As a result, when $p, q = O(\log n/n)$, we have $\|\mathbb{E}[\bm{A}] - \bm{A}\| = O(\sqrt{\log n})$ with high probability. This implies, as $n$ is sufficiently large, \eqref{eq:cond_equal} holds if $\lambda_{\text{min}}(p\bm{I}_{nd} - \bm{Z}) = \Omega(\log n)$, which leads to our condition for exact recovery in \eqref{eq:2_cluster_equal_d_2}.

\begin{proof}[Proof of Theorem~\ref{the:2_cluster_equal_d_2}]
%Combining the individual bounds for $\min_i x_i$, $\max_i y_i$ and $\max\{\epsilon_1, \epsilon_2\}$ in Lemma~\ref{claim:x_i_y_i_equal} and applying the union bound, we obtain,
By applying the union bound on $\min_i x_i$, $\max_i y_i$ and $\max\{\epsilon_1, \epsilon_2\}$ in Lemma~\ref{claim:x_i_y_i_equal}, we obtain 
\begin{equation*}
    \lambda_{\text{min}}(p\bm{I}_d - \bm{Z}) \geq \bigg(\frac{\tau}{2} - \sqrt{\frac{\ell_d(c+1)\beta}{2}} - \frac{2c}{3}\bigg) \log n + o(\log n),
\end{equation*}
with probability $1 - n^{-\Omega(1)}$, as long as the condition in Lemma~\ref{claim:x_i_y_i_equal} that
\begin{equation}
    1 -\frac{1}{2}\left(\alpha - \tau \log\left(\frac{e\alpha}{\tau}\right)\right) < 0, \quad \tau \in (0,\alpha]
    \label{eq:min_x_i_bound_cond}
\end{equation}
is satisfied. Also, from Lemma~\ref{lemma:concentration_A}, we have $\|\mathbb{E}[\bm{A}] - \bm{A}\| = O(\sqrt{\log n})$ with high probability. This implies, as $n$ is large, \eqref{eq:cond_equal} holds if $\lambda_{\text{min}}(p\bm{I}_d - \bm{Z}) = \Omega(\log n)$, that is equivalent to $\frac{\tau}{2} - \sqrt{\frac{\ell_d(c+1)\beta}{2}} - \frac{2c}{3} > 0$, which reduces to $\tau > \sqrt{2\ell_d\beta}$ by taking $c \rightarrow 0$. It remains to ensure $\tau \in [0, \alpha)$ that satisfies \eqref{eq:min_x_i_bound_cond}. To this end, since the LHS of \eqref{eq:min_x_i_bound_cond} is monotonically increasing for $\tau \in [0, \alpha)$, by choosing $\tau \rightarrow \sqrt{2\ell_d\beta}$ the condition \eqref{eq:min_x_i_bound_cond} becomes \eqref{eq:2_cluster_equal_d_2}.
\end{proof}

\subsection{Two unequal-sized clusters with known cluster size}
\label{sec:SDP_unequal}
Now we move to the scenario where the two cluster sizes $m_1$ and $m_2$ are unequal and known to us.
In this case, the ground truth $\bm{M}^*$ still has the form in \eqref{eq:M_ground_truth} and convex relaxations similar to \eqref{eq:SDP_equal} can be applied. However, unlike the equal-sized case, the row-sum constraint satisfies $\sum_{j=1}^n \|\bm{M}_{ij}\|_\mathrm{F} \!=\! m_1\sqrt{d}$ if $i \in C_1$ and $\sum_{j=1}^n \|\bm{M}_{ij}\|_\mathrm{F}\!=\!m_2\sqrt{d}$ if $i \in C_2$. Therefore a convex constraint can be imposed as $\sum_{j=1}^n \|\bm{M}_{ij}\|_\mathrm{F} \!\leq\! \max\{m_1, m_2\}\sqrt{d}$. 
So this imposes an upper-bound on the cluster sizes, based on the size of the larger cluster. 
Besides, to incorporate the information of the smaller cluster, we consider an additional constraint on the sum of all blocks as $\sum_{i, j=1}^n \|\bm{M}_{ij}\|_\mathrm{F} \!\leq\! (m_1^2 + m_2^2)\sqrt{d} $. Given the above, the resulting SDP is 
\begin{equation}
    \begin{aligned}
    \bm{M}_{\text{SDP}} &= \argmax_{\bm{M}} \quad \left\langle \bm{A}, \bm{M} \right\rangle\\
    \mathrm{s.t.} &\quad \bm{M} \succeq 0, \quad \bm{M}_{ii} = \bm{I}_d, \quad \sum_{j = 1}^n\|\bm{M}_{ij}\|_\mathrm{F} \leq \max\{m_1, m_2\}\sqrt{d}, \quad\forall  i = 1,\ldots, n, \\
    & \quad \sum_{i, j = 1}^n\|\bm{M}_{ij}\|_\mathrm{F} \leq \left(m_1^2 +  m_2^2 \right) \sqrt{d}.
    \end{aligned}
    \label{eq:SDP_unequal_known}
\end{equation}

For \eqref{eq:SDP_unequal_known}, we also obtain the corresponding condition for exact recovery as the following. 
\begin{theorem}
Let $p = \alpha \log n/n, q = \beta \log n/n$. Suppose $m_1 = \rho n, m_2 = (1-\rho)n$ for some $\rho \in (1/2,1)$ such that $m_1 > m_2$. Let $\tau_1^*, \tau_2^* \in [0, \alpha)$ be the roots of the equations
\begin{equation}
    \alpha - \tau_1^* \log\left(\frac{e\alpha}{\tau_1^*}\right) = \frac{1}{\rho}, \quad \alpha - \tau_2^* \log\left(\frac{e\alpha}{\tau_2^*}\right) = \frac{1}{1-\rho}.
    \label{eq:tau_1_2_star}
\end{equation}
As $n$ is sufficiently large, \eqref{eq:SDP_unequal_known} achieves exact recovery with probability $1 - n^{-\Omega(1)}$ if 
\begin{equation}
\begin{cases}
\displaystyle \alpha > \frac{1}{1-\rho}, \quad & \text{when $\beta = 0$},\\
\displaystyle \alpha > \frac{1}{1-\rho}, \quad \frac{\tau_1^*}{\delta_2} > 1, \quad \text{and} \quad \frac{\tau_1^*}{\delta_1} > \max\left\{1 -  \frac{\tau_2^*}{2\delta_2},\; \frac{1}{2}\right\}, \quad & \text{when $\beta > 0$},
\end{cases}
\label{eq:cond_unequal_two_case}
\end{equation}
where 
$\delta_1\!:=\!\left(\frac{2}{\sqrt{\rho}} \!+\! \frac{1}{\sqrt{1-\rho}}\right)\sqrt{\ell_d\beta}, \; \delta_2\!:=\!\left(\frac{1}{\sqrt{\rho}} \!+\! \frac{1}{\sqrt{1-\rho}}\right)\sqrt{\ell_d\beta}$. $\ell_d=1$ if $d=2$ and $\ell_d=2$ if $d>2$. 
\label{the:2_cluster_unequal_d_2}
\end{theorem}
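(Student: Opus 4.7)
The plan is to extend the three-step dual-certificate roadmap used for Theorem~\ref{the:2_cluster_equal_d_2} to the SDP~\eqref{eq:SDP_unequal_known}, which differs from the equal-sized case by (i) the asymmetric per-row bound $\max\{m_1,m_2\}\sqrt{d}$ and (ii) the new global block-sum constraint $\sum_{i,j}\|\bm{M}_{ij}\|_\mathrm{F} \leq (m_1^2+m_2^2)\sqrt{d}$. Writing the Lagrangian introduces dual variables $\bm{\Lambda}\succeq 0$, $\bm{Z} = \diag(\{\bm{Z}_i\}_{i=1}^n)$, row multipliers $\mu_i\geq 0$, and a new scalar $\nu\geq 0$ for the global constraint. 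An analog of Lemma~\ref{lemma:1_main} then shows that it suffices to exhibit duals satisfying the KKT conditions together with $\mathcal{N}(\bm{\Lambda})=\mathcal{R}(\bm{M}^*)$.

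The dual construction is the main new step. At $\bm{M}=\bm{M}^*$, the per-row constraint is tight only for $i\in C_1$ (since $m_1>m_2$), so complementary slackness forces $\mu_i=0$ for $i\in C_2$, while the global constraint is always tight so $\nu\geq 0$ is unconstrained. I would introduce the effective multiplier $\widetilde{\mu}_i := \mu_i + \nu$ on $C_1$ and $\widetilde{\mu}_i := \nu$ on $C_2$, and mimic Lemma~\ref{lemma:guess_dual} by setting $\bm{\Theta}_{ij} = \widetilde{\mu}_i \bm{M}^*_{ij}/\sqrt{d}$ when $C(i)=C(j)$ and $\bm{\Theta}_{ij} = \widetilde{\bm{\alpha}}_i$ (independent of $j$) when $C(i)\neq C(j)$. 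The quantities $\widetilde{\bm{\alpha}}_i$, $\widetilde{\mu}_i$, $\nu$, and $\bm{Z}_i$ would then be pinned down purely in terms of $\bm{A}$ by combining stationarity~\eqref{eq:KKT_stationarity_main} with the two consistency relations obtained by applying $\mathcal{N}(\bm{\Lambda})\supseteq \mathcal{R}(\bm{M}^*)$ to $\bm{V}^{(1)}$ and $\bm{V}^{(2)}$ separately; in particular, $\nu$ is determined by the $C_2$-side relation (where $\mu_i=0$), after which the $C_1$-side relation fixes the remaining $\mu_i$.

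With the duals in hand I would reproduce the manipulation of Lemma~\ref{lemma:simplify_lambda} and write $\bm{\Lambda} = (\bm{I}_{nd}-\bm{\Pi})\widetilde{\bm{\Lambda}}(\bm{I}_{nd}-\bm{\Pi})$, where now $\bm{\Pi} = \frac{1}{m_1}\bm{V}^{(1)}(\bm{V}^{(1)})^\top + \frac{1}{m_2}\bm{V}^{(2)}(\bm{V}^{(2)})^\top$ is the projection onto $\mathcal{R}(\bm{M}^*)$, reducing the task to $\widetilde{\bm{\Lambda}}\succ 0$. Weyl plus Lemma~\ref{lemma:concentration_A} controls $\|\mathbb{E}[\bm{A}]-\bm{A}\|$ at scale $O(\sqrt{\log n})$, so the analysis reduces to separate tail bounds on $\bm{Z}_{ii}$ for $i\in C_1$ and $i\in C_2$. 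For $i\in C_1$, $\bm{Z}_i$ contains a $\mathrm{Binom}(m_1,p)$ lower-tail (producing the threshold $\tau_1^*$) offset by $\widetilde{\mu}_i$, which by the random-rotation-sum estimates of Appendix~\ref{sec:sum_random_orthogonal} is of order $\delta_1\log n$; the composite factor $2/\sqrt{\rho}+1/\sqrt{1-\rho}$ in $\delta_1$ collects the two cross-cluster contributions to $\widetilde{\bm{\alpha}}_i$ together with the global term $\nu$. For $i\in C_2$, the $\mathrm{Binom}(m_2,p)$ tail yields $\tau_2^*$ against only the $\nu$-contribution, producing the slimmer prefactor $\delta_2$.

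The main obstacle is disentangling $\mu_i$ and $\nu$: because $\nu$ is shared across all rows and is pinned by the $C_2$-consistency relation, the positivity certificate on $C_1$ is coupled to the $C_2$ tail, which is precisely why \eqref{eq:cond_unequal_two_case} takes the joint form $\tau_1^*/\delta_1 > \max\{1 - \tau_2^*/(2\delta_2),\,1/2\}$ together with $\tau_1^*/\delta_2 > 1$, rather than two decoupled thresholds. The degenerate case $\beta=0$ is treated separately and collapses cleanly: $\widetilde{\bm{\alpha}}_i\equiv \bm{0}$ forces $\widetilde{\mu}_i=\nu=0$, and only $\min_{i\in C_2}\sum_{s\in C_2} r_{is}\geq 1$ with high probability is needed, which by the binomial tail in Lemma~\ref{claim:x_i_y_i_equal} requires $\alpha > 1/(1-\rho)$. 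Finally a union bound over the events in Lemma~\ref{claim:x_i_y_i_equal} applied separately on $C_1$ and $C_2$, followed by taking $\tau_i\to \tau_i^*$, completes the proof.
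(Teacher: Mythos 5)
Your overall scaffolding — KKT conditions with the extra multiplier $\nu$, a uniqueness lemma, the factorization $\bm{\Lambda}=(\bm{I}_{nd}-\bm{\Pi})\widetilde{\bm{\Lambda}}(\bm{I}_{nd}-\bm{\Pi})$, Weyl plus Lemma~\ref{lemma:concentration_A}, and separate binomial tails on $C_1$ and $C_2$ giving $\tau_1^*,\tau_2^*$ — matches the paper's route. The gap is in the dual construction, which is exactly the step where the paper departs from the equal-size case. You propose to reuse the ansatz $\bm{\Theta}_{ij}=\widetilde{\bm{\alpha}}_i$ (independent of $j$) with effective multipliers $\mu_i+\nu$ on $C_1$ and $\nu$ on $C_2$. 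The paper instead uses a $j$-dependent, $\gamma$-weighted construction (Lemma~\ref{lemma:guess_dual_unequal_new}): $\bm{\Theta}_{ij}=\gamma\widetilde{\bm{\alpha}}_{ij}$ for $i\in C_1, j\in C_2$ and $\bm{\Theta}_{ij}=(1-\gamma)\widetilde{\bm{\alpha}}_{ji}^\top$ for $i\in C_2, j\in C_1$, with $\widetilde{\bm{\alpha}}_{ij}$ containing both a column average over $C_1$ and a row average over $C_2$, $\nu=(1-\gamma)\max\|\widetilde{\bm{\alpha}}_{ji}\|_\mathrm{F}$, and $\gamma\in[1/2,1]$ a free parameter optimized only at the very end. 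The condition \eqref{eq:cond_unequal_two_case} is literally the statement that some $\gamma\in[1/2,1]$ satisfies $\gamma\delta_1<\tau_1^*$, $\delta_2<\tau_1^*$ and $2(1-\gamma)\delta_2<\tau_2^*$ (Lemma~\ref{lemma:bound_nu_mu} plus the final optimization over $\gamma$); the quantities $\delta_1,\delta_2$ and the term $\max\{1-\tau_2^*/(2\delta_2),\,1/2\}$ have no source in a construction without this tunable weight.

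Concretely, with your $j$-independent ansatz the consistency relations pin $\nu\approx\max_{i\in C_2}\|\widetilde{\bm{\alpha}}_i\|_\mathrm{F}$, whose leading order is $\sqrt{\ell\beta/\rho}\,(\log n/n)\sqrt{d}$, and the resulting certificate requires roughly $\tau_2^*>2\sqrt{\ell\beta/\rho}$ on the $C_2$ side and $\tau_1^*>\delta_2$ on the $C_1$ side. That corresponds to a single fixed member of the paper's $\gamma$-family, so you lose the trade-off the theorem exploits: e.g.\ for $\alpha$ slightly above $1/(1-\rho)$ (so $\tau_2^*$ arbitrarily small) and $\beta$ such that $\tau_1^*>\delta_1$, the theorem's condition holds (the paper takes $\gamma\to 1$, which removes the $\tau_2^*$ requirement), yet in your construction $\bm{Z}_i$ for $i\in C_2$ exceeds $p$ by $\Omega(\log n)$, so $p\bm{I}_{nd}-\bm{Z}$ — and hence $\widetilde{\bm{\Lambda}}$, whose $i$-th diagonal block is $p\bm{I}_d-\bm{Z}_i$ — cannot be positive definite. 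Thus the proposal proves a strictly smaller recovery region than \eqref{eq:cond_unequal_two_case} and does not establish the theorem as stated. Relatedly, your explanation of the factor $2/\sqrt{\rho}$ in $\delta_1$ (``two cross-cluster contributions to $\widetilde{\bm{\alpha}}_i$ plus $\nu$'') does not reflect its actual origin, which is the $\gamma$-weighted combination $\max\{\gamma/\sqrt{\rho},\,(1-\gamma)(1/\sqrt{\rho}+1/\sqrt{1-\rho})\}+\gamma(1/\sqrt{\rho}+1/\sqrt{1-\rho})$ arising in Lemma~\ref{lemma:bound_nu_mu}. A minor further point: for $\beta=0$ you need $\min_{i\in C_2}\sum_{s\in C_2}r_{is}=\Omega(\log n)$ to dominate $\|\mathbb{E}[\bm{A}]-\bm{A}\|=O(\sqrt{\log n})$, not merely $\geq 1$; the binomial tail delivers this precisely when $\alpha>1/(1-\rho)$.
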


Notice that the function $f(\tau) := \alpha - \tau\log\left(e\alpha/\tau\right)$ monotonically decreases for $\tau \in [0, \alpha)$, also it satisfies $f(\alpha) = 0$ and $\lim_{\tau \rightarrow 0} f(\tau) = \alpha$. Therefore, there always exists a unique pair of solutions $\tau_1^*, \tau_2^*$ of \eqref{eq:tau_1_2_star} for $ \tau_1^*, \tau_2^* \in [0, \alpha)$ as long as $\alpha > 1/(1-\rho)$.

\subsubsection{The sketch of proof} 
\label{sec:proof_sketch_uneuqal_main}
We outline the key steps for proving Theorem~\ref{the:2_cluster_unequal_d_2} and leave the detail to supplementary material Section~\ref{sec:proof_two_unequal}. 
The steps are similar to those outlined in Section~\ref{sec:sketch_proof_equal_main}, while a considerable amount of effort is devoted to handle the imbalanced clusters and the additional constraint $\sum_{i, j = 1}^n\|\bm{M}_{ij}\|_\mathrm{F} \leq \left(m_1^2 +  m_2^2 \right) \sqrt{d}$. %Specifically, the major challenge is on constructing a set of good dual variables, which have to be built in a different way from the ones in Section~\ref{sec:sketch_proof_equal_main} in order to obtain a sharp condition for exact recovery. 

\vspace{0.15cm}
\paragraph{Step 1: Derive KKT conditions, uniqueness and optimality of $\bm{M}^*$}
The Lagrangian of \eqref{eq:SDP_unequal_known} is given as
\begin{equation*}
    \begin{aligned}
    L &= -\left\langle\bm{A}, \bm{M}\right\rangle - \left\langle\bm{\Lambda}, \bm{M}\right\rangle - \sum_{i = 1}^n\left\langle\bm{Z}_i, \bm{M}_{ii} - \bm{I}_d\right\rangle + \sum_{i = 1}^n \bigg\langle \mu_i, \sum_{j = 1}^n \|\bm{M}_{ij}\|_\mathrm{F} - m_1\sqrt{d}\bigg\rangle\\
    &\quad + \nu\bigg(\sum_{i, j}\|\bm{M}_{ij}\|_\mathrm{F} - (m_1^2 + m_2^2)\sqrt{d}\bigg),
    \end{aligned}
\end{equation*}
where $\nu$ is the dual variable associated with the extra constraint. Then the KKT conditions of \eqref{eq:SDP_unequal_known} when $\bm{M} = \bm{M}^*$ are given as
\begin{alignat}{2}
    &\bullet \; \textit{Stationarity:} \; &&-\bm{A} - \bm{\Lambda} - \diag(\{\bm{Z}_i\}_{i = 1}^n) + \bm{\Theta} + \bm{\Theta}^{\top}  = \bm{0}, \label{eq:KKT_stationarity_unequal_new} \\
    & &&\bm{\Theta} \!=\! [\bm{\Theta}_{ij}]_{i,j = 1}^n, \quad 
    \begin{cases}
    \displaystyle 
    \bm{\Theta}_{ij} \!=\! (\mu_i + \nu)\bm{M}_{ij}^*/\sqrt{d}, & \bm{M}_{ij}^* \neq \bm{0},\\
    \|\bm{\Theta}_{ij}\|_\mathrm{F} \leq \mu_i +\nu,  & \bm{M}_{ij}^* = \bm{0}.
    \end{cases} \label{eq:KKT_theta_unequal_new}\\
    &\bullet \; \textit{Comp. slackness:} \quad &&\left\langle \bm{\Lambda}, \bm{M}^* \right\rangle = 0, \quad \nu\bigg(\sum_{i, j}\|\bm{M}_{ij}^*\|_\mathrm{F} - (m_1^2 + m_2^2)\sqrt{d}\bigg) = 0, \label{eq:KKT_lambda_M_unequal_new}\\
    & && \bigg\langle \mu_i, m_1\sqrt{d} - \sum_{j = 1}^n \|\bm{M}^*_{ij}\|_{\mathrm{F}}\bigg\rangle = 0, \quad i = 1, \ldots, n,\label{eq:KKT_mu_i_unequal_new} \\
    % & && \nu\bigg(\sum_{i, j}\|\bm{M}_{ij}^*\|_\mathrm{F} - (m_1^2 + m_2^2)\sqrt{d}\bigg) = 0.
    % \label{eq:KKT_complementary_unequal_new}\\
    &\bullet \; \textit{Dual feasibility:} \; &&\bm{\Lambda} \succeq 0, \quad \nu \geq 0, \quad \mu_i \geq 0, \quad i = 1,\ldots, n. 
    \label{eq:KKT_dual_feasibility_unequal_new}
\end{alignat}
The following lemma establishes the uniqueness and optimality of $\bm{M}^*$:
\begin{lemma}
Given $\bm{M}^*$ defined in \eqref{eq:ground_truth_def}, suppose there exists dual variables $\bm{\Lambda}$, $\bm{Z}$, $\bm{\Theta}$, and $\nu$
% dual variables $\bm{\Lambda}$, $\bm{Z}$, $\{ \mu_i \}_{i = 1}^n$, $\left\{\bm{\alpha}_{ij}\right\}_{(i, j): M^*_{ij} = \bm{0}}$ and $\nu$ 
that satisfy the KKT conditions \eqref{eq:KKT_stationarity_unequal_new} - \eqref{eq:KKT_dual_feasibility_unequal_new} as well as 
\begin{equation}
    \mathcal{N}(\bm{\Lambda}) = \mathcal{R}(\bm{M}^*).
    \label{eq:unequal_N_R_new}
\end{equation}
Then $\bm{M}^*$ is the optimal and unique solution to \eqref{eq:SDP_unequal_known}. 
\label{lemma:uniqueness_unequal_new}
\end{lemma}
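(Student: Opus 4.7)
The plan is to follow the dual-certificate approach used for Lemma~\ref{lemma:1_main}, adapting it to accommodate the extra constraint $\sum_{i,j}\|\bm{M}_{ij}\|_\mathrm{F} \leq (m_1^2 + m_2^2)\sqrt{d}$ and its multiplier $\nu$, which now enters $\bm{\Theta}_{ij}$ alongside $\mu_i$. The overall strategy is to use the stationarity condition \eqref{eq:KKT_stationarity_unequal_new} to rewrite $\langle \bm{A}, \bm{M}\rangle$ for any feasible $\bm{M}$ in terms of the dual variables, bound each of the resulting terms using the primal constraints, and then show via complementary slackness that the bound is saturated at $\bm{M} = \bm{M}^*$. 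Uniqueness is then read off from the null-space condition \eqref{eq:unequal_N_R_new}.

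More concretely, for any feasible $\bm{M}$ I would substitute \eqref{eq:KKT_stationarity_unequal_new} to obtain
\[
\langle \bm{A}, \bm{M}\rangle \;=\; -\langle \bm{\Lambda}, \bm{M}\rangle \;-\; \sum_{i=1}^n \langle \bm{Z}_i, \bm{M}_{ii}\rangle \;+\; 2\sum_{i,j}\langle \bm{\Theta}_{ij}, \bm{M}_{ij}\rangle.
\]
The PSD constraints $\bm{\Lambda}, \bm{M}\succeq 0$ give $-\langle \bm{\Lambda}, \bm{M}\rangle \leq 0$; the diagonal constraint $\bm{M}_{ii} = \bm{I}_d$ makes $\sum_i\langle \bm{Z}_i, \bm{M}_{ii}\rangle$ a constant independent of $\bm{M}$. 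For the $\bm{\Theta}$ term I would split according to \eqref{eq:KKT_theta_unequal_new}: on blocks where $\bm{M}^*_{ij}\neq \bm{0}$, $\|\bm{M}^*_{ij}/\sqrt{d}\|_\mathrm{F} = 1$ and on blocks where $\bm{M}^*_{ij} = \bm{0}$, $\|\bm{\alpha}_{ij}\|_\mathrm{F}\leq 1$, so in both cases $|\langle \bm{\Theta}_{ij}, \bm{M}_{ij}\rangle| \leq (\mu_i + \nu)\|\bm{M}_{ij}\|_\mathrm{F}$. Grouping terms gives an upper bound
\[
\sum_{i,j}\langle \bm{\Theta}_{ij}, \bm{M}_{ij}\rangle \;\leq\; \sum_i \mu_i \sum_j \|\bm{M}_{ij}\|_\mathrm{F} \;+\; \nu \sum_{i,j}\|\bm{M}_{ij}\|_\mathrm{F} \;\leq\; \sum_i \mu_i m_1\sqrt{d} \;+\; \nu(m_1^2+m_2^2)\sqrt{d},
\]
using $\mu_i, \nu\geq 0$ from \eqref{eq:KKT_dual_feasibility_unequal_new} together with the row-sum and total-sum constraints. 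Evaluating the same chain at $\bm{M}^*$ and invoking the complementary slackness identities \eqref{eq:KKT_lambda_M_unequal_new}--\eqref{eq:KKT_mu_i_unequal_new} turns every inequality into an equality, so $\langle \bm{A}, \bm{M}\rangle \leq \langle \bm{A}, \bm{M}^*\rangle$, establishing optimality.

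For uniqueness, suppose $\bm{M}$ is any optimal feasible point. Then each inequality above must be tight; in particular $\langle \bm{\Lambda}, \bm{M}\rangle = 0$, and since $\bm{\Lambda}, \bm{M}\succeq 0$ this forces $\bm{\Lambda}\bm{M} = \bm{0}$, hence $\mathcal{R}(\bm{M}) \subseteq \mathcal{N}(\bm{\Lambda}) = \mathcal{R}(\bm{M}^*)$ by \eqref{eq:unequal_N_R_new}. Writing $\bm{M}^*$ in the block form \eqref{eq:M_ground_truth}, this range inclusion forces the cluster-off-diagonal blocks of $\bm{M}$ to vanish and each cluster-on-diagonal block to lie in a $d$-dimensional subspace spanned by the associated rotation vectors. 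Combined with the diagonal constraint $\bm{M}_{ii} = \bm{I}_d$, this pins down $\bm{M}_{ij} = \bm{R}_i\bm{R}_j^\top$ whenever $C(i)=C(j)$, yielding $\bm{M} = \bm{M}^*$.

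The main obstacle is bookkeeping around the coupled multiplier $\mu_i + \nu$: one must verify that the bound $|\langle \bm{\Theta}_{ij},\bm{M}_{ij}\rangle|\leq (\mu_i+\nu)\|\bm{M}_{ij}\|_\mathrm{F}$ is simultaneously saturated by $\bm{M}^*$ for both the per-row and the global constraints, which requires checking that $\bm{M}^*$ actually attains equality in $\sum_j \|\bm{M}^*_{ij}\|_\mathrm{F} = m_1\sqrt{d}$ (true only for $i\in C_1$, the larger cluster, and not for $i\in C_2$) and in $\sum_{i,j}\|\bm{M}^*_{ij}\|_\mathrm{F} = (m_1^2+m_2^2)\sqrt{d}$. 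Consistency with \eqref{eq:KKT_mu_i_unequal_new} then dictates that $\mu_i = 0$ for $i\in C_2$, a constraint the dual construction in the next step of the proof will have to respect; flagging this here clarifies why the dual certificate in Section~\ref{sec:proof_sketch_uneuqal_main} must treat the two clusters asymmetrically.
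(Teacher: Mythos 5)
Your proposal is correct and follows essentially the same route as the paper: the paper proves this lemma by repeating the argument of Lemma~\ref{lemma:1_main} with $\mu_i$ replaced by $\mu_i+\nu$, i.e., the same complementary-slackness chain bounding $\langle\bm{\Theta},\widetilde{\bm{M}}-\bm{M}^*\rangle\le 0$ via the row-sum and total-sum constraints (correctly noting that \eqref{eq:KKT_mu_i_unequal_new} forces $\mu_i=0$ for $i\in C_2$), then deducing $\langle\bm{\Lambda},\widetilde{\bm{M}}\rangle=0$ and using $\mathcal{N}(\bm{\Lambda})=\mathcal{R}(\bm{M}^*)$ with $\bm{M}_{ii}=\bm{I}_d$ to force $\widetilde{\bm{M}}=\bm{M}^*$. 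The only nuance is that in the uniqueness step the vanishing of the cross-cluster blocks is not a consequence of the range inclusion alone (e.g.\ $(\bm{V}^{(1)}+\bm{V}^{(2)})(\bm{V}^{(1)}+\bm{V}^{(2)})^\top$ has range inside $\mathcal{R}(\bm{M}^*)$ and identity diagonal blocks) but also needs the row-sum feasibility constraint, a detail the paper's own proof likewise leaves implicit.
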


\paragraph{Step 2: Construct dual variables} We follow the assumption that $\bm{M}^*_{ij} = \bm{I}_d$ and $\bm{A}_{ij} = r_{ij}\bm{I}_d$ for any $(i,j)$ that $\kappa(i) = \kappa(j)$. Then we make the following guess of the dual variables.
\begin{lemma}With a constant $\gamma \in [1/2, 1]$, the following forms of the variables satisfy \eqref{eq:KKT_stationarity_unequal_new} - \eqref{eq:KKT_mu_i_unequal_new} and $\nu \geq 0$, $\mu_i \geq 0$, for $i = 1, \ldots, n$ in \eqref{eq:KKT_dual_feasibility_unequal_new}.
\begin{align*}
    \widetilde{\bm{\alpha}}_{ij} &= \frac{1}{m_1}\sum_{s_1 \in C_1}\bm{A}_{s_1j} + \frac{1}{m_2}\sum_{s_2 \in C_2}\bm{A}_{is_2} - \frac{1}{m_1m_2}\sum_{s_1 \in C_1}\sum_{s_2 \in C_2}\bm{A}_{s_1s_2}, \quad i \in C_1, j \in C_2,\\
    \bm{\Theta}_{ij} &= 
    \begin{cases} \displaystyle
    (\mu_i + \nu)\bm{I}_d/\sqrt{d}, &  \kappa(i) = \kappa(j),\\
    \gamma \widetilde{\bm{\alpha}}_{ij}, & i \in C_1, j \in C_2,\\
    (1-\gamma) \widetilde{\bm{\alpha}}_{ji}^\top, & i \in C_2, j \in C_1,
    \end{cases} \quad \quad \mu_i = 
    \begin{cases}
        \displaystyle
        \max\{ 0, \gamma \cdot \max_{j \in C_2}\|\widetilde{\bm{\alpha}}_{ij}\|_\mathrm{F} - \nu\}, &\quad i \in C_1, \\
        \displaystyle
        0, &\quad i \in C_2,
    \end{cases}\\
    \nu &= (1-\gamma)\max_{i \in C_2, j \in C_1}\|\widetilde{\bm{\alpha}}_{ji}\|_\mathrm{F}, \hspace{0.25cm} 
    \bm{Z}_i = 
    \begin{cases}
    \displaystyle
    \bigg(\frac{m_1(\mu_i + \nu)}{\sqrt{d}} + \sum_{s \in C_1}\bigg(\frac{\mu_s + \nu}{\sqrt{d}} - r_{is}\bigg)\bigg)\bm{I}_d, &\quad i \in C_1,\\
    \displaystyle
    \bigg(\frac{2m_2\nu}{\sqrt{d}} - \sum_{s \in C_2} r_{is}\bigg)\bm{I}_d, &\quad i \in C_2, \, 
    \end{cases}
    \end{align*}
$\quad \bm{\Lambda} = -\bm{A} - \mathrm{diag} \left (\{\bm{Z}_i \}_{i = 1}^n \right) + \bm{\Theta}  + \bm{\Theta}^{\top}.$
\label{lemma:guess_dual_unequal_new}
\end{lemma}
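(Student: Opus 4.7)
The plan is to verify each of the KKT conditions in \eqref{eq:KKT_stationarity_unequal_new}--\eqref{eq:KKT_mu_i_unequal_new} together with the nonnegativity $\mu_i,\nu\geq 0$ in \eqref{eq:KKT_dual_feasibility_unequal_new}. Several items are immediate: stationarity \eqref{eq:KKT_stationarity_unequal_new} is satisfied by construction because the displayed $\bm{\Lambda}$ is defined to be $-\bm{A}-\diag(\{\bm{Z}_i\})+\bm{\Theta}+\bm{\Theta}^{\top}$; dual feasibility $\nu\geq 0$ holds since $\gamma\in[1/2,1]$, and $\mu_i\geq 0$ holds because it is explicitly a maximum with zero; the row slackness in \eqref{eq:KKT_mu_i_unequal_new} holds because, for $i\in C_1$, $\sum_j\|\bm{M}^*_{ij}\|_\mathrm{F}=m_1\sqrt{d}$ exactly, while for $i\in C_2$ we have $\mu_i=0$; and the second slackness in \eqref{eq:KKT_lambda_M_unequal_new} is immediate since a direct block count gives $\sum_{i,j}\|\bm{M}^*_{ij}\|_\mathrm{F}=(m_1^2+m_2^2)\sqrt{d}$.

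Next, I would verify the subgradient form of $\bm{\Theta}_{ij}$ required by \eqref{eq:KKT_theta_unequal_new} on the off-diagonal blocks, i.e.~show $\|\bm{\Theta}_{ij}\|_\mathrm{F}\leq\mu_i+\nu$ whenever $C(i)\neq C(j)$ so that $\bm{\alpha}_{ij}:=\bm{\Theta}_{ij}/(\mu_i+\nu)$ has Frobenius norm at most $1$. For $i\in C_1,\,j\in C_2$, this is exactly the bound $\gamma\|\widetilde{\bm{\alpha}}_{ij}\|_\mathrm{F}\leq\gamma\max_{j'\in C_2}\|\widetilde{\bm{\alpha}}_{ij'}\|_\mathrm{F}\leq\mu_i+\nu$, where the last inequality is the definition of $\mu_i$. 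For $i\in C_2,\,j\in C_1$, one uses $\mu_i=0$ and $\|\bm{\Theta}_{ij}\|_\mathrm{F}=(1-\gamma)\|\widetilde{\bm{\alpha}}_{ji}\|_\mathrm{F}\leq\nu$ by the definition of $\nu$.

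The remaining substantive task is the complementary slackness $\langle\bm{\Lambda},\bm{M}^*\rangle=0$. I would verify the stronger identity $\bm{\Lambda}\bm{M}^*=\bm{0}$, which, since $\bm{M}^*$ is block-diagonal with $\bm{I}_d$ entries on each cluster block, reduces to checking $\sum_{j\in C_k}\bm{\Lambda}_{ij}=\bm{0}$ for every $i$ and every $k\in\{1,2\}$. A convenient preliminary observation is that on off-diagonal blocks $\bm{\Theta}_{ij}+\bm{\Theta}_{ji}^{\top}$ equals $\widetilde{\bm{\alpha}}_{ij}$ when $i\in C_1,\,j\in C_2$ and $\widetilde{\bm{\alpha}}_{ji}^{\top}$ when $i\in C_2,\,j\in C_1$, so the split parameter $\gamma$ drops out of the final sum. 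The cross-cluster row sums then collapse to the algebraic identity $\sum_{k\in C_2}\widetilde{\bm{\alpha}}_{ik}=\sum_{k\in C_2}\bm{A}_{ik}$ for $i\in C_1$, which follows by expanding the three-term definition of $\widetilde{\bm{\alpha}}_{ik}$ and observing that its first and third terms telescope while the middle term collapses to $\sum_{s_2\in C_2}\bm{A}_{is_2}$; the mirror identity for $i\in C_2$ is analogous. The within-cluster row sums are then cancelled by the specific choice of $\bm{Z}_i$, which is precisely tuned so that $-\sum_{j\in C_k}\bm{A}_{ij}-\bm{Z}_i$ offsets the sum of $\bm{\Theta}_{ij}+\bm{\Theta}_{ji}^{\top}$ terms along the diagonal block.

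The main obstacle is less conceptual than careful bookkeeping: one must track the asymmetric roles of $C_1$ and $C_2$ induced by $\gamma$ and by the one-sided vanishing $\mu_i=0$ on $C_2$, and verify the telescoping hidden in the three-term $\widetilde{\bm{\alpha}}_{ij}$. The forms of $\widetilde{\bm{\alpha}}_{ij}$ and $\bm{Z}_i$ proposed in the lemma are essentially the unique choices that make all four block-row sums vanish, and this stronger identity $\bm{\Lambda}\bm{M}^*=\bm{0}$ is also what will be used in the subsequent analysis to certify $\mathcal{N}(\bm{\Lambda})=\mathcal{R}(\bm{M}^*)$.
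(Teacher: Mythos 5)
Your proposal is correct and follows essentially the same route as the paper: the paper obtains the same ansatz by imposing the block-row-sum identities that come from requiring $\bm{\Lambda}\bm{M}^*=\bm{0}$ (its equations determining $\bm{Z}_i$ and the cross-cluster sums of $\bm{\Theta}_{ij}+\bm{\Theta}_{ji}^\top$), sets $\mu_i=0$ on $C_2$ via the row slackness, and enforces the subgradient bound $\|\bm{\Theta}_{ij}\|_\mathrm{F}\le\mu_i+\nu$ exactly as you do. Your direct verification (the telescoping of $\widetilde{\bm{\alpha}}_{ij}$, the cancellation built into $\bm{Z}_i$, and the slackness checks) is precisely the ``one can check'' content of the paper's proof.
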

Furthermore, $\bm{\Lambda}$ can be simplified as the following form.
\begin{lemma}
Given the dual variables in Lemma~\ref{lemma:guess_dual_unequal_new}, $\bm{\Lambda}$ can be expressed as
\begin{equation*}
    \bm{\Lambda} = (\bm{I}_{nd} - \bm{\Pi})\underbrace{(\mathbb{E}[\bm{A}] - \bm{A} - \bm{Z} + p\bm{I}_{nd} )}_{=: \widetilde{\bm{\Lambda}}}(\bm{I}_{nd} - \bm{\Pi}) \quad \text{where} \quad \bm{\Pi} := 
    \begin{pmatrix}
    \widetilde{\bm{M}}^*_{1}/m_1 &\bm{0}\\
    \bm{0} &\widetilde{\bm{M}}^*_{2}/m_2
    \end{pmatrix}.
\end{equation*}
\label{lemma:simplify_lambda_unequal}
Then, $\bm{\Lambda} \succeq 0$ and $\mathcal{N}(\bm{\Lambda}) = \mathcal{R}(\bm{M}^*)$ are satisfied if $\widetilde{\bm{\Lambda}} \succ 0$.
\end{lemma}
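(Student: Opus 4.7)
The plan is to prove the claimed factorization of $\bm{\Lambda}$ by a direct block-wise computation and then derive the semidefiniteness and null-space statements as immediate consequences of the projector structure of $\bm{\Pi}$. Under the WLOG normalization $\bm{R}_i = \bm{I}_d$, each block $\widetilde{\bm{M}}^*_k$ has all its $d\times d$ sub-blocks equal to $\bm{I}_d$, so $\widetilde{\bm{M}}^*_k/m_k = \tfrac{1}{m_k}(\bm{1}_{m_k}\bm{1}_{m_k}^\top)\otimes \bm{I}_d$ is symmetric and idempotent, i.e., the orthogonal projector onto $\mathcal{R}(\bm{V}^{(k)})$. Since $\mathcal{R}(\bm{V}^{(1)}) \perp \mathcal{R}(\bm{V}^{(2)})$, the block-diagonal $\bm{\Pi}$ is the orthogonal projector onto $\mathcal{R}(\bm{V}^{(1)}) \oplus \mathcal{R}(\bm{V}^{(2)}) = \mathcal{R}(\bm{M}^*)$.

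Rewriting the stationarity formula from Lemma~\ref{lemma:guess_dual_unequal_new} as $\bm{\Lambda} = \widetilde{\bm{\Lambda}} + (\bm{\Theta} + \bm{\Theta}^\top) - \mathbb{E}[\bm{A}] - p\bm{I}_{nd}$ and expanding $(\bm{I}_{nd}-\bm{\Pi})\widetilde{\bm{\Lambda}}(\bm{I}_{nd}-\bm{\Pi}) = \widetilde{\bm{\Lambda}} - \bm{\Pi}\widetilde{\bm{\Lambda}} - \widetilde{\bm{\Lambda}}\bm{\Pi} + \bm{\Pi}\widetilde{\bm{\Lambda}}\bm{\Pi}$, the desired identity reduces to matching
$$\bm{\Theta} + \bm{\Theta}^\top - \mathbb{E}[\bm{A}] - p\bm{I}_{nd} \;=\; -\bm{\Pi}\widetilde{\bm{\Lambda}} - \widetilde{\bm{\Lambda}}\bm{\Pi} + \bm{\Pi}\widetilde{\bm{\Lambda}}\bm{\Pi}$$
block-by-block. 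For same-cluster pairs $(i,j)$ with $C(i)=C(j)=k$, left/right multiplication by $\bm{\Pi}$ averages each block of $\widetilde{\bm{\Lambda}}$ over rows/columns in $C_k$; using $\mathbb{E}[\bm{A}_{ij}] = p\bm{I}_d$ for same-cluster pairs, the $m_k(\mu_i+\nu)/\sqrt{d}$ offset in $\bm{Z}_i$ is engineered so that the remaining terms collapse to $(\mu_i + \nu + \mu_j + \nu)\bm{I}_d/\sqrt{d}$ off-diagonal and $2(\mu_i+\nu)\bm{I}_d/\sqrt{d}$ on the diagonal, matching $\bm{\Theta}_{ij}+\bm{\Theta}_{ji}^\top$. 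For cross-cluster blocks with $i\in C_1,\, j\in C_2$, the crucial simplification is that $\bm{\Theta}_{ij} + \bm{\Theta}_{ji}^\top = \gamma\widetilde{\bm{\alpha}}_{ij} + (1-\gamma)\widetilde{\bm{\alpha}}_{ij} = \widetilde{\bm{\alpha}}_{ij}$, so the asymmetry parameter $\gamma$ is invisible to the identity. The three sums in the definition of $\widetilde{\bm{\alpha}}_{ij}$ then correspond exactly to the column-average $m_1^{-1}\sum_{s_1\in C_1}\bm{A}_{s_1 j}$ from $-\bm{\Pi}\widetilde{\bm{\Lambda}}$, the row-average $m_2^{-1}\sum_{s_2\in C_2}\bm{A}_{i s_2}$ from $-\widetilde{\bm{\Lambda}}\bm{\Pi}$, and the double-average $-m_1^{-1}m_2^{-1}\sum_{s_1,s_2}\bm{A}_{s_1 s_2}$ from $\bm{\Pi}\widetilde{\bm{\Lambda}}\bm{\Pi}$, once one uses $\mathbb{E}[\bm{A}_{s_1 s_2}] = \bm{0}$ for cross-cluster entries and observes that $\bm{Z}$ is block-diagonal so its cross-cluster averaged contributions vanish.

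Once the factorization is in hand, $\bm{\Lambda}\succeq 0$ is immediate when $\widetilde{\bm{\Lambda}}\succ 0$, since $\bm{\Lambda}$ is then a congruence of a positive definite matrix by the self-adjoint operator $\bm{I}_{nd}-\bm{\Pi}$. The null-space identity follows from $v^\top \bm{\Lambda} v = \|\widetilde{\bm{\Lambda}}^{1/2}(\bm{I}_{nd}-\bm{\Pi})v\|^2$, which vanishes iff $(\bm{I}_{nd}-\bm{\Pi})v = 0$, iff $v\in\mathcal{R}(\bm{\Pi})=\mathcal{R}(\bm{M}^*)$. The main obstacle is the cross-cluster block verification in the previous paragraph: one must simultaneously track the $\gamma$-asymmetric form of $\bm{\Theta}$, the different expressions of $\bm{Z}_i$ on $C_1$ versus $C_2$, and the contribution of $\mathbb{E}[\bm{A}]$ across clusters, so that the three averages in $\widetilde{\bm{\alpha}}_{ij}$ emerge in the correct signs and scalings $1/m_1$, $1/m_2$, and $-1/(m_1 m_2)$ dictated by the imbalanced projector $\bm{\Pi}$.
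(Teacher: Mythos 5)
Your proposal is correct and follows essentially the same route as the paper: a block-wise verification of the factorization using the fact that $\bm{\Pi}$ is the orthogonal projector onto $\mathcal{R}(\bm{M}^*)$, followed by the standard congruence argument for $\bm{\Lambda}\succeq 0$ and $\mathcal{N}(\bm{\Lambda})=\mathcal{R}(\bm{M}^*)$. The only (immaterial) difference is bookkeeping: the paper first shows $\bm{\Lambda}=(\bm{I}_{nd}-\bm{\Pi})(-\bm{Z}-\bm{A})(\bm{I}_{nd}-\bm{\Pi})$ and then inserts $\mathbb{E}[\bm{A}]+p\bm{I}_{nd}$ via the annihilation identity $(\bm{I}_{nd}-\bm{\Pi})(\mathbb{E}[\bm{A}]+p\bm{I}_{nd})(\bm{I}_{nd}-\bm{\Pi})=\bm{0}$, whereas you absorb that term directly into the block matching.
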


%\textcolor{red}{
Different from Step 2 in Section~\ref{sec:sketch_proof_equal_main}, here, we adopt a new way to construct $\boldsymbol{\Theta_{ij}}$ 
in order to obtain a threshold close to the empirical phase transition line by tuning $\gamma$. %Notably, the key insight of our dual construction lies in the form of $\bm{\Theta}$, where we assume $\bm{\Theta}_{ij} = \gamma\widetilde{\bm{\alpha}}_{ij}$ or $(1-\gamma)\widetilde{\bm{\alpha}}_{ij}$ for $\kappa(i) \neq \kappa(j)$, and one can see the resulting $\bm{\Theta}$ is low-rank \yk{explain since $\alpha$ is low rank} \yf{$\widetilde{\bm{\alpha}}_{ij}$ is of $d \times d$, and may not be low rank, but the matrix that concatenates all $\widetilde{\bm{\alpha}}_{ij}$ is low rank, should I add this?}. With this assumption, the dual variables are uniquely determined. \zz{I think low-rank condition is not enough to ensure the uniqueness. Also you mentioned that there is a range of $\gamma$ that satisfy the constraint. So I suggest you remove the comment that the dual variables are uniquely determines. }\yf{Actually, what I meant is that the dual variables are uniquely determined if we assume $\bm{\theta}_{ij} = \gamma \widetilde{\bm{\alpha}}_{ij}$... not just the low-rank condition.} 
The parameter $\gamma$ controls the values of $\nu$,  $\mu_i$, and the diagonal entries of $\bm{Z}_i$.  
%As $\gamma$ increases, $\nu$ decreases and $\mu_i$ increases.
%$\nu$ and $\mu_i$ tends to be larger and smaller respectively and vice versa. 
%Consequently, the diagonal entries of $\bm{Z}_i$ get larger for $i \in C_1$ and smaller for $i \in C_2$ accordingly. 
As we shall see in the next step,  by adjusting $\gamma$, we can balance the diagonal values of $\bm{Z}_i$ for $i \in C_1$ and $i \in C_2$, which helps to ensure  $\widetilde{\bm{\Lambda}} \succ 0$.
We remark that our current construction of the dual variables may still be sub-optimal, which indicates the condition in \eqref{eq:cond_unequal_two_case} is sufficient but not necessary. Finding the optimal dual appears to be challenging and is left for future work.

\vspace{0.1cm}
\paragraph{Step 3: Find the condition for $\widetilde{\bm{\Lambda}} \succ 0$} 
Again, by using the same argument as in Section~\ref{sec:sketch_proof_equal_main}, one can see that for exact recovery it suffices to show $\lambda_{\text{min}}(p\bm{I}_{nd} - \bm{Z}) > \|\mathbb{E}[\bm{A}] - \bm{\bm{A}}\|$. Here we get the following bound for $\lambda_{\text{min}}(p\bm{I}_{nd} - \bm{Z})$.
\begin{lemma}
Let $p = \alpha \log n/n$,  $q = \beta \log n/n$ and $\gamma \in [1/2, 1]$. Suppose $m_1 = \rho n$ and $m_2 = (1-\rho)n$ for some $\rho \in (1/2, 1)$. $\delta_1$ and $\delta_2$ are defined in Theorem~\ref{the:2_cluster_unequal_d_2}. We have
\begin{equation*}
    \lambda_{\text{min}}(p\bm{I}_{nd} - \bm{Z}) \geq \min \{\rho(\tau_1 - \max\{\gamma\delta_1, \delta_2\}), \; (1-\rho)(\tau_2 - 2(1-\gamma)\delta_2)\} \log n - o(\log n)
\end{equation*}
with probability $1 - n^{-\Omega(1)}$ for some $\tau_1, \tau_2 \in [0, \alpha]$ such that 
\begin{align}
    1-\rho\left(\alpha - \tau_1 \log \left(\frac{e\alpha}{\tau_1} \right) \right) < 0, \quad 
    1-(1- \rho)\left(\alpha - \tau_2 \log \left(\frac{e\alpha}{\tau_2} \right) \right) < 0.
    \label{eq:cond_tau_unequal} 
\end{align}
\label{lemma:bound_nu_mu}
\end{lemma}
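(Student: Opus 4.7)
Following the three-step template used in the proof of Lemma~\ref{claim:x_i_y_i_equal}, I plan to rewrite $\lambda_{\min}(p\bm{I}_{nd} - \bm{Z}) = \min_i (p - z_i)$, where $z_i$ is the scalar coefficient of $\bm{I}_d$ in the diagonal block $\bm{Z}_i$ from Lemma~\ref{lemma:guess_dual_unequal_new}, and handle $i \in C_1$ and $i \in C_2$ separately because their $\bm{Z}_i$ formulas differ. For $i \in C_1$, $p - z_i = p + \sum_{s \in C_1} r_{is} - \frac{m_1(\mu_i + \nu)}{\sqrt{d}} - \sum_{s \in C_1}\frac{\mu_s + \nu}{\sqrt{d}}$, while for $i \in C_2$, $p - z_i = p + \sum_{s \in C_2} r_{is} - \frac{2m_2 \nu}{\sqrt{d}}$. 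Lower-bounding each over its class and taking the overall minimum will give the claim.

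\noindent\textbf{Binomial side.} For each $k \in \{1,2\}$ and $i \in C_k$, the sum $\sum_{s \in C_k} r_{is}$ is $\mathrm{Binom}(m_k - 1, p)$ with mean close to $(m_k/n)\alpha \log n = \rho_k \alpha \log n$ (with $\rho_1 = \rho$, $\rho_2 = 1-\rho$). A reverse-Chernoff/Hajek estimate in the style of Lemma~2 of~\cite{hajek2016achievinga}, combined with a union bound over $m_k$ indices, gives $\min_{i \in C_k}\sum_{s \in C_k} r_{is} \geq \rho_k \tau_k \log n$ with probability $1 - n^{1 - \rho_k(\alpha - \tau_k\log(e\alpha/\tau_k) + o(1))}$; under the corresponding line of~\eqref{eq:cond_tau_unequal} this is $1 - n^{-\Omega(1)}$, which produces the $\rho \tau_1 \log n$ and $(1-\rho)\tau_2 \log n$ terms in the target bound.

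\noindent\textbf{Frobenius-norm side.} The remaining (negative) pieces all reduce to bounding $\|\widetilde{\bm{\alpha}}_{ij}\|_{\mathrm F}$ for $i \in C_1$, $j \in C_2$. I would split each $\widetilde{\bm{\alpha}}_{ij}$ via triangle inequality into its three defining sums, discard the third sum as $O(1/\sqrt{m_1 m_2})$ and thus negligible, and bound each of the two remaining sums by combining a Chernoff estimate on the Binomial number of nonzero rotations with the Frobenius-norm tail inequality for sums of independent uniform rotations proved in Appendix~\ref{sec:sum_random_orthogonal} (the factor $\ell \in \{1,2\}$ records the sharper bound available at $d=2$). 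Dividing by $m_1$ and $m_2$ produces the $\sqrt{\ell\beta/\rho}$ and $\sqrt{\ell\beta/(1-\rho)}$ scales that assemble into $\delta_1$ and $\delta_2$. Using $\gamma \in [1/2, 1]$ together with the identity $\mu_i + \nu = \max\{\nu,\, \gamma \max_{j \in C_2}\|\widetilde{\bm{\alpha}}_{ij}\|_{\mathrm F}\}$ (and $\mu_i = 0$ for $i \in C_2$), I expect the $C_1$-side contribution to collapse into $\rho \max\{\gamma \delta_1,\, \delta_2\}\log n$ and the $C_2$-side into $2(1-\gamma)(1-\rho)\delta_2\log n$; the stated $\min$ then follows.

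\noindent\textbf{Main obstacle.} The delicate step is extracting the correct split on the $C_1$ side into a $\gamma \delta_1$ piece coming from the row-sum $\sum_{s \in C_1}(\mu_s+\nu)/\sqrt{d}$ and a separate $\delta_2$ piece coming from the single-row term $m_1(\mu_i+\nu)/\sqrt{d}$. The row-sum couples an additional $s$-averaged copy of the inner partial-sum $\frac{1}{m_1}\sum_{s_1 \in C_1}\bm{A}_{s_1 j}$ into the estimate, which is precisely what inflates the coefficient of $1/\sqrt{\rho}$ from $1$ in $\delta_2$ to $2$ in $\delta_1$; getting that constant sharp (rather than the loose $2\delta_2$ one obtains by simply using $\mu_s + \nu \le \gamma M^*$ where $M^* := \max\|\widetilde{\bm{\alpha}}_{ij}\|_{\mathrm F}$) is the bulk of the computation. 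The role of $\gamma$ is to balance the $C_1$ and $C_2$ thresholds inside the outer min, which is exactly how Theorem~\ref{the:2_cluster_unequal_d_2} will optimize the final condition~\eqref{eq:cond_unequal_two_case}.
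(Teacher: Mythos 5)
Your overall architecture is the same as the paper's: write $\lambda_{\min}(p\bm{I}_{nd}-\bm{Z})$ as a minimum of scalar diagonal entries, treat $C_1$ and $C_2$ separately, get the positive terms $\rho\tau_1\log n$ and $(1-\rho)\tau_2\log n$ from Lemma~\ref{lemma:Bound_bernoulli} plus a union bound under \eqref{eq:cond_tau_unequal}, and control $\nu$ and $\mu_i+\nu$ through Frobenius-norm bounds on the three pieces of $\widetilde{\bm{\alpha}}_{ij}$ via the rotation-sum concentration results (Theorem~\ref{lemma:large_deviation_random_orthogonal}, and Theorem~\ref{lemma:sum_orth_2} for $d=2$, which is where $\ell$ enters). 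Your formulas for $p-z_i$ on both sides, the identity $\mu_i+\nu=\max\{\nu,\gamma\max_{j\in C_2}\|\widetilde{\bm{\alpha}}_{ij}\|_\mathrm{F}\}$, and the $C_2$-side bound $2(1-\gamma)(1-\rho)\delta_2$ all match the paper.

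However, there is a genuine gap at exactly the step you flag as ``the bulk of the computation'' and then leave unresolved: the row-sum $\frac{1}{\sqrt{d}}\sum_{s\in C_1}(\mu_s+\nu)$. Bounding each summand by its uniform high-probability bound (the only tool your plan actually names) gives, on the branch $\mu_s>0$, a contribution $\approx\gamma\rho(\frac{1}{\sqrt{\rho}}+\frac{1}{\sqrt{1-\rho}})\sqrt{\ell\beta}\log n$ from the row sum, because the per-row pieces $\sigma^{(s)}:=\|\frac{1}{m_2}\sum_{s_2\in C_2}\bm{A}_{ss_2}\|_\mathrm{F}$ are each of order $\sqrt{\ell\beta/(1-\rho)}\,\log n/n$ at that confidence level; adding the single-row term then yields $2\gamma\rho\delta_2$, precisely the loose constant you say you want to avoid, and the lemma's claimed $\max\{\gamma\delta_1,\delta_2\}$ is lost. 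The missing idea is the paper's Lemma~\ref{lemma:bound_concentration_Z_i}: the $\sigma^{(s)}$, $s\in C_1$, are i.i.d.\ bounded variables whose \emph{expectation} is only $O(\sqrt{q/m_2})=O(\sqrt{\log n}/\sqrt{n})$, so a Bernstein inequality on their sum gives $\frac{1}{\sqrt{d}}\sum_{s\in C_1}\sigma^{(s)}\le \frac{2c}{3}\log n+o(\log n)$, which is negligible after letting $c\to 0$; only the common term $\frac{1}{m_1}\sum_{s_1\in C_1}\bm{A}_{s_1j}$ (which does not average over $s$) survives in the row sum, producing the second $1/\sqrt{\rho}$ in $\delta_1$. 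Without this aggregate concentration step the proof does not reach the stated constants. A minor further correction: the two branches of $\max\{\gamma\delta_1,\delta_2\}$ do not come from ``row-sum versus single-row term''; both branches include the single-row contribution $\gamma(\sigma_1+\sigma_2)$, and the dichotomy is whether $\gamma\max_j\|\widetilde{\bm{\alpha}}_{sj}\|_\mathrm{F}$ or $\nu$ dominates in $\mu_s+\nu$ (the latter case, using $\gamma\ge 1/2$, collapses to $\delta_2$).
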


\begin{proof}[Proof of Theorem~\ref{the:2_cluster_unequal_d_2}]
From Lemma~\ref{lemma:bound_nu_mu}, we get $\lambda_{\text{min}}(p\bm{I}_{nd} - \bm{Z}) = O(\log n)$. From Lemma~\ref{lemma:concentration_A} we have $\|\mathbb{E}[\bm{\bm{A}}] - \bm{A}\|\!=\!o(\sqrt{\log n})$. Therefore, as $n$ is large, the condition $\lambda_{\text{min}}(p\bm{I}_{nd} - \bm{Z}) \!>\!\|\mathbb{E}[\bm{\bm{A}}]\!-\!\bm{A}\|$ is satisfied if $\lambda_{\text{min}}(p\bm{I}_{nd}\!-\!\bm{Z})\!=\!\Omega(\log n)$, which is equivalent to 
\begin{equation}
    \min \{\rho(\tau_1 - \max\{\gamma\delta_1, \delta_2\}), \; (1-\rho)(\tau_2 - 2(1-\gamma)\delta_2)\} > 0.
    \label{eq:bound_tau_1_2} 
\end{equation}
By defining $\bar{\tau}_1 := \max\{\gamma \delta_1, \delta_2\}, \bar{\tau}_2 := 2(1-\gamma)\delta_2$, \eqref{eq:bound_tau_1_2} reduces to $\tau_1 > \bar{\tau}_1, \tau_2 > \bar{\tau}_2$. 
Now it remains to check if  \eqref{eq:cond_tau_unequal} holds for some $\tau_1, \tau_2 \in [0, \alpha)$. To this end, let $\tau_1^*, \tau_2^* \in [0, \alpha)$ be the roots of the equations in~\eqref{eq:tau_1_2_star} (such $\tau_1^*, \tau_2^*$ exist only if $\alpha > 1/(1-\rho)$).
Since $f(\tau)\!:=\!\alpha\!-\!\tau\log\left(e\alpha/\tau\right)$ monotonically decreases for $\tau \in [0, \alpha)$, then \eqref{eq:cond_tau_unequal} holds only if $\tau_1 < \tau_1^*, \tau_2 < \tau_2^*$. Given the above, the exact recovery conditions are $\alpha > 1/(1-\rho), \bar{\tau}_1 < \tau_1 < \tau_1^*$ and $\bar{\tau}_2 < \tau_2 < \tau_2^*$. Moreover, since $\gamma$ is not specified, it suffices to ensure  $\bar{\tau}_1 < \tau_1^*, \bar{\tau}_2 < \tau_2^*$ for some $\gamma \in [1/2, 1]$ such that the range of $\tau_1, \tau_2$ is valid. This leads to the condition in \eqref{eq:cond_unequal_two_case}. 
\end{proof}

\subsection{Two clusters with unknown cluster size}
\label{sec:rounding}
In some real applications where the cluster sizes is absent, an alternative approach that does not rely on such prior knowledge is necessary. To this end, we consider a normalized ground truth $\bar{\bm{M}}^*$, whose $(i,j)$-th block is 
\begin{equation}
    \bar{\bm{M}}^*_{ij} = 
    \begin{cases}
    \frac{1}{m_{\kappa(i)}}\bm{R}_i\bm{R}_j^\top,  &\;  \kappa(i) = \kappa(j), \\
    \bm{0}, &\; \text{otherwise}.  
    \end{cases}
    \label{eq:ground_truth_def_normalize}
\end{equation}
Here $m_{\kappa(i)}$ denotes the size of the cluster that $i$ belongs to. By assuming the first $m_1$ nodes form $C_1$ and the remaining $m_2$ nodes form $C_2$, we have 
\begin{equation}
    \bar{\bm{M}}^* :
    = \frac{1}{m_1}\bm{V}^{(1)}(\bm{V}^{(1)})^{\top} + \frac{1}{m_2}\bm{V}^{(2)}(\bm{V}^{(2)})^{\top} = 
    \begin{pmatrix}
    \frac{1}{m_1}\widetilde{\bm{M}}_{1}^* &\bm{0}\\
    \bm{0} &\frac{1}{m_2}\widetilde{\bm{M}}_{2}^* 
    \end{pmatrix},
    \label{eq:M_ground_truth_unknown}
\end{equation}
where each diagonal block has been normalized by the corresponding cluster size. Then $\bar{\bm{M}}^*$ satisfies the following structural properties: 
\begin{enumerate}
    \vspace{0.05cm}
    \item $\bar{\bm{M}}^*$ is positive semi-definite, i.e. $\bar{\bm{M}}^* \succeq 0$;
    \vspace{0.05cm}
    \item The sum of the diagonal blocks satisfies $\sum_{i}\bar{\bm{M}}^*_{ii} = 2\bm{I}_d$;
    \vspace{0.05cm}
    \item The $i$-th block row of $\bar{\bm{M}}^*$ satisfies $\sum_{j = 1}^n \|\bar{\bm{M}}^*_{ij}\|_\mathrm{F} \leq \sqrt{d}$, for $i = 1,\ldots, n$. 
    \vspace{0.05cm}
\end{enumerate}
Based on these, we formulate the SDP for solving $\bar{\bm{M}}^*$ as
\begin{equation}
    \begin{aligned}
    \bm{M}_{\text{SDP}} &= \argmax_{\bm{M}} \quad \left\langle \bm{A}, \bm{M} \right\rangle\\
    \mathrm{s.t.} &\quad \bm{M} \succeq 0, \quad \sum_{i = 1}^n\bm{M}_{ii} = 2\bm{I}_d, \quad \sum_{j=1}^n\|\bm{M}_{ij}\|_\mathrm{F} \leq \sqrt{d}, \quad i = 1,\ldots, n.
    \end{aligned}
    \label{eq:SDP_unknown}
\end{equation}
Due to the normalization factor, \eqref{eq:SDP_unknown} differs from the previous SDPs that it does not rely on the cluster sizes, and exact recovery is achieved if the normalized $\bar{\bm{M}}^*$ is obtained. Besides, similar to \eqref{eq:SDP_unequal_known} a sum of all blocks constraint $\sum_{i, j = 1}^n \| \bm{M}_{ij}\|_F \leq n \sqrt{d}$ could apply to \eqref{eq:SDP_unknown}, via the row sum constraint that $\sum_{j = 1}^n \| \bm{M}_{ij}\|_\mathrm{F} \leq  \sqrt{d}$ for all $i$ and is thus redundant.
Notably, in the context of community detection, similar normalization has been studied~\cite{peng2007approximating, yan2017provable, perry2017semidefinite} when cluster sizes are unknown, where the row-sum constraint is formulated as $\sum_{j = 1}^n \bm{M}_{ij} = 1$. In contrast, we use an inequality constraint for the sake of convexity. Here, we obtain the following exact recovery guarantees for \eqref{eq:SDP_unknown}.
\begin{theorem}
Let $p = \alpha \log n/n, q = \beta \log n/n$. Suppose $m_1 = \rho n$ and $m_2 = (1-\rho)n$ for some $\rho \in (0, 1)$. $\tau_1^*, \tau_2^*$ are defined in \eqref{eq:tau_1_2_star}. As $n$ is sufficiently large, \eqref{eq:SDP_unknown} achieves exact recovery i.e. $\bm{M}_{\text{SDP}} = \bar{\bm{M}}^*$ with probability $1 - n^{-\Omega(1)}$ if
\begin{equation}
     \max\bigg\{\frac{1}{\rho}, \; \frac{1}{1-\rho}\bigg\} < \alpha \leq \min\bigg\{2\tau_1^*,\; 2\tau_2^*, \; \tau_1^* + \tau_2^* - \bigg(\frac{1}{\sqrt{\rho}} + \frac{1}{\sqrt{1-\rho}}\bigg)\sqrt{\ell_d \beta}\bigg\},
    %  \quad \tau_1^* + \tau_2^* - \alpha > \bigg(\frac{1}{\sqrt{\rho}} + \frac{1}{\sqrt{1-\rho}}\bigg)\sqrt{\tau \beta}
    \label{eq:unknown_cond_theorem}
\end{equation}
where $\ell_d = 2$ when $d > 2$ and $\ell_d = 1$ when $d = 2$. 
\label{the:2_cluster_unknown_d_2}
\end{theorem}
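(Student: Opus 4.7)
The plan is to adapt the three-step dual-certification strategy from Sections~\ref{sec:SDP_equal} and~\ref{sec:SDP_unequal} to the normalized ground truth $\bar{\bm{M}}^*$ defined in~\eqref{eq:M_ground_truth_unknown} and the SDP~\eqref{eq:SDP_unknown}. The key structural change is that the dual variable corresponding to the trace-type constraint $\sum_i \bm{M}_{ii} = 2\bm{I}_d$ is now a single matrix $\bm{Z} \in \mathbb{R}^{d\times d}$ rather than a block-diagonal matrix $\mathrm{diag}(\{\bm{Z}_i\}_{i=1}^n)$; this means every diagonal block in the stationarity equation is shifted by the \emph{same} $\bm{Z}$, coupling the two clusters in a way that does not appear in the known-size settings. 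The desired exact-recovery condition~\eqref{eq:unknown_cond_theorem} should emerge from the requirement that this common shift is consistent with positivity on both clusters simultaneously.

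First I would write the Lagrangian of~\eqref{eq:SDP_unknown} and list the KKT stationarity, complementary slackness, and dual feasibility conditions at $\bm{M} = \bar{\bm{M}}^*$, and then prove a uniqueness/optimality lemma analogous to Lemma~\ref{lemma:1_main} asserting that $\bm{M}_{\text{SDP}} = \bar{\bm{M}}^*$ whenever the dual variables also satisfy $\mathcal{N}(\bm{\Lambda}) = \mathcal{R}(\bar{\bm{M}}^*)$. Next, following the template of Lemmas~\ref{lemma:guess_dual} and~\ref{lemma:guess_dual_unequal_new}, I would guess explicit dual variables, with $\widetilde{\bm{\alpha}}_{ij}$ defined via a doubly-recentered empirical mean of $\bm{A}_{ij}$ over the two clusters (now incorporating the $1/m_{C(i)}$ normalization present in $\bar{\bm{M}}^*$), $\mu_i = \max_j \|\widetilde{\bm{\alpha}}_{ij}\|_\mathrm{F}$, and $\bm{\Theta}_{ij}$ scaled accordingly on within-cluster blocks so that stationarity pins $\bm{Z}$ down uniquely from the sums of within-cluster Bernoulli variables over both $C_1$ and $C_2$. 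I would then prove the analogue of Lemmas~\ref{lemma:simplify_lambda} and~\ref{lemma:simplify_lambda_unequal}, namely that
\[
\bm{\Lambda} = (\bm{I}_{nd} - \bm{\Pi})\,\widetilde{\bm{\Lambda}}\,(\bm{I}_{nd} - \bm{\Pi}),
\]
with $\bm{\Pi}$ the block-diagonal projector with blocks $\widetilde{\bm{M}}_1^*/m_1$ and $\widetilde{\bm{M}}_2^*/m_2$, so that both $\bm{\Lambda} \succeq 0$ and $\mathcal{N}(\bm{\Lambda}) = \mathcal{R}(\bar{\bm{M}}^*)$ reduce to the single condition $\widetilde{\bm{\Lambda}} \succ 0$.

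The final step is to bound $\lambda_{\min}(\widetilde{\bm{\Lambda}})$ via Weyl's inequality by $\lambda_{\min}(p\bm{I}_{nd} - \bm{Z}) - \|\mathbb{E}[\bm{A}] - \bm{A}\|$, control the second term by $o(\log n)$ using Lemma~\ref{lemma:concentration_A}, and bound the first term using the ingredients already developed in Lemma~\ref{claim:x_i_y_i_equal}: binomial concentration for the intra-cluster edge counts, which produces the thresholds $\tau_1^*, \tau_2^*$ of~\eqref{eq:tau_1_2_star} separately on the two clusters; Bernstein's inequality together with the sharp $\|\sum_i \bm{R}_i\|_\mathrm{F}$ bound of Appendix~\ref{sec:sum_random_orthogonal} to control $\mu_i$, which produces the cross-cluster penalty $(1/\sqrt{\rho}+1/\sqrt{1-\rho})\sqrt{\ell\beta}$ and the $\ell = 1$ improvement at $d=2$. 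The main obstacle is that, unlike the known-size cases, the shared $\bm{Z}$ must now enforce positivity on \emph{both} clusters using a single spectral shift, which forces the three simultaneous inequalities $\alpha \leq 2\tau_1^*$, $\alpha \leq 2\tau_2^*$, and $\alpha \leq \tau_1^* + \tau_2^* - (1/\sqrt{\rho}+1/\sqrt{1-\rho})\sqrt{\ell\beta}$: the first two arise from the diagonal positivity on each cluster, and the third from the coupled off-diagonal residual left over after projecting away $\mathcal{R}(\bar{\bm{M}}^*)$. The lower bound $\alpha > \max\{1/\rho, 1/(1-\rho)\}$ is then inherited from the requirement in~\eqref{eq:tau_1_2_star} that both $\tau_1^*, \tau_2^* \in [0, \alpha)$ actually exist. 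Combining these gives~\eqref{eq:unknown_cond_theorem}; the detailed calculations mirror the unequal-size proof with straightforward adjustments for the $1/m_k$ normalization, and are deferred to the appendix.
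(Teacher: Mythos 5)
Your overall strategy is the paper's: dual certificate at $\bar{\bm{M}}^*$, uniqueness under $\mathcal{N}(\bm{\Lambda})=\mathcal{R}(\bar{\bm{M}}^*)$, factorization $\bm{\Lambda}=(\bm{I}_{nd}-\bm{\Pi})\widetilde{\bm{\Lambda}}(\bm{I}_{nd}-\bm{\Pi})$ with $\bm{\Pi}=\bar{\bm{M}}^*$, and the same concentration ingredients (Lemma~\ref{lemma:Bound_bernoulli}, Theorems~\ref{lemma:large_deviation_random_orthogonal}, \ref{lemma:sum_orth_2}, Lemma~\ref{lemma:concentration_A}). However, there is a genuine gap in your Step~2. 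You propose $\mu_i=\max_j\|\widetilde{\bm{\alpha}}_{ij}\|_\mathrm{F}$, mirroring the equal-size construction of Lemma~\ref{lemma:guess_dual}. In the unknown-size SDP the dual of $\sum_i\bm{M}_{ii}=2\bm{I}_d$ is a \emph{single} matrix $\bm{Z}$, and the requirement $\bm{\Lambda}\,\mathcal{R}(\bar{\bm{M}}^*)=\bm{0}$ forces, for every node $i$ (in either cluster), $\sum_{s:C(s)=C(i)}\big(\tfrac{\mu_i+\mu_s}{\sqrt{d}}\big)\bm{I}_d-\sum_{s:C(s)=C(i)}\bm{A}_{is}=\bm{Z}$ with the \emph{same} $\bm{Z}$. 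This pins $\mu_i$ down (up to one free scalar per cluster, and those two scalars are further coupled by equating the two expressions for $\bm{Z}$): one must take $\mu_i=\tfrac{\sqrt{d}}{m_{C(i)}}\big(\sigma_i+\text{const}_{C(i)}\big)$, where $\sigma_i$ is the within-cluster degree, as in Lemma~\ref{lemma:guess_dual_unknown}. A random, cross-cluster quantity like $\max_j\|\widetilde{\bm{\alpha}}_{ij}\|_\mathrm{F}$ cannot satisfy these equalities, so with your choice $\mathcal{N}(\bm{\Lambda})\supseteq\mathcal{R}(\bar{\bm{M}}^*)$ already fails; the cross-cluster requirement $\mu_i\geq\|\bm{\Theta}_{ij}\|_\mathrm{F}$ is only an inequality and is absorbed into the remaining free scalar $\bar{\mu}$ (together with the split parameter $\gamma$), not into $\mu_i$ itself.

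This is not a cosmetic detail, because the forced form of $\mu_i$ and $\bm{Z}$ is exactly what produces the shape of \eqref{eq:unknown_cond_theorem}. Here $\bm{Z}=\epsilon\bm{I}_d$ with $\epsilon=\sigma^{(1)}+\sigma^{(2)}+\max\{\bar{\mu}_1,\bar{\mu}_2\}$, and $\sigma^{(1)}+\sigma^{(2)}\approx\alpha\log n$ concentrates at the \emph{average} within-cluster degree; positivity of $p-\epsilon$ then reads $2\tau_1>\alpha+2\gamma\delta$ and $2\tau_2>\alpha+2(1-\gamma)\delta$, and it is the feasibility of choosing $\gamma\in[0,1]$ (together with $\tau_1<\tau_1^*$, $\tau_2<\tau_2^*$) that yields all three inequalities $\alpha\leq 2\tau_1^*$, $\alpha\leq 2\tau_2^*$, $\alpha\leq\tau_1^*+\tau_2^*-\delta$ — the third comes from summing the two per-cluster constraints to eliminate $\gamma$, not from an ``off-diagonal residual'' (that residual is annihilated by the projection and only contributes the lower-order $\|\mathbb{E}[\bm{A}]-\bm{A}\|=O(\sqrt{\log n})$ term). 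With your $\mu_i$ the factor $\alpha/2$ never appears and you would not arrive at \eqref{eq:unknown_cond_theorem}. To repair the proof, replace your guess by the null-space-driven construction of Lemma~\ref{lemma:guess_dual_unknown} (with $\widetilde{\bm{\alpha}}_{ij}$ exactly as in Lemma~\ref{lemma:guess_dual_unequal_new}, no extra $1/m_{C(i)}$ rescaling needed) and only then optimize over $\gamma$.
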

The proof of Theorem~\ref{the:2_cluster_unknown_d_2} is deferred to supplementary material Section~\ref{sec:proof_two_unknown}, as it follows a similar structure as the one presented in Section~\ref{sec:sketch_proof_equal_main}, with a slight difference on the construction of dual variables.

\vspace{0.1cm}
\begin{figure}[t!]
    \vspace{-0.55cm}
	\centering
	\scriptsize
	\subfloat[$|\bar{\bm{M}}^*|$]{\includegraphics[width = 0.24\textwidth]{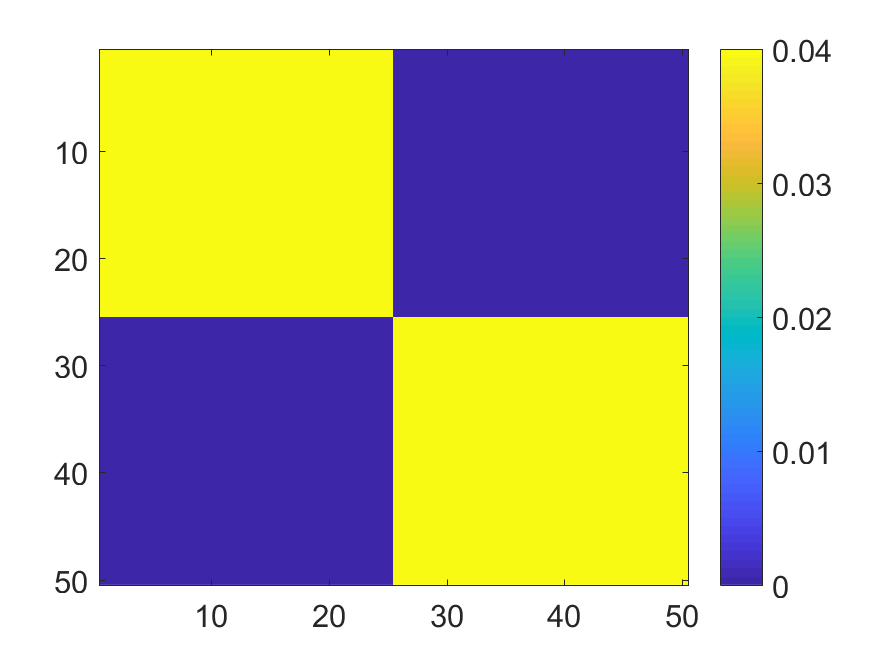}\label{fig:unknown_example_a}}
    \subfloat[$|\bm{A}|$]{\includegraphics[width = 0.24\textwidth]{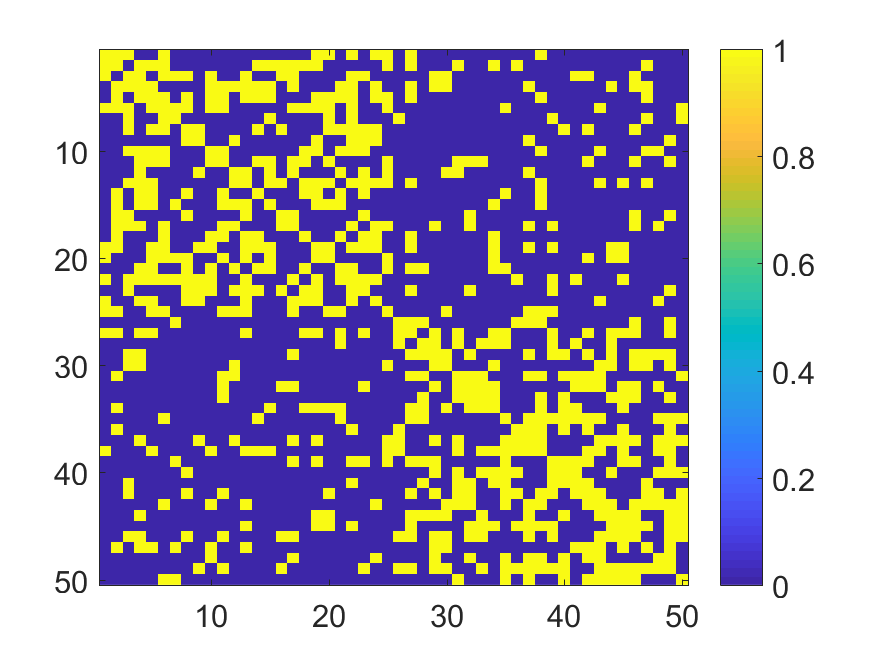}}
    \subfloat[$|\bm{M}_{\text{SDP}}|$]{\includegraphics[width = 0.24\textwidth]{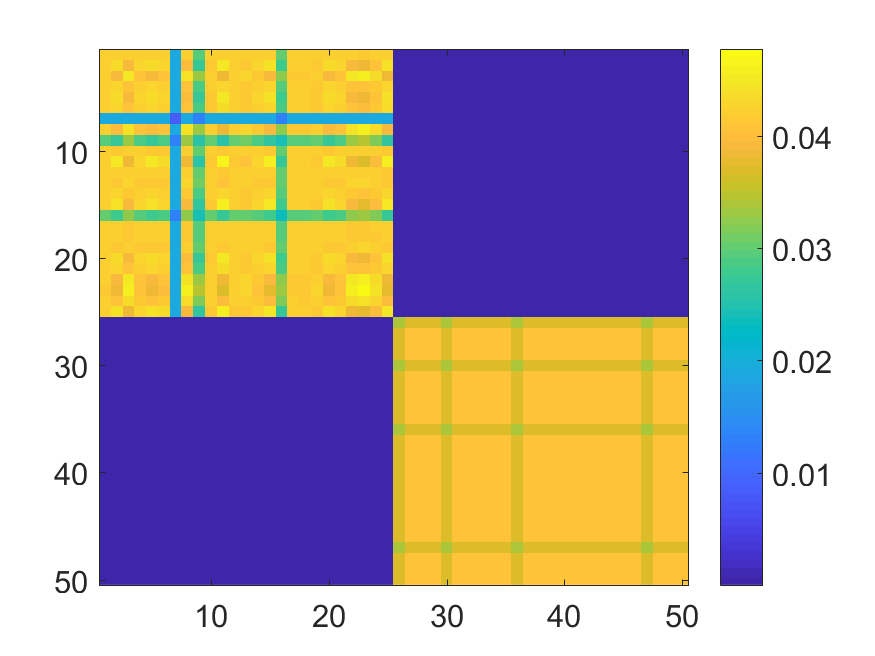} \label{fig:unknown_example_c}}
    \subfloat[{$|\bm{M}_{\text{round}}|$}]{\includegraphics[width = 0.24\textwidth]{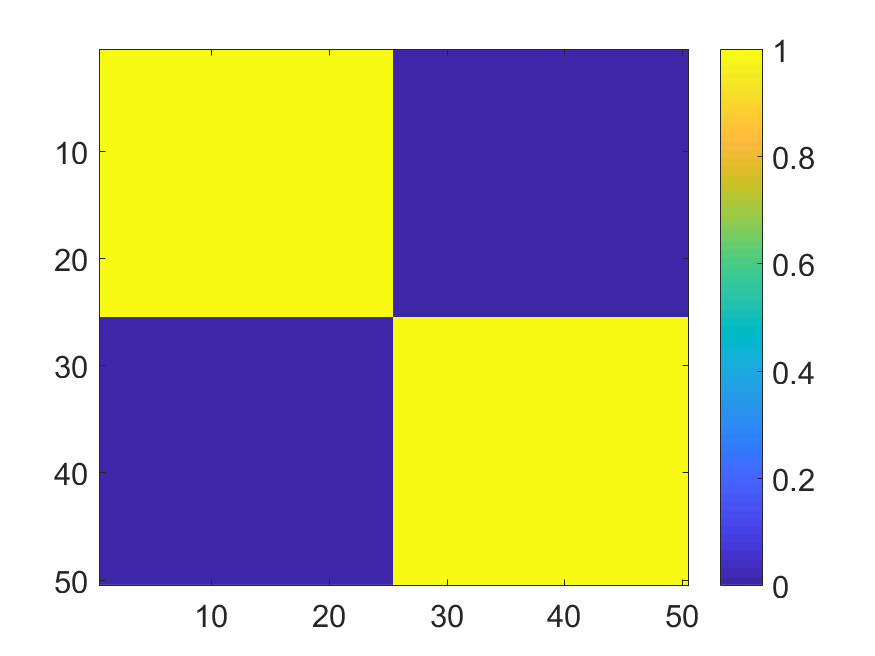}\label{fig:unknown_example_d}}\\[-0.4cm]
    \subfloat[$\angle\bar{\bm{M}}^*$]{\includegraphics[width = 0.24\textwidth]{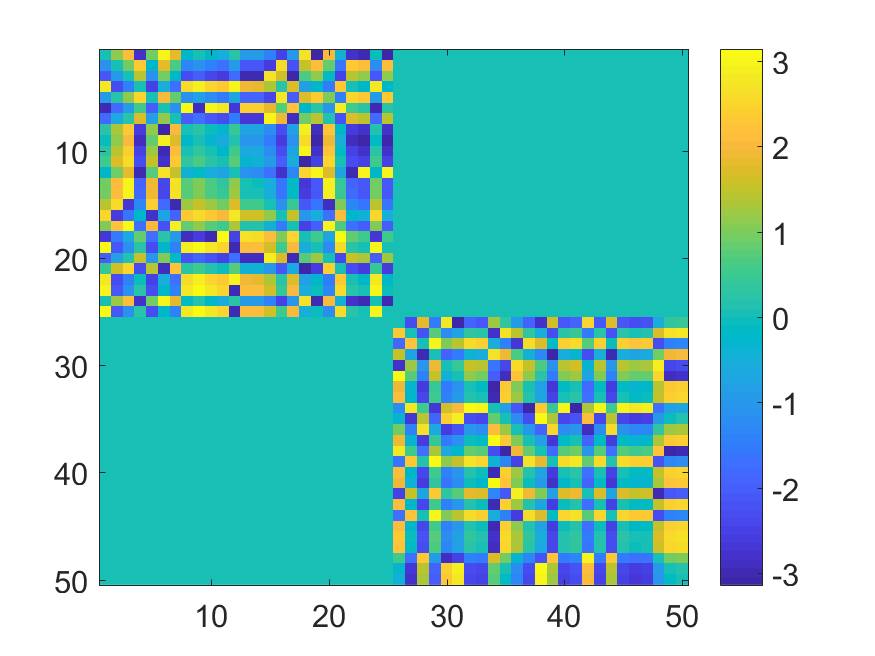}\label{fig:unknown_example_e}}
    \subfloat[$\angle\bm{A}$]{\includegraphics[width = 0.24\textwidth]{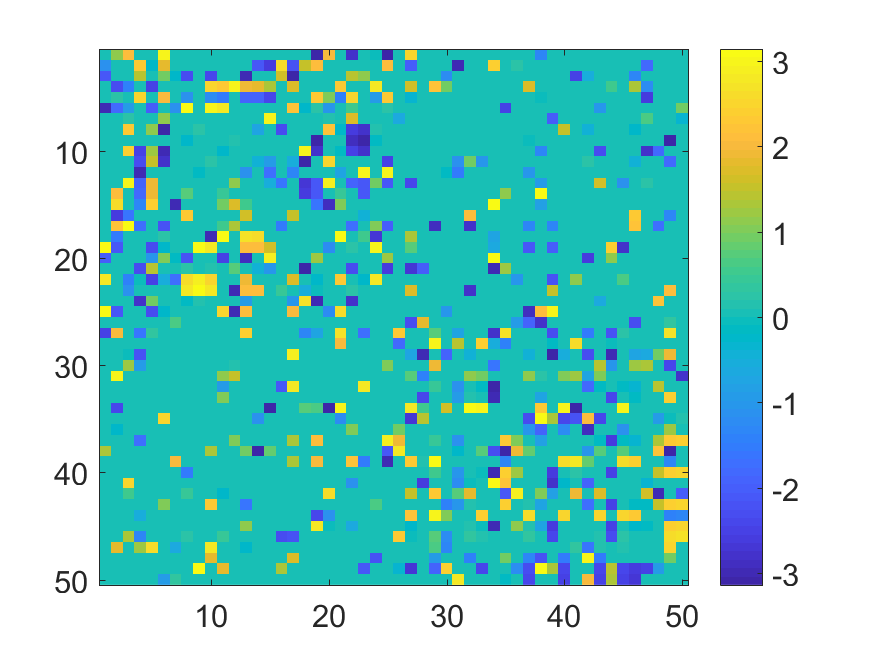}\label{fig:unknown_example_f}}
    \subfloat[$\angle\bm{M}_{\text{SDP}}$]{\includegraphics[width = 0.24\textwidth]{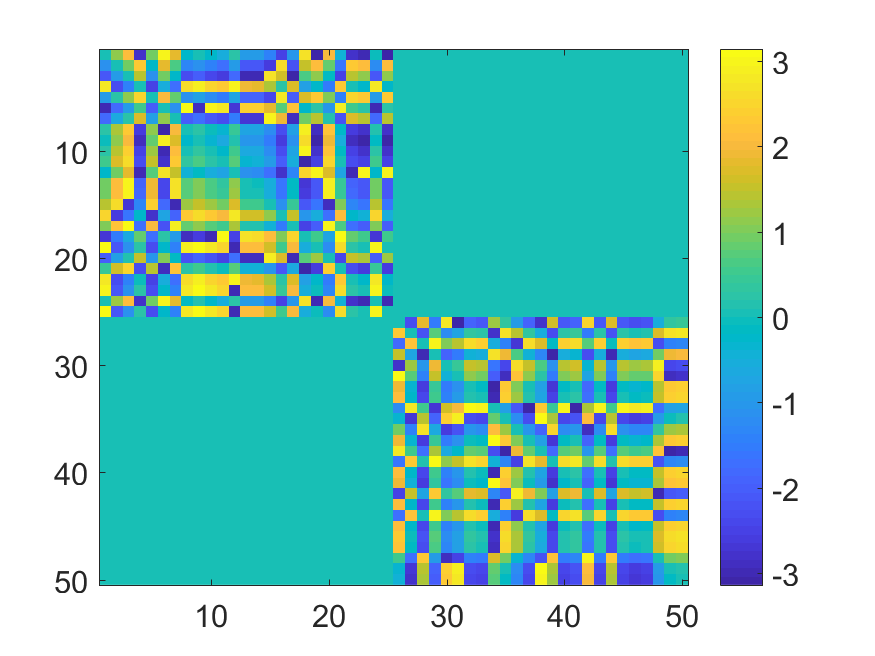}\label{fig:unknown_example_g}}
    \subfloat[{$\angle\bm{M}_{\text{round}}$}]{\includegraphics[width = 0.24\textwidth]{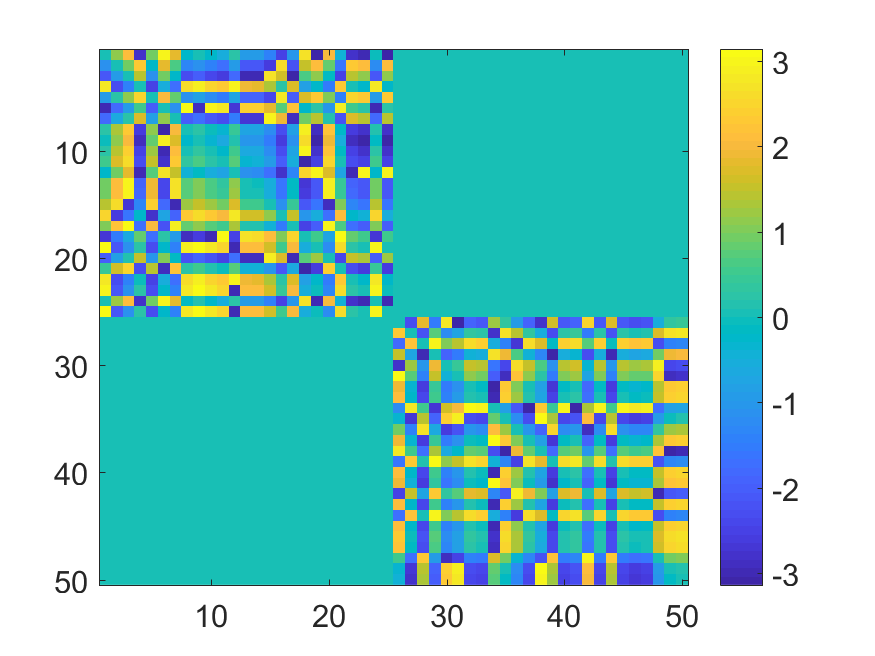}\label{fig:unknown_example_h}}
    \vspace{-0.25cm}
	\caption{\textit{An example of rounding when $d = 2$}.
	We plot the matrix of amplitudes (\protect \subref{fig:unknown_example_a}--\protect \subref{fig:unknown_example_d}) and rotation angles (\protect \subref{fig:unknown_example_e}--\protect \subref{fig:unknown_example_h}) defined in \eqref{eq:M_ij_complex} for the ground truth $\bar{\bm{M}}^*$, the observation matrix $\bm{A}$, the SDP result $\bm{M}_{\text{SDP}}$ by \eqref{eq:SDP_unknown} and the rounding result $\bm{M}_{\text{round}}$.} 
    \label{fig:unknown_example}
\end{figure}

% \vspace{0.1cm}
\paragraph{Rounding}
Besides, we observe that an extra rounding step can improve the recovery by \eqref{eq:SDP_unknown}. As an illustration, in Figure~\ref{fig:unknown_example} we present an example with 2D rotation and two equal-sized clusters. 
For convenience, we use a complex number to represent each $\bar{\bm{M}}_{ij}^*$ such that $    \bar{\bm{M}}_{ij}^* = |\bar{\bm{M}}^*_{ij}| e^{\imath \theta_{ij}}$,
where $|\bar{\bm{M}}^*_{ij}|$ denotes the ``amplitude'' and $\theta_{ij} \in [0, 2\pi]$ represents the rotation angle.
Then we form two $n \times n$ matrices $|\bar{\bm{M}}^*|$ and $\angle \bar{\bm{M}}^*$ that contain  
\begin{equation}
\text{amplitude: }    |\bar{\bm{M}}^*|_{ij} := |\bar{\bm{M}}^*_{ij}|, \quad \text{and rotation angle } \angle \bar{\bm{M}}^*_{ij} := \theta_{ij} - \pi
\label{eq:M_ij_complex}
\end{equation}
respectively. A similar procedure is applied to $\bm{A}$ and $\bm{M}_{\textrm{SDP}}$. In Figures~\ref{fig:unknown_example_a} and~\ref{fig:unknown_example_c}, one can see that $|\bm{M}_{\text{SDP}}| \neq |\bar{\bm{M}}^*|$ and thus exact recovery fails. However, they have the same support on the two clusters. Moreover, in Figures~\ref{fig:unknown_example_e} and~\ref{fig:unknown_example_g} we observe $\angle \bm{M}_{\textrm{SDP}} = \angle \bar{\bm{M}}^*$, meaning the rotational alignments are exactly recovered. These observations inspire us to design the following rounding procedure. For each block of $\bm{M}_{\textrm{SDP}}$ denoted by $(\bm{M}_{\textrm{SDP}})_{ij}$, we round it to
\begin{equation}
    (\bm{M}_{\textrm{round}})_{ij} = 
    \begin{cases}
    \displaystyle \frac{(\bm{M}_{\textrm{SDP}})_{ij}}{\|(\bm{M}_{\textrm{SDP}})_{ij}\|_\text{F}} \cdot \sqrt{d}, &\quad \|(\bm{M}_{\textrm{SDP}})_{ij}\|_\textrm{F} \geq  \epsilon, \\
    \bm{0}, &\quad \text{otherwise},
    \end{cases}
    \label{eq:rounding_formula}
\end{equation}
where $\epsilon$ is some small value for avoiding numerical issues. That is, for each non-zero block of $\bm{M}_{\textrm{SDP}}$ we normalize its ``amplitude'' (Frobenius norm) to be $\sqrt{d}$ and keep its ``phase'' unchanged. Let $\bm{M}_{\textrm{round}}$ be the rounded solution based on all blocks of $\bm{M}_{\textrm{SDP}}$, then in Figures~\ref{fig:unknown_example_d} and \ref{fig:unknown_example_h} we obtain $\bm{M}_{\textrm{round}} = \bm{M}^*$ which is equal to the unnormalized ground truth in \eqref{eq:ground_truth_def}. Therefore, empirically, we observe exact recovery after an appropriate rounding.

\subsection{SDPs for general cluster structures}
\label{sec:kcluster}
Our SDP formulation can be easily extended to general cases with an arbitrary number of clusters. When the sizes of $K$ clusters are known, the convex relaxations introduced in \eqref{eq:SDP_unequal_known} still apply to $\bm{M}^*$ in principle, and we can formulate an SDP relaxation as
\begin{equation}
    \begin{aligned}
    \bm{M}_{\text{SDP}} &= \argmax_{\bm{M}} \quad \left\langle \bm{A}, \bm{M} \right\rangle\\
    \mathrm{s.t.} &\quad \bm{M} \succeq 0, \quad \bm{M}_{ii} = \bm{I}_d, \quad \sum_{j}\|\bm{M}_{ij}\|_\mathrm{F} \leq (\max_i m_i)\sqrt{d}, \quad i = 1,\ldots, n,\\
    &\quad \sum_{i,j = 1}^{n}\|\bm{M}_{ij}\|_\mathrm{F} \leq \sqrt{d} \sum_{k = 1}^K m_k^2.
    \end{aligned}
    \label{eq:SDP_general_known}
\end{equation}
When the cluster sizes are unknown, an SDP similar to \eqref{eq:SDP_unknown} can be designed as:
\begin{equation}
    \begin{aligned}
    \bm{M}_{\text{SDP}} &= \argmax_{\bm{M}} \quad \left\langle \bm{A}, \bm{M} \right\rangle\\
    \mathrm{s.t.} &\quad \bm{M} \succeq 0, \quad \sum_{i}\bm{M}_{ii} = K\bm{I}_d, \quad \sum_{j}\|\bm{M}_{ij}\|_\mathrm{F} \leq \sqrt{d}, \quad i = 1,\ldots, n,
    \end{aligned}
    \label{eq:SDP_unequaK}
\end{equation}
which only requires the number of clusters $K$ as a parameter. 
Also, the rounding step \eqref{eq:rounding_formula} can be used to improve the recovery by \eqref{eq:SDP_unequaK}. Establishing the exact recovery guarantee for both \eqref{eq:SDP_general_known} and \eqref{eq:SDP_unequaK} is an interesting but challenging task, which is left for future work.

\begin{figure}[t!]
    \centering
    \vspace{-0.3cm}
    \captionsetup[subfloat]{farskip=1pt,captionskip=0pt}
    \subfloat[\scriptsize{$n = 50$}]{\includegraphics[width = 0.30\textwidth]{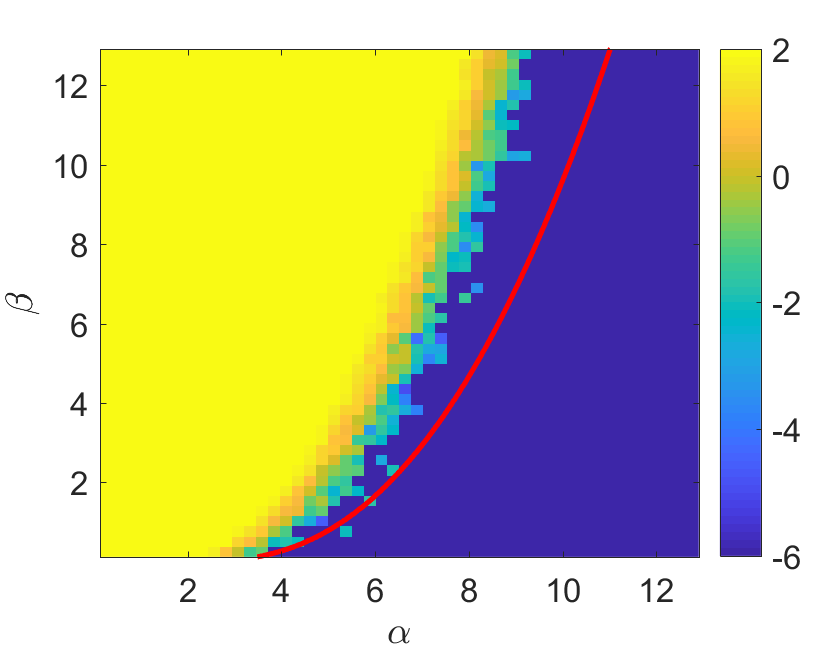}\label{fig:two_equal_a}}
    \subfloat[\scriptsize{$n = 100$}]{\includegraphics[width = 0.30\textwidth]{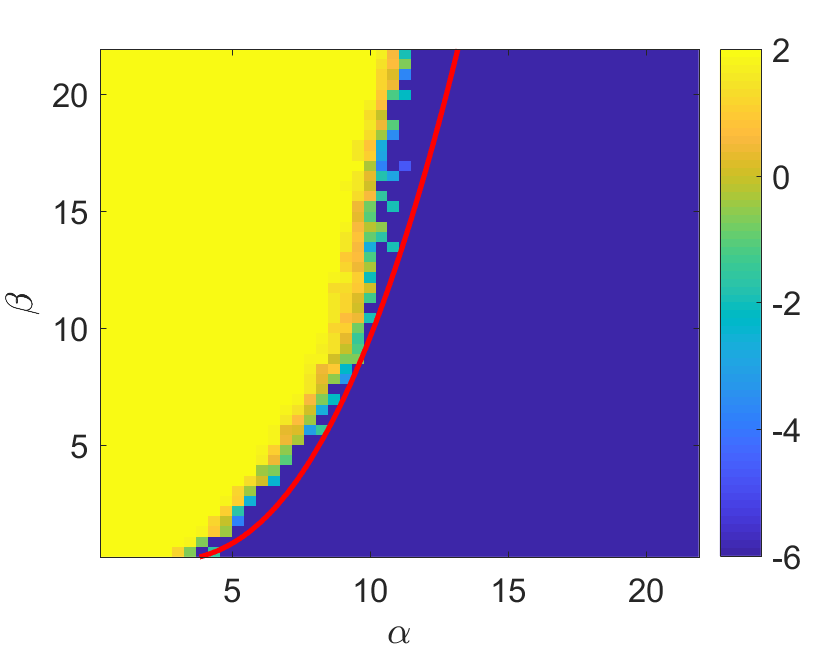}}
    \subfloat[\scriptsize{$n = 150$}]{\includegraphics[width = 0.30\textwidth]{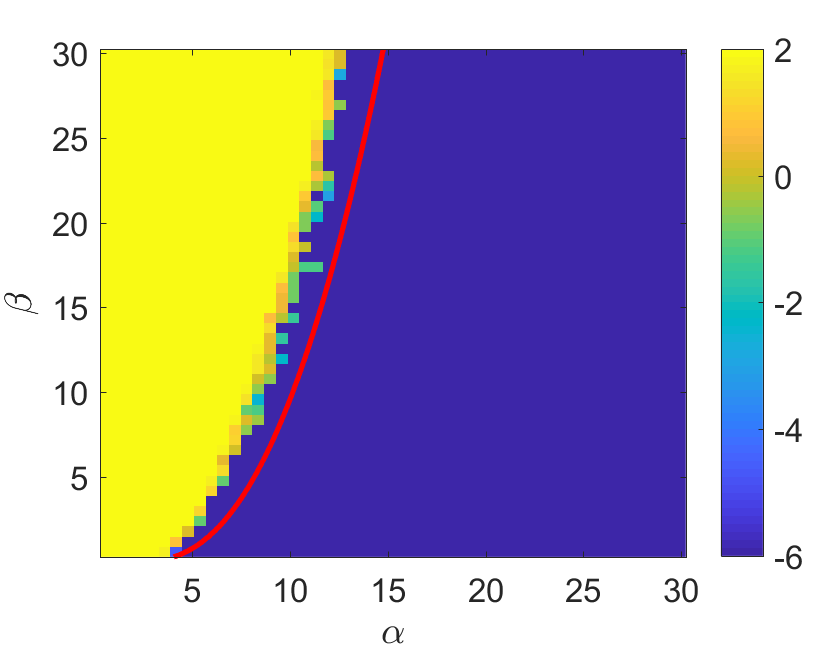}\label{fig:two_equal_c}}\\[2pt]
    \subfloat[\scriptsize{$n = 150$, {by \cite[Eq.~(5)]{hajek2016achievinga}}}]{\includegraphics[width = 0.30\textwidth]{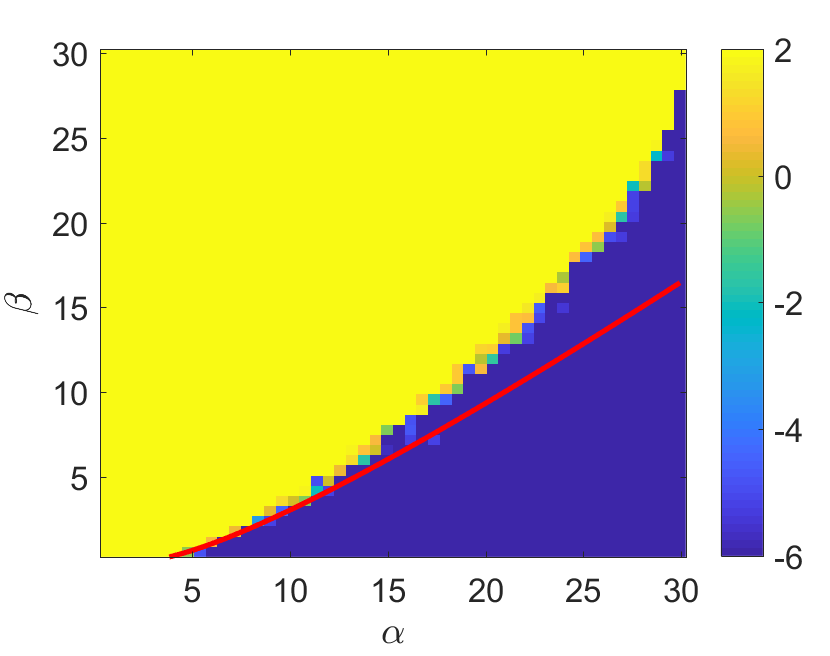}\label{fig:original_SDP_equal}
    \label{fig:sbm_sdp}
    }
    \subfloat[\scriptsize{Comparison of two thresholds}]{\includegraphics[width = 0.30\textwidth]{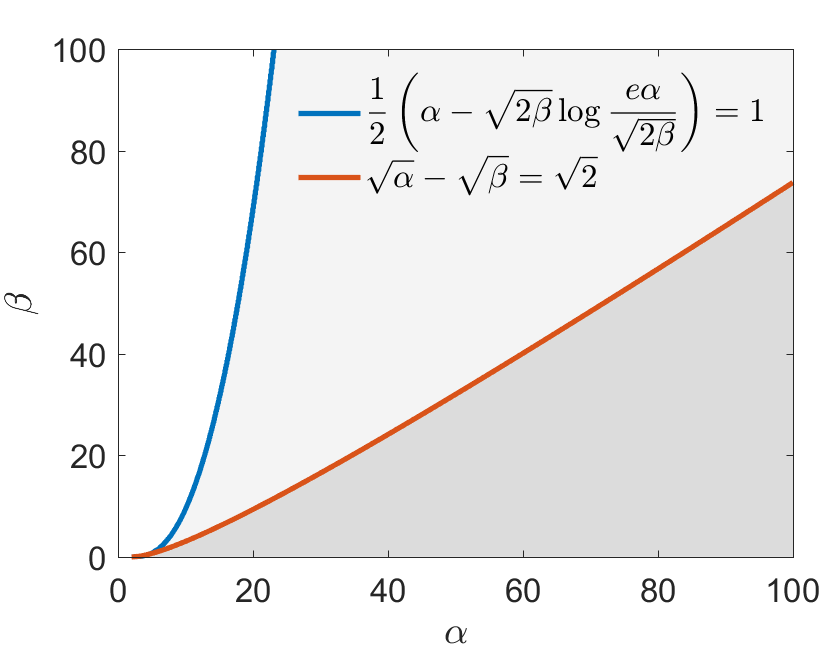}\label{fig:equal_original_phase}
    \label{fig:compare_thresholds}
    }
    \vspace{-0.1cm}
    \caption{\textit{Results on two equal-sized clusters with known cluster size.} \protect \subref{fig:two_equal_a}--\protect \subref{fig:two_equal_c} Recovery error by \eqref{eq:SDP_equal} with different $n$. The threshold in Theorem~\ref{the:2_cluster_equal_d_2} for $d = 2$ is shown in red. 
    \protect \subref{fig:sbm_sdp} Recovery error by first performing clustering on graph scalar edge connection using the SDP in~\cite[Eq.~(5)]{hajek2016achievinga} and then synchronization, the exact recovery threshold $\sqrt{\alpha} - \sqrt{\beta} = \sqrt{2}$ (see \cite[Theorem 2]{hajek2016achievinga}) is shown in red. \protect \subref{fig:compare_thresholds} Comparison of the thresholds in Theorem~\ref{the:2_cluster_equal_d_2} for $d = 2$ (blue) and~\cite[Theorem 2]{hajek2016achievinga} (red), exact recovery is guaranteed in the region below each threshold. The recovery error is defined in \eqref{eq:relative_error}.} 
    \label{fig:two_equal_unequal_exp}
    \vspace{-0.5cm}
\end{figure}

\section{Numerical results}
\label{sec:exp}

This section is devoted to empirically investigating the performance of the SDPs in Section~\ref{sec:method}. 
All experiments\footnote{The code is available in \url{https://github.com/frankfyf/joint_cluster_sync_SDP}.} are performed in MATLAB on a machine with 60 Intel Xeon CPU cores, running at 2.3GHz with 512GB RAM in total, and only one core is used for each experiment. We use MOSEK~\cite{aps2019mosek} in MATLAB as the SDP solver. For solving an SDP with $n = 100$ and $n = 150$, it takes 5 mins and 30 mins in average respectively.

In Sections~\ref{sec:exp_2D_equal}--\ref{sec:exp_2D_unknown}, we focus on experiments with two clusters and 2D rotations ($d = 2$). In Section~\ref{sec:exp_general}, we include results for more general settings, e.g., 3D rotations and the number of clusters is larger than two. We generate the observation matrix $\bm{A}$ based on the model introduced in Section~\ref{sec:model}. Then we evaluate the SDP solution by measuring the following error metric:
\begin{equation}
    \text{Error} = \log(\|\bm{M}_{\text{SDP}} - \bm{M}^*\|_\textrm{F}).
    \label{eq:relative_error}
\end{equation}
When cluster sizes are unknown and the SDP in \eqref{eq:SDP_unknown} is adopted, the ground truth $\bm{M}^*$ in \eqref{eq:relative_error} should be replaced by the normalized $\bar{\bm{M}}^*$ defined in \eqref{eq:ground_truth_def_normalize}.
For the case of two clusters, we evaluate the phase transition of the condition $\widetilde{\bm{\Lambda}} \succ 0$ according to our guess of dual variables (e.g. in Lemmas~\ref{lemma:guess_dual} and \ref{lemma:guess_dual_unequal_new}) via the following metric:
\begin{equation}
    \text{Failure rate} = 1 - \text{the rate that } \widetilde{\bm{\Lambda}} \succ 0 \text{ is satisfied}.
    \label{eq:failure_rate} 
\end{equation}
$\text{Failure rate} = 0$ implies our dual certificates satisfy the optimality and uniqueness condition for $\bm{M}^*$ or $\bar{\bm{M}}^*$ (e.g. in Lemmas~\ref{lemma:1_main} and \ref{lemma:uniqueness_unequal_new}) and exact recovery is guaranteed. 
For each experiment, we average the result over 10 realizations of the observation matrix $\bm{A}$.

\iffalse
\begin{figure}
    \centering
    \vspace{-0.2cm}
    \subfloat[$n = 150$]{\includegraphics[width = 0.360\textwidth]{figures/equal/original_phase_transition_150_10.png}\label{fig:original_SDP_equal}
    \label{fig:sbm_sdp}
    }
    \subfloat[Conditions for exact recovery]{\includegraphics[width = 0.360\textwidth]{figures/equal/equal_original_phase.png}\label{fig:equal_original_phase}
    \label{fig:compare_thresholds}
    }
    \vspace{-0.2cm}
    \caption{\textit{Results for the case of two equal-sized clusters}. \protect \subref{fig:sbm_sdp} Recovery error by the SDP without rotational information in~\cite[Eq.~(5)]{hajek2016achievinga}, the exact recovery threshold $\sqrt{\alpha} - \sqrt{\beta} = \sqrt{2}$ (see \cite[Theorem 2]{hajek2016achievinga}) is shown in red. \protect \subref{fig:compare_thresholds} Comparison of the exact recovery thresholds of the cluster membership provided in Theorem~\ref{the:2_cluster_equal} (blue) and~\cite[Theorem 2]{hajek2016achievinga} (red). Exact recovery is guaranteed in the region below each threshold as shown in grey.}
    \label{fig:two_equal_origin}
\end{figure}
\fi

\subsection{Two equal-sized clusters}
\label{sec:exp_2D_equal}
First, we consider two equal-sized clusters where the SDP in \eqref{eq:SDP_equal} is used for recovery. In Figures~\ref{fig:two_equal_a}--\ref{fig:two_equal_c} we plot the recovery error with varying $\alpha$ and $\beta$ for $n = 50, \, 100, \, 150$. In each plot, the red solid curve corresponds to the threshold of the condition in Theorem~\ref{the:2_cluster_equal_d_2} for $d = 2$ . As we can see, \eqref{eq:SDP_equal} achieves exact recovery within a wide range of $\alpha$ and $\beta$. Also, the theoretical threshold closely fits the empirical phase transition boundary, especially as $n$ is large, which indicates the sharpness of the condition. 
% In contrast,the red dash curve corresponds to the threshold of the conditions given in Theorem~\ref{the:2_cluster_equal_d_2} for $d > 2$, which is slightly loose by a constant factor $\sqrt{2}$.

\begin{figure}[t!]
    \centering
    \vspace{-0.7cm}
    % \captionsetup[subfigure]{labelformat=empty}
    \setlength\tabcolsep{1.0pt}
	\renewcommand{\arraystretch}{0.6}
	\begin{tabular}{p{0.35\textwidth}<{\centering} p{0.35\textwidth}<{\centering}}
% 	\raisebox{0.8cm}{\rotatebox{90}{\scriptsize{$(m_1, m_2) = (60, 40)$}}}
	\subfloat[\scriptsize{Error of $\bm{M}_{\text{SDP}}$, $(m_1,\!m_2) \!=\! (60,\!40)$}]{\includegraphics[width = 0.32\textwidth]{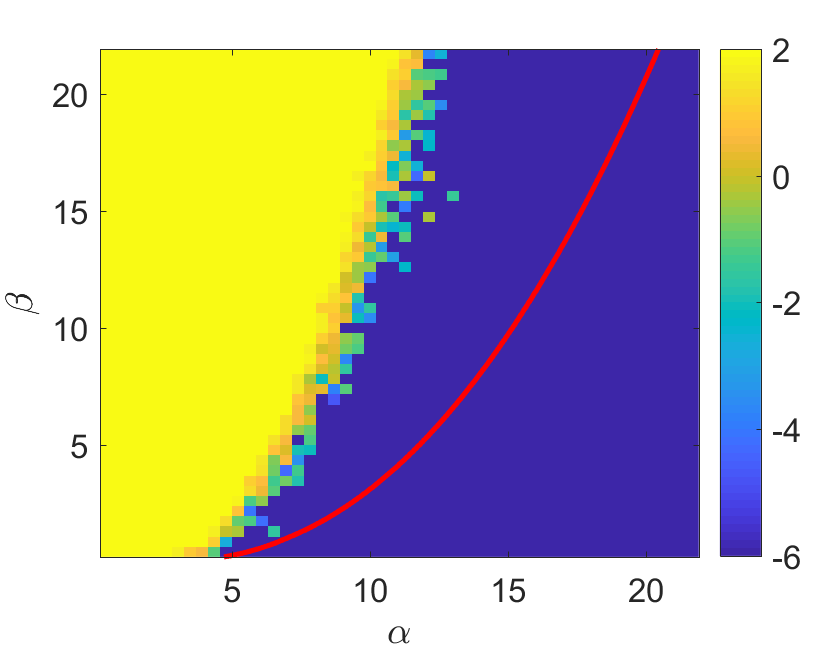}\label{fig:unequal_error_a}}
    &\subfloat[\scriptsize{Failure rate, $(m_1,\!m_2) \!=\! (60,\! 40)$}]{\includegraphics[width = 0.32\textwidth]{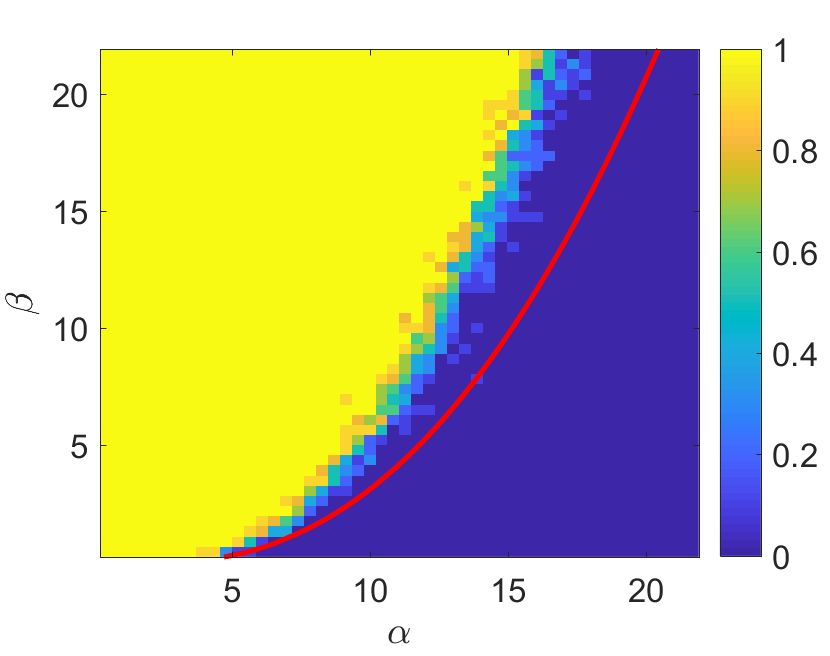}\label{fig:unequal_rate_b}}\\[-5pt]
    % \raisebox{-3.5cm}{\rotatebox{90}{\scriptsize{$(m_1, m_2) = (80, 20)$}}}  
    \subfloat[\scriptsize{Error of $\bm{M}_{\text{SDP}}$, $(m_1,\!m_2) \!=\! (80,\!20)$}]{\includegraphics[width = 0.32\textwidth]{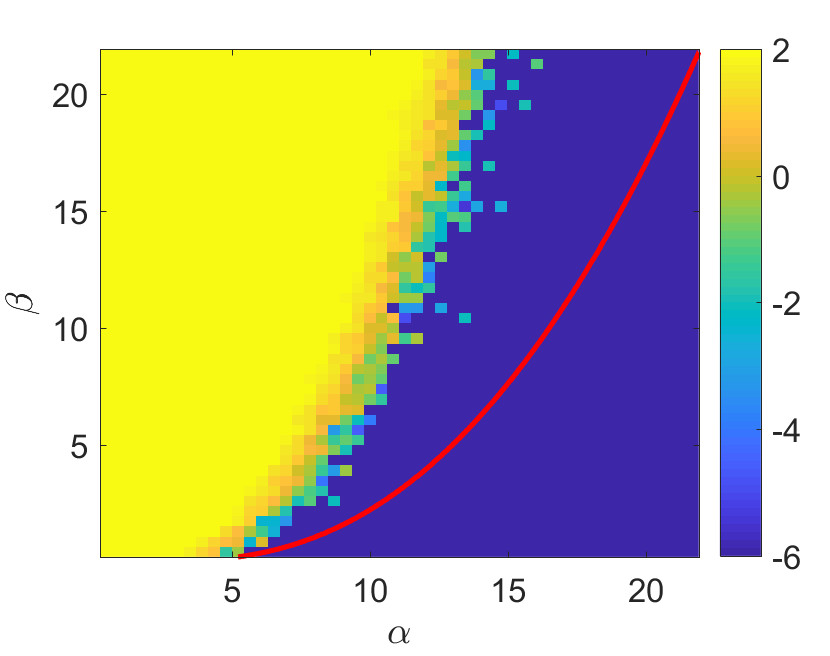}\label{fig:unequal_error_c}}
    &\subfloat[\scriptsize{Failure rate, $(m_1,\!m_2) \!=\! (80,\! 20)$}]{\includegraphics[width = 0.32\textwidth]{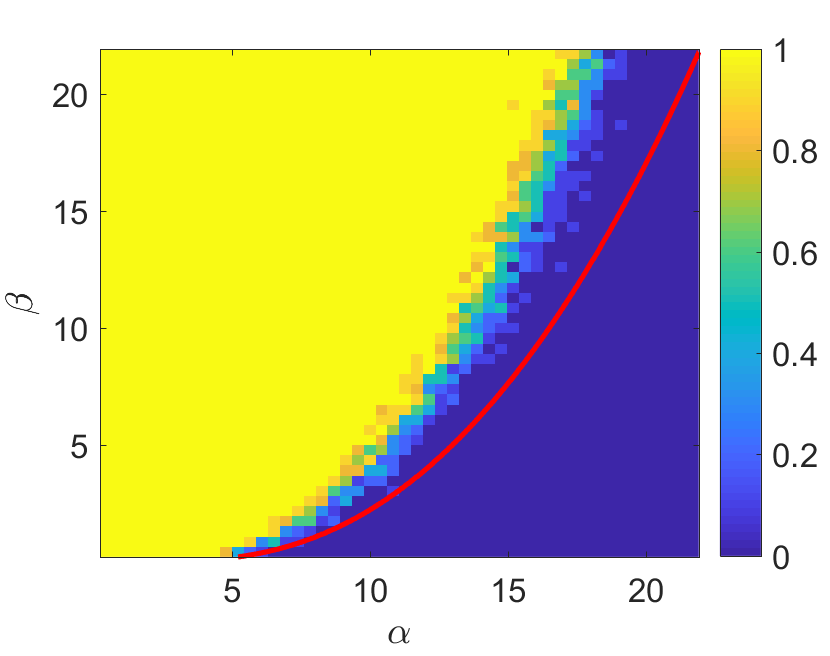}\label{fig:unequal_rate_d}}
	\end{tabular}
    \vspace{-0.2cm}
    \caption{\textit{Results on two unequal-sized clusters with known cluster sizes.} 
    \protect \subref{fig:unequal_error_a} and \protect \subref{fig:unequal_error_c}: Recovery error of the solution of \eqref{eq:SDP_unequal_known}; \protect \subref{fig:unequal_rate_b} and \protect \subref{fig:unequal_rate_d}: The failure rate defined in \eqref{eq:failure_rate} for $\widetilde{\bm{\Lambda}}$ in Lemma~\ref{lemma:simplify_lambda_unequal}. The threshold in Theorem~\ref{the:2_cluster_unequal_d_2} for $d = 2$ is shown in red. }
    \label{fig:two_unequal_exp}
\end{figure}

\begin{figure}[t!]
    \centering
    \vspace{-0.5cm}
    % \captionsetup[subfigure]{labelformat=empty}
    \setlength\tabcolsep{1.0pt}
	\renewcommand{\arraystretch}{0.6}
	\begin{tabular}{p{0.35\textwidth}<{\centering} p{0.35\textwidth}<{\centering}}
    \subfloat[\scriptsize{$(m_1, m_2) = (600, 400)$}]{\includegraphics[width = 0.32\textwidth]{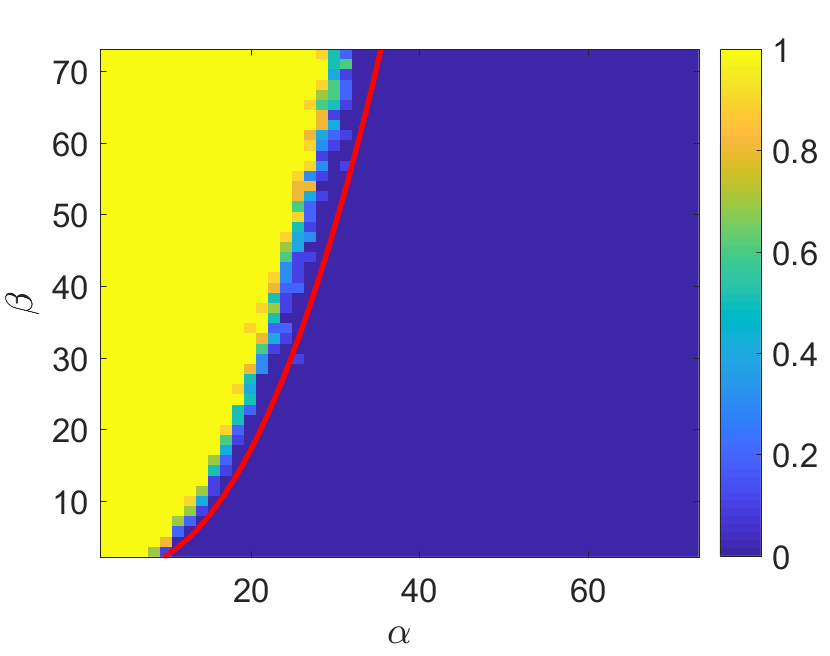}\label{fig:unequal_dual_large_a}}
    \hspace{0.50cm}
    % \subfloat[$m_1 = 700, \;m_2 = 300$]{\includegraphics[width = 0.33\textwidth]{figures/dual_test_unequal_700_300.png}}
    &\subfloat[\scriptsize{$(m_1, m_2) = (800, 200)$}]{\includegraphics[width = 0.32\textwidth]{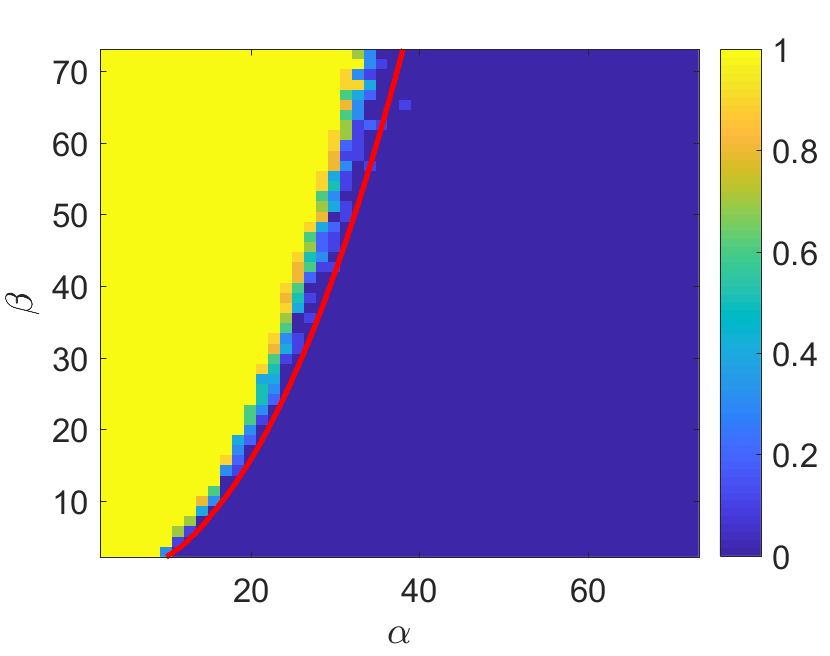}\label{fig:unequal_dual_large_b}}
	\end{tabular}
    \vspace{-0.2cm}
    \caption{\textit{Results on two unequal-sized clusters with known cluster sizes, large $n$}. We plot the failure rate \eqref{eq:failure_rate} for $\widetilde{\bm{\Lambda}}$ in Lemma~\ref{lemma:simplify_lambda_unequal}, and the threshold in Theorem~\ref{the:2_cluster_unequal_d_2} at $d = 2$ in red. }
    \label{fig:two_unequal_exp_large_n}
    \vspace{-0.5cm}
\end{figure}

% \begin{figure}
%     \centering
%     \vspace{-1cm}
%     \hspace{0.2cm}
%     \subfloat[\scriptsize{$(m_1, m_2) = (600, 400)$}]{\includegraphics[width = 0.32\textwidth]{figures/unequal/dual_unequal_new_600_400_10.png}\label{fig:unequal_dual_large_a}}
%     \hspace{0.50cm}
%     % \subfloat[$m_1 = 700, \;m_2 = 300$]{\includegraphics[width = 0.33\textwidth]{figures/dual_test_unequal_700_300.png}}
%     \subfloat[\scriptsize{$(m_1, m_2) = (800, 200)$}]{\includegraphics[width = 0.32\textwidth]{figures/unequal/dual_unequal_new_800_200_10.png}\label{fig:unequal_dual_large_b}}\\[-5pt]
%     \caption{\textit{Results on two unequal-sized clusters with known cluster sizes, large $n$}. We plot the failure rate defined in \eqref{eq:failure_rate} for $\widetilde{\bm{\Lambda}}$ in Lemma~\ref{lemma:simplify_lambda_unequal}. The threshold in Theorem~\ref{the:2_cluster_unequal_d_2} for $d = 2$ is shown in red. }
%     \label{fig:two_unequal_exp_large_n}
%     \vspace{-0.5cm}
% \end{figure}

In addition, we consider using the two-stage approach mentioned in the introduction that first applies existing methods for community detection and then performs synchronization for each community separately. We use the SDP proposed in~\cite[Eq.~(5)]{hajek2016achievinga} to detect the cluster membership, which is proved to achieve the information-theoretical limit as $n \rightarrow \infty$ \cite{abbe2015exact,hajek2016achievinga}. As a result, Figure~\ref{fig:original_SDP_equal} displays the corresponding recovery error with $n = 150$, which exhibits worse performance than Figure~\ref{fig:two_equal_c}. In Figure~\ref{fig:equal_original_phase}, we compare the theoretical thresholds for exact recovery in Theorem~\ref{the:2_cluster_equal_d_2} for $d = 2$ and \cite[Theorem 2]{hajek2016achievinga}. It is clear to see that, by incorporating the additional pairwise alignment information, our method outperforms the classical community detection method which only exploits the edge connectivity.

\subsection{Two unequal-sized clusters}
\label{sec:exp_2D_unequal}
Next, we consider two unequal-sized clusters where the SDP in \eqref{eq:SDP_unequal_known} is used for recovery. 
We simulate $\bm{A}$ under different settings of $(m_1,m_2)$, and plot the recovery error with varying $\alpha$ and $\beta$ in Figures~\ref{fig:unequal_error_a} and \ref{fig:unequal_error_c}. By comparing the results for $(m_1, m_2) = (60, 40)$ and $(m_1, m_2) = (80, 20)$, we find that it gets more challenging to recover $\bm{M}^*$ if the cluster sizes are more imbalanced. The red curve in each plot corresponds the threshold of the sufficient conditions given in Theorem~\ref{the:2_cluster_unequal_d_2} for $d = 2$. Clearly, there is a gap between the empirical phase transition boundary and the theoretical threshold, which implies the condition in Theorem~\ref{the:2_cluster_unequal_d_2} may not be necessary.

To understand why there exists such gap, we construct the dual variables according to our construction in Lemma~\ref{lemma:guess_dual_unequal_new} for each realization of $\bm{A}$, and in Figures~\ref{fig:unequal_rate_b} and \ref{fig:unequal_rate_d} we plot the failure rate in \eqref{eq:failure_rate} for $\widetilde{\bm{\Lambda}}$ defined in Lemma~\ref{lemma:simplify_lambda_unequal}. As a result, we see that in some certain regime of $\alpha$ and $\beta$, even though our dual construction does not satisfy such conditions, empirically we still observe exact recovery of \eqref{eq:SDP_unequal_known} in Figures~\ref{fig:unequal_error_a} and \ref{fig:unequal_error_c}. This implies our guess of the dual variables in Lemma~\ref{lemma:guess_dual_unequal_new} is sub-optimal. 
In addition, to further verify the sharpness of the conditions in Theorem~\ref{the:2_cluster_unequal_d_2}. As $n$ is large, we test the failure rate under different setting of $(m_1,m_2)$. As we can see in Figure~\ref{fig:two_unequal_exp_large_n}, the threshold of the condition in Theorem~\ref{the:2_cluster_unequal_d_2} sharply characterizes the empirical phase transition boundaries, which indicates the derived condition is tight based on our guessed dual variables.

\begin{figure}[t!]
    \centering
    \vspace{-0.7cm}
    % \captionsetup[subfigure]{labelformat=empty}
    \setlength\tabcolsep{1.0pt}
	\renewcommand{\arraystretch}{0.6}
	\begin{tabular}{p{0.32\textwidth}<{\centering} p{0.32\textwidth}<{\centering} p{0.32\textwidth}<{\centering}}
% 	& \text{\scriptsize{Error of $\bm{M}_{\text{SDP}}$}} &\text{\scriptsize{Error of $\bm{M}_{\text{round}}$}} &\text{\scriptsize{Failure Rate}}\\[-1.0pt]
% 	\raisebox{0.8cm}{\rotatebox{90}{\scriptsize{$(m_1, m_2) = (50, 50)$}}}
	\subfloat[\tiny{Error of $\bm{M}_{\text{SDP}}$, $(m_1, m_2) = (50, 50)$}]{\includegraphics[width = 0.31\textwidth]{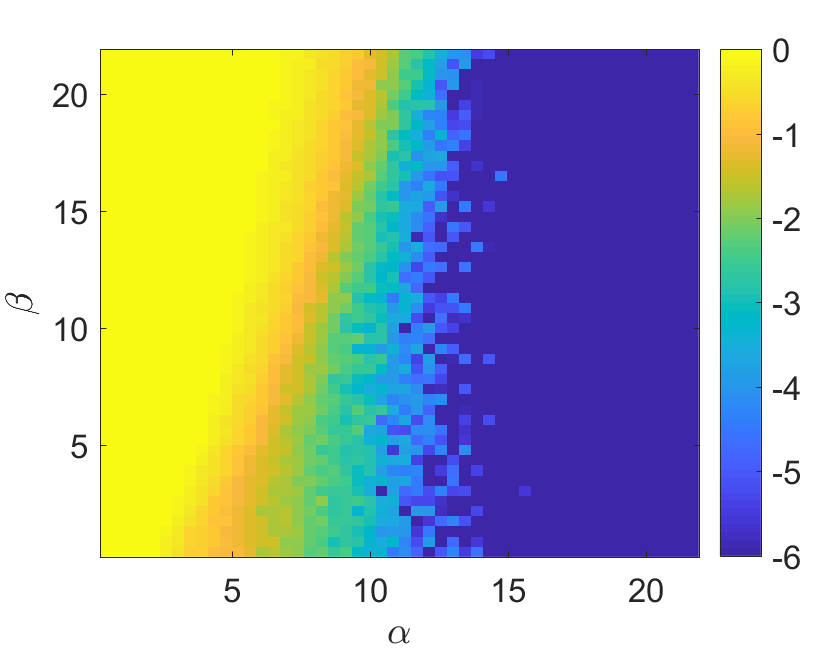}\label{fig:unknown_a}}
	&\subfloat[\tiny{Error of $\bm{M}_{\text{round}}$, $(m_1, m_2) = (50, 50)$}]{\includegraphics[width = 0.31\textwidth]{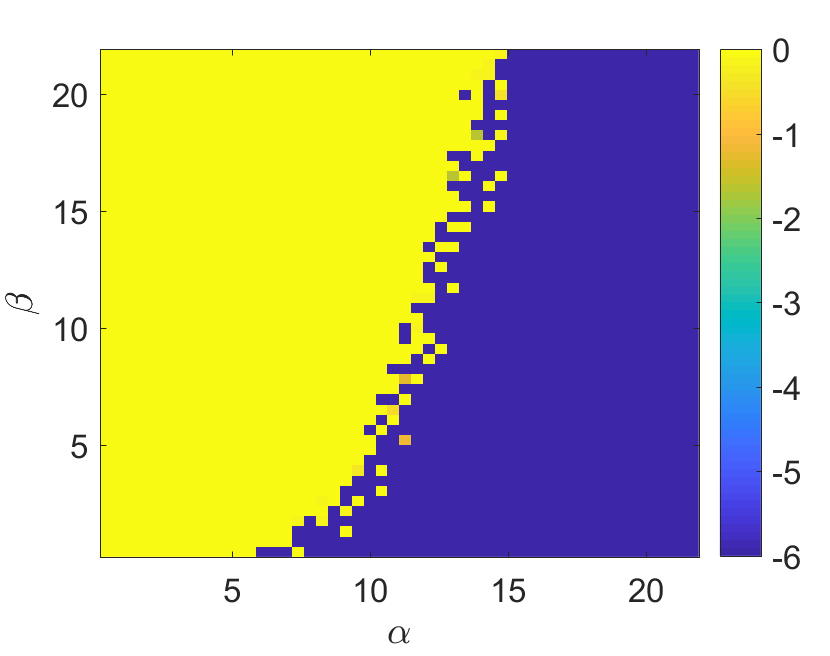}\label{fig:unknown_b}}
    &\subfloat[\tiny{Failure rate, $(m_1, m_2) = (50, 50)$}]{\includegraphics[width = 0.31\textwidth]{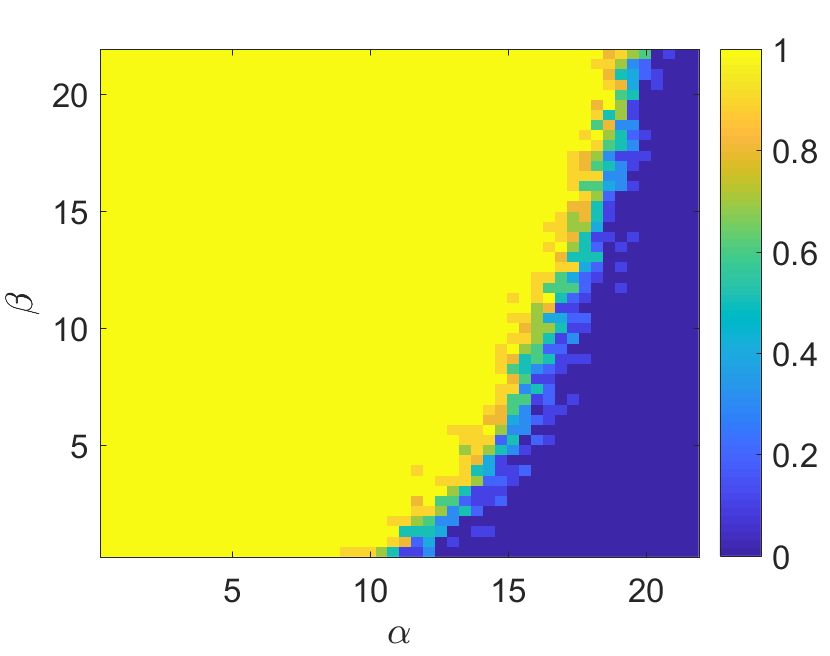}\label{fig:unknown_c}}\\[-5pt]
    % \raisebox{-3.5cm}{\rotatebox{90}{\scriptsize{$(m_1, m_2) = (80, 20)$}}}  
    \subfloat[\tiny{Error of $\bm{M}_{\text{SDP}}$, $(m_1, m_2) = (80, 20)$}]{\includegraphics[width = 0.31\textwidth]{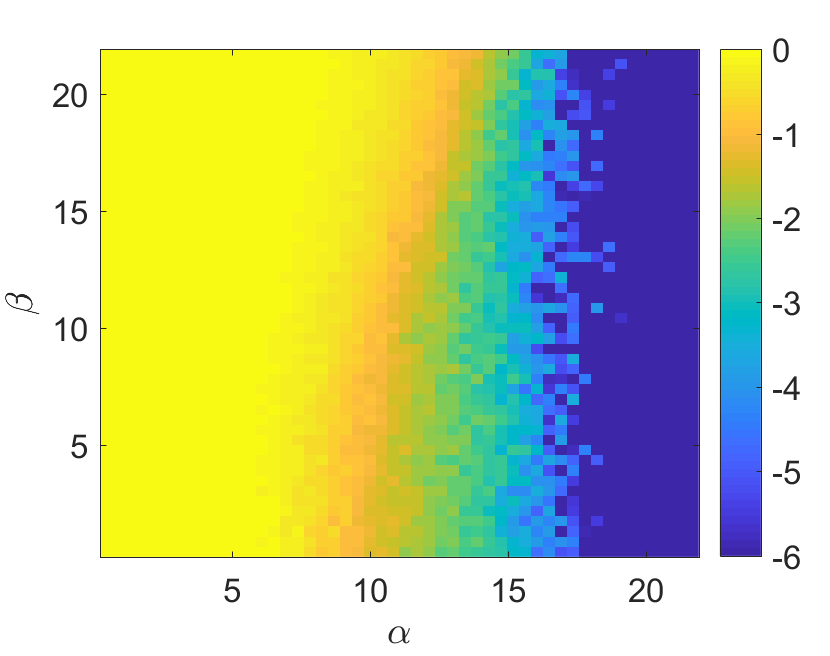}\label{fig:unknown_d}}
	&\subfloat[\tiny{Error of $\bm{M}_{\text{round}}$, $(m_1, m_2) = (80, 20)$}]{\includegraphics[width = 0.31\textwidth]{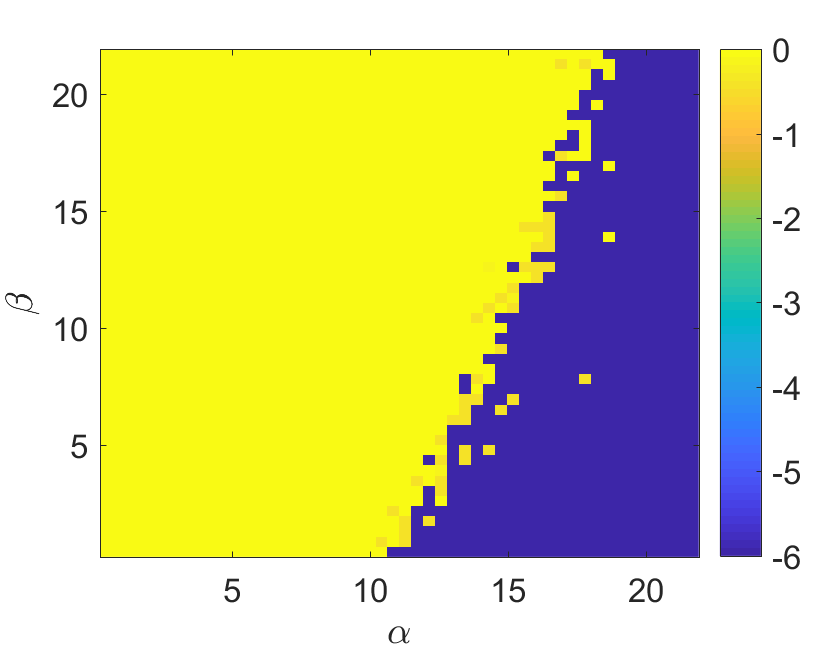}\label{fig:unknown_e}}
    &\subfloat[\tiny{Failure rate, $(m_1, m_2) = (80, 20)$}]{\includegraphics[width = 0.31\textwidth]{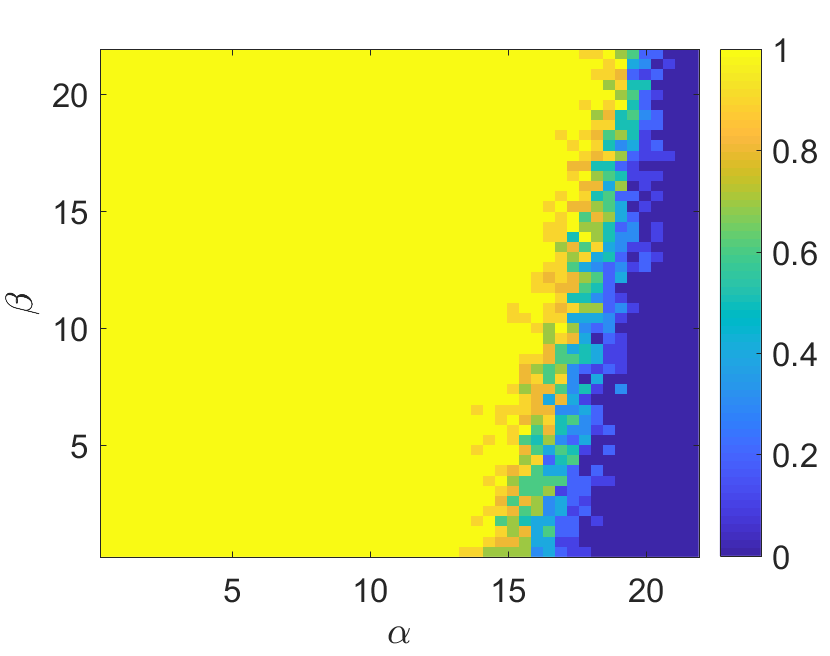}\label{fig:unknown_f}}
	\end{tabular}
    \vspace{-0.2cm}
    \caption{\textit{Results on two clusters with unknown cluster sizes}. \protect\subref{fig:unknown_a} and \protect\subref{fig:unknown_d}: Recovery error of $\bm{M}_{\text{SDP}}$ by \eqref{eq:SDP_unknown}; \protect\subref{fig:unknown_b} and \protect\subref{fig:unknown_e}: Recovery error of $\bm{M}_{\text{round}}$ by \eqref{eq:rounding_formula}; \protect\subref{fig:unknown_c} and \protect\subref{fig:unknown_d}: The failure rate defined in \eqref{eq:failure_rate} for $\widetilde{\bm{\Lambda}}$ in Lemma~\ref{lemma:simplify_lambda_unknown}. }
    \label{fig:two_unknown_rounding}
\end{figure}

\begin{figure}[t!]
    \vspace{-0.5cm}
    \centering
    \subfloat[\scriptsize{$(m_1, m_2) = (600, 400)$}]{\includegraphics[width = 0.31\textwidth]{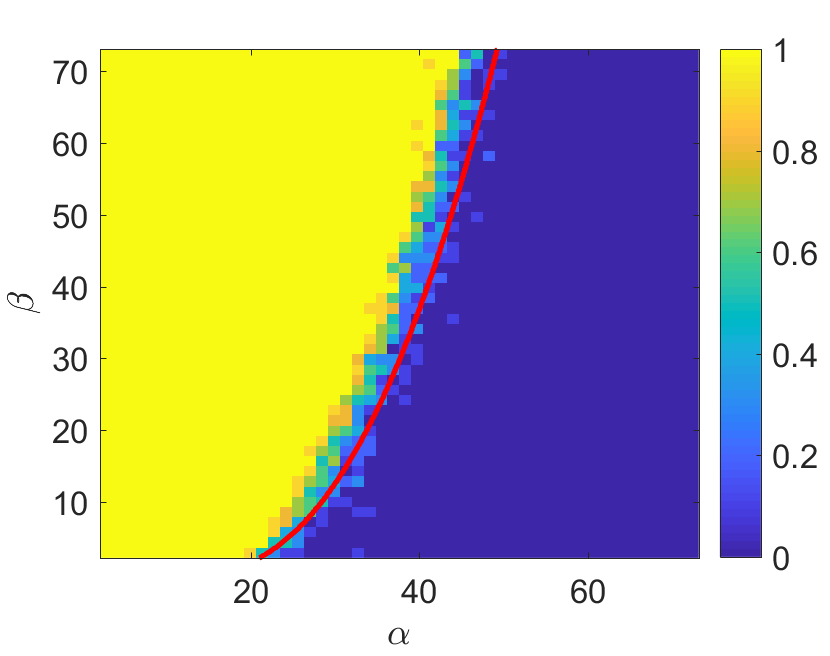}
    \label{fig:unknown_500_500}
    }
    \subfloat[\scriptsize{$(m_1, m_2) = (800, 200)$}]{\includegraphics[width = 0.31\textwidth]{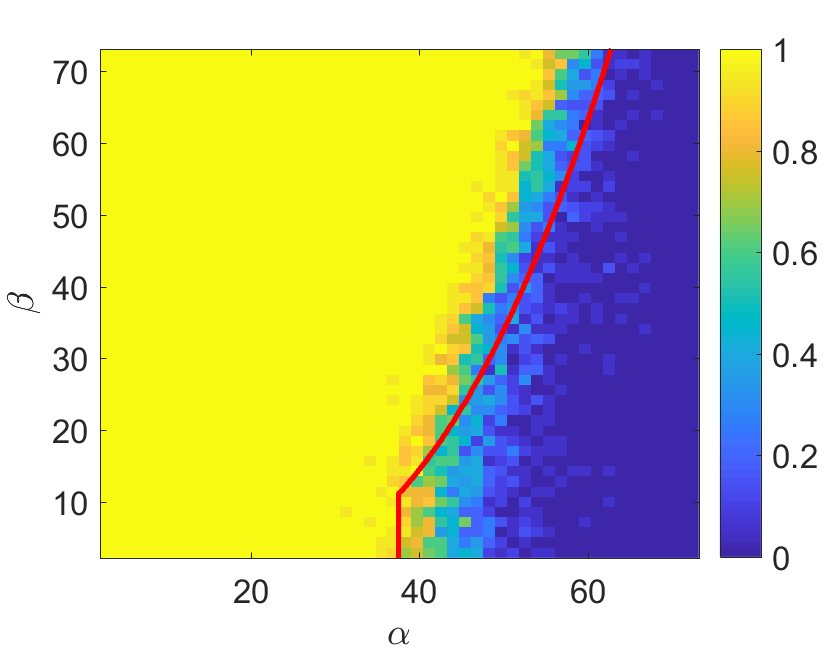}
    \label{fig:unknown_600_400}
    }
    \vspace{-0.25cm}
    \caption{\textit{Results on two clusters with unknown cluster sizes with large $n$}. We plot the failure rate in \eqref{eq:failure_rate} for $\widetilde{\bm{\Lambda}}$ in Lemma~\ref{lemma:simplify_lambda_unknown}. The threshold in Theorem~\ref{the:2_cluster_unknown_d_2} for $d = 2$ is shown in red.}
    \label{fig:two_unknown_large_n}
        \vspace{-0.5cm}
\end{figure}

\subsection{Two clusters with unknown cluster sizes} 
\label{sec:exp_2D_unknown}
Here we consider two clusters with unknown cluster sizes, where \eqref{eq:SDP_unknown} is used for recovery.  
We simulate $\bm{A}$ with different settings of $(m_1, m_2)$. In Figures~\ref{fig:unknown_a} and \ref{fig:unknown_d} we plot the recovery error of the SDP solutions. Also, we apply the rounding step \eqref{eq:rounding_formula} to refine the SDP solutions and plot the error after rounding in Figures~\ref{fig:unknown_b} and \ref{fig:unknown_e}. As a result, one can see that the result after rounding clearly exhibits certain improvement against the original one. 
In addition, to check whether our construction of the dual variables in Lemma~\ref{lemma:guess_dual_unknown} is optimal, in Figures~\ref{fig:unknown_c} and \ref{fig:unknown_f} we show the failure rate in \eqref{eq:failure_rate} for $\widetilde{\bm{\Lambda}}$ in Lemma~\ref{lemma:simplify_lambda_unknown}. Again, we see that there is certain regime of $\alpha, \beta$ where exact recovery by \eqref{eq:SDP_unknown} is possible (see Figures~\ref{fig:unknown_a} and \ref{fig:unknown_d}), but the constructed dual certificate fails to satisfy the condition, indicating a lack of optimality in the choice of dual variables.

% \yk{to check whether our construction of the dual variables are optimal}to verify our guess of dual variables given in Lemma~\ref{lemma:guess_dual_unknown}, in Figure~\ref{fig:unknown_c} we show the rate that they fail to satisfy the optimality and uniqueness conditions in Lemma~\ref{lemma:uniqueness_unknown} over 10 trials. In Figure~\ref{fig:unknown_a} and \ref{fig:unknown_c} we observe a certain gap between the phase transitions boundaries of the recovery error and the failure rate, which indicates our guessed dual may not be optimal. \yk{Again, we see that there are values of $\alpha,\beta$, where recovery from program is possible (in fig), but the constructed dual certificate fail to satisfies the kkt condition and strict complementary in lemma, indicating a lack of optimality in the constructed dual cert}

While our guess of the dual variables is not optimal, as $n$ is large we are able to sharply characterize when such a dual certificate fails to satisfy the condition in Lemma~\ref{lemma:guess_dual_unknown}. This is demonstrated in Figure~\ref{fig:two_unknown_large_n}, where we show good agreement between the predicted boundary in Theorem~\ref{the:2_cluster_unknown_d_2} and the empirical boundary of the failure rate.
% \yk{While our guess of the dual certificate is not optimal, we are able to characterize sharply when such a dual certificate fails to satisfies the condition in lemma. This is demonstrated in Fig., where we show good agreement between the predicted boundary (theorem) and empirical boundary of when the constructed dual certificate in lemma fail to satisfies the conditions in lemma } In addition, to verify the sharpness of the condition given in Theorem~\ref{the:2_cluster_unknown_d_2}, as $n$ is large we evaluate the failure rate similar to Figure~\ref{fig:unknown_c} under different setting of $(m_1,m_2)$, which is shown in Figure~\ref{fig:two_unknown_large_n}. One can see that the threshold of the condition in Theorem~\ref{the:2_cluster_unknown_d_2} for $d = 2$ sharply characterizes the empirical phase transition boundary of the failure rate, which indicates the tightness of our condition based on the guessed dual variables.

% simulate $\bm{A}$ under different setting of $(m_1,m_2)$, and construct the dual variables  according to our construction in Lemma~\ref{lemma:guess_dual_unknown} over 10 trials. Similar to Figure~\ref{fig:unequal_rate}, in Figure~\ref{fig:two_unknown_large_n} we evaluate the rate that the optimality and uniqueness conditions in Lemma~\ref{lemma:uniqueness_unknown} fail to be satisfied. One can see that the threshold of the condition in Theorem~\ref{the:2_cluster_unknown} sharply characterizes the empirical phase transition boundary of the failure rate, which indicates the tightness of our condition based on the guessed dual variables.

\begin{figure}[t!]
    \vspace{-0.4cm}
    \centering
	\subfloat[\scriptsize{Equal, by \eqref{eq:SDP_equal}}]{
	\includegraphics[width = 0.23\textwidth]{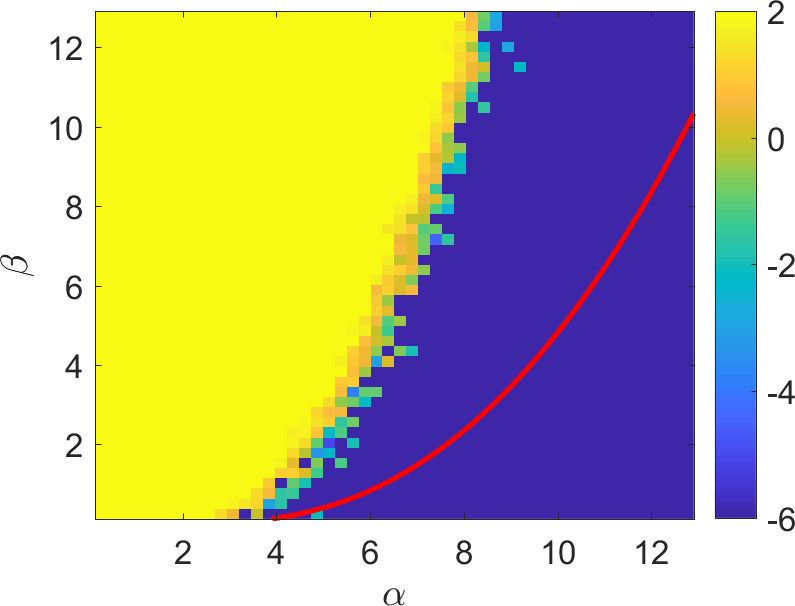}
	\label{fig:SO3_pt_1}
	}
    \subfloat[\scriptsize{Unequal, by \eqref{eq:SDP_unequal_known}}]{\includegraphics[width = 0.23\textwidth]{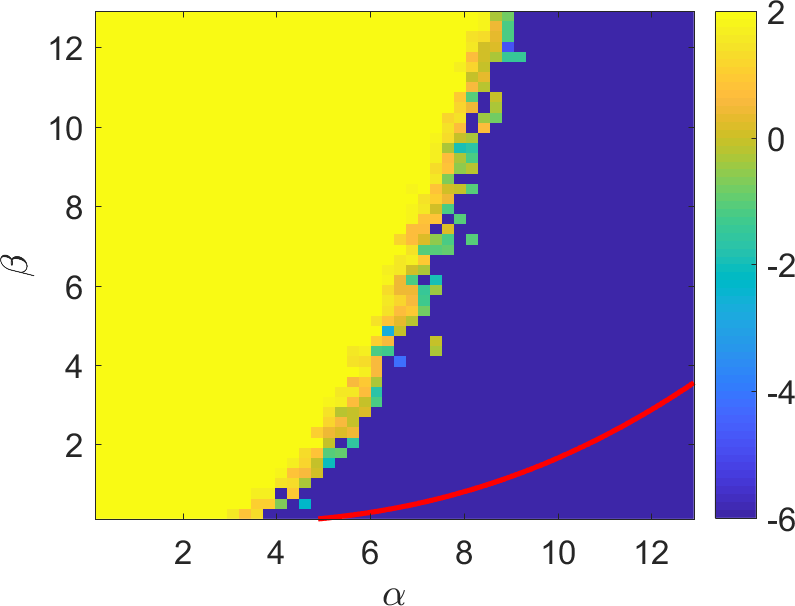}
    \label{fig:SO3_pt_2}
    }
    \subfloat[\scriptsize{Unknown, by \eqref{eq:SDP_unknown}}]{\includegraphics[width = 0.23\textwidth]{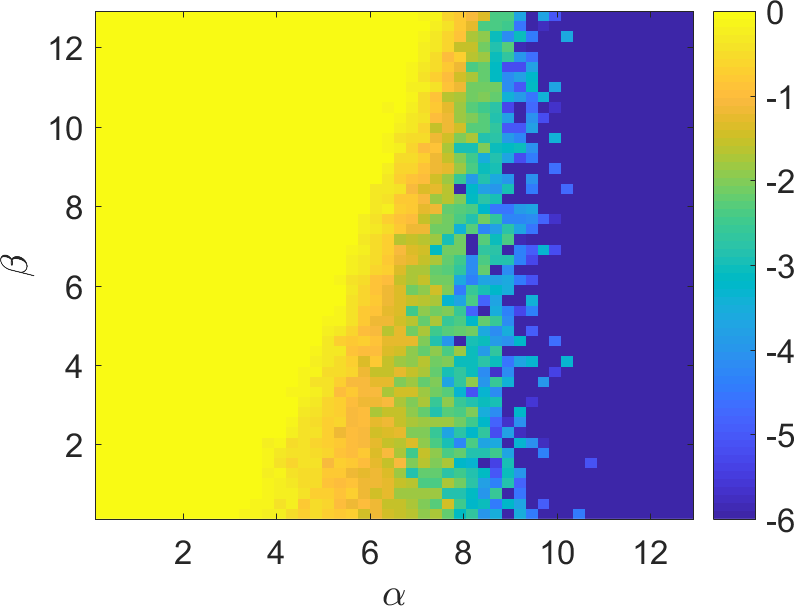}
    \label{fig:SO3_pt_3}
    }
    \subfloat[\scriptsize{Rounding, by \eqref{eq:rounding_formula}}]{\includegraphics[width = 0.23\textwidth]{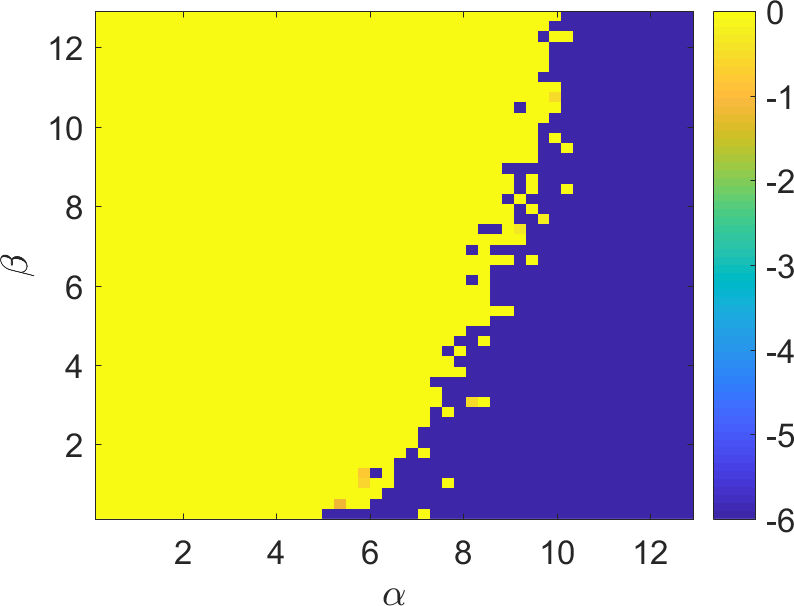}
    \label{fig:SO3_pt_4}
    }
    \vspace{-0.15cm}
    \caption{\textit{Results on $\SO(3)$ transformation}.
    We plot the recovery error of \protect \subref{fig:SO3_pt_1} two equal-sized clusters, the threshold in Theorem~\ref{the:2_cluster_equal_d_2} for $d > 2$ is shown in red; \protect \subref{fig:SO3_pt_2} two unequal-sized clusters, the threshold in Theorems~\ref{the:2_cluster_unequal_d_2} for $d > 2$ is shown in red; \protect \subref{fig:SO3_pt_3}--\protect \subref{fig:SO3_pt_4} two clusters with unknown cluster sizes. We test under $(m_1,m_2) = (25, 25)$ for \protect \subref{fig:SO3_pt_1}, and $(m_1,m_2) = (30, 20)$ for \protect \subref{fig:SO3_pt_2}--\protect \subref{fig:SO3_pt_4}. }
    \label{fig:exp_SO_3}
\end{figure}

\begin{figure}[t!]
    \vspace{-0.3cm}
    \centering
% 		\text{\scriptsize{Known cluster size}}
% \text{\scriptsize{Unknown cluster size}} &\text{\scriptsize{Unknown with rounding}}\\[-15pt]
    \subfloat[\scriptsize{$\bm{M}_{\text{SDP}}$ by \eqref{eq:SDP_general_known}}]{\includegraphics[width = 0.29\textwidth]{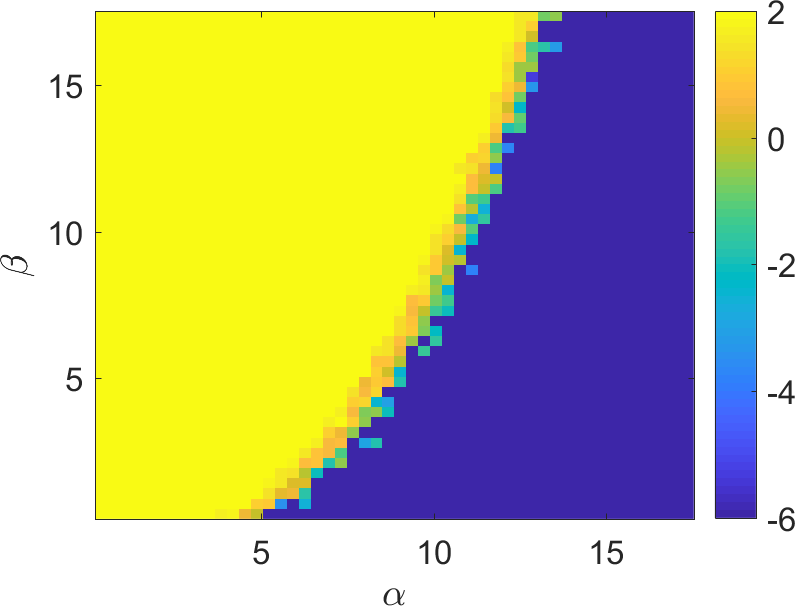} \label{fig:SDP_K_3_unequal}
        }
	\subfloat[\scriptsize{$\bm{M}_{\text{SDP}}$ by \eqref{eq:SDP_unequaK}}]{\includegraphics[width = 0.29\textwidth]{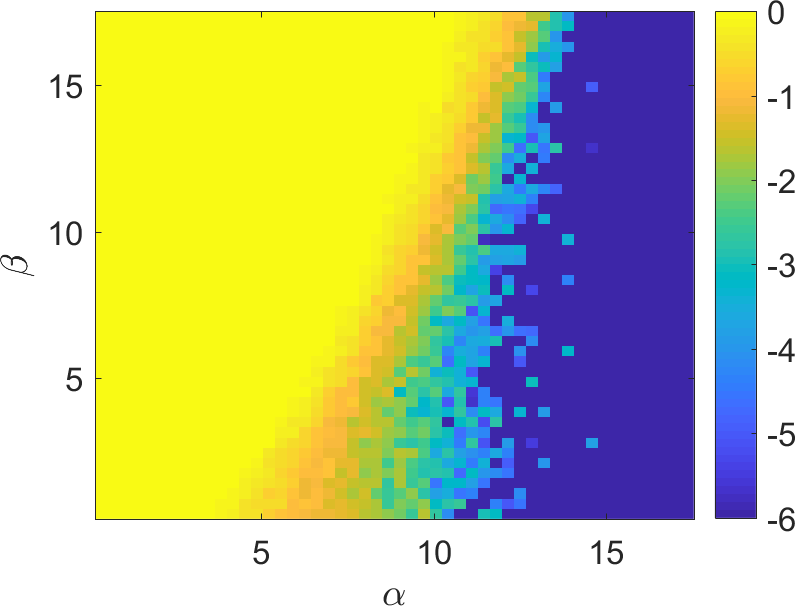} \label{fig:SDP_K_3}
    }
    \subfloat[\scriptsize{$\bm{M}_{\text{round}}$ by \eqref{eq:rounding_formula}}]{\includegraphics[width = 0.29\textwidth]{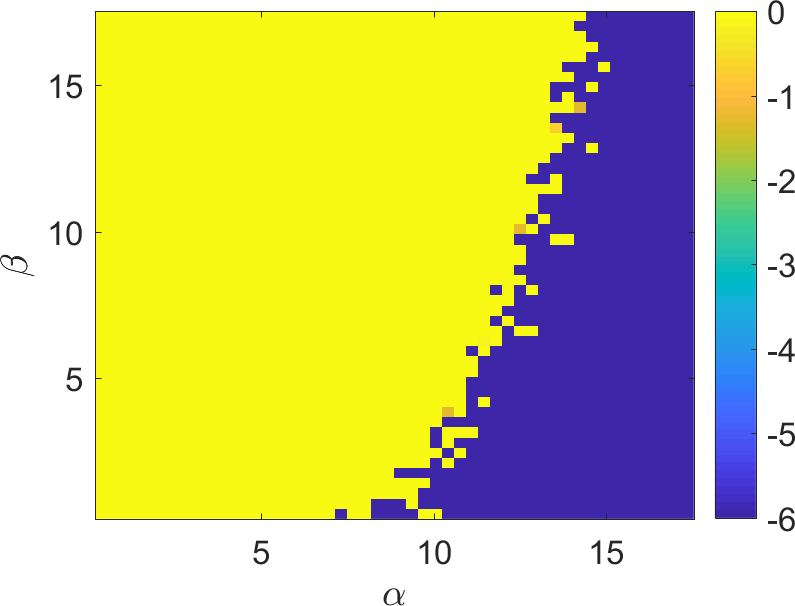}\label{fig:SDP_K_3_round}
    }
    \vspace{-0.15cm}
    \caption{\textit{Results on three clusters}. We test under the setting  $(m_1, m_2, m_3) = (25, 25, 25)$. We plot the recovery error under \protect \subref{fig:SDP_K_3_unequal} known cluster sizes, recovered by \eqref{eq:SDP_general_known}; \protect \subref{fig:SDP_K_3}--\protect \subref{fig:SDP_K_3_round} unknown cluster sizes, recovered by \eqref{eq:SDP_unequaK} and rounded by \eqref{eq:rounding_formula} respectively.}
    \label{fig:K_3_plot}
        \vspace{-0.5cm}
\end{figure}

\subsection{3D rotations and $K = 3$}
\label{sec:exp_general}
Now, we extend our experiments to $\SO(3)$ group transformation. In Figure~\ref{fig:exp_SO_3} we plot the recovery errors by SDPs on different scenarios under two clusters. Similar to the results for $\SO(2)$, we observe sharp phase transitions in the SDP solutions. Specifically, in Figures~\ref{fig:SO3_pt_1} and \ref{fig:SO3_pt_2} we display the thresholds for exact recovery derived in Sections~\ref{sec:SDP_equal} and \ref{sec:SDP_unequal}, respectively. One can see that exact recovery is still possible even above the thresholds, which implies there exists room for improvement of our conditions. 
As mentioned, such looseness is due to two reasons: (1) The factor $\ell_d$ in the exact recovery condition might be loose when $d > 2$; (2) the dual certificate construction might be sub-optimal.

Besides, we test the generalized SDPs proposed in \eqref{eq:SDP_general_known} and \eqref{eq:SDP_unequaK}, under the setting of $K = 3$ and $(m_1, m_2, m_3) = (25, 25, 25)$. The plots of recovery error are shown in Figure~\ref{fig:K_3_plot}. As we can see, the proposed SDPs achieves exact recovery with sharp phase transitions. In addition, the rounding step is effective to further improve the SDP solutions in both cases.

\section{Conclusion and open problems}
\label{sec:discussion_summary}
In this paper, we propose to solve the community detection and rotational synchronization simultaneously via semidefinite programming. Compared with the existing SDP for community detection under SBM, our proposed SDPs not only take into account the graph structure but also the additional information on pairwise rotational alignments, which gives better clustering results.
In particular, for the case of two clusters we obtain sufficient conditions for the SDPs to achieve exact recovery, which characterize the empirical phase transition boundaries. In addition, we acquire new concentration inequalities for the Frobenius norm of sum of random rotation matrices and the operator norm of a random block matrix as interesting byproducts from our analysis.  

There are several directions to be further explored. First, it is natural to expect that the results obtained in this paper can be extended to a much more general family of groups, such as symmetric group and orthogonal group, using techniques introduced in \cite{liu2020unified}. Also, our analysis can be applied to other probabilistic models on the pairwise relations, such as the additive Gaussian model considered in \cite{ling2020near}. In addition, the conditions for exact recovery by \eqref{eq:SDP_unequal_known} and \eqref{eq:SDP_unknown} in the case of unequal and unknown cluster sizes can be improved by finding a more optimal construct of dual variables. 
For $d> 2$, current results can be improved with a sharp Frobenius norm bound on the sum of random rotation matrices that is tighter than the one given in Theorem~\ref{lemma:large_deviation_random_orthogonal}, i.e., the determination of $l_d$ when $d > 2$ in Theorems \ref{the:2_cluster_equal_d_2}, \ref{the:2_cluster_unequal_d_2}, and \ref{the:2_cluster_unknown_d_2} can be improved.
Furthermore, we note that finding the information-theoretical limit of exact recovery on the proposed probabilistic model is still an open problem. Finally, although the SDP can be solved in polynomial time complexity, it is still computationally expensive when the graph size is large, and one can instead consider spectral methods and message passing algorithms in that situation.

\appendix

% \newpage
\section{Important technical ingredients}
\label{sec:other_proof}
This section is devoted to technical ingredients that support our proofs in supplementary material Section~\ref{sec:proof}. Specifically, we derive new matrix concentration inequalities for random rotation matrices in Appendix~\ref{sec:sum_random_orthogonal}, and new upper bounds of the operator norm of random block matrices in Appendix~\ref{sec:norm_rand_matrix}.

\subsection{A tail bound of Bernoulli trials}
\label{sec:bound_binomial} Our analysis relies on the following tail bound of a Bernoulli trail.

\begin{lemma}[\textup{\cite[Lemma 2]{hajek2016achievinga}}]
Let $X \sim \mathrm{Binom}\left(m, \alpha \log n/n\right)$ for $m \in \mathbb{N},\; \alpha = O(1)$, where $ m = \rho n$ for some $\rho > 0$.  Let $k_n = \tau\rho \log n$ for some $\tau \in (0, \alpha]$. Then for a sufficiently large $n$,
\begin{equation*}
    \mathbb{P}\left\{ X \leq k_n\right\} = n^{-\rho\left(\alpha - \tau \log(\frac{e\alpha}{\tau}) + o(1)\right)}.
    \label{eq:bound_bernoulli}
\end{equation*}
\label{lemma:Bound_bernoulli}
\end{lemma}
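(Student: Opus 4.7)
The plan is to match the classical large-deviation asymptotics of a Binomial random variable in the regime where both the success probability $p$ and the target fraction $k_n/m$ decay like $\log n/n$. The target rate $\rho\bigl(\alpha - \tau\log(e\alpha/\tau)\bigr)$ equals $\rho$ times the KL divergence $D(\tau\log n/n \,\|\, \alpha\log n/n)$ expanded to leading order in $\log n/n$, so a Chernoff upper bound together with a matching lower bound via the single dominant term of the p.m.f.\ will pinch $-\log \mathbb{P}\{X \leq k_n\}$ at the claimed value $\rho\bigl(\alpha - \tau\log(e\alpha/\tau)+o(1)\bigr)\log n$.

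For the upper bound I would apply $\mathbb{P}\{X \leq k_n\} \leq \inf_{t > 0} e^{tk_n}\,\mathbb{E}[e^{-tX}] = \inf_{t > 0} e^{tk_n}(1 - p + pe^{-t})^m$. Optimizing over $t$ by the standard choice that equates the tilted mean to $k_n$ yields $\exp\!\bigl(-m\,D(k_n/m \,\|\, p)\bigr)$. Substituting $p = \alpha\log n/n$ and $k_n/m = \tau\log n/n$ and expanding $\log(1-x) = -x + O(x^2)$ in the $(1-k_n/m)$-piece of the KL divergence gives
\begin{equation*}
m\,D(k_n/m \,\|\, p) = \rho\log n \cdot \bigl(\tau\log(\tau/\alpha) + \alpha - \tau\bigr) + O\!\bigl((\log n)^2/n\bigr) = \rho\bigl(\alpha - \tau\log(e\alpha/\tau)\bigr)\log n + o(\log n),
\end{equation*}
which is the desired upper bound $n^{-\rho(\alpha - \tau\log(e\alpha/\tau)+o(1))}$.

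For the matching lower bound I would retain the single largest term $\mathbb{P}\{X \leq k_n\} \geq \mathbb{P}\{X = \lfloor k_n\rfloor\} = \binom{m}{\lfloor k_n \rfloor}\, p^{\lfloor k_n \rfloor}(1-p)^{m - \lfloor k_n \rfloor}$. Since $k_n \to \infty$ and $m - k_n \to \infty$, Stirling yields $\log \binom{m}{\lfloor k_n\rfloor} = mH(k_n/m) + O(\log n)$ with $H$ the binary entropy, so taking logs of the whole expression reproduces $-m\,D(k_n/m \,\|\, p) + O(\log n)$; this matches the Chernoff exponent up to an additive $O(\log n) = n^{o(1)}$ correction, which is absorbed into the $o(1)$ term in the exponent of $n$.

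The step that requires the most care is the Taylor expansion of the KL divergence: one must verify that the $\tau\log(\tau/\alpha)$ contribution from the $k_n/m$ term and the $(\alpha - \tau)$ contribution from the $(1 - k_n/m)$ term (obtained via $\log(1-x)\approx -x$) combine into $\alpha - \tau\log(e\alpha/\tau)$, while the second-order Taylor remainder $O((\log n)^2/n)$ and the polynomial Stirling prefactors both fall into the $o(1)$ term in the exponent of $n$. Once this bookkeeping is controlled, the Chernoff and single-term estimates pinch and yield the stated sharp asymptotic.
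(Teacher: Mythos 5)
The paper does not prove this lemma at all: it is imported verbatim as \cite[Lemma 2]{hajek2016achievinga}, so there is no in-paper argument to compare against. Your Chernoff-plus-single-term argument is the standard proof of this binomial lower-tail asymptotic and it is essentially correct: the upper bound $\mathbb{P}\{X\le k_n\}\le \exp(-m\,D(k_n/m\,\|\,p))$ is valid here because $\tau\le\alpha$ puts $k_n$ below the mean, and your expansion $m\,D(k_n/m\,\|\,p)=\rho\bigl(\tau\log(\tau/\alpha)+\alpha-\tau\bigr)\log n+O((\log n)^2/n)=\rho\bigl(\alpha-\tau\log(e\alpha/\tau)\bigr)\log n+o(\log n)$ is the right bookkeeping; the single-term lower bound with Stirling then pinches the exponent.

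One point needs to be stated more carefully: the phrase ``an additive $O(\log n)=n^{o(1)}$ correction'' is not right as written. An additive error of order $c\log n$ in $\log\mathbb{P}\{X\le k_n\}$ would shift the coefficient of $\log n$ in the exponent by a constant, i.e.\ a multiplicative factor $n^{O(1)}$, which could \emph{not} be absorbed into the $o(1)$. What saves the argument is that the actual corrections are smaller: since $k_n\asymp\log n$ and $m-k_n\asymp n$, the Stirling prefactor contributes only $O(\log\log n)$ to $\log\binom{m}{\lfloor k_n\rfloor}$, replacing $k_n$ by $\lfloor k_n\rfloor$ costs $O(1)$, and the Taylor remainder is $O((\log n)^2/n)$; all of these are $o(\log n)$ and hence genuinely absorbable into the $o(1)$ in the exponent (consistent with your closing sentence about polynomial-in-$k_n$ prefactors). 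With that correction in the accounting, the proof is complete, including the boundary case $\tau=\alpha$ where the exponent degenerates to $o(1)$.
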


\subsection{Concentration inequalities for random rotation matrices}
\label{sec:sum_random_orthogonal}
Given a series of i.i.d. random matrices $\bm{X}_i \in \mathbb{R}^{d\times d}$ for $i = 1,\ldots m$ such that 
\begin{equation}
    \bm{X}_i = 
    \begin{cases}
    \bm{R}_i, &\quad \text{with probability $q$},\\
    \bm{0}, &\quad \text{otherwise},
    \end{cases}
    \label{eq:X_i_definition}
\end{equation}
where $\bm{R}_i$ is a random rotation matrix with $\det(\bm{R}_i) = 1$, uniformly drawn from $\mathrm{SO}(d)$ according to Haar measure. Our analysis relies on bounding the Frobenius norm of $\bm{Z} = \sum_{i = 1}^m \bm{X}_i$, namely $\|\bm{Z} \|_\mathrm{F}$. 
In general, a fairly tight bound can be obtained from matrix concentration inequalities such as matrix Bernstein~\cite[Theorem 1.6]{tropp2012user}, which is presented as following:
\begin{theorem}[\textup{Matrix Bernstein inequality, \cite[Theorem 1.6]{tropp2012user}}]
Let $\bm{X}_1, \ldots, \bm{X}_m \in \mathbb{R}^{d \times d}$ be independent, centered random matrices and each one is uniformly bounded as 
\begin{equation*}
    \mathbb{E}[\bm{X}_i] = \bm{0} \quad \text{and} \quad \|\bm{X}_i\| \leq L, \quad \text{for } i = 1,\ldots, n.
\end{equation*}
Let $\bm{Z} = \sum \limits_{i = 1}^n \bm{X}_i$ and $v(\bm{Z})$ denotes the matrix variance statistic of $\bm{Z}$ as 
\begin{equation*}
    v(\bm{Z}) = \max\{\|\mathbb{E}(\bm{Z}\bm{Z}^\top)\|, \; \|\mathbb{E}(\bm{Z}^\top\bm{Z})\|\}.
\end{equation*}
Then for any $t > 0$,
\begin{equation*}
    \mathbb{P}\left\{\|\bm{Z}\| \geq t \right\} \leq 2d \exp\left(\frac{-t^2/2}{v(\bm{Z}) + Lt/3}\right).
\end{equation*}
\label{lemma:matrix_bernstein}
\end{theorem}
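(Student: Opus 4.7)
The plan is to follow the Laplace-transform / matrix cumulant approach pioneered by Ahlswede--Winter and sharpened by Tropp. Since the $\bm{X}_i$ are not assumed Hermitian, my first step is to pass to the Hermitian dilation
\begin{equation*}
\mathcal{H}(\bm{X}) = \begin{pmatrix} \bm{0} & \bm{X} \\ \bm{X}^\top & \bm{0} \end{pmatrix} \in \mathbb{R}^{2d \times 2d},
\end{equation*}
which is linear, preserves operator norm (so $\|\mathcal{H}(\bm{X}_i)\| \leq L$), preserves independence and mean-zero, and whose square is block-diagonal with blocks $\bm{X}\bm{X}^\top$ and $\bm{X}^\top\bm{X}$. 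Consequently $\|\bm{Z}\| = \lambda_{\max}(\mathcal{H}(\bm{Z}))$ and the variance parameter of $\mathcal{H}(\bm{Z})$ is exactly $v(\bm{Z})$. This reduces the problem to a tail bound on the largest eigenvalue of a sum of independent, bounded, centered Hermitian random matrices, at the cost of replacing the dimension factor $d$ by $2d$; I would absorb this constant into the statement's prefactor.

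Next, I would apply the matrix Laplace transform method. For any $\theta > 0$, by Markov's inequality applied to the monotone function $x \mapsto e^{\theta x}$ and the fact that $e^{\theta \lambda_{\max}(\bm{Y})} \leq \operatorname{Tr} e^{\theta \bm{Y}}$ for Hermitian $\bm{Y}$, one obtains
\begin{equation*}
\mathbb{P}\{\lambda_{\max}(\mathcal{H}(\bm{Z})) \geq t\} \leq e^{-\theta t}\, \mathbb{E}\!\left[\operatorname{Tr} \exp\!\Big(\theta \sum_i \mathcal{H}(\bm{X}_i)\Big)\right].
\end{equation*}
The central non-commutative ingredient is Lieb's concavity theorem, which implies that for any fixed Hermitian $\bm{H}$ the map $\bm{A} \mapsto \operatorname{Tr}\exp(\bm{H} + \log \bm{A})$ is concave on positive definite matrices. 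Applying this inductively with Jensen's inequality across the independent summands yields the master bound
\begin{equation*}
\mathbb{E}\!\left[\operatorname{Tr} \exp\!\Big(\theta \sum_i \mathcal{H}(\bm{X}_i)\Big)\right] \leq \operatorname{Tr} \exp\!\Big(\sum_i \log \mathbb{E}\, e^{\theta \mathcal{H}(\bm{X}_i)}\Big),
\end{equation*}
which linearizes the exponent and is the step I expect to be the main technical obstacle, since it is precisely where the non-commutativity of matrix exponentials would otherwise break a scalar-style proof.

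With the master inequality in hand, it remains to bound each matrix cumulant. Using the numerical inequality $e^x \leq 1 + x + \frac{x^2/2}{1 - x/3}$ valid for $x < 3$, together with the functional calculus and the bound $\|\mathcal{H}(\bm{X}_i)\| \leq L$, I would prove the semidefinite relation
\begin{equation*}
\mathbb{E}\, e^{\theta \mathcal{H}(\bm{X}_i)} \preceq \bm{I} + \frac{\theta^2/2}{1 - \theta L/3}\, \mathbb{E}[\mathcal{H}(\bm{X}_i)^2] \preceq \exp\!\left(\frac{\theta^2/2}{1 - \theta L/3}\, \mathbb{E}[\mathcal{H}(\bm{X}_i)^2]\right)
\end{equation*}
for $0 < \theta < 3/L$, using $\mathbb{E}[\mathcal{H}(\bm{X}_i)] = 0$ and $\log(\bm{I} + \bm{A}) \preceq \bm{A}$. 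Summing the exponents and using the monotonicity of $\operatorname{Tr}\exp$ under the PSD order, together with the definition of $v(\bm{Z})$, gives
\begin{equation*}
\operatorname{Tr} \exp\!\Big(\sum_i \log \mathbb{E}\, e^{\theta \mathcal{H}(\bm{X}_i)}\Big) \leq 2d \cdot \exp\!\left(\frac{\theta^2/2}{1 - \theta L/3}\, v(\bm{Z})\right).
\end{equation*}

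Finally, I would combine these estimates and optimize. Substituting back yields
\begin{equation*}
\mathbb{P}\{\|\bm{Z}\| \geq t\} \leq 2d \cdot \inf_{0 < \theta < 3/L} \exp\!\left(-\theta t + \frac{\theta^2 v(\bm{Z})/2}{1 - \theta L/3}\right),
\end{equation*}
and the choice $\theta = t/(v(\bm{Z}) + Lt/3)$ (the standard Bernstein optimizer) produces the claimed tail bound $2d \exp(-t^2/2 / (v(\bm{Z}) + Lt/3))$. The remaining work is just verifying the numerical inequality for $e^x$ and checking that the optimizer lies in the admissible range; these are routine. The conceptual weight of the argument rests entirely on Lieb's concavity theorem, which I would invoke as a black box.
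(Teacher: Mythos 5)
This statement is not proved in the paper at all: it is quoted verbatim as an external result (Tropp's matrix Bernstein inequality, cited as \cite[Theorem 1.6]{tropp2012user}), so there is no internal proof to compare against. Your sketch is the canonical argument from that reference — Hermitian dilation, the matrix Laplace transform with $e^{\theta\lambda_{\max}(\bm{Y})}\leq \T e^{\theta\bm{Y}}$, subadditivity of matrix cumulant generating functions via Lieb's concavity theorem, the Bernstein-type moment bound from $e^x\leq 1+x+\tfrac{x^2/2}{1-x/3}$, and the optimizer $\theta=t/(v(\bm{Z})+Lt/3)$ — and each step you outline is sound; in particular your closing computation does give exactly $\exp(-\tfrac{t^2/2}{v(\bm{Z})+Lt/3})$. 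One small correction: there is nothing to ``absorb'' into the prefactor — the dilation of a $d\times d$ matrix is $2d\times 2d$, and the trace bound then produces exactly the factor $2d$ appearing in the statement, since $\lambda_{\max}(\mathcal{H}(\bm{Z}))=\|\bm{Z}\|$ and $\|\mathbb{E}[\mathcal{H}(\bm{Z})^2]\|=v(\bm{Z})$.
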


As a result, by applying Theorem~\ref{lemma:matrix_bernstein} on $\bm{X}_i$ defined in \eqref{eq:X_i_definition} we get:
\begin{theorem}
Let $\bm{X}_1, \ldots, \bm{X}_m \in \mathbb{R}^{d \times d}$ be i.i.d. random matrices defined in \eqref{eq:X_i_definition} and $\bm{Z} = \sum \limits_{i = 1}^m \bm{X}_i$, $m = O(n)$. Then for any $t > 0$,
\begin{equation}
    \mathbb{P}\left\{\left\|\bm{Z}\right\|_\mathrm{F} \geq \sqrt{d}t\right\} \leq 2d\exp\left(\frac{-t^2/2}{qm + t/3}\right).
    \label{eq:matrix_bernstein_ineq}
\end{equation}
In other words, 
\begin{equation}
    \frac{1}{\sqrt{d}}\|\bm{Z}\|_\mathrm{F} \leq \sqrt{2qm(c\log n + \log 2d)}\left(\sqrt{1 + \frac{c\log n + \log 2d}{18qm}} + \sqrt{\frac{c\log n + \log 2d}{18qm}}\right)
    \label{eq:matrix_bernstein_ineq2}
\end{equation}
with probability $1 - n^{-c}$ for any $c > 0$.
\label{lemma:large_deviation_random_orthogonal}
\end{theorem}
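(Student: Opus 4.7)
The plan is to derive both bounds by a direct application of the matrix Bernstein inequality (Theorem~\ref{lemma:matrix_bernstein}) to the centered summands $\bm{X}_i$, followed by the elementary conversion $\|\bm{Z}\|_\mathrm{F} \leq \sqrt{d}\,\|\bm{Z}\|$ available for $d \times d$ matrices. The only non-routine observation is that $\mathbb{E}[\bm{X}_i] = \bm{0}$; the rest is bookkeeping.

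First I would verify centering. Haar measure on $\SO(d)$ is left-invariant, so for any $\bm{U} \in \SO(d)$, $\bm{U}\bm{R}_i$ is equidistributed with $\bm{R}_i$, giving $\bm{U}\,\mathbb{E}[\bm{R}_i] = \mathbb{E}[\bm{R}_i]$. For $d \geq 2$ the standard representation of $\SO(d)$ admits no nonzero invariant vector, forcing $\mathbb{E}[\bm{R}_i] = \bm{0}$, and hence $\mathbb{E}[\bm{X}_i] = q\,\mathbb{E}[\bm{R}_i] = \bm{0}$. Next I would record the two ingredients the matrix Bernstein inequality requires. The uniform bound is immediate: since $\bm{R}_i$ is orthogonal, $\|\bm{X}_i\| \leq \|\bm{R}_i\| = 1$, so we may take $L = 1$. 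For the matrix variance, $\mathbb{E}[\bm{X}_i\bm{X}_i^\top] = q\,\mathbb{E}[\bm{R}_i\bm{R}_i^\top] = q\bm{I}_d$ and similarly $\mathbb{E}[\bm{X}_i^\top\bm{X}_i] = q\bm{I}_d$; independence and the vanishing mean kill the cross terms, so $\mathbb{E}[\bm{Z}\bm{Z}^\top] = \mathbb{E}[\bm{Z}^\top\bm{Z}] = qm\bm{I}_d$ and hence $v(\bm{Z}) = qm$.

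Plugging $L = 1$ and $v(\bm{Z}) = qm$ into Theorem~\ref{lemma:matrix_bernstein} yields $\mathbb{P}\{\|\bm{Z}\| \geq t\} \leq 2d\exp\!\left(-t^2/(2(qm + t/3))\right)$ for all $t > 0$. Because $\bm{Z} \in \mathbb{R}^{d\times d}$, the inequality $\|\bm{Z}\|_\mathrm{F} \leq \sqrt{d}\,\|\bm{Z}\|$ gives the inclusion $\{\|\bm{Z}\|_\mathrm{F} \geq \sqrt{d}\,t\} \subseteq \{\|\bm{Z}\| \geq t\}$, which establishes \eqref{eq:matrix_bernstein_ineq} directly.

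Finally, to obtain \eqref{eq:matrix_bernstein_ineq2} I would invert the tail bound. Setting $2d\exp(-t^2/(2(qm+t/3))) = n^{-c}$ and writing $u = c\log n + \log(2d)$, the relation becomes the quadratic $t^2 - (2u/3)\,t - 2uqm = 0$, whose positive root is $t = u/3 + \sqrt{2qmu + u^2/9}$. Rewriting $u/3 = \sqrt{2qmu}\sqrt{u/(18qm)}$ and $\sqrt{2qmu + u^2/9} = \sqrt{2qmu}\sqrt{1 + u/(18qm)}$ recovers exactly the claimed upper bound on $\|\bm{Z}\|_\mathrm{F}/\sqrt{d}$, with probability $1 - n^{-c}$. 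There is no genuine obstacle in the argument; the only place one must be careful is noticing that the Frobenius/operator norm conversion costs a dimension factor $\sqrt{d}$, which is precisely what produces the $\sqrt{d}\,t$ on the left-hand side of \eqref{eq:matrix_bernstein_ineq}.
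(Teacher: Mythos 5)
Your proposal is correct and follows essentially the same route as the paper: apply matrix Bernstein with $L=1$ and $v(\bm{Z})=qm$, pass from the operator norm to the Frobenius norm via $\|\bm{Z}\|_\mathrm{F}\leq\sqrt{d}\,\|\bm{Z}\|$, and invert the tail by solving the resulting quadratic in $t$, whose positive root matches the stated bound. Your explicit check that $\mathbb{E}[\bm{X}_i]=\bm{0}$ via Haar invariance is a detail the paper leaves implicit, but otherwise the arguments coincide.
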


\begin{proof}
By definition of the operator norm and Frobenius norm $\| \bm{Z} \|  \leq \| \bm{Z} \|_\mathrm{F} \leq \sqrt{d} \| \bm{Z} \|$.
% \begin{equation*}
%     \| \bm{Z} \|  \leq \| \bm{Z} \|_\mathrm{F} \leq \sqrt{d} \| \bm{Z} \|. 
% \end{equation*}
Therefore, it satisfies
\begin{equation*}
    \mathbb{P}\{\left\|\bm{Z}\right\|_\mathrm{F} \geq \sqrt{d}t\} \leq \mathbb{P}\left\{\left\|\bm{Z}\right\|_\mathrm{2} \geq t\right\},
\end{equation*}
which enables us to bound $\left\|\bm{Z}\right\|_\mathrm{F}$ via $\left\|\bm{Z}\right\|_\mathrm{2}$ by Theorem~\ref{lemma:matrix_bernstein}. To this end, 
for $L$ defined in Theorem~\ref{lemma:matrix_bernstein}, we have $\|\bm{X}_i\| \leq 1$ then $L = 1$. For $v(\bm{Z})$ we have $\|\mathbb{E}(\bm{Z}^\top\bm{Z})\| = \left\|\sum_{i = 1}^m\mathbb{E} \left[\bm{X}_i^\top\bm{X}_i \right]\right\| = \left\|\sum_{i = 1}^m q\bm{I}_d\right\| = qm$,
similarly $\|\mathbb{E}(\bm{Z}\bm{Z}^\top)\| = qm$ then it follows $v(\bm{Z}) = qm$. This leads to \eqref{eq:matrix_bernstein_ineq}. Furthermore, by setting the RHS of \eqref{eq:matrix_bernstein_ineq} to be $n^{-c}$ for $c > 0$ and solve it we get \eqref{eq:matrix_bernstein_ineq2}.
\end{proof}

\begin{remark}
In our analysis, we are interested in the case of $m = \rho n, q = \beta \log n/n$ for some $\rho, \beta$ and $n \gg d$. Then \eqref{eq:matrix_bernstein_ineq2} can be written as
\begin{equation}
    \begin{aligned}
        \frac{1}{\sqrt{d}}\|\bm{Z}\|_\mathrm{F} &\leq \sqrt{2c\rho\beta}\left(\sqrt{1 + \frac{c}{18\rho\beta}} + \sqrt{\frac{c}{18\rho\beta}}\right) \log n \approx \sqrt{2c\rho\beta}\log n
    \end{aligned}
    \label{eq:equal_alpha_i1_approx}
\end{equation}
where we use the approximation $\left(\sqrt{1 + \frac{c}{18\rho\beta}} + \sqrt{\frac{c}{18\rho\beta}}\right) \approx 1$ since usually $18\rho\beta \gg c$. 
\end{remark}

In particular, when the transformation is a $2$-dimensional rotation i.e. $d = 2$, we are able to obtain a result sharper than Theorem~\ref{lemma:large_deviation_random_orthogonal} as following: 
\begin{theorem}  Under the setting of Theorem~\ref{lemma:large_deviation_random_orthogonal} with $d = 2$, for a sufficiently large $m$ 
\begin{equation*}
    \frac{1}{\sqrt{d}}\|\bm{Z}\|_\mathrm{F} \leq \sqrt{cqm \log n}(1 + o(1))
\end{equation*}
with probability $1 - n^{-c}$ for any $c > 0$. 
\label{lemma:sum_orth_2}
\end{theorem}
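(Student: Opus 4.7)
The strategy is to exploit the $\SO(2)$ parametrization by a single angle, reducing the matrix-valued sum to a one-dimensional complex problem amenable to the method of moments. Writing each nonzero rotation as $\bm{R}(\theta_i)$ with $\theta_i$ uniform on $[0, 2\pi)$, a direct computation gives
\begin{equation*}
\|\bm{Z}\|_\mathrm{F}^2 = 2\bigg(\sum_i r_i\cos\theta_i\bigg)^2 + 2\bigg(\sum_i r_i\sin\theta_i\bigg)^2 = 2|W|^2, \quad W := \sum_{i=1}^m r_i e^{\mathrm{i}\theta_i},
\end{equation*}
so that $(1/\sqrt{d})\|\bm{Z}\|_\mathrm{F} = |W|$ is the modulus of a sum of i.i.d.\ mean-zero complex random variables with $\mathbb{E}|r_i e^{\mathrm{i}\theta_i}|^2 = q$.

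Next, I would compute the high moments of $|W|^2$ exactly. By independence and $\mathbb{E}[e^{\mathrm{i} n \theta_i}] = 0$ for every $n\neq 0$, a multinomial expansion of $\mathbb{E}[(W\bar{W})^k]$ survives only on balanced index patterns (each $\theta_s$ appears the same number of times inside $W^k$ and inside $\bar{W}^k$), and the sum may be packaged as a Bessel generating function:
\begin{equation*}
\mathbb{E}[|W|^{2k}] = (k!)^2 \,[x^k]\,\bigl(1 + q(I_0(2\sqrt{x}) - 1)\bigr)^m,
\end{equation*}
where $I_0$ is the modified Bessel function of the first kind. The coefficient-extraction bound $[x^k]f(x) \leq f(x)/x^k$ (valid for $x > 0$ and $f$ with non-negative power series coefficients) together with $\log(1+qu)\leq qu$ then yields, for any $x > 0$,
\begin{equation*}
\mathbb{P}(|W| \geq t) \leq (k!)^2 \,x^{-k}\, \exp\bigl(mq(I_0(2\sqrt{x}) - 1)\bigr)\, t^{-2k}.
\end{equation*}

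The final step is to optimize jointly over the integer moment order $k$ and the saddle parameter $x$. Taking $x = k/(mq)$ matches the leading term of the Bessel expansion $mq(I_0(2\sqrt{x}) - 1) = k + O(k^2/mq)$, and Stirling's formula converts $(k!)^2 x^{-k}$ into $k!\,(mq)^k$ up to a polynomial factor in $k$. Choosing $k = \lfloor c\log n\rfloor$, the resulting factor $n^{c/k} = e$ cancels exactly against the $e^{-k}$ coming from Stirling, leaving the target bound $t^2 \geq c\,mq\,\log n\,(1 + o(1))$ as $m \to \infty$.

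The main obstacle will be sharpness of the constant. The matrix-Bernstein route used for $d > 2$ produces $\sqrt{2cqm\log n}$; saving the factor of $\sqrt{2}$ requires that $|W|^2$ concentrate like an exponential random variable with mean $mq$, reflecting the rotational invariance of the complex Gaussian limit of $W/\sqrt{mq}$. This is captured by the asymptotic $\mathbb{E}[|W|^{2k}] = k!(mq)^k(1+o(1))$ whenever $k = o(mq)$, and the delicate bookkeeping lies in verifying that the higher-order Bessel corrections $I_0(2\sqrt{x}) - 1 - x = O(x^2)$ contribute only to the $o(1)$ factor at the saddle rather than shifting the leading constant $c$.
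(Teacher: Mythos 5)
Your reduction and moment identity are exactly the paper's: $(1/\sqrt{d})\|\bm{Z}\|_\mathrm{F}=|W|$, only balanced index patterns survive in the expansion, and your Bessel generating function $\mathbb{E}[|W|^{2k}]=(k!)^2[x^k]\bigl(1+q(I_0(2\sqrt{x})-1)\bigr)^m$ is an exact repackaging of the paper's sum $\sum q^{\tau(m_1,\ldots,m_m)}\binom{k}{m_1,\ldots,m_m}^2$; the Markov step with $k=\Theta(\log n)$ is also the same. The only divergence is how the $k$-th moment is bounded: the paper keeps the all-distinct-index term $k!\,q^k m^k$ and treats the remaining terms as lower order, while you bound the coefficient via $[x^k]f\le f(x)x^{-k}$ at the saddle $x=k/(mq)$ and must then control the correction $mq\bigl(I_0(2\sqrt{x})-1-x\bigr)$.

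The gap is in that last step, precisely where you flag the ``delicate bookkeeping.'' In the regime in which the lemma is actually invoked (and the only one where it has content), $q=\beta\log n/n$ and $m=\Theta(n)$, so $mq=\Theta(\log n)$ while $k=\lceil c\log n\rceil$; hence $k\asymp mq$, and your hypothesis $k=o(mq)$ fails. At $x=k/(mq)$, a constant, one has $mq\bigl(I_0(2\sqrt{x})-1-x\bigr)=\frac{k^2}{4mq}\bigl(1+O(k/(mq))\bigr)=\Theta(\log n)$, which multiplies your moment bound by $n^{\Theta(1)}$ rather than $e^{o(k)}$. The advertised cancellation of $n^{c/k}=e$ against Stirling's $e^{-k}$ then leaves an uncancelled $e^{\Theta(k)}$: writing $mq=A\log n$, the exponent you obtain at $t^2=c\,qm\log n$ is $-c+c^2/(4A)+O(c^3/A^2)$ rather than $-c$, and absorbing this deficit requires inflating $t$ by a constant factor depending on $c$ and $\beta$, not by $1+o(1)$. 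Equivalently, your own generating function shows that the repeated-index terms multiply $k!(mq)^k$ by $e^{\frac{k^2}{4mq}(1+o(1))}$, so the asymptotic $\mathbb{E}[|W|^{2k}]=k!(mq)^k(1+o(1))$ you invoke does not hold in this regime. To close the argument you would need either to restrict to $\log n=o(qm)$ (which excludes the intended application) or to track how the constant degrades and argue it is harmless downstream (the paper eventually sends $c\to 0$). Note that the paper's own proof simply asserts $\mathbb{E}[x_m^k]\le c_0k!\,q^k m^k$ by folding these repeated-index contributions into an $O(q^{k-1}m^{k-1})$ remainder; your formulation at least makes explicit the term that must be dealt with, but as written it does not deliver the stated constant with probability $1-n^{-c}$.
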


\begin{proof}
To begin with, by definition of $\bm{X}_i$ in \eqref{eq:X_i_definition}, we can rewrite $\bm{X}_i = r_i\bm{R}_i$ where $r_i$ is a Bernoulli random variable such that $\mathbb{P}\{r_i = 1\} = q$. Moreover, when $d = 2$ each rotation matrix $\bm{R}_i$ can be expressed as $\bm{R}_i = 
    \begin{pmatrix}
    \cos \theta_i& -\sin \theta_i\\
    \sin \theta_i &\cos \theta_i
    \end{pmatrix}$
for some $\theta_i \in [0, 2\pi)$. As a result, 
\begin{equation}
    \frac{1}{d}\|\bm{Z}\|^2_\mathrm{F} = \left(\sum \limits_{i = 1}^m r_i\sin \theta_i\right)^2 + \left(\sum \limits_{i = 1}^m r_i\cos \theta_i\right)^2 =: x_m.
    \label{eq:x_m_def}
\end{equation}
We are going to bound $ \frac{1}{\sqrt{d}}\|\bm{Z}\|_\mathrm{F}$ in terms of $x_m$. Our technique is to compute the $k$-th moment of $x_m$ i.e. $\mathbb{E}[x_m^k]$ for some integer $k \leq m$ followed by Markov's inequality. First,
\begin{equation*}
    \mathbb{E}[x_m] = \mathbb{E}\Bigg[\sum \limits_{i = 1}^m \sum \limits_{j = 1}^m r_ir_j\left(\cos \theta_i\cos \theta_j+ \sin \theta_i\sin \theta_j\right)\Bigg] = \sum \limits_{i = 1}^m\sum \limits_{j = 1}^m \mathbb{E}\left[r_ir_j\right]\mathbb{E}\left[\cos(\theta_i - \theta_j)\right].
\end{equation*}
Next, in general $\mathbb{E}[x_m^k]$ is given as
\begin{equation}
    \begin{aligned}
    &\mathbb{E}\left[x^k_m\right] = 
    &\sum \limits_{i_1, \ldots, i_k = 1}^m \sum \limits_{j_1, \ldots, j_k = 1}^m \mathbb{E}\left[r_{i_1}r_{j_1}\cdots r_{i_k}r_{j_k}\right]\mathbb{E}\left[ \cos(\theta_{i_1} - \theta_{j_1})  \cdots \cos(\theta_{i_k} - \theta_{j_k})\right].
    \end{aligned}
    \label{eq:exp_x_n_k}
\end{equation}
Let $s_i \in \{-1, 1\}$ denotes a discrete variable for $i = 1, \ldots, k-1$. Then applying the property $\cos(\alpha_1)\cos(\alpha_2) = \frac{1}{2}\left[\cos(\alpha_1 + \alpha_2) + \cos(\alpha_1 - \alpha_2)\right]$ recursively for $k-1$ times yields
{\scriptsize
\begin{equation*}
    \cos(\theta_{i_1} - \theta_{j_1})  \cdots \cos(\theta_{i_k} - \theta_{j_k}) = \frac{1}{2^{k-1}}\sum \limits_{s_1, \ldots, s_{k-1} \in \{-1, 1\}} \cos\left((\theta_{i_1} \!-\! \theta_{j_1}) \!+\! s_1(\theta_{i_2} - \theta_{j_2})\!+\! \cdots \!+\! s_{k-1}(\theta_{i_k} \!-\! \theta_{j_k})\right).
\end{equation*}
}

\noindent Plugging this into \eqref{eq:exp_x_n_k} gives
{\scriptsize
\begin{align}
\mathbb{E}\left[x^k_m\right] 
&= \left(\frac{1}{2^{k-1}}\right)\sum \limits_{s_1, \ldots, s_{k-1} \in \{-1, 1\}} \sum \limits_{i_1, \ldots, i_k} \sum \limits_{j_1, \ldots, j_k}\mathbb{E}\left[r_{i_1}r_{j_1} \cdots r_{i_k}r_{j_k}\right] \mathbb{E}\left[\cos\left(\theta_{i_1} \!-\! \theta_{j_1} \!+\! \cdots \!+\! \delta_{k-1}(\theta_{i_k} \!-\! \theta_{j_k})\right)\right] \nonumber \\
&\overset{(a)}{=} \sum \limits_{i_1, \ldots, i_k} \sum \limits_{j_1, \ldots, j_k}\mathbb{E}\left[r_{i_1}r_{j_1} \cdots r_{i_k}r_{j_k}\right] \mathbb{E}\left[\cos\left(\theta_{i_1} - \theta_{j_1} + \theta_{i_2} - \theta_{j_2} + \cdots + \theta_{i_k} - \theta_{j_k}\right)\right]. \label{eq:sum_F}
\end{align}}

\noindent Here, ($a$) holds since $
    \sum \limits_{i_1, \ldots, i_k} \sum \limits_{j_1, \ldots, j_k}\mathbb{E}\left[\cos\left(\theta_{i_1} \!-\! \theta_{j_1} \!+\! \cdots \!+\! s_{k-1}(\theta_{i_k} \!-\! \theta_{j_k})\right)\right]$
is identical for different choices of $\{s_i\}_{i = 1}^{k-1}$, and there are $2^{k-1}$ number of choices in total then the factor $1/2^{k-1}$ is cancelled by summing all of them, and we can focus on the case that $s_1 = s_2 = \cdots =s_{k-1} = 1$. To proceed, let us denote $S_i = \{i_1, \ldots, i_k\}$ and $S_j = \{j_1, \ldots, j_k\}$ as two \textit{multisets} (a multiset is a variant of a set that allows repeated elements) that contain all $i$ and $j$ respectively, then
\begin{equation*}
    \mathbb{E}\left[\cos\left(\theta_{i_1} - \theta_{j_1} + \theta_{i_2} - \theta_{j_2} + \cdots + \theta_{i_k} - \theta_{j_k}\right)\right] = 
    \begin{cases}
    1, &\quad S_i = S_j,\\
    0, &\quad \text{otherwise}.
    \end{cases}
\end{equation*}
This holds because the sum of the random angles equals $0$ mod $2\pi$ if $S_i = S_j$, otherwise it vanishes by taking the expectation. 
Therefore, under $S_i = S_j$, let $m_l$ denotes the multiplicity of element $l \in \left\{1, \ldots, m\right\}$ that either $S_i$ or $S_j$ contains, then it satisfies
\begin{equation*}
    \mathbb{E}[r_{i_1}r_{j_1}\cdots r_{i_k} r_{j_k}] \overset{(a)}{=} \mathbb{E}[r_{1}^{2m_1}r_{2}^{2m_2} \cdots r_{m}^{2m_n}] = \prod_{i = 1}^m \mathbb{E}[r_i^{2m_i}] \overset{(b)}{=} q^{\tau(m_1,\ldots m_m)}
\end{equation*}
where $\tau(m_1,\ldots m_m)$ denotes the number of non-zero elements in $\{m_1, \ldots, m_m\}$, ($a$) holds because of $S_i = S_j$, and $(b)$ comes from $\mathbb{E}[r_i^{2m_i}] = q, \; \forall m_i > 0$. Given the above, in \eqref{eq:sum_F} the summation over all $i$ and $j$ can be instead summing over $m_1, \ldots, m_m$. That is,
\begin{equation}
    \begin{aligned}
    \mathbb{E}[x_m^k] = \sum \limits_{\substack{m_1, \ldots, m_m \geq 0 \\ m_1 + \cdots + m_m = k}}\Pi(m_1,\ldots m_m) q^{\tau(m_1,\ldots m_m)}.
    \end{aligned}
    \label{eq:x_k_sum_pi}
\end{equation}
Here, $\Pi(m_1,\ldots m_m)$ denotes the number of combinations of $\{i_1, j_1, \ldots, i_k, j_k\}$ that satisfy $S_i = S_j$ with multiplicity $\{m_1, \ldots, m_m\}$. To determine it, notice that given a fixed $\{m_1, \ldots, m_m\}$ there are ${k \choose {m_1, m_2, \ldots, m_m}}$ number of combinations for either $S_i$ or $S_j$ that satisfy it. Then we have $\Pi(m_1,\ldots m_m) = {k \choose {m_1, m_2, \ldots, m_m}}^2$. Plugging this into \eqref{eq:x_k_sum_pi} gives
\begin{equation*}
    \begin{aligned}
    \mathbb{E}\left[x_m^k\right] =
    \sum \limits_{\substack{m_1, \ldots, m_m \geq 0 \\ m_1 + \cdots + m_m = k}}q^{\tau(m_1,\ldots m_m)} {k \choose {m_1, \ldots, m_m}}^2  
    = \sum \limits_{l = 1}^{k} {n \choose l}q^l \sum \limits_{\substack{m_1, \ldots, m_l \geq 1 \\ m_1+\cdots +m_l = k}}   {k \choose {m_1, \ldots, m_l}}^2,
    \end{aligned}
    \label{eq:moments_sum}
\end{equation*}
where the RHS holds by fixing $\tau(m_1, \ldots, m_m) = l$ for $l = 1,\ldots, k$. Furthermore,
\begin{equation*}
    \begin{aligned}
         \mathbb{E}\left[x_m^k\right] &= {n \choose k}q^k \underbrace{\sum \limits_{\substack{m_1,\ldots, m_k \geq 1 \\ m_1 + \ldots + m_k = k}}{k \choose {m_1, \ldots, m_k}}^2}_{ = (k!)^2 \text{ since $m_1 = \cdots = m_k = 1$}} + \sum \limits_{l = 1}^{k-1} {m \choose l}q^l \sum \limits_{\substack{m_1, \ldots, m_l \geq 1 \\ m_1+\cdots +m_l = k}}   {k \choose {m_1, \ldots, m_l}}^2\\
         &= \left(\frac{m!}{k!(m-k)!}\right)q^k(k!)^2 + O(q^{k-1}m^{k-1}) =  k!q^km^k + O(q^{k-1}m^{k-1})\\
         &\leq c_0k!q^km^k
    \end{aligned}
\end{equation*}
where the last inequality holds for any constant $c_0 > 1$ with a sufficiently large $m$. 

Now we are ready to bound $x_m$ by applying Markov's inequality as
\begin{equation*}
    \begin{aligned}
    \mathbb{P}\left\{ x_m \geq t\right\} &= \mathbb{P}\left\{x_m^k \geq t^k\right\} \leq t^{-k}\mathbb{E}[x_m^k] \leq t^{-k} \cdot c_0k!q^km^k \leq  \bigg(\frac{(c_0e\sqrt{k})^{1/k}k qm}{et}\bigg)^{k}
    \end{aligned}
    \label{eq:moments_bound}
\end{equation*}
where the last inequality uses the fact that $k! \leq e k^{k+\frac{1}{2}} e^{-k}$.  
If we set $k = \lceil c \log n \rceil$ and 
\begin{equation*}
    t = (c_0e\sqrt{k})^{1/k} kqm = e^{\frac{1}{k}\log \left( c_0e\sqrt{k}\right)} kqm = (1 + o(1))cqm\log n,
\end{equation*}
then we have
\begin{equation*}
    \mathbb{P}\left\{ x_n \geq t\right\} \leq n^{-c \left( 1 + \log \frac{t}{(c_0e\sqrt{k})^{1/k} kqm }\right)} = n^{-c}. 
\end{equation*}
By definition of $x_m$, this further leads to $\frac{1}{\sqrt{d}}\|\bm{Z}\|_\mathrm{F} \leq \sqrt{cqm\log n}(1 + o(1))$
with probability $1 - n^{-c}$, which completes the proof.
\end{proof}

\begin{remark}
When $d = 2$, we can interpret $\frac{1}{\sqrt{d}}\|\bm{Z}\|_\mathrm{F}$ from a random walk perspective: Suppose a random walk on a 2-D plane such that in each step, with probability $q$ we take a unit-length step towards a random direction uniformly drawn from $[0, 2\pi)$, and with probability $1 - q$ we stay where we are. As a result, $\frac{1}{\sqrt{d}}\|\bm{Z}\|_\mathrm{F}$ can be interpreted as the distance that we travel within $m$ steps where $r_{i}$ indicates whether we move or not in the $i$-th step, and $\theta_i$ stands for the corresponding direction. In fact, such 2-D random walk with $q = 1$ has been originally studied by Rayleigh in~\cite{rayleigh1880xii, rayleigh1896theory}. Let $p_m(r)$ be the probability distribution of travelling a distance $r$ with $m$ steps, Rayleigh showed that as $m \rightarrow \infty$, 
\begin{equation*}
    p_m(r) \sim \frac{2r}{m}e^{-r^2/m}.
\end{equation*}
Therefore, asymptotically it satisfies $r \leq \sqrt{cm\log n}$ with probability $1 - n^{-c}$ for any $c > 0$. Importantly, this bound agrees with the non-asymptotic result in Theorem~\ref{lemma:sum_orth_2}, which indicates the sharpness of our result.
\end{remark}

\begin{remark}
Again, let us consider $m = \rho n$ and $q = \beta\log n/n$ for some $\rho, \beta$. Then the bound in Theorem~\ref{lemma:sum_orth_2} can be rewritten as 
\begin{equation}
    \frac{1}{\sqrt{d}}\|\bm{Z}\|_\mathrm{F} \leq \sqrt{c\rho\beta n} \log n
    \label{eq:sharp_bound_2}
\end{equation}
with probability $1 - n^{-c}$. As we can see, \eqref{eq:sharp_bound_2} is sharper than the previous result in \eqref{eq:equal_alpha_i1_approx} by only a constant factor 2. However, such tiny difference is significant in our analysis since we will apply the union bound on \eqref{eq:sharp_bound_2} or \eqref{eq:equal_alpha_i1_approx}, therefore even the scaling of the constant $c$ becomes important as well (see e.g. Lemma~\ref{claim:x_i_y_i_equal}). 
\end{remark}

\begin{remark}
One might want to generalize the proof of Theorem~\ref{lemma:sum_orth_2} to the case when $d > 2$. In this case, the difficulty lies in bounding the $k$-th moments of $x_m$ (i.e. $\mathbb{E}[x_m^k]$), where $x_m$ does not have a simple expression as \eqref{eq:x_m_def} when $d = 2$, then we cannot find a way similar to \eqref{eq:exp_x_n_k} that expresses $\mathbb{E}[x_m^k]$ as a sum of $\mathbb{E}\left[ \cos(\theta_{i_1} - \theta_{j_1})  \cdots \cos(\theta_{i_k} - \theta_{j_k})\right]$. To resolve this, using representation theory of the special orthogonal group (e.g.~\cite{collins2006integration, meckes2019random, banica2010orthogonal, bohorquez2020maximizing}) might be helpful and we leave this generalization as a future work.
\end{remark}

\begin{lemma}
Let $m_1 = \rho n$ and $m_2 = (1- \rho)n$ with $0<\rho <1$, and 
$\bm{Z}_1, \ldots, \bm{Z}_{m_1} \in \mathbb{R}^{d \times d}$ be i.i.d. random matrices where each $\bm{Z}_i = \sum \limits_{j = 1}^{m_2} \bm{X}_{j}$ is independently generated as in Theorem~\ref{lemma:large_deviation_random_orthogonal}. Let $\epsilon_i := \frac{\|\bm{Z}_i\|_\mathrm{F}}{m_2\sqrt{d}}$. Then, when $p, q =  O(\log n/n)$
\begin{equation*}
    \sum \limits_{i = 1 }^{m_1} \epsilon_i \leq \frac{2c}{3}\log n + \rho \sqrt{\frac{q n }{1 - \rho}} + \sqrt{\frac{2cq \rho \log n}{1 - \rho}} = \frac{2c}{3}\log n + o(\log n)
\end{equation*}
with probability $1 - n^{-c}$.
\label{lemma:bound_concentration_Z_i}
\end{lemma}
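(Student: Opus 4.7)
The plan is to apply the scalar Bernstein inequality to the i.i.d.\ nonnegative random variables $\epsilon_1, \ldots, \epsilon_{m_1}$, after first pinning down their mean through a second--moment calculation. The first term in the stated bound will come from Bernstein's bounded--increment contribution, the second term from the expectation, and the third term from Bernstein's variance contribution.

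For the mean, observe that the Haar measure on $\SO(d)$ is left--invariant, so $\mathbb{E}[\bm{R}_j]$ is fixed by every rotation and hence vanishes for $d \ge 2$; consequently $\mathbb{E}[\bm{X}_j] = \bm{0}$. Expanding
$$\|\bm{Z}_i\|_\mathrm{F}^2 \;=\; \sum_{j,k=1}^{m_2} \T(\bm{X}_j^\top \bm{X}_k)$$
and using independence of the $\bm{X}_j$'s kills the off--diagonal terms, leaving $\mathbb{E}[\|\bm{Z}_i\|_\mathrm{F}^2] = m_2 \,\mathbb{E}[\|\bm{X}_1\|_\mathrm{F}^2] = m_2 q d$, since a rotation matrix has Frobenius norm $\sqrt{d}$. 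By Jensen's inequality $\mathbb{E}[\epsilon_i] \le \sqrt{\mathbb{E}[\epsilon_i^2]} = \sqrt{q/m_2}$, and therefore
$$\mathbb{E}\!\left[\sum_{i=1}^{m_1}\epsilon_i\right] \;\le\; m_1\sqrt{\frac{q}{m_2}} \;=\; \rho\sqrt{\frac{qn}{1-\rho}},$$
which matches the second term exactly.

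For the concentration piece, the trivial deterministic bound $\|\bm{Z}_i\|_\mathrm{F} \le \sum_{j=1}^{m_2}\|\bm{X}_j\|_\mathrm{F} \le m_2\sqrt{d}$ gives $\epsilon_i \le 1$ almost surely, while $\sum_{i=1}^{m_1}\operatorname{Var}(\epsilon_i) \le \sum_i \mathbb{E}[\epsilon_i^2] = \rho q/(1-\rho)$. Feeding these two parameters ($L=1$ and $\sigma^2 = \rho q/(1-\rho)$) into the scalar Bernstein inequality for the centered i.i.d.\ sum $\sum_i (\epsilon_i - \mathbb{E}[\epsilon_i])$ and inverting the tail at level $n^{-c}$ yields the deviation bound $\tfrac{2c}{3}\log n + \sqrt{2c\rho q\log n/(1-\rho)}$, accounting for the first and third terms.

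Combining the mean estimate with the concentration estimate produces the displayed inequality; once $q = O(\log n/n)$ is plugged in, the second and third terms are $O(\sqrt{\log n}) = o(\log n)$, leaving the advertised $\tfrac{2c}{3}\log n + o(\log n)$. No serious obstacle is anticipated---the only subtle ingredient is verifying $\mathbb{E}[\bm{R}_j] = \bm{0}$ so that the cross terms vanish in the second--moment computation, which is immediate from invariance of Haar measure under left multiplication by any element of $\SO(d)$.
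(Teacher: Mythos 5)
Your proposal is correct and follows essentially the same route as the paper: bound $\epsilon_i \le 1$, compute $\mathbb{E}[\epsilon_i^2] = q/m_2$ (so $\sigma^2 = \rho q/(1-\rho)$), apply the scalar Bernstein inequality to the centered sum and invert the tail at level $n^{-c}$, and control the mean via Jensen by $\sqrt{q/((1-\rho)n)}$. The extra verification that $\mathbb{E}[\bm{R}_j]=\bm{0}$ (so cross terms vanish) is a detail the paper leaves implicit, but otherwise the two arguments coincide.
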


\begin{proof}
By definition, each $\epsilon_i$ is bounded by $1$, then applying Bernstein's inequality~\cite[Theorem 2.8.4]{vershynin2018high} yields 
\begin{equation}
    \mathbb{P}\left\{\sum \limits_{i = 1}^{m_1} \left(\epsilon_i - \mathbb{E}\left[\epsilon_i\right]\right)\geq t\right\} \leq \exp\left(-\frac{t^2/2}{\sigma^2 + Kt/3}\right)
    \label{eq:Bernstein}
\end{equation}
where $K = 1$ is the boundary, and 
\begin{equation*}
    \sigma^2 = \sum \limits_{i = 1}^{m_1} \left(\mathbb{E}[\epsilon_i^2] - \mathbb{E}[\epsilon_i]^2\right) \leq \sum \limits_{i = 1}^{m_1}\mathbb{E}[\epsilon_i^2] = \sum \limits_{i = 1}^{m_1} \frac{\mathbb{E} \left[ \| \bm{Z}_i\|_\mathrm{F}^2 \right]}{m_2^2 d}= \sum \limits_{i = 1}^{m_1} \frac{q}{m_2} = \frac{q \rho }{ 1 - \rho}
\end{equation*}
is the variance of the sum. Then, by letting the RHS of \eqref{eq:Bernstein} equal to $n^{-c}$ for $c > 0$, we obtain
\begin{equation}
    \begin{aligned}
    \sum \limits_{i = 1}^{m_1} \left(\epsilon_i - \mathbb{E}\left[\epsilon_i\right]\right) &\leq \frac{Kc}{3}\log n + \sqrt{\left(\frac{Kc}{3}\log n\right)^2 + 2\sigma^2 c\log n}\\
    &\leq \frac{2Kc}{3}\log n + \sqrt{2\sigma^2 c\log n}
    \end{aligned}
    \label{eq:bound_Z_i_sum}
\end{equation}
with probability $1 - n^{-c}$. Also, by Jensen's inequality the expectation of $\epsilon_i$ is bounded as 
\begin{equation*}
    \mathbb{E}[\epsilon_i] = \frac{1}{m_2\sqrt{d}}\mathbb{E}\left[\sqrt{\T(\bm{Z}_i^\top \bm{Z}_i)}\right] \leq \frac{1}{m_2\sqrt{d}}\sqrt{\mathbb{E}\left[\T(\bm{Z}_i^\top \bm{Z}_i)\right]} = \sqrt{\frac{q}{(1 - \rho) n }}.
\end{equation*}
Plugging this into \eqref{eq:bound_Z_i_sum} completes the proof.
\end{proof}

\subsection{Non-asymptotic operator norm bounds of random (block) matrices}
\label{sec:norm_rand_matrix}
Another key ingredient of our analysis lies in the non-asymptotic operator norm bounds of random (block) matrices, where most of the previous research focuses on random matrices with i.i.d.~entries especially sub-Gaussian entries~\cite{vershynin2010introduction, vershynin2018high, tao2012topics}. First, the following existing result\footnote{Similar results have been also developed in \cite[Theorem 5.2]{lei2015consistency} and \cite[Lemma 2]{lugosi2020concentration}.} provides an upper bound on the spectrum of an Erd{\"o}s–R{\'e}nyi random graph. 

\begin{theorem}[\textup{\cite[Theorem 5]{hajek2016achievinga}}] Let $\bm{E} \in \mathbb{R}^{n \times n}$ denotes the adjacency matrix of a graph generated from the Erd{\"o}s–R{\'e}nyi model $\mathcal{G}(n,p)$, where the entries $\{ E_{ij} : i < j\}$ are independent and $\{0, 1\}$-valued. Assume that $\mathbb{E}[E_{ij}] \leq p$ where $p = \Omega(c_0\log n/n)$. Then for any $c > 0$, there exists $c_1 > 0$ such that, 
\begin{equation*}
    \|\bm{E} - \mathbb{E}[\bm{E}]\| \leq c_1\sqrt{np}
\end{equation*}
with probability $1 - n^{-c}$.
\label{lemma:A_graph}
\end{theorem}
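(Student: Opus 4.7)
The plan is to obtain the sharp $\sqrt{np}$ bound on $\|\bm{X}\|$ where $\bm{X} := \bm{E} - \mathbb{E}[\bm{E}]$, under the density regime $np = \Omega(\log n)$. The key subtlety is that a direct application of matrix Bernstein's inequality yields only $O(\sqrt{np \log n})$, off by a factor $\sqrt{\log n}$; removing this overhead is the main technical challenge and requires either a sharper Wigner-type bound in the spirit of Bandeira--van Handel, or alternatively a truncation argument of Feige--Ofek type.

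My preferred route uses the sharp operator norm bound of Bandeira--van Handel for symmetric matrices with independent bounded entries. Observe that $\bm{X}$ is symmetric with independent entries above the diagonal, each centered, bounded by $1$ in absolute value, and of variance at most $p$. A direct calculation gives $\|\mathbb{E}[\bm{X}^2]\|^{1/2} \leq \sqrt{np}$ and $\max_{i,j}\|X_{ij}\|_\infty \leq 1$, so that
\[
\mathbb{E}\|\bm{X}\| \;\leq\; C_0 \bigl(\sqrt{np} + \sqrt{\log n}\bigr).
\]
Under the hypothesis $p = \Omega(c_0 \log n/n)$ with $c_0$ taken sufficiently large, the $\sqrt{\log n}$ term is dominated by $\sqrt{np}$, giving $\mathbb{E}\|\bm{X}\| \lesssim \sqrt{np}$. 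To upgrade this expectation bound to a high-probability tail bound, apply Talagrand's convex concentration inequality to the $1$-Lipschitz convex map $\bm{X} \mapsto \|\bm{X}\|$ of the bounded independent entries, obtaining $\mathbb{P}(\|\bm{X}\| \geq \mathbb{E}\|\bm{X}\| + t) \leq \exp(-c t^2)$. Choosing $t = O(\sqrt{\log n}) = O(\sqrt{np})$ yields the desired $n^{-c}$ failure probability, and the additional $\sqrt{\log n}$ is again absorbed into $c_1 \sqrt{np}$.

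The main obstacle is precisely the removal of the $\sqrt{\log n}$ factor that naive matrix concentration cannot avoid. A more elementary alternative is the Feige--Ofek truncation scheme: first excise the $o(n)$ vertices of atypically high degree (whose spectral contribution is controlled via a combinatorial ``dense subgraph'' estimate), and then bound the operator norm of the truncated matrix by applying Bernstein's inequality to each quadratic form $u^\top \bm{X} v$ and union bounding over a $1/2$-net of $S^{n-1}$ of cardinality $e^{O(n)}$; truncation keeps the boundedness parameter small enough that this union bound produces the correct $\sqrt{np}$ scaling rather than the naive $\sqrt{n}$. Both approaches rely crucially on the density hypothesis $np \gtrsim \log n$, which rules out the heavy-tailed degree fluctuations that would otherwise dominate the spectrum of a sparser random graph.
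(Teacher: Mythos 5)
Your main argument is correct, but note that the paper itself does not prove this statement at all: it is quoted verbatim as \cite[Theorem 5]{hajek2016achievinga} and used as an imported ingredient, so any self-contained proof is by construction a different route. Your primary route is a legitimate and standard one: the Bandeira--van Handel bound for symmetric matrices with independent, centered, bounded entries gives $\mathbb{E}\|\bm{X}\| \leq C(\sigma + \sigma_*\sqrt{\log n})$ with $\sigma \leq \sqrt{np}$ (using $\operatorname{Var}(X_{ij}) \leq \mathbb{E}[E_{ij}] \leq p$) and $\sigma_* \leq 1$, and Talagrand's convex--Lipschitz concentration (the operator norm is a convex, $O(1)$-Lipschitz function of the independent upper-triangular entries with respect to the Euclidean norm) upgrades this to the $1-n^{-c}$ tail; since $np = \Omega(\log n)$, the $\sqrt{\log n}$ terms are absorbed into $c_1\sqrt{np}$. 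Two small corrections: you do not need $c_0$ ``sufficiently large''---for any fixed $c_0>0$ one has $\sqrt{\log n} \leq \sqrt{np/c_0} = O(\sqrt{np})$, with the constant absorbed into $c_1$ (which, as in the statement, may depend on $c$); and your description of the Feige--Ofek alternative is too optimistic as stated: even after excising high-degree vertices, a plain Bernstein-plus-$\varepsilon$-net union bound fails, because for $t \asymp \sqrt{np}$ the subexponential branch of Bernstein only gives $e^{-O(\sqrt{np})} \gg e^{-\Omega(n)}$; the essential extra ingredient in Feige--Ofek is the splitting of the bilinear form into light couples (handled by Bernstein and the net) and heavy couples (handled by discrepancy/bounded-degree properties of the truncated graph), not merely a smaller boundedness parameter. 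Since you offer that only as an alternative, it does not affect the validity of your main proof.
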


In the following, instead of focusing on a random matrix with i.i.d.~entries, we consider a random block matrix with i.i.d.~random blocks. In particular, let $\bm{S} \in \mathbb{R}^{m_1 d \times m_2 d}$ be an $m_1 \times m_2$ block random matrix where each block $\bm{S}_{ij} \in \mathbb{R}^{d \times d}$ is i.i.d. such that 
\begin{equation}
    \bm{S}_{ij} = 
    \begin{cases}
    \bm{R}_{ij}, &\quad \text{with probability $q$},\\
    \bm{0}, &\quad \text{otherwise},
    \end{cases}
    \label{eq:S_ij_definition}
\end{equation}
where $\bm{R}_{ij}$ is a random rotation matrix uniformly drawn from $\mathrm{SO}{(d)}$. 
Then we obtain the following non-asymptotic bound on $\|\bm{S}\|$.
\begin{theorem}
Let $\bm{S} \in \mathbb{R}^{m_1 d \times m_2 d}$ be an $m_1 \times m_2$ random block matrix with i.i.d block $\bm{S}_{ij}$ defined in \eqref{eq:S_ij_definition}. Let $m_1,m_2 = O(n)$. Then, for any $c > 0$, there exists $c_1, c_2 > 0$ such that
\begin{equation*}
    \|\bm{S}\| \leq c_1 (\sqrt{q m_1} + \sqrt{q m_2}) + c_2 \sqrt{\log n}.  
\end{equation*}
with probability $1 - n^{-c}$.
\label{lemma:spec_bound_S_12}
\end{theorem}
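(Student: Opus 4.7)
The plan is to bound the operator norm via the variational formula $\|\bm{S}\| = \sup_{u,v} u^\top \bm{S} v$ over unit vectors $u \in S^{m_1 d -1}$ and $v \in S^{m_2 d - 1}$, combined with an $\varepsilon$-net argument and Bernstein's inequality for scalar sums. For fixed unit vectors $u = (u_1, \dots, u_{m_1})$ and $v = (v_1, \dots, v_{m_2})$ block-partitioned into $\mathbb{R}^d$, I would write $u^\top \bm{S} v = \sum_{i,j} u_i^\top \bm{S}_{ij} v_j$. The summands are independent, centered (since $\mathbb{E}[\bm{R}_{ij}] = \bm{0}$ for $d \geq 2$ by Haar invariance), and each is bounded in absolute value by $\|u_i\|\|v_j\|$. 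Using the Haar identity $\mathbb{E}[\bm{R}_{ij} x x^\top \bm{R}_{ij}^\top] = (\|x\|^2/d)\bm{I}_d$, the sum of variances equals $q/d$, and scalar Bernstein yields
\begin{equation*}
\mathbb{P}\{|u^\top \bm{S} v| \geq t\} \leq 2\exp\!\left(-\frac{t^2/2}{q/d + Mt/3}\right), \qquad M := \max_i\|u_i\|\cdot\max_j\|v_j\| \leq 1.
\end{equation*}

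I would then combine this pointwise estimate with $1/2$-nets $\mathcal{N}_1 \subset S^{m_1d-1}$ and $\mathcal{N}_2 \subset S^{m_2d-1}$ of sizes at most $5^{m_1 d}$ and $5^{m_2 d}$, using $\|\bm{S}\| \leq 4\sup_{(u,v)\in\mathcal{N}_1\times\mathcal{N}_2} u^\top \bm{S} v$. A union bound over the net, calibrated at $t = c_1\sqrt{q(m_1+m_2)} + c_2 \sqrt{\log n}$, should give the claimed bound with probability at least $1 - n^{-c}$ for constants $c_1, c_2$ depending on $c$ and $d$. The first term of $t$ matches the variance budget $q/d$ after paying the net-entropy cost $(m_1+m_2)d \log 5$, and the $\sqrt{\log n}$ term supplies the extra buffer for the failure probability.

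The main difficulty is that the Bernstein denominator contains the additive term $Mt/3$, which at worst case $M \approx 1$ degrades the tail to $\exp(-t/M)$ and threatens to force a correction of order $\log n$ rather than $\sqrt{\log n}$. I plan to resolve this by a peeling argument: decompose the spheres into regions indexed by $\max_i \|u_i\|$ and $\max_j \|v_j\|$ in dyadic scales. In each region the effective covering dimension shrinks (concentrated vectors lie in a lower-complexity set) while the Bernstein parameter $M$ tightens, and the two effects trade off so that summing over scales produces the $\sqrt{\log n}$ contribution rather than $\log n$. An alternative, more conceptual route is to apply a sharp matrix concentration inequality of Bandeira--van Handel type to the Hermitian dilation of $\bm{S}$ (whose off-diagonal $d \times d$ blocks are independent), directly delivering $\mathbb{E}\|\bm{S}\| \leq C\sqrt{q\max(m_1,m_2)} + C\sqrt{\log n}$, followed by a Talagrand-type concentration to upgrade to high probability; the subtlety there is extending the sharp scalar-entry bound to the block dependence inherent in Haar rotations.
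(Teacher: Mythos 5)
Your main route (fixed-pair Bernstein plus an $\varepsilon$-net, repaired by dyadic peeling) has a genuine gap exactly in the regime where the lemma is used, $q\asymp \log n/n$ and $m_1,m_2\asymp n$, so that the target level is $t\asymp \sqrt{qm_1}+\sqrt{qm_2}+\sqrt{\log n}\asymp\sqrt{\log n}$. Your pointwise bound is correct (variance $q/d$ by Haar invariance), but consider test vectors whose mass is spread over $k_u$ blocks of $u$ and $k_v$ blocks of $v$: then $M\approx 1/\sqrt{k_uk_v}$, the sub-exponential branch of Bernstein gives only $\exp\bigl(-\Omega(t\sqrt{k_uk_v})\bigr)$, while the entropy of this stratum is $\Omega\bigl((k_u+k_v)(d+\log n)\bigr)$ just from choosing which blocks are heavy. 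Balancing these forces $t\gtrsim d+\log n$, not $\sqrt{\log n}$; the complementary deterministic bound $|u^\top\bm{S}v|\le\sqrt{k_uk_v}$ only covers $k_uk_v\lesssim\log n$, and the sub-Gaussian branch only kicks in when the vectors are essentially fully spread. So the claimed trade-off ``entropy shrinks while $M$ tightens'' does not close in the intermediate range $\sqrt{\log n}\lesssim k_u,k_v\lesssim n/\mathrm{polylog}(n)$; this is the same obstruction that makes naive nets fail for sparse Erd\H{o}s--R\'enyi matrices, where the known fixes are the Kahn--Szemer\'edi/Feige--Ofek heavy--light couple argument (using w.h.p.\ discrepancy and bounded-degree properties, none of which you invoke) or a moment method. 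The paper itself remarks (after Theorem~\ref{lemma:spec_bound_S_12}) that an $\varepsilon$-net argument falls short here. Your alternative route via a sharp matrix concentration bound for the Hermitian dilation is closer in spirit, but the ``subtlety'' you flag --- the entries within each $d\times d$ block are dependent Haar-rotation entries, so scalar-entry results do not apply directly --- is precisely the content that needs a proof, not a remark.

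For comparison, the paper resolves exactly this point by the trace method: it expands $\mathbb{E}\,\T\bigl((\bm{S}\bm{S}^\top)^k\bigr)$ over closed walks on the complete bipartite graph, notes that only walks traversing every edge an even number of times survive, bounds each rotation product by $|\T(\bm{R}_{i_1j_1}\cdots\bm{R}_{i_1j_k}^\top)|\le d$, and thereby compares the block matrix with a scalar $m_1\times m_2$ matrix $\bm{T}$ having i.i.d.\ symmetric $\{\pm1,0\}$ entries of variance $q$. It then applies the dimension-free moment bound of Lata\l{}a--van Handel--Youssef to $\mathbb{E}\|\bm{T}\|^{2k}$ and takes $k\asymp\log n$ with Markov's inequality, which is what produces the additive $\sqrt{\log n}$ correction at probability $1-n^{-c}$. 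If you want to salvage your approach, you would either need to import the Kahn--Szemer\'edi discrepancy machinery for the heavy couples (adapted to blocks), or carry out a reduction to a scalar model along the lines above.
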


\begin{proof}
Our technique is based on the classic moment method which is broadly used in random matrix theory (see e.g.~\cite{anderson2010introduction, tao2012topics}). We start from bounding $\|\bm{S}\|$ by the trace of $\bm{S}\bm{S}^\top $ as 
\begin{equation*}
    \mathbb{E}\left[\|\bm{S}\|^{2k}\right] = \mathbb{E}\left[\|(\bm{S}\bm{S}^\top)^k\|\right]  \leq \mathbb{E}\left[\T{\left((\bm{S}\bm{S}^\top)^{k}\right)}\right]
\end{equation*}
which holds for any $k \in \mathbb{N}$. Then, $\mathbb{E}\left[\T{\left((\bm{S} \bm{S}^\top)^{k}\right)}\right]$ can be expanded as
\begin{equation}
    \mathbb{E}\left[\T{\left((\bm{S} \bm{S}^\top)^{k}\right)}\right] = \sum \limits_{i_1, \ldots, i_k \in [m_1]} \sum \limits_{j_1, \ldots, j_k \in [m_2]} \mathbb{E}\left[\T{\left(\bm{S}_{i_1j_1}\bm{S}_{i_2j_1}^\top \bm{S}_{i_2j_2} \bm{S}_{i_3j_2}^\top \cdots \bm{S}_{i_kj_k} \bm{S}_{i_1j_k}^\top\right)}\right]
    \label{eq:sst}
\end{equation}
where $[m_1] := \{1, \ldots, m_1\}$ and $[m_2] := \{1, \ldots, m_2\}$. Let $G(U, V)$ be a complete bipartite graph with the set of nodes $U = [m_1]$ and $V = [m_2]$. Then each $\bm{S}_{ij}$ for $i \in U, \; j \in V$ is associated with an edge $(i,j)$ on the graph $G$. Furthermore, the matrix product $\bm{S}_{i_1j_1}\bm{S}_{i_2j_1}^\top \cdots \bm{S}_{i_kj_k} \bm{S}_{i_1j_k}^\top$ can be treated as a \textit{walk} that goes back and forth along $G$ and follows the path 
\begin{equation*}
    i_1 \rightarrow j_1 \rightarrow i_2 \rightarrow j_2 \rightarrow \cdots \rightarrow i_k \rightarrow j_k \rightarrow i_1
\end{equation*}
which starts and ends at $i_1$ as a cycle with $2k$ steps.
With this in mind, let $(i_1, j_1, \ldots, i_k, j_k)$ denote this walk, since $\mathbb{E}[ \bm{S}_{ij}^k] = \bm{0}$ when $k$ is odd, the sum in \eqref{eq:sst} can be restricted to even cycles on the graph $G$ where each distinct edge traversed by the walk should be visited by an even number of times. Therefore, let $W$ denote the set of all such walks, then
\begin{align}
    \mathbb{E}\left[\T{\left((\bm{S} \bm{S}^\top)^{k}\right)}\right] &= \sum \limits_{(i_1, j_1, \ldots, i_k, j_k) \in W} \mathbb{E}\left[\T{\left(\bm{S}_{i_1j_1}\bm{S}_{i_2j_1}^\top \bm{S}_{i_2j_2} \bm{S}_{i_3j_2}^\top \cdots \bm{S}_{i_kj_k} \bm{S}_{i_1j_k}^\top\right)}\right] \nonumber\\
    &\overset{(a)}{=} \sum \limits_{(i_1, j_1, \ldots, i_k, j_k) \in W} \mathbb{E}\left[\T{\left(\bm{R}_{i_1j_1}\bm{R}_{i_2j_1}^\top  \!\cdots\! \bm{R}_{i_kj_k} \bm{R}_{i_1j_k}^\top\right)}\right] \mathbb{E}[r_{i_1j_1}r_{i_2j_1} \!\cdots\! r_{i_kj_k} r_{i_1j_k} ] \nonumber\\
    &\overset{(b)}{\leq} d \sum \limits_{(i_1, j_1, \ldots, i_k, j_k) \in W} \mathbb{E}[r_{i_1j_1}r_{i_2j_1} \cdots r_{i_kj_k} r_{i_1j_k} ],
    \label{eq:trace_sum}
\end{align}
where $(a)$ holds by rewriting $\bm{S}_{ij} = r_{ij}\bm{R}_{ij}$ for some Bernoulli random variable $r_{ij}$ such that $r_{ij} = 1$ with probability $q$ and $r_{ij} = 0$ otherwise, $(b)$ holds because the product $\bm{R}_{i_1j_1}\cdots \bm{R}_{i_1j_k}$ is a rotation matrix, therefore $\|\bm{R}_{i_1j_1}\bm{R}_{i_2j_1}^\top  \cdots \bm{R}_{i_kj_k} \bm{R}_{i_1j_k}^\top\| = 1$ and
\begin{equation}
    \mathbb{E}\left[\T{\left(\bm{R}_{i_1j_1}\bm{R}_{i_2j_1}^\top  \cdots \bm{R}_{i_kj_k} \bm{R}_{i_1j_k}^\top\right)}\right] \leq d. 
    \label{eq:trace_d}
\end{equation}
To proceed, we introduce the i.i.d. symmetric random variables $t_{ij}$ defined as
\begin{equation*}
    t_{ij} = \begin{cases}
    1, & \text{with probability } \frac{q}{2},\\
    -1, &  \text{with probability } \frac{q}{2},\\
    0, & \text{with probability } 1-q. 
    \end{cases}
\end{equation*}
for $i \in [m_1], j \in [m_2]$. As a result, $t_{ij}$ has symmetric distribution such that $\mathbb{E}[t_{ij}]^{2k -1} = 0$ and $\mathbb{E}[t_{ij}]^{2k} = q,\; \forall k \in \mathbb{N}$. Moreover, the summation in \eqref{eq:trace_sum} satisfies
\begin{align}
    \sum \limits_{(i_1, j_1, \ldots, i_k, j_k) \in W} \mathbb{E}[r_{i_1j_1}r_{i_2j_1} \cdots r_{i_kj_k} r_{i_1j_k} ] \nonumber 
    &= \sum \limits_{(i_1, j_1, \ldots, i_k, j_k) \in W} \mathbb{E}[t_{i_1j_1} t_{i_2j_1} \cdots t_{i_kj_k} t_{i_1j_k} ] \nonumber \\ 
    &\overset{(a)}{=} \sum \limits_{i_1,\ldots, i_k \in [m_1]} \sum \limits_{j_1,\ldots, j_k \in [m_2]} \mathbb{E}[t_{i_1j_1} t_{i_2j_1} \cdots t_{i_kj_k} t_{i_1j_k} ] \nonumber\\
    &= \mathbb{E}\left[\T{\left((\bm{T} \bm{T}^\top)^{k}\right)}\right] \nonumber \\
    &\leq \min\{m_1, m_2\} \mathbb{E}\left[\|\bm{T}\|^{2k}\right] \label{eq:bound_S_T}.
\end{align}
Here, $\bm{T} \in \mathbb{R}^{m_1 \times m_2}$ is the random matrix whose $(i,j)$-th entry is $t_{ij}$, the equality $(a)$ holds since
any walk $(i_1, j_1, \ldots, i_k, j_k) \notin W$ that visits some edge by odd times vanishes. Therefore, \eqref{eq:bound_S_T} enables us to bound $\mathbb{E}\left[\T{\left((\bm{S} \bm{S}^\top)^{k}\right)}\right]$ via $\mathbb{E}\left[\|\bm{T}\|^{2k}\right]$. To this end, by computing the following quantities: 
\begin{align*}
\sigma_{k, 1} & := \Bigg(\sum \limits_{i = 1}^{m_1} \Bigg(\sum \limits_{j = 1}^{m_2} \mathbb{E}\left[t_{ij}^2\right]\Bigg)^k \Bigg)^{1/2k} = m_1^{1/2k} \sqrt{m_2 q}, \\
\sigma_{k, 2} &:= \Bigg(\sum \limits_{j = 1}^{m_2} \Bigg(\sum \limits_{i = 1}^{m_1} \mathbb{E}\left[t_{ij}^2\right] \Bigg)^k\Bigg)^{1/2k} = m_2^{1/2k} \sqrt{m_1 q},   \\ 
\sigma_k^* & := \Bigg(\sum \limits_{i = 1}^{m_1} \sum \limits_{j = 1}^{m_2} \|t_{ij} \|_\infty^{2k} \Bigg)^{1/2k} = (m_1 m_2)^{1/2k}.
\end{align*}
Then applying \cite[Theorem 4.9]{latala2018dimension} yields the following bound on $\mathbb{E}\left[\|\bm{T}\|^{2k}\right]$:
\begin{align*}
    \mathbb{E}\left[\|\bm{T}\|^{2k}\right]^{1/2k} & \leq \sigma_{k, 1} + \sigma_{k, 2} + C \sqrt{k} \sigma_{k}^* \leq m_2^{1/2k}\sqrt{m_1q} + m_1^{1/2k}\sqrt{m_2q} + C\sqrt{k}(m_1m_2)^{1/2k} \\
    & \leq n^{1/2k} \left( \sqrt{m_1q} + \sqrt{m_2q} \right) + C\sqrt{k}n^{1/k}
\end{align*}
for some universal constant $C > 0$. Furthermore, by setting $k = \lceil \gamma \log n \rceil$ for some $\gamma > 0$,
\begin{equation*}
     \mathbb{E}\left[\|\bm{T}\|^{2k}\right]^{1/2k} \leq e^{1/2\gamma} \left(\sqrt{m_1q} + \sqrt{m_2q}\right) + C^\prime \sqrt{\log n}
\end{equation*}
for some $C^\prime > 0$. Finally, by putting all the results together and using Markov inequality,
\begin{align*}
     \mathbb{P}\{ \| \bm{S} \| \geq t \} & =  \mathbb{P}\{ \| \bm{S} \|^{2k} \geq t^{2k} \} \leq t^{-2k}     \mathbb{E}\left[\T{\left((\bm{S} \bm{S}^\top)^{k}\right)}\right]  \\
    &\leq d \min\{m_1, m_2 \}\Bigg( \frac {e^{1/2\gamma} (\sqrt{m_1q} + \sqrt{m_2q}) + C^\prime \sqrt{\log n}}{t} \Bigg)^{2\lceil \gamma \log n \rceil}, 
\end{align*}
for any $t > 0$. By setting $\gamma = \frac{c}{2} + 1$, for any $c > 0$, we can identify some $c_1, c_2 > 0$ such that $t = c_1(\sqrt{m_1 q} + \sqrt{m_2q}) + c_2 \sqrt{\log n}$ and $\mathbb{P}\{\|\bm{S}\| \geq t \} \leq n^{-c}$, which completes the proof.
\end{proof}

\begin{remark}
As a special case, when $d = 1$, the matrix $\bm{S}$ defined in \eqref{eq:S_ij_definition} reduces to a random matrix with i.i.d. Bernoulli random variables as its entries, which are also sub-Gaussian. In this case, our bound in Theorem~\ref{lemma:spec_bound_S_12} is equivalent to the previous result in literature~(e.g.,~\cite[Theorem 4.4.5]{vershynin2018high}).
\end{remark}

\begin{remark}
It is worth noting that our result in Theorem~\ref{lemma:spec_bound_S_12} is sharper than either the one by using \textit{$\epsilon$-net argument}~(e.g. \cite[Theorem 4.6.1]{vershynin2018high}) or the one by applying matrix concentration inequalities such as matrix Bernstein~(e.g. \cite[Lemma 5.14]{ling2020near}) and the non-commutative Khintchine~(NCK)~\cite{pisier2003introduction}. If using $\epsilon$-net argument, we will get $\|\bm{S}\| \leq c_1 \sqrt{qnd} + c_2 \sqrt{\log n}$ with high probability for some $c_1, c_2 > 0$, which has an additional factor of $\sqrt{d}$ in the leading term compared to our result. If using matrix Bernstein or NCK, we obtain $\|\bm{S}\| \leq c\sqrt{qn\left(\log nd\right)}$ with high probability for some $c > 0$, which is loose by a factor $\sqrt{\log nd }$ compared to our result. These bounds are not adequate enough in our analysis and thus it is necessary to have a sharp result as in Theorem~\ref{lemma:spec_bound_S_12}.
\end{remark}

\begin{remark}
A recent work~\cite{bandeira2021spectral} studies the spectral norm of a class of random block matrices, called random lifts of matrices, and uses the moment method to obtain the spectral bound of the corresponding random block matrix, which improves the bound provided by NCK inequality. The random block matrix model in~\cite{bandeira2021spectral} is different from ours in the following aspects: (1) based on \cite[Definition 1.2]{bandeira2021spectral}, the matrix $\bm{A}$ before lifting is a deterministic symmetric matrix, while in our model, each rotation matrix is multiplied by a Bernoulli random variable (see the definition of $\bm{S}_{ij}$ in~\eqref{eq:S_ij_definition}), and (2) each sub-block matrix $\Pi_{ij}$ in~\cite[Definition 1.2]{bandeira2021spectral} is assumed to be a symmetric matrix drawn from a centered distribution $\pi$ with spectrum norm at most 1, while we consider rotation matrix drawn according to the Haar measure. As a result, we cannot directly apply the results  in~\cite{bandeira2021spectral} and need to use a different approach to bound the trace of the moments (see \eqref{eq:trace_sum} and the derivation below). 
\end{remark}

\section{Proof of main results}
\label{sec:proof}

This section is devoted to the proofs of theorems in Section~\ref{sec:method}. 
In Sections~\ref{sec:proof_two_equal} and \ref{sec:proof_two_unequal} we
provide detailed proofs for the lemmas and theorems presented in Sections~\ref{sec:sketch_proof_equal_main} and \ref{sec:proof_sketch_uneuqal_main} respectively. In Section~\ref{sec:proof_two_unknown} we consider the case of two clusters with unknown cluster sizes.

\subsection{Two equal-sized clusters with known cluster sizes}
\label{sec:proof_two_equal}
\begin{proof}[Proof of Lemma~\ref{lemma:1_main}]
The optimality is immediately established since \eqref{eq:SDP_equal} is convex then KKT conditions are sufficient for $\bm{M}^*$ being optimal~\cite{boyd2004convex}. For the uniqueness, suppose there exists another optimal solution $\widetilde{\bm{M}} \neq \bm{M}^*$, then the following holds:
\begin{align}
    0 \overset{(a)}{=} 
    \langle \bm{A}, \widetilde{\bm{M}} - \bm{M}^* \rangle &\overset{(b)}{=} -\langle\bm{\Lambda}, \widetilde{\bm{M}} - \bm{M}^*\rangle -  \langle\mathrm{diag}(\{\bm{Z}_i\}_{i = 1}^n), \widetilde{\bm{M}} - \bm{M}^*\rangle + \langle \bm{\Theta} + \bm{\Theta}^\top, \widetilde{\bm{M}} - \bm{M}^*\rangle \nonumber\\
    &\overset{(c)}{=} -\langle\bm{\Lambda}, \widetilde{\bm{M}} - \bm{M}^*\rangle  + \langle \bm{\Theta} + \bm{\Theta}^\top, \widetilde{\bm{M}} - \bm{M}^*\rangle \nonumber\\
    &\overset{(d)}{=} -\langle\bm{\Lambda}, \widetilde{\bm{M}} \rangle  + \langle \bm{\Theta} + \bm{\Theta}^\top, \widetilde{\bm{M}} - \bm{M}^*\rangle \nonumber\\
    &\overset{(e)}{=} -\langle\bm{\Lambda}, \widetilde{\bm{M}} \rangle  + 2\langle \bm{\Theta}, \widetilde{\bm{M}} - \bm{M}^*\rangle,
    \label{eq:relation_tilde_M_M}
\end{align}
where ($a$) holds since both $\widetilde{\bm{M}}$ and $\bm{M}^*$ are optimal and they should share the same primal value, i.e., $\langle \bm{A}, \widetilde{\bm{M}}\rangle = \langle \bm{A}, \bm{M}^* \rangle$; ($b$) follows from \eqref{eq:KKT_stationarity_main}; ($c$) comes from the constraint in \eqref{eq:SDP_equal} that $\bm{M}_{ii}^* = \widetilde{\bm{M}}_{ii} = \bm{I}_d$, for $i = 1, \ldots, n$; ($d$) uses $\langle\bm{\Lambda},\; \bm{M}^*\rangle = 0$ in \eqref{eq:KKT_complementary_main}; and ($e$) holds because both $\widetilde{\bm{M}}$ and $\bm{M}^*$ are symmetric. To proceed, let us rewrite
\begin{equation}
    \begin{aligned}
    \langle \bm{\Theta}, \widetilde{\bm{M}} - \bm{M}^*\rangle &= \langle \bm{\Theta}, \widetilde{\bm{M}}\rangle - \langle \bm{\Theta}, \bm{M}^*\rangle = \sum \limits_{i,j} \langle\bm{\Theta}_{ij}, \widetilde{\bm{M}}_{ij} \rangle - \sum \limits_{i,j} \langle\bm{\Theta}_{ij}, \bm{M}_{ij}^* \rangle.
    \end{aligned}
    \label{eq:theta_M_tilde_M}
\end{equation}
Then by plugging the explicit expression of $\bm{\Theta}_{ij}$ in \eqref{eq:theta_definition_main} into \eqref{eq:theta_M_tilde_M} we obtain
\begin{equation}
    \begin{aligned}
    \sum \limits_{i,j} \langle\bm{\Theta}_{ij}, \bm{M}_{ij}^* \rangle &\overset{(a)}{=} \sum \limits_{i}\sum \limits_{j: \kappa(j) = \kappa(i)}\frac{\mu_i}{\sqrt{d}}\langle \bm{M}_{ij}^*, \bm{M}_{ij}^* \rangle = \sum \limits_{i} \mu_im\sqrt{d},\\
    \sum \limits_{i,j}\langle\bm{\Theta}_{ij}, \widetilde{\bm{M}}_{ij} \rangle &\overset{(b)}{=} \sum \limits_{i}\mu_i\bigg(\sum \limits_{j: \kappa(j) = \kappa(i)} \frac{1}{\sqrt{d}}\langle \bm{M}_{ij}^*, \widetilde{\bm{M}}_{ij}\rangle + \sum \limits_{j: \kappa(j) \neq \kappa(i)} \langle \bm{\alpha}_{ij}, \widetilde{\bm{M}}_{ij}\rangle \bigg) \\
    &\overset{(c)}{\leq} \sum \limits_{i}\mu_i \bigg(\sum \limits_{j: \kappa(j) = \kappa(i)} \frac{1}{\sqrt{d}}\|\bm{M}_{ij}^*\|_\mathrm{F}\|\widetilde{\bm{M}}_{ij}\|_\mathrm{F} + \sum \limits_{j: \kappa(j) \neq \kappa(i)} \|\bm{\alpha}_{ij}\|_\mathrm{F}\|\widetilde{\bm{M}}_{ij}\|_\mathrm{F} \bigg)\\
    &\overset{(d)}{\leq}  \sum \limits_{i}\mu_i\sum \limits_{j}\|\widetilde{\bm{M}}_{ij}\|_\mathrm{F} \overset{(e)}{\leq} \sum \limits_{i}\mu_im\sqrt{d}
    \end{aligned}
    \label{eq:theta_M_M_tilde_bound}
\end{equation}
where both ($a$) and ($b$) hold because $\bm{M}^*_{ij} = \bm{0}, \; \kappa(i) \neq \kappa(j)$; ($c$) follows from Cauchy-Schwartz inequality; ($d$) uses $\|\bm{\alpha}_{ij}\|_\textrm{F} \leq 1$ in \eqref{eq:theta_definition_main}; ($e$) comes from $\sum_{j}\|\widetilde{\bm{M}}_{ij}\|_\mathrm{F} \leq m\sqrt{d}$, for $i = 1, \ldots, n$ in \eqref{eq:SDP_equal}. As a result, combining \eqref{eq:theta_M_tilde_M} and \eqref{eq:theta_M_M_tilde_bound} yields $\langle \bm{\Theta}, \widetilde{\bm{M}} - \bm{M}^*\rangle \leq 0$. Plugging this back into \eqref{eq:relation_tilde_M_M} gives $\langle \bm{\Lambda}, \widetilde{\bm{M}}\rangle \leq 0$. On the other hand, from $\bm{\Lambda} \succeq 0$ in \eqref{eq:KKT_dual_feasibility_main} and $\widetilde{\bm{M}} \succeq 0$ in \eqref{eq:SDP_equal} we have $\langle \bm{\Lambda}, \widetilde{\bm{M}}\rangle \geq 0$. Then it holds that $\langle \bm{\Lambda}, \widetilde{\bm{M}}\rangle = 0$. With this in mind, combining the assumption $\mathcal{N}(\bm{\Lambda}) = \mathcal{R}(\bm{M}^*)$ with $\bm{M}^* = \bm{V}^{(1)}(\bm{V}^{(1)})^{\top} + \bm{V}^{(2)}(\bm{V}^{(2)})^{\top}$ in \eqref{eq:M_ground_truth}, $\widetilde{\bm{M}}$ satisfies
\begin{equation*}
    \widetilde{\bm{M}} = \bm{V}^{(1)}\bm{\Sigma}_1(\bm{V}^{(1)})^{\top} + \bm{V}^{(2)}\bm{\Sigma}_2(\bm{V}^{(2)})^{\top}
\end{equation*}
for some $d \times d$ diagonal matrices $\bm{\Sigma}_1, \bm{\Sigma}_2  \succeq 0$. Furthermore, from the constraint in \eqref{eq:SDP_equal} that $\widetilde{\bm{M}}_{ii} = \bm{I}_d$, we obtain $\bm{\Sigma}_1 = \bm{\Sigma}_2 = \bm{I}_d$, then $\widetilde{\bm{M}} = \bm{V}^{(1)}(\bm{V}^{(1)})^{\top} + \bm{V}^{(2)}(\bm{V}^{(2)})^{\top} = \bm{M}^*$ which contradicts with the assumption $\widetilde{\bm{M}} \neq \bm{M}^*$. Therefore, $\bm{M}^*$ is the unique solution. 
\end{proof}

\begin{proof}[Proof of Lemma~\ref{lemma:guess_dual}]
Recall the assumption $\bm{R}_i = \bm{I}_d, \; i = 1,\ldots, n$.
From \eqref{eq:KKT_stationarity_main} we get $\bm{\Lambda} = -\bm{A} - \mathrm{diag} \left (\{\bm{Z}_i \}_{i = 1}^n \right) + \bm{\Theta}  + \bm{\Theta}^{\top}$, plugging this into \eqref{eq:1_main} yields
\begin{align}
&\sum \limits_{j: \kappa(j) = \kappa(i)} \bm{\Theta}_{ij} + \bm{\Theta}_{ji}^{\top} -\bm{A}_{ij} = \bm{Z}_i,  \quad i = 1, \ldots, n \label{eq:Z_equality} \\
&\sum \limits_{j: \kappa(j) \neq \kappa(i)} \bm{\Theta}_{ij} + \bm{\Theta}_{ji}^{\top} -\bm{A}_{ij} = \bm{0}, \quad i = 1, \ldots, n.
\label{eq:Theta_equality}
\end{align}
To determine $\bm{\Theta}_{ij}$, notice that \eqref{eq:theta_definition_main} can be written as
\begin{equation}
    \begin{cases} \displaystyle
    \bm{\Theta}_{ij} = \mu_i\bm{M}_{ij}^*/\sqrt{d}, &\;  \kappa(i) = \kappa(j),\\[4pt]
    \|\bm{\Theta}_{ij}\|_\mathrm{F} \leq mu_i  &\;  \kappa(i) \neq \kappa(j),
    \end{cases}
    \label{eq:theta_equal}
\end{equation}
where we use the fact $\bm{M}_{ij}^* \neq \bm{0}$ if and only if $\kappa(i) = \kappa(j)$. Then let us consider the case when $\kappa(i) \neq \kappa(j)$. Importantly, our ansatz of $\bm{\Theta}_{ij}$ is of the following form:
\begin{equation}
    \bm{\Theta}_{ij} = \widetilde{\bm{\alpha}}_{i}, \quad  \forall j: \kappa(j) \neq \kappa(i),
    \label{eq:guess_theta}
\end{equation}
for some $\widetilde{\bm{\alpha}}_{i}$. To solve it, plugging \eqref{eq:guess_theta} into \eqref{eq:Theta_equality} yields
\begin{align}
    \widetilde{\bm{\alpha}}_i &= \frac{1}{m}\sum \limits_{j \in C_2}\bm{A}_{ij} - \frac{1}{m}\sum \limits_{j \in C_2}\widetilde{\bm{\alpha}}_j^\top, \quad i \in C_1,\label{eq:alpha_i_alpha_j_1}\\
    \widetilde{\bm{\alpha}}_j &= \frac{1}{m}\sum \limits_{s \in C_1}\bm{A}_{js} - \frac{1}{m}\sum \limits_{s \in C_1}\widetilde{\bm{\alpha}}_s^\top, \quad j \in C_2.
    \label{eq:alpha_i_alpha_j_2}
\end{align}
By combining \eqref{eq:alpha_i_alpha_j_1} and \eqref{eq:alpha_i_alpha_j_2} we get
\begin{align*}
    \widetilde{\bm{\alpha}}_i - \frac{1}{m}\sum \limits_{s \in C_1}\widetilde{\bm{\alpha}}_s = \frac{1}{m}\sum \limits_{j \in C_2}\bm{A}_{ij} - \frac{1}{m^2}\sum \limits_{s \in C_1}\sum \limits_{j \in C_2}\bm{A}_{sj}, \quad i \in C_1, \\
    \widetilde{\bm{\alpha}}_j - \frac{1}{m}\sum \limits_{s \in C_2}\widetilde{\bm{\alpha}}_s = \frac{1}{m}\sum \limits_{i \in C_1}\bm{A}_{ji} - \frac{1}{m^2}\sum \limits_{s \in C_2}\sum \limits_{i \in C_1}\bm{A}_{si}, \quad j \in C_2.
\end{align*}
Then it is natural to guess $\widetilde{\bm{\alpha}}_i$ for $i \in C_1$ and $\widetilde{\bm{\alpha}}_j$ for $j \in C_2$ has the form
\begin{equation*}
    \widetilde{\bm{\alpha}}_{i} = \frac{1}{m}\sum_{s \in C_2} \bm{A}_{is} - \bm{\alpha}^{(1)}, \quad i \in C_1, \quad \widetilde{\bm{\alpha}}_{j} = \frac{1}{m}\sum_{s \in C_2} \bm{A}_{js} - \bm{\alpha}^{(2)}, \quad j \in C_2
\end{equation*}
for some $\bm{\alpha}^{(1)}$ and $\bm{\alpha}^{(2)}$. By further plugging this into \eqref{eq:alpha_i_alpha_j_1} or \eqref{eq:alpha_i_alpha_j_2} we obtain 
\begin{equation*}
    \bm{\alpha}^{(1)} + (\bm{\alpha}^{(2)})^\top = \frac{1}{m^2}\sum_{s_1 \in C_1}\sum_{s_2 \in C_2} \bm{A}_{s_1s_2}.  
\end{equation*}
By symmetry we guess $\bm{\alpha}^{(1)} = (\bm{\alpha}^{(2)})^\top = \frac{1}{2m^2}\sum_{s_1 \in C_1} \sum_{s_2 \in C_2} \bm{A}_{s_1s_2}$, which leads to our guess of $\widetilde{\bm{\alpha}}_i$ given in Lemma~\ref{lemma:guess_dual}.
Now it remains to determine $\mu_i$: From \eqref{eq:theta_equal} and \eqref{eq:guess_theta} we see that when $\kappa(i) \neq \kappa(j)$, $\|\bm{\Theta}_{ij}\|_\mathrm{F} = \mu_i\|\bm{\alpha}_{ij}\|_\mathrm{F} = \|\widetilde{\bm{\alpha}}_{i}\|_\mathrm{F}$. Also, 
due to the constraint in \eqref{eq:theta_definition_main} that $\|\bm{\alpha}_{ij}\|_\mathrm{F} \leq 1$, $\mu_i$ should satisfy $\mu_i \geq \|\widetilde{\bm{\alpha}}_i\|_\textrm{F}$. Then we guess $\mu_i = \|\widetilde{\bm{\alpha}}_{i}\|_\mathrm{F}$.
As we shall see in Lemma~\ref{lemma:simplify_lambda}, this is to make $\mu_i$ as small as possible such that the eigenvalues of $\bm{\Lambda}$ are greater and the condition $\bm{\Lambda} \succeq 0$ benefits from it. 
To complete the proof, plugging $\bm{\Theta}_{ij}$ in \eqref{eq:theta_equal_new} with $\bm{M}_{ij}^* = \bm{I}_d$ into \eqref{eq:Z_equality} yields the form of $\bm{Z}$.
\end{proof}

\begin{proof}[Proof of Lemma~\ref{lemma:simplify_lambda}]
Let us rewrite $\bm{A}$, $\bm{\Lambda}$ and $\bm{\Theta}$ into four blocks as
\begin{equation}
    \bm{A} = 
    \begin{pmatrix}
    \widetilde{\bm{A}}_{11} &\widetilde{\bm{A}}_{12} \\
    \widetilde{\bm{A}}_{12}^{\top} &\widetilde{\bm{A}}_{22} \\
    \end{pmatrix}, \quad
    \bm{\Lambda} = 
    \begin{pmatrix}
    \bm{\Lambda}_{11} &\bm{\Lambda}_{12}\\
    \bm{\Lambda}_{21} &\bm{\Lambda}_{22}
    \end{pmatrix}, \quad
    \bm{\Theta} = 
    \begin{pmatrix}
    \widetilde{\bm{\Theta}}_{11} &\widetilde{\bm{\Theta}}_{12}\\
    \widetilde{\bm{\Theta}}_{21} &\widetilde{\bm{\Theta}}_{22}
    \end{pmatrix}
    \label{eq:2_cluster_A_M}
\end{equation}
where the two diagonal blocks represent $C_1$ and $C_2$ respectively. By assumption the two diagonal blocks of $\bm{M}^*$ in \eqref{eq:M_ground_truth} satisfy $\widetilde{\bm{M}}^*_{1}\! =\! \widetilde{\bm{M}}^*_{2}\! =\! (\bm{1}_m\bm{1}_m^\top) \otimes \bm{I}_d$. Let $\widetilde{\bm{\mu}}_{1} = \mathrm{diag}(\{\mu_{i}\bm{I}_d\}_{i = 1}^m) \in \mathbb{R}^{md \times md}$ and $\widetilde{\bm{\mu}}_{2} = \mathrm{diag}(\{\mu_{i}\bm{I}_d\}_{i = m+1}^n) \in \mathbb{R}^{md \times md}$ be two diagonal block matrices whose diagonal blocks are  $\mu_i\bm{I}_d$ for $i \in C_1$ and $i \in C_2$ respectively.
Then from Lemma~\ref{lemma:guess_dual} each block of $\bm{\Theta}$ can be expressed as
\begin{alignat*}{2}
        &\widetilde{\bm{\Theta}}_{11} = \frac{\widetilde{\bm{\mu}}_1\widetilde{\bm{M}}_{1}^*}{\sqrt{d}}, &&\quad  \widetilde{\bm{\Theta}}_{12}
        = \frac{1}{m}\widetilde{\bm{A}}_{12}\widetilde{\bm{M}}_{2}^* - \frac{1}{2m^2}\widetilde{\bm{M}}_{1}^*\widetilde{\bm{A}}_{12}\widetilde{\bm{M}}_{2}^*, \\
        &\widetilde{\bm{\Theta}}_{21}  = \frac{1}{m}\widetilde{\bm{A}}_{21}\widetilde{\bm{M}}_{1}^* - \frac{1}{2m^2}\widetilde{\bm{M}}_{2}^*\widetilde{\bm{A}}_{21}\widetilde{\bm{M}}_{1}^*,&&\quad
        \widetilde{\bm{\Theta}}_{22} = \frac{\widetilde{\bm{\mu}}_2\widetilde{\bm{M}}_{2}^*}{\sqrt{d}}.
\end{alignat*}
Given the above, from Lemma~\ref{lemma:guess_dual}, each block of $\bm{\Lambda}$ can be expressed as
\begin{equation*}
    \begin{aligned}
    \widetilde{\bm{\Lambda}}_{12} &= -\widetilde{\bm{A}}_{12} + \widetilde{\bm{\Theta}}_{12} + \widetilde{\bm{\Theta}}_{21}^{\top} 
    = -\left(\bm{I}_{md} - \frac{1}{m}\widetilde{\bm{M}}_{1}^*\right)\widetilde{\bm{A}}_{12}\left(\bm{I}_{md} - \frac{1}{m}\widetilde{\bm{M}}_{2}^*\right),\\
    \widetilde{\bm{\Lambda}}_{21} &= -\widetilde{\bm{A}}_{21} + \widetilde{\bm{\Theta}}_{21} + \widetilde{\bm{\Theta}}_{12}^{\top} 
    = -\left(\bm{I}_{md} - \frac{1}{m}\widetilde{\bm{M}}_{2}^*\right)\widetilde{\bm{A}}_{21}\left(\bm{I}_{md} - \frac{1}{m}\widetilde{\bm{M}}_{1}^*\right),\\
    \widetilde{\bm{\Lambda}}_{11} &= -\widetilde{\bm{A}}_{11} - \mathrm{diag}(\{\bm{Z}_i\}_{i = 1}^m) + \frac{1}{\sqrt{d}}\widetilde{\bm{\mu}}_1\widetilde{\bm{M}}_{1}^* + \frac{1}{\sqrt{d}}\widetilde{\bm{M}}_{1}^*\widetilde{\bm{\mu}}_1^{\top}\\
    &= -\widetilde{\bm{A}}_{11} - \mathrm{diag}\bigg(\bigg\{\bigg(\frac{m\mu_i}{\sqrt{d}} +\sum \limits_{s \in C_1}\bigg(\frac{\mu_s}{\sqrt{d}} - r_{is}\bigg)\bigg)\bm{I}_d\bigg\}_{i = 1}^m\bigg) + \frac{1}{\sqrt{d}}\widetilde{\bm{\mu}}_1\widetilde{\bm{M}}_{1}^* + \frac{1}{\sqrt{d}}\widetilde{\bm{M}}_{1}^*\widetilde{\bm{\mu}}_1^{\top}\\
    &= \left(\bm{I}_{md} - \frac{1}{m}\widetilde{\bm{M}}_{1}^*\right)\bigg(-\widetilde{\bm{A}}_{11} - \mathrm{diag}\left(\{\bm{Z}_{i}\}_{i = 1}^m\right) \bigg)\left(\bm{I}_{md} - \frac{1}{m}\widetilde{\bm{M}}_{1}^*\right),\\
    \widetilde{\bm{\Lambda}}_{22} &= \left(\bm{I}_{md} - \frac{1}{m}\widetilde{\bm{M}}_{2}^*\right)\bigg(-\widetilde{\bm{A}}_{22} - \mathrm{diag}\left(\{\bm{Z}_{i}\}_{i = m+1}^n\right) \bigg)\left(\bm{I}_{md} - \frac{1}{m}\widetilde{\bm{M}}_{2}^*\right).
    \end{aligned}
\end{equation*}
Then, $\bm{\Lambda}$ can be simply expressed as
\begin{equation}
    \bm{\Lambda} = (\bm{I}_{nd} - \bm{\Pi})
    \left(-\bm{Z} - \bm{A}\right)
    (\bm{I}_{nd} - \bm{\Pi}).
    \label{eq:lambda_first}
\end{equation}
To proceed, notice that the expectation of $\bm{A}$ is given as
\begin{equation*}
    \mathbb{E}[\bm{A}] = 
    \begin{pmatrix}
        \mathbb{E}[\widetilde{\bm{A}}_{11}] &\mathbb{E}[\widetilde{\bm{A}}_{12}]\\
        \mathbb{E}[\widetilde{\bm{A}}_{21}] &\mathbb{E}[\widetilde{\bm{A}}_{22}]
    \end{pmatrix}
    = 
    \begin{pmatrix}
        p\widetilde{\bm{M}}_{1}^* - p\bm{I}_{md} &\bm{0}\\
        \bm{0} &p\widetilde{\bm{M}}_{2}^* - p\bm{I}_{md}
    \end{pmatrix}
    = 
    \begin{pmatrix}
    p\widetilde{\bm{M}}_{1}^* &\bm{0}\\
    \bm{0} &p\widetilde{\bm{M}}_{2}^*
    \end{pmatrix} 
    - p\bm{I}_{nd},
    \label{eq:exp_A}
\end{equation*}
where $p\bm{I}_{nd}$ comes from the convention $\bm{A}_{ii} = \bm{0}, \; i = 1,\ldots, n$. Then one can see that
\begin{equation}
    \begin{aligned}
    (\bm{I}_{nd} - \bm{\Pi})
    (\mathbb{E}[\bm{A}] + p\bm{I}_{nd})
    (\bm{I}_{nd} - \bm{\Pi}) = \bm{0}.
    \end{aligned}
    \label{eq:project_zero}
\end{equation}
Therefore, by plugging \eqref{eq:project_zero} into \eqref{eq:lambda_first} we obtain $\bm{\Lambda} = (\bm{I}_{nd} - \bm{\Pi})\widetilde{\bm{\Lambda}} (\bm{I}_{nd} - \bm{\Pi})$ with $\widetilde{\bm{\Lambda}}$ defined in Lemma~\ref{lemma:guess_dual}. 
This yields two observations: (1) $\bm{\Lambda} \succeq 0$ is satisfied if $\widetilde{\bm{\Lambda}}\succeq 0$; (2)  $\bm{I}_{nd} - \bm{\Pi}$ is a projection matrix such that $\mathcal{N}(\bm{I}_{nd} - \bm{\Pi}) = \mathcal{R}(\bm{M}^*)$, which implies $\mathcal{N}(\bm{\Lambda}) \supseteq \mathcal{R}(\bm{M}^*)$. When $\widetilde{\bm{\Lambda}} \succ 0$, for all $\bm{x} \notin \mathcal{R}(\bm{M}^*)$,  $\bm{\Lambda} \bm{x} \neq \bm{0}$ since $ \bm{x} \notin \mathcal{N}(\bm{I}_{nd} - \bm{\Pi})$. Combining it with the fact that $\mathcal{N}(\bm{\Lambda}) \supseteq \mathcal{R}(\bm{M}^*)$, we have $\mathcal{N}(\bm{\Lambda}) = \mathcal{R}(\bm{M}^*)$ if  $\widetilde{\bm{\Lambda}} \succ 0$. As a result, $\widetilde{\bm{\Lambda}} \succ 0$ ensures both $\bm{\Lambda} \succeq 0$ and $\mathcal{N}(\bm{\Lambda}) = \mathcal{R}(\bm{M}^*)$ are satisfied.
\end{proof}

\begin{proof}[Proof of Lemma~\ref{claim:x_i_y_i_equal}]
According to Lemma~\ref{lemma:simplify_lambda}, notice that $\bm{Z}$ is a diagonal matrix, then $\lambda_{\text{min}}(p\bm{I}_{nd} - \bm{Z})$ turns out to be the smallest diagonal entry of $p\bm{I}_{nd} - \bm{Z}$. This leads to 
\begin{equation*}
    \lambda_{\text{min}}(p\bm{I}_{nd} - \bm{Z}) = \min_i \left (x_i - y_i - \epsilon_{\kappa(i)} \right ) + p\geq \min_i x_i - \max_i y_i - \max\{\epsilon_1, \epsilon_2 \} + p.
\end{equation*}
with $x_i, y_i, \epsilon_1, \epsilon_2$ defined in Lemma~\ref{claim:x_i_y_i_equal}, and we can bound them separately.

For $x_i$, recall that $r_{is}$ is a Bernoulli random variable with $\mathbb{P}\{r_{is} = 1\} = p$. Then $x_i \sim \textrm{Binom}\left(m, p\right)$ that follows a binomial distribution. Applying the existing tail bound in Lemma~\ref{lemma:Bound_bernoulli} with $\rho = 1/2$ and the union bound yields the result for $\min_i x_i$. 

For $y_i$, by definition 
\begin{align*}
    y_i &= \frac{m\|\widetilde{\bm{\alpha}}_i\|_\mathrm{F}}{\sqrt{d}}  = \frac{m}{\sqrt{d}}\bigg\|\frac{1}{m}\sum \limits_{j: \kappa(j) \neq \kappa(i)}\bm{A}_{ij} - \frac{1}{2m^2}\sum \limits_{j: \kappa(j) \neq \kappa(i)}\sum \limits_{s: C(s)= \kappa(i)}\bm{A}_{sj}\bigg\|_\mathrm{F}\\
    &\leq \underbrace{\frac{1}{\sqrt{d}}\bigg\|\sum \limits_{j: \kappa(j) \neq \kappa(i)}\bm{A}_{ij}\bigg\|_\mathrm{F}}_{=: y_{i1}} + \underbrace{\frac{1}{2m\sqrt{d}}\bigg\|\sum \limits_{j: \kappa(j) \neq \kappa(i)}\sum \limits_{s: C(s)= \kappa(i)}\bm{A}_{sj}\bigg\|_\mathrm{F}}_{=: y_{i2}} = y_{i1} + y_{i2}.
\end{align*}
As a result, both $y_{i1}$ and $y_{i2}$ consist of $\|\sum_{i}\bm{X}_i\|$ where each $\bm{X}_i \in \mathbb{R}^{d \times d}$ is i.i.d. and uniformly distributed in $\SO(d)$. This can be bounded by matrix concentration inequalities such as matrix Bernstein~\cite{tropp2015introductionsup} in general. Therefore, by applying Theorem~\ref{lemma:large_deviation_random_orthogonal} we obtain 
\begin{equation}
    \begin{aligned}
    y_{i1} &\leq \sqrt{2qm(c\log n + \log 2d)}\left(\sqrt{1 + \frac{c\log n + \log 2d}{18qm}} + \sqrt{\frac{c\log n + \log 2d}{18qm}}\right)\\
    &\overset{(a)}{\approx} \sqrt{c\beta}\log n
    \end{aligned}
    \label{eq:bound_alpha_1}
\end{equation}
with probability $1 - n^{-c}$ for any $c > 0$, where $(a)$ comes from \eqref{eq:equal_alpha_i1_approx}.
Similarly, by applying Theorem~\ref{lemma:large_deviation_random_orthogonal} on $y_{i2}$ we get $y_{i2} = o(\log n)$ with high probability. Combining the results above and applying the union bound yields the result for $\max_i y_i$. 

For $\epsilon_1$ and $\epsilon_2$, by symmetry they should have the same statistics, so let us focus on $\epsilon_1$, which satisfies
\begin{equation*}
    \begin{aligned}
    \epsilon_{1} &= \frac{1}{\sqrt{d}}\sum \limits_{s \in C_1} \mu_s = \frac{1}{\sqrt{d}}\sum \limits_{s \in C_1} \|\widetilde{\bm{\alpha}}_s\|_\mathrm{F} \leq \underbrace{\frac{1}{m\sqrt{d}}\sum \limits_{s \in C_1}\Bigg\|\sum \limits_{j \in C_2}\bm{A}_{sj}\Bigg\|_\textrm{F}}_{=: \epsilon_{1}^{(1)}} + \underbrace{\frac{1}{2m\sqrt{d}}\Bigg\|\sum \limits_{j \in C_2}\sum \limits_{s \in C_1}\bm{A}_{sj}\Bigg\|_\mathrm{F}}_{=: \epsilon_{1}^{(2)}} \\
    &= \epsilon_{1}^{(1)} + \epsilon_{1}^{(2)}.
    \end{aligned}
\end{equation*}
Then, by applying Lemma~\ref{lemma:bound_concentration_Z_i} on $\epsilon^{(1)}_{1}$ with $m_1 = m_2 = m$ we get $\epsilon^{(1)}_{1} = (3c/2)\log n + o(\log n)$ 
and $\epsilon_{1}^{(2)} = y_{i2} = o(\log n)$ with probability $1 - n^{-c}$, this leads to the bound on $\epsilon_{1}$ and $\epsilon_2$. By further applying the union bound we get the result on $\max\{\epsilon_1, \epsilon_2\}$. 

Specifically when $d = 2$, a tighter bound on $y_i$ is obtained by applying Theorem~\ref{lemma:sum_orth_2} instead of Theorem~\ref{lemma:large_deviation_random_orthogonal} for bounding $y_{i1}$ and $y_{i2}$, which yields $y_{i1} \leq  \sqrt{c\beta/2}\log n + o(\log n)$ and $y_{i2} = o(\log n)$ with probability $1 - n^{-c}$. The remaining is the same as above.
\end{proof}

\begin{proof}[Proof of Lemma~\ref{lemma:concentration_A}] Let us define 
\begin{equation*}
    \bm{A} = 
    \begin{pmatrix}
    \widetilde{\bm{A}}_{11} &\widetilde{\bm{A}}_{12} \\
    \widetilde{\bm{A}}_{12}^{\top} &\widetilde{\bm{A}}_{22} \\
    \end{pmatrix}, \;
    \mathbb{E}\left[\bm{A}\right] - \bm{A} =:
    \begin{pmatrix}
    \bm{S}_{11} &\bm{S}_{12}\\
    \bm{S}_{12}^{\top} &\bm{S}_{22}
    \end{pmatrix}, \; \bm{S}_{\text{in}} := \begin{pmatrix}
    \bm{S}_{11} &\bm{0}\\
    \bm{0} &\bm{S}_{22}
    \end{pmatrix}, \; \bm{S}_{\text{out}} := \begin{pmatrix}
    \bm{0} &\bm{S}_{12}\\
    \bm{S}_{12}^{\top} &\bm{0}
    \end{pmatrix},
\end{equation*}
where the two diagonal blocks represent the two clusters, then by triangle inequality $\|\mathbb{E}[\bm{A}] - \bm{\bm{A}}\| = \|\bm{S}_{\text{in}} + \bm{S}_{\text{out}}\| \leq \|\bm{S}_{\text{in}}\| + \|\bm{S}_{\text{out}}\|.$
Moreover, notice $\|\bm{S}_{\text{in}}\| = \max\left\{\|\bm{S}_{11}\|, \; \|\bm{S}_{22}\|\right\} \leq \|\bm{S}_{11}\| + \|\bm{S}_{22}\|$ and $\|\bm{S}_{\text{out}}\| = \|\bm{S}_{12}\|$, it holds
\begin{equation*}
    \|\mathbb{E}[\bm{A}] - \bm{\bm{A}}\| \leq \|\bm{S}_{11}\| + \|\bm{S}_{22}\| + \|\bm{S}_{12}\|.
\end{equation*}
As a result, we can bound $\bm{S}_{11}$, $\bm{S}_{22}$ and $\bm{S}_{12}$ them separately. For $\bm{S}_{11}$, by definition $\bm{S}_{11} = \mathbb{E}[\widetilde{\bm{A}}_{11}] - \widetilde{\bm{A}}_{11}$. Then if we define $\bm{E}_1 \in \mathbb{R}^{m_1 \times m_1}$ be the adjacency matrix of the subgraph on $C_1$ which is generated from the Erd{\"o}s–R{\'e}nyi model $\mathcal{G}(m_1, p)$,   $\widetilde{\bm{A}}_{11}$ can be expressed as $\widetilde{\bm{A}}_{11} = \bm{E}_{1} \otimes \bm{I}_d$ under the assumption $\bm{R}_i = \bm{I}_d$ for $i = 1,\ldots, n$, where $\otimes$ denotes the tensor product. Therefore, we have
\begin{equation*}
    \|\bm{S}_{11}\| = \|\mathbb{E}[\widetilde{\bm{A}}_{11}] - \widetilde{\bm{A}}_{11}\| =  \|(\mathbb{E}[\bm{E}_{1}] - \bm{E}_{1})\otimes \bm{I}_d\| = \| \mathbb{E}[\bm{E}_{1}] - \bm{E}_{1}\|.
\end{equation*}
This enables us to study $\|\bm{S}_{11}\|$ via $\| \mathbb{E}[\bm{E}_{1}] - \bm{E}_{1}\|$, which is bounded by Theorem~\ref{lemma:A_graph} as $\|\bm{S}_{11}\| \leq c_1\sqrt{pm_1}$
with probability $1 - n^{-c}$ for some $c_1, c > 0$. $\|\bm{S}_{22}\|$ is bounded in a similar manner and we do not repeat. For $\|\bm{S}_{12}\|$, we apply Theorem~\ref{lemma:spec_bound_S_12} and get $\|\bm{S}_{12}\| \leq c_2(\sqrt{qm_1} + \sqrt{qm_2}) + O(\sqrt{\log n})$ with probability $1 - n^{-c}$ for some $c_2, c > 0$. Combining these completes the proof.
\end{proof}

\subsection{Two unequal-sized clusters with known cluster sizes}
\label{sec:proof_two_unequal}
\begin{proof}[Proof of Lemma~\ref{lemma:uniqueness_unequal_new}]
The proof is very similar to the one for Lemma~\ref{lemma:1_main}, with a tiny difference on evaluating $\langle \bm{\Theta}, \widetilde{\bm{M}} - \bm{M}^*_{ij} \rangle$ in \eqref{eq:theta_M_M_tilde_bound}, where in this case $\mu_i$ is replaced by $\mu_i + \nu$. Therefore we do not repeat.
\end{proof}

\begin{proof}[Proof of Lemma~\ref{lemma:guess_dual_unequal_new}]
First, we obtain $\bm{\Lambda} = -\bm{A} - \text{diag}(\{\bm{Z}_i\}_{i = 1}^n) + \bm{\Theta} + \bm{\Theta}^\top$ from  \eqref{eq:KKT_stationarity_unequal_new}, then plugging this into \eqref{eq:unequal_N_R_new} yields
\begin{alignat}{2}
&\sum_{j: \kappa(j) = \kappa(i)} \bm{\Theta}_{ij} + \bm{\Theta}_{ji}^{\top} -\bm{A}_{ij} = \bm{Z}_i,  &&\quad i = 1, \ldots, n \label{eq:Z_equality_new} \\
&\sum_{j: \kappa(j) \neq \kappa(i)} \bm{\Theta}_{ij} + \bm{\Theta}_{ji}^{\top} -\bm{A}_{ij} = \bm{0}, &&\quad i = 1, \ldots, n.
\label{eq:Theta_equality_new}
\end{alignat}
Next, to determine $\bm{\Theta}_{ij}$,  \eqref{eq:KKT_theta_unequal_new} can be rewritten as
\begin{equation}
    \begin{cases} \displaystyle
    \bm{\Theta}_{ij} = (\mu_i + \nu)\bm{I}_d/\sqrt{d}, &\;  \kappa(i) = \kappa(j),\\[2pt]
    \displaystyle \|\bm{\Theta}_{ij}\|_\mathrm{F} \leq \mu_i + \nu,  &\; \kappa(i) \neq \kappa(j).
    \end{cases}
    \label{eq:theta_equal_new}
\end{equation}
Then one can check that our guess of $\bm{\Theta}_{ij}$ in Lemma~\ref{lemma:guess_dual_unequal_new} indeed satisfies \eqref{eq:Theta_equality_new} and \eqref{eq:theta_equal_new}. 
It remains to determine $\nu$ and $\mu_i$ for $i = 1, \ldots, n$: From \eqref{eq:KKT_mu_i_unequal_new} we get $\mu_i = 0, \; \forall i \in C_2$, since $m_1 > m_2$. Then from \eqref{eq:theta_equal_new}, due to the constraint $\|\bm{\alpha}_{ij}\|_\mathrm{F} \leq 1$, $\nu$ and $\mu_i$ satisfy
\begin{align*}
    \nu + \mu_i \geq \|\bm{\Theta}_{ij}\|_\mathrm{F},  \quad \forall i \in C_1 \quad \text{and} \quad \nu \geq \|\bm{\Theta}_{ij}\|_\mathrm{F}, \quad \forall i \in C_2.
\end{align*}
This leads to our guess of $\nu$ and $\mu_i$ in Lemma~\ref{lemma:guess_dual_unequal_new}.
% where $\mu_i = \max\{ 0, \gamma \cdot \max_{j \in C_2}\|\widetilde{\bm{\alpha}}_{ij}\|_\mathrm{F} - \nu\}$ ensures $\mu_i \geq 0$ holds. \zz{There is no $\max(\cdot)$ in the previous equation. } 
Plugging this back into \eqref{eq:Z_equality_new} we obtain the expression of $\bm{Z}$.
\end{proof}

\begin{proof}[Proof of Lemma~\ref{lemma:simplify_lambda_unequal}]
We follow the same route in Lemma~\ref{lemma:simplify_lambda} by first rewriting $\bm{A}$, $\bm{\Lambda}$ and $\bm{\Theta}$ into four blocks as \eqref{eq:2_cluster_A_M}. Let $\widetilde{\bm{\mu}}_1 := \text{diag}(\left\{\mu_i \bm{I}_d\right\}_{i = 1}^{m_1}) \in \mathbb{R}^{m_1d \times m_1d}$. Then one can see that the four blocks of $\bm{\Theta}$ satisfy 
\begin{alignat*}{2}
        &\widetilde{\bm{\Theta}}_{11} = \frac{(\widetilde{\bm{\mu}}_1 + \nu\bm{I}_{m_1d})\widetilde{\bm{M}}_{1}^*}{\sqrt{d}}, 
        \quad \quad \quad 
        \widetilde{\bm{\Theta}}_{22} = \frac{ \nu\widetilde{\bm{M}}_{2}^*}{\sqrt{d}},\\
        &\widetilde{\bm{\Theta}}_{12} + \widetilde{\bm{\Theta}}_{21}^\top = \frac{1}{m_1}\widetilde{\bm{M}}_1^*\widetilde{\bm{A}}_{21} + \frac{1}{m_2}\widetilde{\bm{A}}_{12}\widetilde{\bm{M}}_{2}^* - \frac{1}{m_1m_2}\widetilde{\bm{M}}_{1}^*\widetilde{\bm{A}}_{12}\widetilde{\bm{M}}_2^*.
\end{alignat*}
By plugging these into the four blocks of $\bm{\Lambda}$,  one can simplify $\bm{\Lambda}$ as (we leave the detail to readers who are interested)
\begin{equation*}
    \begin{aligned}
    \bm{\Lambda} &= (\bm{I}_{nd} - \bm{\Pi})(-\bm{Z} - \bm{A})(\bm{I}_{nd} - \bm{\Pi}) = (\bm{I}_{nd} - \bm{\Pi})(\mathbb{E}[\bm{A}] - \bm{A} - \bm{Z} + p\bm{I}_{nd} )(\bm{I}_{nd} - \bm{\Pi})
    \end{aligned}
\end{equation*}
where we use \eqref{eq:project_zero} that $\left(\bm{I}_{nd} - \bm{\Pi}\right)(\mathbb{E}[\bm{A}] + p\bm{I}_{nd})\left(\bm{I}_{nd} - \bm{\Pi}\right) = \bm{0}$. 
As a result, by using the same argument as in the proof of Lemma~\ref{lemma:simplify_lambda}, one can see that $\widetilde{\bm{\Lambda}} \succ 0$ ensures both $\bm{\Lambda} \succeq 0$ and $\mathcal{N}(\bm{\Lambda}) = \mathcal{R}(\bm{M}^*)$ are satisfied. 
\end{proof}

\begin{proof}[Proof of Lemma~\ref{lemma:bound_nu_mu}] 
We first define the following quantities and provide their individual analysis.
\begin{alignat*}{2}
    &\sigma_{1} := \max_{j \in C_2}\bigg\|\frac{1}{m_1}\sum_{s_1 \in C_1}\bm{A}_{s_1 j}\bigg\|_\mathrm{F}, \quad &&\sigma_2 := \max_{i \in C_1}\bigg\|\frac{1}{m_2}\sum_{s_2 \in C_1}\bm{A}_{is_2}\bigg\|_\mathrm{F}, \\
    &\sigma := \bigg\|\frac{1}{m_1m_2}\sum_{s_1 \in C_1}\sum_{s_2 \in C_2} \bm{A}_{s_1s_2}\bigg\|_\mathrm{F}, \quad && \sigma^{(i)} := \bigg\|\frac{1}{m_2}\sum_{s_2 \in C_2}\bm{A}_{is_2}\bigg\|_\mathrm{F}. 
\end{alignat*}
By applying Lemma~\ref{lemma:large_deviation_random_orthogonal} with the approximation in \eqref{eq:equal_alpha_i1_approx} and the union bound, one can see that with probability $1 - n^{-c}$,  $\sigma_1$, $\sigma_2$ and $\sigma$ are bounded by 
\begin{equation}
    \begin{aligned}
    \sigma_1 \leq  \sqrt{\frac{2(1+c)\beta d}{\rho}}\bigg(\frac{\log n}{n}\bigg),\quad
    \sigma_2 \leq  \sqrt{\frac{2(1+c)\beta d}{(1-\rho)}}\bigg(\frac{\log n}{n}\bigg),
    \end{aligned}
    \label{eq:bound_sigma_1_2}
\end{equation}
and $\sigma = o(\log n/n)$. Then, the dual variable $\nu$  given in Lemma~\ref{lemma:guess_dual_unequal_new} is bounded as
\begin{equation}
    \begin{aligned}
    \nu &\overset{(a)}{\leq} (1-\gamma) \max_{i \in C_2, j \in C_1}\bigg(\bigg\|\frac{1}{m_1}\sum_{s_1 \in C_1}\bm{A}_{is_1}\bigg\|_\mathrm{F} + \bigg\|\frac{1}{m_2}\sum_{s_2 \in C_2}\bm{A}_{s_2j}\bigg\|_\mathrm{F} + \bigg\|\frac{1}{m_1m_2}\sum_{s_1 \in C_1}\sum_{s_2 \in C_2} \bm{A}_{s_1s_2}\bigg\|_\mathrm{F}\bigg) \\
    &\overset{(b)}{=} (1-\gamma) \left(\sigma_1 + \sigma_2 + \sigma\right) \leq (1-\gamma)\sqrt{2(1+c)\beta d}\bigg(\frac{1}{\sqrt{\rho}} + \frac{1}{\sqrt{1-\rho}}\bigg)\bigg(\frac{\log n}{n}\bigg) + o\bigg(\frac{\log n}{n}\bigg)
    \end{aligned}
    \label{eq:bound_nu}
\end{equation}
with probability $1 - n^{-c}$, 
where $(a)$ holds by triangle inequality and $(b)$ uses the fact $\bm{A}_{ij} = \bm{A}_{ji}^\top$. Similarly, we can bound $\max_{j\in C_2} \|\bm{\Theta}_{ij}\|_\mathrm{F}$ in Lemma~\ref{lemma:guess_dual_unequal_new} for any $i \in C_1$ as
\begin{align*}
    \max_{ j\in C_2} \|\bm{\Theta}_{ij}\|_\mathrm{F} &\leq \gamma\bigg(\max_{j \in C_2}\bigg\|\frac{1}{m_1}\sum_{s_1 \in C_1}\bm{A}_{s_1j}\bigg\|_\mathrm{F} \!+\! \bigg\|\frac{1}{m_2}\sum_{s_2 \in C_2}\bm{A}_{is_2}\bigg\|_\mathrm{F} \!+\!  \bigg\|\frac{1}{m_1m_2}\sum_{s_1 \in C_1}\sum_{s_2 \in C_2}\bm{A}_{s_1s_2}\bigg\|_\mathrm{F}\bigg)\\
    &= \gamma\left(\sigma_1 + \sigma + \sigma^{(i)}\right).
\end{align*}
Therefore, $\mu_i + \nu$ given in Lemma~\ref{lemma:guess_dual_unequal_new} satisfies
\begin{equation}
    \begin{aligned}
    \mu_i + \nu &= \max\left\{\max_{j \in C_2}\|\bm{\Theta}_{ij}\|_\mathrm{F}, \; \nu\right\} \leq \max\bigg\{\gamma\left(\sigma_1 + \sigma + \sigma^{(i)}\right), \; (1-\gamma) \left(\sigma_1 + \sigma_2 + \sigma\right)\bigg\}\\
    &\leq \max\left\{\gamma\sigma_1, \; (1-\gamma)(\sigma_1 + \sigma_2)\right\} + \gamma\left(\sigma + \sigma^{(i)}\right).
    \end{aligned}
    \label{eq:bound_mu_s_nu}
\end{equation}

Now we are ready to bound $\lambda_{\text{min}}(p\bm{I}_{nd} - \bm{Z})$. According to Lemma~\ref{lemma:guess_dual_unequal_new}, by defining
\begin{equation}
    \epsilon_1 := \min_{i \in C_1}\bigg[\sum_{s \in C_1}\left(r_{is} - \frac{\mu_s + \nu}{\sqrt{d}}\right) - \frac{m_1(\mu_i +\nu)}{\sqrt{d}}\bigg], \quad \epsilon_2 := \min_{i \in C_2}\sum_{s \in C_2} r_{is} \!-\! \frac{2m_2\nu}{\sqrt{d}},
    \label{eq:unequal_lambda_epsilon_1_2}
\end{equation}
we have $\lambda_{\text{min}}(p\bm{I}_{nd} - \bm{Z}) = \min\{\epsilon_1, \; \epsilon_2\} + p$, and we can bound $\epsilon_1, \epsilon_2$ separately. To this end, let $x_i := \sum_{s:C(s)=\kappa(i)}r_{is}$ and it follows a Binomial distribution such that $x_i \sim \mathrm{Binom}(m_1, p)$ when $i \in C_1$ and $x_i \sim \mathrm{Binom}(m_2, p)$ when $ i \in C_2$. Then, by applying Lemma~\ref{lemma:Bound_bernoulli} and the union bound we obtain 
\begin{align}
    &\mathbb{P}\left\{\min_{i \in C_1} x_i \leq \tau_1 \rho \log n\right\} = n^{ 1-\rho\left(\alpha - \tau_1 \log(\frac{e\alpha}{\tau_1}) + o(1)\right)},
    \label{eq:bound_x_i_C_1}
    \\
    &\mathbb{P}\left\{\min_{i \in C_2} x_i \leq \tau_2 (1 - \rho) \log n\right\} = n^{ 1-(1- \rho)\left(\alpha - \tau_2 \log(\frac{e\alpha}{\tau_2}) + o(1)\right)}.
    \label{eq:bound_x_i_C_2}
\end{align}
for $\tau_1, \tau_2 \in [0, \alpha)$ which satisfy the conditions in \eqref{eq:cond_tau_unequal}.
For $\epsilon_1$, we have
\begin{equation*}
\begin{aligned}
\epsilon_1 &\geq \underbrace{\min_{i \in C_1}\sum_{s \in C_1}r_{is}}_{=: \epsilon_{11}} - \underbrace{\sum_{s\in C_1} \frac{\mu_s + \nu}{\sqrt{d}}}_{=: \epsilon_{12}} - \underbrace{\frac{m_1}{\sqrt{d}}\max_{i \in C_1}(\mu_i + \nu)}_{=: \epsilon_{13}} = \epsilon_{11} - \epsilon_{12} - \epsilon_{13}.
\end{aligned}
\end{equation*}
Then we can bound $\epsilon_{11}, \epsilon_{12}$ and $\epsilon_{13}$ separately. $\epsilon_{11}$ is bounded by \eqref{eq:bound_x_i_C_1}. For $\epsilon_{12}$, by definition,
\begin{align*}
    \epsilon_{12} &\overset{(a)}{\leq} \frac{1}{\sqrt{d}}\sum_{i \in C_1} \max\left\{\gamma\sigma_1, \; (1-\gamma)(\sigma_1 + \sigma_2)\right\} + \gamma\left(\sigma + \sigma^{(i)}\right) \\
    &= \frac{m_1}{\sqrt{d}}\left(\max\left\{\gamma\sigma_1, \; (1-\gamma)(\sigma_1 + \sigma_2)\right\} + \gamma\sigma\right) + \frac{1}{\sqrt{d}} \sum_{i \in C_1} \sigma^{(i)}\\
    &\overset{(b)}{\leq} \max\bigg\{\frac{\gamma}{\sqrt{\rho}},\; (1-\gamma)\bigg(\frac{1}{\sqrt{\rho}} + \frac{1}{\sqrt{1-\rho}}\bigg)\bigg\}\cdot \rho\sqrt{2(1+c)\beta}\log n + \frac{3\gamma c}{2}\log n + o(\log n)
\end{align*}
with probability $1 - n^{-c}$, where $(a)$ uses the bound of $\mu_s + \nu$ in \eqref{eq:bound_mu_s_nu}, $(b)$ comes from \eqref{eq:bound_sigma_1_2} for bounding $\sigma_1$ and $\sigma_2$, and $\frac{1}{\sqrt{d}} \sum_{i \in C_1} \sigma^{(i)}$ is bounded by Lemma~\ref{lemma:bound_concentration_Z_i}. Similarly, $\epsilon_{13}$ can be bounded as
\begin{align*}
    \epsilon_{13} &= \frac{m_1}{\sqrt{d}} \max_{i \in C_1}\left( \max\left\{\gamma\sigma_1, \; (1-\gamma)(\sigma_1 + \sigma_2)\right\} + \gamma\left(\sigma + \sigma^{(i)}\right)\right)\\
    &\overset{(a)}{=} \frac{m_1}{\sqrt{d}}\left( \max\left\{\gamma(\sigma_1+ \sigma_2), \; (1-\gamma)(\sigma_1 + \sigma_2)\right\} + \gamma\sigma\right) \overset{(b)}{=} \frac{m_1}{\sqrt{d}}\left(\gamma(\sigma_1 + \sigma_2) + \gamma\sigma\right)\\
    &\leq \gamma \rho \bigg(\frac{1}{\sqrt{\rho}} + \frac{1}{\sqrt{1-\rho}}\bigg)\sqrt{2(1+c)\beta}\log n + o(\log n)
\end{align*}
with probability $1 - n^{-c}$, where $(a)$ comes from $\max_{i \in C_1}\sigma^{(i)} = \sigma_1$ and $(b)$ holds since $\gamma \geq 1/2$ by assumption. Combining these individual results and applying the union bound yields
\begin{equation*}
    \epsilon_1 \geq \tau_1 \rho \log n - \rho \max\bigg\{\frac{2\gamma}{\sqrt{\rho}} + \frac{\gamma}{\sqrt{1- \rho}}, \frac{1}{\sqrt{\rho}} + \frac{1}{\sqrt{1-\rho}}\bigg\}\sqrt{2(1+c)\beta}\log n \!-\! \frac{3\gamma c}{2} \log n\!-\! o(\log n)
\end{equation*}
with probability $ 1 - n^{-\Omega(1)}$.
By further taking $c \rightarrow 0$ to be close to zero we get 
\begin{align*}
    \epsilon_1 &\geq \tau_1 \rho \log n - \rho \max\bigg\{\frac{2\gamma}{\sqrt{\rho}} + \frac{\gamma}{\sqrt{1- \rho}}, \frac{1}{\sqrt{\rho}} + \frac{1}{\sqrt{1-\rho}}\bigg\}\sqrt{\ell_d\beta}\log n - o(\log n),
\end{align*}
Similarly, by combining \eqref{eq:bound_x_i_C_2} with the bound of $\nu$ in \eqref{eq:bound_nu} and let $c \rightarrow 0$, we get the bound for $\epsilon_2$ as 
\begin{equation*}
    \epsilon_2 \geq \tau_2(1-\rho)\log n - 2(1-\gamma)(1-\rho)\bigg(\frac{1}{\sqrt{\rho}} + \frac{1}{\sqrt{1 - \rho}}\bigg)\sqrt{\ell_d\beta} \log n - o(\log n)
\end{equation*}
with probability $1 - n^{-\Omega(1)}$. Plugging these into $\lambda_{\text{min}}(p\bm{I}_{nd} - \bm{Z}) = \min\{\epsilon_1, \; \epsilon_2\} + p$ completes the proof.

Specifically when $d = 2$, $\sigma_1, \sigma_2$ and $\sigma$ can be bounded by Theorem~\ref{lemma:sum_orth_2} instead of Theorem~\ref{lemma:large_deviation_random_orthogonal}. The remaining proof is same as above, therefore we do not repeat.
\end{proof}

\subsection{Two clusters with unknown cluster sizes}
\label{sec:proof_two_unknown}
Now we consider the SDP proposed in \eqref{eq:SDP_unknown}. Note in this case, we use the normalized ground truth $\bar{\bm{M}}^*$ defined in \eqref{eq:M_ground_truth_unknown}. The proof basically follows Section~\ref{sec:sketch_proof_equal_main}, and we will omit repetitive analysis. 
\paragraph{Step 1: Derive KKT conditions, uniqueness and optimality of $\bm{M}^*$.}
The KKT conditions of \eqref{eq:SDP_unknown} when $\bm{M} = \bar{\bm{M}}^*$ are given as
\begin{alignat}{2}
    &\bullet \; \textit{Stationarity:}  &&-\bm{A} - \bm{\Lambda} - \bm{I}_n \otimes \bm{Z} + \bm{\Theta} + \bm{\Theta}^{\top}  = \bm{0}, \label{eq:KKT_stationarity_unknown} \\
    & &&\bm{\Theta} \!=\! [\bm{\Theta}_{ij}]_{i, j = 1}^n, \quad
    \begin{cases}
    \displaystyle \bm{\Theta}_{ij} \!=\!  \mu_i\bar{\bm{M}}_{ij}^*/\|\bar{\bm{M}}_{ij}^*\|_{\mathrm{F}}, & \bar{\bm{M}}_{ij}^* \!\neq\! \bm{0},\\
    \|\bm{\Theta}_{ij}\|_\mathrm{F} \leq \mu_i,  & \bar{\bm{M}}_{ij}^* \!=\! \bm{0}.
    \end{cases} \label{eq:theta_definition_unknown}\\
    &\bullet \; \textit{Comp. slackness:} \quad  &&\left\langle \bm{\Lambda}, \bar{\bm{M}}^* \right\rangle = 0, \quad \bigg\langle \mu_i, \sqrt{d} - \sum \limits_{j = 1}^n \|\bar{\bm{M}}^*_{ij}\|_{\mathrm{F}}\bigg\rangle = 0, \quad i = 1, \ldots, n. \label{eq:KKT_complementary_unknown}\\
    &\bullet \; \textit{Dual feasibility:}  &&\bm{\Lambda} \succeq 0, \quad \mu_i \geq 0, \quad i = 1,\ldots, n.
    \label{eq:KKT_dual_feasibility_unknown}
\end{alignat}
Again, the following result establishes the uniqueness and optimality of $\bar{\bm{M}}^*$:
\begin{lemma}
Given $\bar{\bm{M}}^*$ defined in \eqref{eq:ground_truth_def_normalize}, suppose there exist dual variables $\bm{\Lambda}$, $\bm{Z}$, and $\bm{\Theta}$ that satisfy the KKT conditions \eqref{eq:KKT_stationarity_unknown} - \eqref{eq:KKT_dual_feasibility_unknown} as well as
\begin{equation}
    \mathcal{N}(\bm{\Lambda}) = \mathcal{R}(\bar{\bm{M}}^*).
    \label{eq:unknown_N_R}
\end{equation}
Then $\bm{M} = \bar{\bm{M}}^*$ is the optimal and unique solution to \eqref{eq:SDP_unknown}. 
\label{lemma:uniqueness_unknown}
\end{lemma}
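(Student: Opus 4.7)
The plan is to mirror the proof of Lemma~\ref{lemma:1_main}, adapted to the two structural changes in \eqref{eq:SDP_unknown}: the per-block normalization by $1/m_{C(i)}$ in $\bar{\bm{M}}^*$, and the replacement of $\bm{M}_{ii} = \bm{I}_d$ by the aggregate diagonal constraint $\sum_i \bm{M}_{ii} = 2\bm{I}_d$. Optimality will come for free because \eqref{eq:SDP_unknown} is convex and the KKT conditions are therefore sufficient. For uniqueness, I would suppose a distinct feasible optimizer $\widetilde{\bm{M}} \neq \bar{\bm{M}}^*$, start from $0 = \langle \bm{A}, \widetilde{\bm{M}} - \bar{\bm{M}}^*\rangle$, and substitute the stationarity identity \eqref{eq:KKT_stationarity_unknown} for $\bm{A}$ to reduce everything to an identity involving only the dual variables.

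The first thing to verify is that the $\bm{Z}$-term now vanishes through the aggregate constraint rather than the pointwise one: since both $\widetilde{\bm{M}}$ and $\bar{\bm{M}}^*$ are feasible, $\langle \bm{I}_n \otimes \bm{Z},\, \widetilde{\bm{M}} - \bar{\bm{M}}^*\rangle = \langle \bm{Z},\, \sum_i (\widetilde{\bm{M}}_{ii} - \bar{\bm{M}}_{ii}^*)\rangle = 0$. Combining this with complementary slackness $\langle \bm{\Lambda}, \bar{\bm{M}}^*\rangle = 0$ reduces the chain to $\langle \bm{\Lambda}, \widetilde{\bm{M}}\rangle = 2\langle \bm{\Theta}, \widetilde{\bm{M}} - \bar{\bm{M}}^*\rangle$. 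I expect the right-hand side to be bounded above by zero by a routine adaptation of \eqref{eq:theta_M_M_tilde_bound}: using $\|\bar{\bm{M}}_{ij}^*\|_\mathrm{F} = \sqrt{d}/m_{C(i)}$ on in-cluster blocks yields $\sum_{i,j}\langle \bm{\Theta}_{ij}, \bar{\bm{M}}_{ij}^*\rangle = \sum_i \mu_i \sqrt{d}$, while Cauchy-Schwarz together with $\|\bm{\alpha}_{ij}\|_\mathrm{F} \leq 1$ and the normalized row-sum constraint $\sum_j \|\widetilde{\bm{M}}_{ij}\|_\mathrm{F} \leq \sqrt{d}$ supplies the matching upper bound. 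With $\bm{\Lambda}, \widetilde{\bm{M}} \succeq 0$ this forces $\langle \bm{\Lambda}, \widetilde{\bm{M}}\rangle = 0$, hence $\widetilde{\bm{M}} \in \mathcal{N}(\bm{\Lambda}) = \mathcal{R}(\bar{\bm{M}}^*)$, so $\widetilde{\bm{M}} = \bm{V}^{(1)} \bm{\Sigma}_1 (\bm{V}^{(1)})^\top + \bm{V}^{(2)} \bm{\Sigma}_2 (\bm{V}^{(2)})^\top$ for some PSD diagonal $\bm{\Sigma}_1, \bm{\Sigma}_2 \in \mathbb{R}^{d \times d}$.

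The main obstacle, and the only genuinely new ingredient relative to Lemma~\ref{lemma:1_main}, is pinning down $\bm{\Sigma}_k = (1/m_k)\bm{I}_d$ in the absence of a pointwise diagonal constraint. My plan is to play the aggregate diagonal constraint against the row-sum constraint. Taking $\bm{R}_i = \bm{I}_d$ without loss of generality, $\sum_i \widetilde{\bm{M}}_{ii} = 2\bm{I}_d$ becomes $m_1 \bm{\Sigma}_1 + m_2 \bm{\Sigma}_2 = 2\bm{I}_d$, whose trace reads $m_1\T(\bm{\Sigma}_1) + m_2\T(\bm{\Sigma}_2) = 2d$. For each $i \in C_k$ the row-sum constraint applied to $\widetilde{\bm{M}}$ yields $m_k \|\bm{\Sigma}_k\|_\mathrm{F} \leq \sqrt{d}$, and combined with the PSD inequality $\T(\bm{\Sigma}_k) \leq \sqrt{d}\,\|\bm{\Sigma}_k\|_\mathrm{F}$ (Cauchy-Schwarz on the non-negative eigenvalues) gives $m_k\T(\bm{\Sigma}_k) \leq d$. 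Summing these two inequalities over $k = 1, 2$ and comparing with the trace equation forces both to be simultaneously tight. Equality in the PSD Cauchy-Schwarz then compels $\bm{\Sigma}_k$ to be a scalar multiple of $\bm{I}_d$, and the trace equality identifies the scalar as $1/m_k$. Hence $\widetilde{\bm{M}} = \bar{\bm{M}}^*$, contradicting the assumption and establishing uniqueness.
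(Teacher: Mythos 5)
Your proposal is correct, and up to the final identification of $\bm{\Sigma}_1,\bm{\Sigma}_2$ it coincides with the paper's proof: same optimality-by-convexity remark, same cancellation of the $\bm{Z}$-term via $\sum_i \bm{M}_{ii}=2\bm{I}_d$, same bound $\langle\bm{\Theta},\widetilde{\bm{M}}-\bar{\bm{M}}^*\rangle\le 0$ leading to $\langle\bm{\Lambda},\widetilde{\bm{M}}\rangle=0$ and $\widetilde{\bm{M}}=\bm{V}^{(1)}\bm{\Sigma}_1(\bm{V}^{(1)})^\top+\bm{V}^{(2)}\bm{\Sigma}_2(\bm{V}^{(2)})^\top$. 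Where you genuinely diverge is in pinning down $\bm{\Sigma}_k=\bm{I}_d/m_k$: the paper extracts $\|\bm{\Sigma}_k\|_\mathrm{F}=\sqrt{d}/m_k$ from the \emph{dual} side, by noting that $\langle\bm{\Theta},\widetilde{\bm{M}}-\bar{\bm{M}}^*\rangle=0$ forces the row-sum inequality in \eqref{eq:theta_M_M_tilde_bound_unknown} to be tight (which implicitly needs the weights $\mu_i$ to be nonzero), and then uses equality in the triangle inequality applied to $\|\sum_i\widetilde{\bm{M}}_{ii}\|_\mathrm{F}=2\sqrt{d}$ to conclude $\bm{R}_i\bm{\Sigma}_k\bm{R}_i^\top=\bm{I}_d/m_k$; you instead use only \emph{primal} feasibility of $\widetilde{\bm{M}}$ — the row-sum constraint gives $m_k\|\bm{\Sigma}_k\|_\mathrm{F}\le\sqrt{d}$, the trace of the aggregate diagonal constraint gives $m_1\T(\bm{\Sigma}_1)+m_2\T(\bm{\Sigma}_2)=2d$, and Cauchy--Schwarz on the eigenvalues forces simultaneous tightness, hence $\bm{\Sigma}_k=c_k\bm{I}_d$ with $c_k=1/m_k$. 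Your route buys something: it does not rely on complementary-slackness tightness or on positivity of the $\mu_i$, and in fact the same inequality chain would also kill possible cross terms $\bm{V}^{(1)}\bm{B}_{12}(\bm{V}^{(2)})^\top$ that both you and the paper tacitly exclude when writing $\widetilde{\bm{M}}$ in the block form above, so it is if anything the more robust argument; the paper's approach, in exchange, reuses the equality-case bookkeeping already set up for Lemma~\ref{lemma:1_main}. One small caveat: your ``WLOG $\bm{R}_i=\bm{I}_d$'' is not a strictly valid gauge reduction here, because the constraint $\sum_i\bm{M}_{ii}=2\bm{I}_d$ is not invariant under per-node conjugation by $\bm{R}_i$; it is harmless in your proof only because you use nothing beyond the trace $m_1\T(\bm{\Sigma}_1)+m_2\T(\bm{\Sigma}_2)=2d$ and block Frobenius norms, both of which are rotation-invariant, so you should simply drop the WLOG and take the trace of $\sum_{i\in C_1}\bm{R}_i\bm{\Sigma}_1\bm{R}_i^\top+\sum_{i\in C_2}\bm{R}_i\bm{\Sigma}_2\bm{R}_i^\top=2\bm{I}_d$ directly.
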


\paragraph{Step 2: Construct dual variables.}
Again, we follow the assumption in Section~\ref{sec:sketch_proof_equal_main} that $\bm{R}_i = \bm{I}_d$ for each $i$ and $\bm{A}_{ij} = r_{ij}\bm{I}_d$ for any $j$ with $ \kappa(j) = \kappa(i)$. Then we have $\bar{\bm{M}}_{ij}^* = \bm{I}_d/m_1$ for $i, j \in C_1$ and $\bar{\bm{M}}_{ij}^* = \bm{I}_d/m_2$ for $i, j \in C_2$. Also, let us define 
\begin{equation*}
    \sigma_i := \sum \limits_{s: C(s)=\kappa(i)} r_{is}, \quad \sigma^{(1)} := \frac{1}{m_1}\sum \limits_{s_1 \in C_1}\sum \limits_{s_2\in C_1} r_{s_1s_2}, \quad \sigma^{(2)} := \frac{1}{m_2}\sum \limits_{s_1 \in C_2}\sum \limits_{s_2\in C_2} r_{s_1s_2}.
\end{equation*}
We make the following guess of dual variables, in particular, we adopt the assumption of $\bm{\Theta}$ in Lemma~\ref{lemma:guess_dual_unequal_new}.

\begin{lemma}
With a constant $\gamma \in [0, 1]$, the dual variables with the following forms satisfy the KKT conditions \eqref{eq:KKT_stationarity_unknown} - \eqref{eq:KKT_complementary_unknown} and $\mu_i \geq 0, \; i = 1,\ldots, n$ in \eqref{eq:KKT_dual_feasibility_unknown}.
\begin{equation*}
    \begin{aligned}
    \widetilde{\bm{\alpha}}_{ij} &= \frac{1}{m_1}\sum \limits_{s_1 \in C_1}\bm{A}_{s_1j} + \frac{1}{m_2}\sum \limits_{s_2 \in C_2}\bm{A}_{is_2} - \frac{1}{m_1m_2}\sum \limits_{s_1 \in C_1}\sum \limits_{s_2 \in C_2}\bm{A}_{s_1s_2}, \quad i \in C_1, \; j \in C_2,\\
    \bar{\mu}_1 &= 2\max_{i \in C_1}\bigg\{\frac{m_1 \gamma}{\sqrt{d}}\max_{j \in C_2}\|\widetilde{\bm{\alpha}}_{ij}\|_{\mathrm{F}} \!-\! \sigma_i\bigg\} \!-\! \sigma^{(2)},\;\;
    \bar{\mu}_2 = 2\max_{i \in C_2}\bigg\{\frac{m_2 (1\!-\!\gamma)}{\sqrt{d}}\max_{j \in C_2}\|\widetilde{\bm{\alpha}}_{ji}\|_{\mathrm{F}} \!-\! \sigma_i\bigg\} \!-\! \sigma^{(1)},\\
    \mu_i &=
    \begin{cases}
    \displaystyle \frac{\sqrt{d}}{m_1}\bigg(\sigma_i + \frac{\sigma^{(2)}}{2}+ \frac{\max\left\{ \bar{\mu}_1, \bar{\mu}_2\right\}}{2}\bigg), &\; i \in C_1,\\
    \displaystyle \frac{\sqrt{d}}{m_2}\bigg(\sigma_i + \frac{\sigma^{(1)}}{2}  + \frac{\max\left\{ \bar{\mu}_1, \bar{\mu}_2\right\}}{2}\bigg), &\; i \in C_2.
    \end{cases} \quad 
    \bm{\Theta}_{ij} = 
    \begin{cases}
    \mu_i\bm{I}_d/\sqrt{d}, &  \kappa(i)= \kappa(j), \\
    \gamma \widetilde{\bm{\alpha}}_{ij}, & i \in C_1, j \in C_2,\\
    (1-\gamma) \widetilde{\bm{\alpha}}_{ji}^\top, & i \in C_2, j \in C_1,
    \end{cases}\\ 
    \bm{Z} &= \left(\sigma^{(1)} + \sigma^{(2)} + \max\left\{ \bar{\mu}_1, \bar{\mu}_2\right\}\right)\bm{I}_d, \quad \quad  \bm{\Lambda} = -\bm{A} - \bm{I}_n \otimes \bm{Z} + \bm{\Theta} + \bm{\Theta}^{\top}.
    \end{aligned}
\end{equation*}
\label{lemma:guess_dual_unknown}
\end{lemma}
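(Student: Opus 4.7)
The plan is to mirror the dual-variable construction in Lemmas~\ref{lemma:guess_dual} and \ref{lemma:guess_dual_unequal_new}, adapted to the normalized ground truth $\bar{\bm{M}}^*$. Because $\sum_{j}\|\bar{\bm{M}}^*_{ij}\|_\mathrm{F} = \sqrt{d}$ holds with equality for every $i$, the complementary slackness \eqref{eq:KKT_complementary_unknown} on $\mu_i$ is automatic and the $\mu_i$ are free parameters. Stationarity \eqref{eq:KKT_stationarity_unknown} is enforced by the defining expression for $\bm{\Lambda}$, so the proof reduces to: (a) checking that the proposed $\bm{\Theta}_{ij}$ has the admissible form in \eqref{eq:theta_definition_unknown}, (b) choosing $\widetilde{\bm{\alpha}}_{ij}$, $\mu_i$ and $\bm{Z}$ so that $\bm{\Lambda}\bar{\bm{M}}^* = \bm{0}$ (which in particular yields $\langle\bm{\Lambda},\bar{\bm{M}}^*\rangle = 0$ and prepares the later verification of $\mathcal{N}(\bm{\Lambda})=\mathcal{R}(\bar{\bm{M}}^*)$), and (c) verifying $\mu_i\geq 0$.

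For step (a), when $C(i)=C(j)$ we have $\bar{\bm{M}}_{ij}^*/\|\bar{\bm{M}}_{ij}^*\|_\mathrm{F} = \bm{I}_d/\sqrt{d}$, so setting $\bm{\Theta}_{ij} = \mu_i\bm{I}_d/\sqrt{d}$ matches the prescribed form. For $C(i)\neq C(j)$ I parameterize $\bm{\Theta}_{ij}$ by a single matrix $\widetilde{\bm{\alpha}}_{ij}$ split via $\gamma\in[0,1]$ exactly as in Lemma~\ref{lemma:guess_dual_unequal_new}; the free parameter $\gamma$ will later be used to balance the two block rows of $\widetilde{\bm{\Lambda}}$. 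The admissibility constraint $\|\bm{\Theta}_{ij}\|_\mathrm{F}\leq\mu_i$ from \eqref{eq:theta_definition_unknown} then translates into the two lower bounds $\mu_i\geq\gamma\max_{j\in C_2}\|\widetilde{\bm{\alpha}}_{ij}\|_\mathrm{F}$ for $i\in C_1$ and $\mu_i\geq(1-\gamma)\max_{j\in C_1}\|\widetilde{\bm{\alpha}}_{ji}\|_\mathrm{F}$ for $i\in C_2$, which will be enforced at the end.

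For step (b), the identity $\bm{\Lambda}\bar{\bm{M}}^* = \bm{0}$ expands, using $\bar{\bm{M}}^*_{jk}=(1/m_{C(k)})\bm{I}_d$ when $C(j)=C(k)$ and $\bm{0}$ otherwise, into the block-row identities
\begin{equation*}
    \sum_{j\in C_k}\bigl(\bm{\Theta}_{ij}+\bm{\Theta}_{ji}^\top-\bm{A}_{ij}\bigr) \;=\; \delta_{C(i),k}\,\bm{Z}, \qquad i=1,\ldots,n,\ k=1,2.
\end{equation*}
The cross-cluster equations ($C(i)\neq k$) recover the symmetric ansatz for $\widetilde{\bm{\alpha}}_{ij}$ exactly as in Lemma~\ref{lemma:guess_dual_unequal_new}. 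Substituting back into the intra-cluster equations, both sides become scalar multiples of $\bm{I}_d$, so writing $\bm{Z}=c\,\bm{I}_d$ gives $\tfrac{m_k\mu_i}{\sqrt{d}}+\tfrac{1}{\sqrt{d}}\sum_{s\in C_k}\mu_s-\sigma_i=c$ for $i\in C_k$, which uniquely determines each $\mu_i$ in terms of $c$, $\sigma_i$ and the cluster average $\sigma^{(k)}$. The single remaining scalar $c$ is the only free parameter and is absorbed into $\max\{\bar{\mu}_1,\bar{\mu}_2\}$ in the statement.

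For step (c), substituting the explicit form of $\mu_i$ into the two admissibility bounds from (a) rearranges exactly to $\max\{\bar{\mu}_1,\bar{\mu}_2\}\geq\bar{\mu}_1$ and $\max\{\bar{\mu}_1,\bar{\mu}_2\}\geq\bar{\mu}_2$, which hold by definition and are precisely what forces the specific form of $\bar{\mu}_1,\bar{\mu}_2$. Non-negativity $\mu_i\geq 0$ follows because $\sigma_i,\sigma^{(1)},\sigma^{(2)}\geq 0$, and because bounding the outer $\max_i$ in the definition of $\bar{\mu}_1$ by the single index $i$ yields $\bar{\mu}_1\geq -2\sigma_i-\sigma^{(2)}$ (and analogously for $\bar{\mu}_2$), which is exactly what the explicit formula for $\mu_i$ requires. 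The main obstacle is the delicate (but routine) bookkeeping in step (b): because the two clusters carry different normalizations $1/m_1$ and $1/m_2$, a single scalar $c$ must be consistent between the two intra-cluster identities, and this consistency is what dictates the symmetric splitting of $\sigma^{(1)}$ and $\sigma^{(2)}$ between the two cases in the formula for $\mu_i$.
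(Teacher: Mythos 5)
Your proposal is correct and follows essentially the same route as the paper: enforce stationarity by definition of $\bm{\Lambda}$, reduce $\bm{\Lambda}\bar{\bm{M}}^*=\bm{0}$ to intra- and cross-cluster block-row equations, reuse the $\widetilde{\bm{\alpha}}_{ij}$/$\gamma$ ansatz from Lemma~\ref{lemma:guess_dual_unequal_new}, solve for $\mu_i$ and $\bm{Z}$ up to one free scalar, and fix that scalar via the admissibility bounds $\mu_i\geq\|\bm{\Theta}_{ij}\|_\mathrm{F}$, which yields $\max\{\bar{\mu}_1,\bar{\mu}_2\}$ and gives $\mu_i\geq 0$. The only cosmetic difference is that you parameterize the free scalar by the diagonal value of $\bm{Z}$ rather than the paper's $\bar{\mu}$, which is equivalent.
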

\begin{lemma}
Given the dual variables in Lemma~\ref{lemma:guess_dual_unknown}, $\bm{\Lambda}$ can be expressed as
\begin{equation*}
    \bm{\Lambda} = (\bm{I}_{nd} - \bar{\bm{M}}^*)(\mathbb{E}[\bm{A}] - \bm{A} + (p - \epsilon)\bm{I}_{nd})(\bm{I}_{nd} - \bar{\bm{M}}^*), \quad \epsilon := \sigma^{(1)} + \sigma^{(2)} + \max\left\{ \bar{\mu}_1, \bar{\mu}_2\right\}
\end{equation*}
Then, $\bm{\Lambda} \succeq 0$ and $\mathcal{N}(\bm{\Lambda}) = \mathcal{R}(\bar{\bm{M}}^*)$ are satisfied if $\widetilde{\bm{\Lambda}} \succ 0$.
\label{lemma:simplify_lambda_unknown}
\end{lemma}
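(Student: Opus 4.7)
The plan is to follow the same blueprint as Lemma~\ref{lemma:simplify_lambda_unequal}, adapted to the normalized ground truth $\bar{\bm{M}}^*$ and the scalar dual matrix $\bm{Z} = \epsilon \bm{I}_d$. First I would observe that $\bar{\bm{M}}^*$ is an orthogonal projection onto the $2d$-dimensional subspace $\mathcal{R}(\bar{\bm{M}}^*)$, since each diagonal block $\widetilde{\bm{M}}_k^*/m_k = (\bm{1}_{m_k}\bm{1}_{m_k}^\top/m_k)\otimes \bm{I}_d$ is idempotent; hence $\bm{P} := \bm{I}_{nd} - \bar{\bm{M}}^*$ is also a projection. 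By the same computation as in \eqref{eq:project_zero}, the identity $\bm{P}(\mathbb{E}[\bm{A}] + p\bm{I}_{nd})\bm{P} = \bm{0}$ continues to hold, because $\mathbb{E}[\bm{A}] + p\bm{I}_{nd}$ is block-diagonal with each block a scalar multiple of $\widetilde{\bm{M}}_k^*$, which lies in $\mathcal{R}(\bar{\bm{M}}^*)$ and is annihilated by $\bm{P}$ on both sides.

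The heart of the argument is to show $\bm{\Lambda} = \bm{P}(-\bm{A} - \epsilon \bm{I}_{nd})\bm{P}$, because adding the zero identity above then yields the claimed expression with $\widetilde{\bm{\Lambda}} = \mathbb{E}[\bm{A}] - \bm{A} + (p-\epsilon)\bm{I}_{nd}$. Expanding $\bm{P}(-\bm{A} - \epsilon \bm{I}_{nd})\bm{P}$ and comparing with the definition $\bm{\Lambda} = -\bm{A} - \epsilon \bm{I}_{nd} + \bm{\Theta} + \bm{\Theta}^\top$, this reduces to verifying the block identity
\begin{equation*}
\bm{\Theta} + \bm{\Theta}^\top = \bm{A}\bar{\bm{M}}^* + \bar{\bm{M}}^* \bm{A} - \bar{\bm{M}}^* \bm{A} \bar{\bm{M}}^* + \epsilon \bar{\bm{M}}^*,
\end{equation*}
which I would check block-by-block using the guesses from Lemma~\ref{lemma:guess_dual_unknown}. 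For within-cluster blocks with $i,j \in C_1$, the left-hand side equals $(\mu_i + \mu_j)\bm{I}_d/\sqrt{d}$, while a direct computation gives the right-hand side as $\bigl[(\sigma_i + \sigma_j - \sigma^{(1)} + \epsilon)/m_1\bigr]\bm{I}_d$; the prescribed formula $\mu_i = (\sqrt{d}/m_1)\bigl(\sigma_i + \sigma^{(2)}/2 + \max\{\bar{\mu}_1,\bar{\mu}_2\}/2\bigr)$ together with $\epsilon = \sigma^{(1)} + \sigma^{(2)} + \max\{\bar{\mu}_1,\bar{\mu}_2\}$ is precisely tuned so that these agree, and the symmetric formula for $\mu_i$ on $C_2$ handles the $C_2\times C_2$ block in the same way. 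For a cross-cluster block with $i \in C_1, j \in C_2$, the convex combination $\gamma\widetilde{\bm{\alpha}}_{ij} + (1-\gamma)\widetilde{\bm{\alpha}}_{ij} = \widetilde{\bm{\alpha}}_{ij}$ erases the $\gamma$ dependence, while expanding $\bm{A}\bar{\bm{M}}^* + \bar{\bm{M}}^*\bm{A} - \bar{\bm{M}}^*\bm{A}\bar{\bm{M}}^*$ collapses to exactly the three-term formula defining $\widetilde{\bm{\alpha}}_{ij}$ (the term $\epsilon \bar{\bm{M}}^*$ vanishes off the diagonal).

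For the PSD and null-space claims, the factorization $\bm{\Lambda} = \bm{P}\widetilde{\bm{\Lambda}}\bm{P}$ immediately gives $\bm{\Lambda} \succeq 0$ whenever $\widetilde{\bm{\Lambda}} \succeq 0$, and also $\mathcal{N}(\bm{\Lambda}) \supseteq \mathcal{N}(\bm{P}) = \mathcal{R}(\bar{\bm{M}}^*)$. The reverse inclusion follows exactly as in the last paragraph of the proof of Lemma~\ref{lemma:simplify_lambda}: for any $\bm{x} \notin \mathcal{R}(\bar{\bm{M}}^*)$ we have $\bm{P}\bm{x} \neq \bm{0}$, and strict positivity $\widetilde{\bm{\Lambda}} \succ 0$ then forces $\langle \bm{P}\bm{x}, \widetilde{\bm{\Lambda}}\bm{P}\bm{x}\rangle > 0$, so $\bm{\Lambda}\bm{x} \neq \bm{0}$. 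The main source of fiddly bookkeeping is the within-cluster identity, where the cluster-size normalization of $\mu_i$ and the sharing of the $\max\{\bar{\mu}_1,\bar{\mu}_2\}$ term between the two clusters are exactly what is needed to make the diagonal cross-terms from $\bm{A}\bar{\bm{M}}^* + \bar{\bm{M}}^*\bm{A}$ line up with the $\bm{\Theta}$ contributions; once this matching is verified the remaining algebra is routine.
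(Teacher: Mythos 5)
Your proposal is correct and follows essentially the same route as the paper: it verifies, under the WLOG assumption $\bm{R}_i=\bm{I}_d$, that the guessed $\bm{\Theta}$ satisfies $\bm{\Theta}+\bm{\Theta}^\top=\bm{A}\bar{\bm{M}}^*+\bar{\bm{M}}^*\bm{A}-\bar{\bm{M}}^*\bm{A}\bar{\bm{M}}^*+\epsilon\bar{\bm{M}}^*$ (the paper does this via the four-block expressions for $\widetilde{\bm{\Theta}}_{11},\widetilde{\bm{\Theta}}_{22},\widetilde{\bm{\Theta}}_{12}+\widetilde{\bm{\Theta}}_{21}^\top$), yielding $\bm{\Lambda}=(\bm{I}_{nd}-\bar{\bm{M}}^*)(-\bm{A}-\epsilon\bm{I}_{nd})(\bm{I}_{nd}-\bar{\bm{M}}^*)$, and then adds the annihilated term $\mathbb{E}[\bm{A}]+p\bm{I}_{nd}$ and concludes positive semidefiniteness and the null-space identity by the same projection argument as in Lemma~\ref{lemma:simplify_lambda}. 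The block-by-block checks, including the tuning of $\mu_i$ and $\epsilon$ so the within-cluster blocks match, are accurate.
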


\paragraph{Step 3: Find the condition for $\widetilde{\bm{\Lambda}} \succ 0$.} 
Again, by using the same argument as in Section~\ref{sec:sketch_proof_equal_main} we get, for exact recovery it suffices to ensure $\lambda_{\text{min}}((p-\epsilon) \bm{I}_{nd}) > \|\mathbb{E}[\bm{A}] - \bm{\bm{A}}\|$. To this end, we have the following bound for $\lambda_{\text{min}}((p-\epsilon) \bm{I}_{nd})$.

\begin{lemma}
Let $p = \alpha\log n/n, q = \beta \log n/n$ and $\delta := \left(\frac{1}{\sqrt{\rho}} + \frac{1}{\sqrt{1-\rho}}\right)\sqrt{\ell_d\beta}$, where $\ell_d = 2$ for $d > 2$ and $\ell_d = 1$ for $d = 2$. Then $\lambda_{\text{min}}((p-\epsilon) \bm{I}_{nd})$ satisfies 
\begin{equation*}
    \lambda_{\text{min}}((p-\epsilon) \bm{I}_{nd}) \geq \min\{\rho(2\tau_1 - \alpha - 2\gamma \delta), \; (1-\rho)(2\tau_2 - \alpha - 2(1-\gamma)\delta\} \log n + o(\log n)
\end{equation*}
with probability $1 - n^{-\Omega(1)}$ for $\tau_1, \tau_2 \in [0, \alpha)$ such that 
\begin{align}
    1-\rho\bigg(\alpha - \tau_1 \log\bigg(\frac{e\alpha}{\tau_1}\bigg)\bigg) < 0, \quad 
    1-(1- \rho)\bigg(\alpha - \tau_2 \log\bigg(\frac{e\alpha}{\tau_2}\bigg)\bigg) < 0.
    \label{eq:cond_tau_unknown} 
\end{align}
\label{lemma:bound_epsilon_1_2}
\end{lemma}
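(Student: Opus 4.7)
The plan is to lower-bound the scalar $\lambda_{\min}((p-\epsilon)\bm{I}_{nd}) = p - \epsilon$ by substituting the explicit form of $\epsilon = \sigma^{(1)} + \sigma^{(2)} + \max\{\bar{\mu}_1, \bar{\mu}_2\}$ from Lemma~\ref{lemma:guess_dual_unknown} and bounding each random term on a high-probability event. Using the identity $-\max\{a,b\} = \min\{-a,-b\}$, I rewrite
\begin{equation*}
    p - \epsilon \;=\; \min\bigl\{\,p - \sigma^{(1)} - \sigma^{(2)} - \bar{\mu}_1, \;\; p - \sigma^{(1)} - \sigma^{(2)} - \bar{\mu}_2\,\bigr\},
\end{equation*}
and lower-bound each branch separately. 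The key algebraic observation is that $\bar{\mu}_1$ already contains a $-\sigma^{(2)}$ term, so $\sigma^{(2)}$ cancels out of the first branch, leaving
\begin{equation*}
    p - \sigma^{(1)} - \sigma^{(2)} - \bar{\mu}_1 \;=\; p - \sigma^{(1)} + 2\min_{i \in C_1}\Bigl\{\sigma_i - \frac{m_1\gamma}{\sqrt{d}}\max_{j \in C_2}\|\widetilde{\bm{\alpha}}_{ij}\|_\mathrm{F}\Bigr\},
\end{equation*}
and symmetrically $\sigma^{(1)}$ cancels from the second branch.

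Next I would bound each random quantity using the tools already invoked in the proof of Lemma~\ref{lemma:bound_nu_mu}. Bernstein's inequality applied to $\sigma^{(1)}$ and $\sigma^{(2)}$, which are sums of Bernoulli variables concentrating around $m_1 p = \rho\alpha\log n$ and $m_2 p = (1-\rho)\alpha\log n$ respectively, gives $\sigma^{(1)} \leq \rho\alpha\log n + o(\log n)$ and $\sigma^{(2)} \leq (1-\rho)\alpha\log n + o(\log n)$ with probability $1 - n^{-\Omega(1)}$. Since each $\sigma_i$ for $i \in C_k$ is $\mathrm{Binom}(m_k, p)$, Lemma~\ref{lemma:Bound_bernoulli} combined with the union bound yields $\min_{i \in C_1}\sigma_i \geq \tau_1 \rho \log n$ and $\min_{i \in C_2}\sigma_i \geq \tau_2(1-\rho)\log n$ under conditions \eqref{eq:cond_tau_unknown}. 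Finally, the same triangle-inequality decomposition of $\widetilde{\bm{\alpha}}_{ij}$ into three block sums of random rotations used in Lemma~\ref{lemma:bound_nu_mu}, combined with Theorem~\ref{lemma:large_deviation_random_orthogonal} for $d > 2$ (or the sharper Theorem~\ref{lemma:sum_orth_2} for $d = 2$), produces
\begin{equation*}
    \frac{m_1\gamma}{\sqrt{d}}\max_{i \in C_1,\, j \in C_2}\|\widetilde{\bm{\alpha}}_{ij}\|_\mathrm{F} \;\leq\; \gamma\rho\delta\log n + o(\log n)
\end{equation*}
after letting the auxiliary constant $c \to 0$. Plugging these bounds into the first branch yields $\rho(2\tau_1 - \alpha - 2\gamma\delta)\log n - o(\log n)$, and the analogous computation for the second branch yields $(1-\rho)(2\tau_2 - \alpha - 2(1-\gamma)\delta)\log n - o(\log n)$; taking the minimum over the two branches and applying the union bound over all events used establishes the claim.

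The main obstacle is carefully tracking the cancellation between the $-\sigma^{(1)} - \sigma^{(2)}$ in $\epsilon$ and the $-\sigma^{(2)}$ (resp.~$-\sigma^{(1)}$) baked into $\bar{\mu}_1$ (resp.~$\bar{\mu}_2$); without this cancellation the coefficient of $\alpha$ in each branch would be $\alpha$ rather than $\rho\alpha$ (or $(1-\rho)\alpha$), which would not match the thresholds $\tau_1^*, \tau_2^*$ defined in \eqref{eq:tau_1_2_star} and would make the recovery condition \eqref{eq:unknown_cond_theorem} unreachable. Apart from this bookkeeping and the need to verify that the chosen $\tau_1, \tau_2$ are compatible with \eqref{eq:cond_tau_unknown}, the proof is essentially a repackaging of the concentration inequalities already developed in Appendices~\ref{sec:bound_binomial}--\ref{sec:sum_random_orthogonal}.
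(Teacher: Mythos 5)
Your proposal is correct and follows essentially the same route as the paper: writing $p-\epsilon$ as the minimum of two branches in which $\sigma^{(2)}$ (resp.\ $\sigma^{(1)}$) cancels against the corresponding term inside $\bar{\mu}_1$ (resp.\ $\bar{\mu}_2$), then bounding $\min_i\sigma_i$ via Lemma~\ref{lemma:Bound_bernoulli} under \eqref{eq:cond_tau_unknown}, $\max_{i,j}\|\widetilde{\bm{\alpha}}_{ij}\|_\mathrm{F}$ via the decomposition already used for \eqref{eq:bound_nu} with Theorem~\ref{lemma:large_deviation_random_orthogonal} (or Theorem~\ref{lemma:sum_orth_2} for $d=2$), and $\sigma^{(1)},\sigma^{(2)}$ by a standard Bernoulli-sum concentration around $\rho\alpha\log n$ and $(1-\rho)\alpha\log n$. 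The only cosmetic difference is that the paper invokes Hoeffding's inequality rather than Bernstein's for $\sigma^{(1)},\sigma^{(2)}$, which is immaterial here.
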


\begin{proof}[Proof of Theorem~\ref{the:2_cluster_unknown_d_2}]
From Lemma~\ref{lemma:bound_epsilon_1_2} we have $\lambda_{\text{min}}((p-\epsilon)\bm{I}_{nd}) = O(\log n)$ with high probability. Also from Lemma~\ref{lemma:concentration_A} we get $\|\mathbb{E}[\bm{\bm{A}}] - \bm{A}\| = o(\sqrt{\log n})$ with high probability. This implies as $n$ is large,  $\lambda_{\text{min}}(p\bm{I}_{nd} - \bm{Z}) > \|\mathbb{E}[\bm{\bm{A}}] - \bm{A}\|$ satisfies as long as $\lambda_{\text{min}}(p\bm{I}_{nd} - \bm{Z}) = \Omega(\log n)$, which is equivalent to 
\begin{equation}
    \min\{\rho(2\tau_1 - \alpha - 2\gamma \delta), \; (1-\rho)(2\tau_2 - \alpha - 2(1-\gamma)\delta\} > 0.
    \label{eq:bound_unknown_old}
\end{equation}
This further reduces to $\tau_1 > \alpha/2 + \gamma \delta$ and $\tau_2 > \alpha/2 + (1-\gamma)\delta$. It remains to check \eqref{eq:cond_tau_unknown} holds for some $\tau_1, \tau_2 \in [0, \alpha)$. To this end, by using the same argument as in the proof of Theorem~\ref{the:2_cluster_unequal_d_2}, one can see that \eqref{eq:cond_tau_unknown} holds as long as $\tau_1 < \tau_1^*,\; \tau_2 < \tau_2^*$ with $\tau_1^*, \tau_2^*$ defined in \eqref{eq:tau_1_2_star} (Notice that $\tau_1^*, \tau_2^*$ exist only if $\alpha > \max\left\{1/\rho, \; 1/(1-\rho)\right\}$). Therefore, putting all these together the condition for exact recovery becomes 
\begin{equation}
\alpha > \max\left\{\frac{1}{\rho}, \; \frac{1}{1-\rho}\right\}, \quad 
\frac{\alpha}{2} + \gamma \delta < \tau_1 < \tau_1^* \quad \text{and} \quad \frac{\alpha}{2} + (1-\gamma)\delta < \tau_2 < \tau_2^*.
\label{eq:range_tau_12_unknown}
\end{equation}
Furthermore, since $\gamma \in [0,1]$ is not specified, it suffices to ensure \eqref{eq:range_tau_12_unknown} holds for some $\gamma \in [0,1]$ such that the range of $\tau_1$ and $\tau_2$ is valid. This leads to the conditions in \eqref{eq:unknown_cond_theorem}.
\end{proof}

\subsubsection{Proof of the lemmas}
\label{sec:proof_claims_unknown}
\begin{proof}[Proof of Lemma~\ref{lemma:uniqueness_unknown}]
The optimality of $\bar{\bm{M}}^*$ is immediately obtained since \eqref{eq:SDP_unknown} is convex then KKT conditions are sufficient for $\bar{\bm{M}}^*$ being optimal~\cite{boyd2004convex}. For uniqueness, suppose $\widetilde{\bm{M}} \neq \bar{\bm{M}}^*$ is another optimal solution. Similar to \eqref{eq:relation_tilde_M_M} in Section~\ref{sec:proof_two_equal}, we have 
\begin{align}
    0 =
    \langle \bm{A}, \widetilde{\bm{M}} - \bar{\bm{M}}^* \rangle &= -\langle\bm{\Lambda}, \widetilde{\bm{M}} - \bar{\bm{M}}^*\rangle -  \langle\bm{I}_n \otimes \bm{Z}, \widetilde{\bm{M}} - \bar{\bm{M}}^*\rangle + \langle \bm{\Theta} + \bm{\Theta}^\top, \widetilde{\bm{M}} - \bar{\bm{M}}^*\rangle \nonumber\\
    &\overset{(a)}{=} -\langle\bm{\Lambda}, \widetilde{\bm{M}} - \bar{\bm{M}}^*\rangle  + \langle \bm{\Theta} + \bm{\Theta}^\top, \widetilde{\bm{M}} - \bar{\bm{M}}^*\rangle \nonumber\\
    &= -\langle\bm{\Lambda}, \widetilde{\bm{M}} \rangle  + 2\langle \bm{\Theta}, \widetilde{\bm{M}} - \bar{\bm{M}}^*\rangle
    \label{eq:relation_tilde_M_M_unknown}
\end{align}
where $(a)$ comes from $\sum_{i = 1}^n\bm{M}_{ii} = 2\bm{I}_d$ in \eqref{eq:SDP_unknown}. To proceed, let us rewrite 
\begin{equation}
    \begin{aligned}
    \langle \bm{\Theta}, \widetilde{\bm{M}} - \bar{\bm{M}}^*\rangle &= \langle \bm{\Theta}, \widetilde{\bm{M}}\rangle - \langle \bm{\Theta}, \bar{\bm{M}}^*\rangle = \sum \limits_{i,j} \langle\bm{\Theta}_{ij}, \widetilde{\bm{M}}_{ij} \rangle - \sum \limits_{i,j} \langle\bm{\Theta}_{ij}, \bar{\bm{M}}_{ij}^* \rangle.
    \end{aligned}
    \label{eq:theta_M_tilde_M_unknown}
\end{equation}
Furthermore, by plugging \eqref{eq:theta_definition_unknown} into \eqref{eq:theta_M_tilde_M_unknown} we obtain
\begin{equation}
    \begin{aligned}
    \sum \limits_{i,j} \langle\bm{\Theta}_{ij}, \bar{\bm{M}}_{ij}^* \rangle &= \sum \limits_{i}\sum \limits_{j: \kappa(j)=\kappa(i)}\frac{\mu_i}{\|\bar{\bm{M}}_{ij}^*\|_{\mathrm{F}}}\langle \bar{\bm{M}}_{ij}^*, \bar{\bm{M}}_{ij}^* \rangle = \sum \limits_{i} \mu_i\sqrt{d},\\
    \sum \limits_{i,j}\langle\bm{\Theta}_{ij}, \widetilde{\bm{M}}_{ij} \rangle &= \sum \limits_{i}\mu_i\bigg(\sum \limits_{j: \kappa(j)= \kappa(i)} \frac{1}{\|\bar{\bm{M}}_{ij}^*\|_{\mathrm{F}}}\langle \bar{\bm{M}}_{ij}^*, \widetilde{\bm{M}}_{ij}\rangle + \sum \limits_{j: \kappa(j) \neq \kappa(i)} \langle \bm{\alpha}_{ij}, \widetilde{\bm{M}}_{ij}\rangle \bigg) \\
    &\leq \sum \limits_{i}\mu_i \bigg(\sum \limits_{j: \kappa(j)=\kappa(i)} \frac{1}{\|\bar{\bm{M}}_{ij}^*\|_{\mathrm{F}}}\|\bar{\bm{M}}_{ij}^*\|_\mathrm{F}\|\widetilde{\bm{M}}_{ij}\|_\mathrm{F} + \sum \limits_{j: \kappa(j)\neq \kappa(i)} \|\bm{\alpha}_{ij}\|_\mathrm{F}\|\widetilde{\bm{M}}_{ij}\|_\mathrm{F} \bigg)\\
    &\leq \sum \limits_{i}\mu_i\sum \limits_{j}\|\widetilde{\bm{M}}_{ij}\|_\mathrm{F} \overset{(a)}{\leq} \sum \limits_{i}\mu_i\sqrt{d},
    \end{aligned}
    \label{eq:theta_M_M_tilde_bound_unknown}
\end{equation}
where $(a)$ holds since $\sum_{j}\|\widetilde{\bm{M}}_{ij}\|_\mathrm{F} \leq \sqrt{d}$ in \eqref{eq:SDP_unknown}. Then combining \eqref{eq:theta_M_tilde_M_unknown} and \eqref{eq:theta_M_M_tilde_bound_unknown} results in $\langle \bm{\Theta}, \widetilde{\bm{M}} - \bm{M}^*\rangle \leq 0$, and plugging this back into \eqref{eq:relation_tilde_M_M_unknown} gives $\langle \bm{\Lambda}, \widetilde{\bm{M}}\rangle \leq 0$. On the other hand, from $\bm{\Lambda} \succeq 0$ in \eqref{eq:KKT_dual_feasibility_unknown} and $\widetilde{\bm{M}} \succeq 0$ in \eqref{eq:SDP_unknown} we have $\langle\bm{\Lambda}, \widetilde{\bm{M}} \rangle \geq 0$. Therefore we conclude $\langle\bm{\Lambda}, \widetilde{\bm{M}} \rangle = 0$ and $\langle \bm{\Theta}, \widetilde{\bm{M}} - \bm{M}^*\rangle = 0$. To proceed, by $\mathcal{N}(\bm{\Lambda}) = \mathcal{R}(\bar{\bm{M}}^*)$ and the definition of $\bar{\bm{M}}^*$ in \eqref{eq:M_ground_truth_unknown}, $\widetilde{\bm{M}}$ has the following form:
\begin{equation}
    \widetilde{\bm{M}} = \bm{V}^{(1)}\bm{\Sigma}_1(\bm{V}^{(1)})^{\top} + \bm{V}^{(2)}\bm{\Sigma}_2(\bm{V}^{(2)})^{\top},
    \label{eq:another_M}
\end{equation}
for some diagonal $\bm{\Sigma}_1, \bm{\Sigma}_2  \succeq 0$. Now it remains to show  $\bm{\Sigma}_1 = \bm{I}_d/m_1, \; \bm{\Sigma}_2 = \bm{I}_d/m_2$. To this end, from $\langle \bm{\Theta}, \widetilde{\bm{M}} - \bm{M}^*\rangle = 0$ we see $(a)$ in \eqref{eq:theta_M_M_tilde_bound_unknown} holds with equality, i.e. $\sum_{j = 1}^n \|\widetilde{\bm{M}}_{ij}\|_{\mathrm{F}} = \sqrt{d}, \; \forall i$. Then combining this with \eqref{eq:another_M} yields the following for $i \in C_1$:
\begin{equation*}
    \sum \limits_{j = 1}^n \|\widetilde{\bm{M}}_{ij}\|_{\mathrm{F}} \overset{(a)}{=} \sum \limits_{j = 1}^n \|\bm{R}_i \bm{\Sigma}_1 \bm{R}_j^{\top}\|_{\mathrm{F}} \overset{(b)}{=} m_1\|\bm{\Sigma}_1\|_{\mathrm{F}} = \sqrt{d}, \; \forall i \in C_1 \quad \Rightarrow \quad \|\bm{\Sigma}_1\|_{\mathrm{F}} = \sqrt{d}/m_1,
\end{equation*}
where both $(a)$ and $(b)$ follow from \eqref{eq:another_M}.
Similarly for $i \in C_2$ we can obtain $\|\bm{\Sigma}_2\|_{\mathrm{F}} = \sqrt{d}/m_2$. Next, note for each diagonal block of $\widetilde{\bm{M}}$ it satisfies
\begin{equation*}
    \begin{aligned}
    \bigg\|\sum \limits_{i = 1}^n \widetilde{\bm{M}}_{ii}\bigg\|_{\mathrm{F}} &\overset{(a)}{=} \bigg\|\sum \limits_{ i \in C_1}\bm{R}_i \bm{\Sigma}_1 \bm{R}_i^{\top} \!+\! \sum \limits_{ i \in C_2}\bm{R}_i \bm{\Sigma}_2 \bm{R}_i^{\top}\bigg\|_{\mathrm{F}} \!\overset{(b)}{\leq}\! \sum \limits_{i \in C_1}\left\|\bm{R}_i \bm{\Sigma}_1 \bm{R}_i^{\top}\right\|_{\mathrm{F}} \!+\! \sum \limits_{i \in C_2}\left\|\bm{R}_i \bm{\Sigma}_2 \bm{R}_i^{\top}\right\|_{\mathrm{F}} \\
    &\overset{(c)}{=} 2\sqrt{d} \overset{(d)}{=} \bigg\|\sum \limits_{i = 1}^n \widetilde{\bm{M}}_{ii}\bigg\|_{\mathrm{F}},
    \end{aligned}
    \label{eq:widetilde_M_ii}
\end{equation*}
where $(a)$ follows from \eqref{eq:another_M}; $(b)$ comes from triangle inequality; $(c)$ holds because $\|\bm{\Sigma}_1\|_{\mathrm{F}} = \sqrt{d}/m_1$ and $\|\bm{\Sigma}_1\|_{\mathrm{F}} = \sqrt{d}/m_2$; $(d)$ follows from $\sum_{i = 1}^n\bm{M}_{ii} = 2\bm{I}_d$ in \eqref{eq:SDP_unknown}. This implies $(b)$ holds with equality, and further 
$\bm{R}_i \bm{\Sigma}_1 \bm{R}_i^{\top} = \bm{I}_d/m_1, \; \forall i \in C_1$ and $\bm{R}_i \bm{\Sigma}_2 \bm{R}_i^{\top} = \bm{I}_d/m_2, \; \forall i \in C_2$. It follows $\bm{\Sigma}_1 = \bm{I}_d/m_1$ and  $\bm{\Sigma}_2 = \bm{I}_d/m_2$, which indicates $\widetilde{\bm{M}} = \bar{\bm{M}}^*$.
\end{proof}

\begin{proof}[Proof of Lemma~\ref{lemma:guess_dual_unknown}]
First, from \eqref{eq:KKT_stationarity_unknown} we get $\bm{\Lambda} = -\bm{A} - \bm{I}_n \otimes \bm{Z} + \bm{\Theta} + \bm{\Theta}^{\top}$, plugging this into \eqref{eq:unknown_N_R} yields
\begin{alignat}{2}
    &\sum \limits_{s: C(s)=\kappa(i)}\left(\frac{\mu_i \bm{I}_d}{\sqrt{d}} + \frac{\mu_s \bm{I}_d}{\sqrt{d}} - \bm{A}_{is}\right) = \bm{Z}, \label{eq:Z_theta_unknown} \quad &&i = 1,\ldots, n,\\
    &\sum \limits_{s: C(s) \neq \kappa(i)}\left(\bm{\Theta}_{is} + \bm{\Theta}_{si}^{\top} - \bm{A}_{is}\right) = \bm{0},  \quad && i = 1,\ldots, n. \label{eq:theta_unknown}
\end{alignat}
Then, by summing \eqref{eq:Z_theta_unknown} over $i \in C_1$ and $i \in C_2$ separately we get the expression of $\bm{Z}$ as
\begin{equation}
    \begin{aligned}
    \bm{Z} = \bigg(\frac{2}{\sqrt{d}}\sum \limits_{s \in C_1} \mu_s - \sigma^{(1)}\bigg)\bm{I}_d = \bigg(\frac{2}{\sqrt{d}}\sum \limits_{s \in C_2} \mu_s - \sigma^{(2)}\bigg)\bm{I}_d,
    \end{aligned}
    \label{eq:Z_sum}
\end{equation}
Next, plugging \eqref{eq:Z_sum} back into \eqref{eq:Z_theta_unknown} yields expressions of $\mu_i$ as
\begin{equation}
    \begin{aligned}
    \mu_i &= \frac{\sqrt{d}}{m_1}\sigma_i + \underbrace{\bigg(\frac{1}{m_1}\sum \limits_{s \in C_1}\mu_s 
    - \frac{\sqrt{d}}{m_1}\sigma^{(1)} \bigg)}_{=: \bar{\mu}_1} = \frac{\sqrt{d}}{m_1}\sigma_i + \bar{\mu}_{1}, \quad  i \in C_1, \\
    \mu_i &= \frac{\sqrt{d}}{m_2}\sigma_i + \underbrace{\bigg(\frac{1}{m_2}\sum \limits_{s \in C_2}\mu_s - \frac{\sqrt{d}}{m_2}\sigma^{(2)} \bigg)}_{=: \bar{\mu}_2} = \frac{\sqrt{d}}{m_2}\sigma_i + \bar{\mu}_{2}, \quad i \in C_2.
    \end{aligned}
    \label{eq:mu_i_def}
\end{equation}
Then plugging \eqref{eq:mu_i_def} into \eqref{eq:Z_sum} yields
\begin{equation}
    \bm{Z} = \bigg(\sigma^{(1)} + \frac{2m_1}{\sqrt{d}}\bar{\mu}_{1}\bigg)\bm{I}_d =  \bigg(\sigma^{(2)} + \frac{2m_2}{\sqrt{d}}\bar{\mu}_{2}\bigg)\bm{I}_d.
    \label{eq:Z_definition_unknown}
\end{equation}
Importantly, to make the LHS and RHS above identical, $\bar{\mu}_{1}$ and $\bar{\mu}_{2}$ satisfy
\begin{equation}
    \bar{\mu}_{1} = \frac{\sqrt{d}}{2m_1}(\sigma^{(2)} + \bar{\mu}), \quad \bar{\mu}_{2} = \frac{\sqrt{d}}{2m_2}(\sigma^{(1)} + \bar{\mu})
    \label{eq:mu_C1_C2_definition_unknown}
\end{equation}
for some $\bar{\mu}$. Plugging \eqref{eq:mu_C1_C2_definition_unknown} into \eqref{eq:mu_i_def} and \eqref{eq:Z_sum} yields the expressions of $\bm{Z}$ and $\mu_i$ in terms of $\bar{\mu}$, and we restrict $\gamma \in [0,1]$ which ensures $\mu_i \geq 0$. To determine $\bar{\mu}$, we adopt the guess of $\bm{\Theta}_{ij}$ and $\widetilde{\bm{\alpha}}_{ij}$ in Lemma~\ref{lemma:guess_dual_unequal_new}. From \eqref{eq:theta_definition_unknown} we have $\|\bm{\Theta}_{ij}\|_{\mathrm{F}} \leq \mu_i$, then for each $\mu_i$ it should satisfy $\mu_i \geq \|\bm{\Theta}_{ij}\|_{\mathrm{F}}$ for any $j$ with $\kappa(j)\neq \kappa(i)$. By further plugging the definition of $\mu_i$ we get $\bar{\mu} \geq \max\left\{ \bar{\mu}_1, \bar{\mu}_2\right\}$ with $\bar{\mu}_1, \bar{\mu}_2$ defined in Lemma~\ref{lemma:guess_dual_unknown}, and we set $\bar{\mu} = \max\left\{ \bar{\mu}_{1}, \bar{\mu}_{2}\right\}$. This completes our guess of dual variables.
\end{proof}

\begin{proof}[Proof of Lemma~\ref{lemma:simplify_lambda_unknown}]
We follow the same route as in Lemma~\ref{lemma:simplify_lambda} by first rewriting $\bm{A}$, $\bm{\Lambda}$ and $\bm{\Theta}$ into four blocks as \eqref{eq:2_cluster_A_M}. Also, let $\widetilde{\bm{\mu}}_{1} = \text{diag}(\{\mu_{i}\bm{I}_d\}_{i = 1}^{m_1}) \in \mathbb{R}^{m_1d \times m_1d}$ and $\widetilde{\bm{\mu}}_{2} = \text{diag}(\{\mu_{i}\bm{I}_d\}_{i = m_1+1}^n) \in \mathbb{R}^{m_2d \times m_2d}$. Then the four blocks of $\bm{\Theta}$ satisfy
\begin{align*}
        &\widetilde{\bm{\Theta}}_{11} = \frac{\widetilde{\bm{\mu}}_1\widetilde{\bm{M}}_{1}^*}{\sqrt{d}}, \quad \quad \quad 
        \widetilde{\bm{\Theta}}_{22} = \frac{\widetilde{\bm{\mu}}_2\widetilde{\bm{M}}_{2}^*}{\sqrt{d}},\\
        &\widetilde{\bm{\Theta}}_{12} + \widetilde{\bm{\Theta}}_{21}^\top = \frac{1}{m_1}\widetilde{\bm{M}}_1^*\widetilde{\bm{A}}_{21} + \frac{1}{m_2}\widetilde{\bm{A}}_{12}\widetilde{\bm{M}}_{2}^* - \frac{1}{m_1m_2}\widetilde{\bm{M}}_{1}^*\widetilde{\bm{A}}_{12}\widetilde{\bm{M}}_2^*.
\end{align*}
The remaining proof is very similar to the one of Lemma~\ref{lemma:simplify_lambda}, and we do not repeat.
\end{proof}

\begin{proof}
According to Lemma~\ref{lemma:simplify_lambda_unknown}, $\lambda_{\text{min}}((p-\epsilon)\bm{I}_{nd})$ satisfies
\begin{equation*}
    \begin{aligned}
    &\lambda_{\text{min}}((p-\epsilon) \bm{I}_{nd}) = p - \epsilon = p - \max\{\bar{\mu}_1, \; \bar{\mu}_2\} -  \sigma^{(1)} - \sigma^{(2)} \\
        &= p \!+\! \min\bigg\{\underbrace{2\min_{i \in C_1}\bigg\{\sigma_i \!-\! \frac{m_1 \gamma}{\sqrt{d}}\max_{j \in C_2}\|\widetilde{\bm{\alpha}}_{ij}\|_{\mathrm{F}}\bigg\} \!-\! \sigma^{(1)}}_{=: \epsilon_1}, \; \underbrace{2\min_{i \in C_2}\bigg\{\sigma_i \!-\! \frac{m_2 (1\!-\!\gamma)}{\sqrt{d}}\max_{j \in C_1}\|\widetilde{\bm{\alpha}}_{ji}\|_{\mathrm{F}}\bigg\} \!-\! \sigma^{(2)}}_{=:\epsilon_2}\bigg\}\\
        &= p + \min\{\epsilon_1, \epsilon_2\}.
    \end{aligned}
    \label{eq:epsilon_1_2_unknown}
\end{equation*}
Then we bound $\epsilon_1$ and $\epsilon_2$ separately as 
\begin{equation*}
    \begin{aligned}
    \epsilon_1 &\geq 2\min_{i \in C_1}\sigma_i - \frac{2m_1\gamma}{\sqrt{d}}\max_{i \in C_1, j \in C_2}\|\widetilde{\bm{\alpha}}_{ij}\|_\mathrm{F} - \sigma^{(1)}, \\
    \epsilon_2 &\geq 2\min_{i \in C_2}\sigma_i - \frac{2m_2(1-\gamma)}{\sqrt{d}}\max_{i \in C_2, j \in C_1}\|\widetilde{\bm{\alpha}}_{ji}\|_\mathrm{F} - \sigma^{(2)}.
    \end{aligned}
\end{equation*}
For $\min_{i \in C_1}\sigma_i$ and $\min_{i \in C_2}\sigma_i$, we can use the bounds derived in \eqref{eq:bound_x_i_C_1} and \eqref{eq:bound_x_i_C_2} respectively. For $\max_{i \in C_1, j \in C_2}\|\widetilde{\bm{\alpha}}_{ij}\|_\mathrm{F}$, by using the result in \eqref{eq:bound_nu} we obtain
\begin{equation*}
    \max_{i \in C_1, j \in C_2}\|\widetilde{\bm{\alpha}}_{ij}\|_\mathrm{F} \leq \sqrt{2(1+c)\beta}\bigg(\frac{1}{\sqrt{\rho}} + \frac{1}{\sqrt{1-\rho}}\bigg)\bigg(\frac{\log n}{n}\bigg) + o\bigg(\frac{\log n}{n}\bigg)
\end{equation*}
with probability $1 - n^{-c}$. For $\sigma^{(1)}$, since $r_{ij} = r_{ji}$ for $i, j \in C_1$, it satisfies 
\begin{equation*}
    \sum \limits_{s_1 \in C_1} \sum \limits_{s_2 \in C_1} r_{s_1s_2} = 2\sum \limits_{s_1, \;  s_2 \in C_1,\; s_1 < s_2} r_{s_1s_2},
\end{equation*}
and $\sum \limits_{s_1, s_2 \in C_1,\; s_1 < s_2} r_{s_1s_2}$ follows a binomial distribution $\mathrm{Binom}\left(m_1(m_1-1)/2, \alpha \log n/n\right)$. By applying Hoeffding's inequality~\cite{boucheron2013concentration} we obtain
\begin{equation*}
    \begin{aligned}
    &\mathbb{P}\Bigg\{\sum \limits_{s_1, s_2 \in C_1,\; s_1 < s_2} r_{s_1s_2} - \frac{ (m_1-1)\alpha\rho\log n}{2} \geq t \Bigg\} \leq \exp\left(-\frac{4t^2}{m_1^2}\right), 
    \end{aligned}
\end{equation*}
for any $t > 0$. This further indicates $\sigma^{(1)} \leq \alpha\rho \log n + o(\log n)$ with high probability. Similarly we can obtain  $\sigma^{(2)} \leq \alpha(1-\rho)\log n + o(\log n)$ with high probability. By putting all the results together and let $c \rightarrow 0$ to be close to zero, we get the bounds for $\epsilon_1, \epsilon_2$ and further $\lambda_{\text{min}}((p-\epsilon)\bm{I}_{nd})$. 
\end{proof}

\bibliographystyle{plain}
\bibliography{refs}

\begin{thebibliography}{10}

\bibitem{abbe2017community}
Emmanuel Abbe.
\newblock Community detection and stochastic block models: recent developments.
\newblock {\em The Journal of Machine Learning Research}, 18(1):6446--6531,
  2017.

\bibitem{abbe2015exact}
Emmanuel Abbe, Afonso~S Bandeira, and Georgina Hall.
\newblock Exact recovery in the stochastic block model.
\newblock {\em IEEE Transactions on Information Theory}, 62(1):471--487, 2015.

\bibitem{anderson2010introduction}
Greg~W Anderson, Alice Guionnet, and Ofer Zeitouni.
\newblock {\em An introduction to random matrices}.
\newblock Cambridge university press, 2010.

\bibitem{aps2019mosek}
Mosek ApS.
\newblock Mosek optimization toolbox for matlab.
\newblock {\em User’s Guide and Reference Manual, Version}, 4, 2019.

\bibitem{bajaj2018smac}
Chandrajit Bajaj, Tingran Gao, Zihang He, Qixing Huang, and Zhenxiao Liang.
\newblock {SMAC:} simultaneous mapping and clustering using spectral
  decompositions.
\newblock In {\em International Conference on Machine Learning}, pages
  324--333, 2018.

\bibitem{bandeira2021spectral}
Afonso~S Bandeira and Yunzi Ding.
\newblock The spectral norm of random lifts of matrices.
\newblock {\em Electronic Communications in Probability}, 26:1--10, 2021.

\bibitem{bandeira2016sharp}
Afonso~S Bandeira and Ramon Van~Handel.
\newblock Sharp nonasymptotic bounds on the norm of random matrices with
  independent entries.
\newblock {\em Annals of Probability}, 44(4):2479--2506, 2016.

\bibitem{banica2010orthogonal}
Teodor Banica.
\newblock The orthogonal weingarten formula in compact form.
\newblock {\em Letters in Mathematical Physics}, 91(2):105--118, 2010.

\bibitem{bohorquez2020maximizing}
Cindy~Orozco Bohorquez, Yuehaw Khoo, and Lexing Ying.
\newblock Maximizing robustness of point-set registration by leveraging
  non-convexity.
\newblock {\em arXiv preprint arXiv:2004.08772}, 2020.

\bibitem{boucheron2013concentration}
St{\'e}phane Boucheron, G{\'a}bor Lugosi, and Pascal Massart.
\newblock {\em Concentration inequalities: A nonasymptotic theory of
  independence}.
\newblock Oxford university press, 2013.

\bibitem{boumal2016nonconvex}
Nicolas Boumal.
\newblock Nonconvex phase synchronization.
\newblock {\em SIAM Journal on Optimization}, 26(4):2355--2377, 2016.

\bibitem{boyd2004convex}
Stephen Boyd, Stephen~P Boyd, and Lieven Vandenberghe.
\newblock {\em Convex optimization}.
\newblock Cambridge university press, 2004.

\bibitem{collins2006integration}
Beno{\^\i}t Collins and Piotr {\'S}niady.
\newblock Integration with respect to the haar measure on unitary, orthogonal
  and symplectic group.
\newblock {\em Communications in Mathematical Physics}, 264(3):773--795, 2006.

\bibitem{decelle2011asymptotic}
Aurelien Decelle, Florent Krzakala, Cristopher Moore, and Lenka Zdeborov{\'a}.
\newblock Asymptotic analysis of the stochastic block model for modular
  networks and its algorithmic applications.
\newblock {\em Physical Review E}, 84(6):066106, 2011.

\bibitem{doreian2005generalized}
Patrick Doreian, Vladimir Batagelj, and Anuska Ferligoj.
\newblock {\em Generalized blockmodeling}, volume~25.
\newblock Cambridge university press, 2005.

\bibitem{dyer1989solution}
Martin~E. Dyer and Alan~M. Frieze.
\newblock The solution of some random {NP}-hard problems in polynomial expected
  time.
\newblock {\em Journal of Algorithms}, 10(4):451--489, 1989.

\bibitem{fan2019unsupervised}
Yifeng Fan, Tingran Gao, and Zhizhen Zhao.
\newblock Unsupervised co-learning on {$ G $}-manifolds across irreducible
  representations.
\newblock In {\em Advances in Neural Information Processing Systems}, pages
  9041--9053, 2019.

\bibitem{fan2019representation}
Yifeng Fan, Tingran Gao, and Zhizhen Zhao.
\newblock Representation theoretic patterns in multi-frequency class averaging
  for three-dimensional cryo-electron microscopy.
\newblock {\em Information and Inference: A journal of IMA}, page accepted,
  2021.

\bibitem{fan2019multi}
Yifeng Fan and Zhizhen Zhao.
\newblock Multi-frequency vector diffusion maps.
\newblock In {\em International Conference on Machine Learning}, pages
  1843--1852. PMLR, 2019.

\bibitem{fienberg1985statistical}
Stephen~E Fienberg, Michael~M Meyer, and Stanley~S Wasserman.
\newblock Statistical analysis of multiple sociometric relations.
\newblock {\em Journal of the american Statistical association},
  80(389):51--67, 1985.

\bibitem{fortunato2010community}
Santo Fortunato.
\newblock Community detection in graphs.
\newblock {\em Physics reports}, 486(3-5):75--174, 2010.

\bibitem{frank2006}
Joachim Frank.
\newblock {\em Three-Dimensional Electron Microscopy of Macromolecular
  Assemblies: Visualization of Biological Molecules in Their Native State}.
\newblock Oxford University Press, New York, 2nd edition, 2006.

\bibitem{gao2019multi}
Tingran Gao and Zhizhen Zhao.
\newblock Multi-frequency phase synchronization.
\newblock In {\em International Conference on Machine Learning}, pages
  2132--2141. PMLR, 2019.

\bibitem{hajek2016achievinga}
Bruce Hajek, Yihong Wu, and Jiaming Xu.
\newblock Achieving exact cluster recovery threshold via semidefinite
  programming.
\newblock {\em IEEE Transactions on Information Theory}, 62(5):2788--2797,
  2016.

\bibitem{hajek2016achievingb}
Bruce Hajek, Yihong Wu, and Jiaming Xu.
\newblock Achieving exact cluster recovery threshold via semidefinite
  programming: Extensions.
\newblock {\em IEEE Transactions on Information Theory}, 62(10):5918--5937,
  2016.

\bibitem{holland1983stochastic}
Paul~W Holland, Kathryn~Blackmond Laskey, and Samuel Leinhardt.
\newblock Stochastic blockmodels: First steps.
\newblock {\em Social networks}, 5(2):109--137, 1983.

\bibitem{karrer2011stochastic}
Brian Karrer and Mark~EJ Newman.
\newblock Stochastic blockmodels and community structure in networks.
\newblock {\em Physical review E}, 83(1):016107, 2011.

\bibitem{latala2018dimension}
Rafa{\l} Lata{\l}a, Ramon van Handel, and Pierre Youssef.
\newblock The dimension-free structure of nonhomogeneous random matrices.
\newblock {\em Inventiones mathematicae}, 214(3):1031--1080, 2018.

\bibitem{lederman2019representation}
Roy~R Lederman and Amit Singer.
\newblock A representation theory perspective on simultaneous alignment and
  classification.
\newblock {\em Applied and Computational Harmonic Analysis}, 2019.

\bibitem{lei2015consistency}
Jing Lei and Alessandro Rinaldo.
\newblock Consistency of spectral clustering in stochastic block models.
\newblock {\em The Annals of Statistics}, 43(1):215--237, 2015.

\bibitem{li2018convex}
Xiaodong Li, Yudong Chen, Jiaming Xu, et~al.
\newblock Convex relaxation methods for community detection.
\newblock {\em Statistical Science}, 36(1):2--15, 2021.

\bibitem{ling2020near}
Shuyang Ling.
\newblock Near-optimal performance bounds for orthogonal and permutation group
  synchronization via spectral methods.
\newblock {\em arXiv preprint arXiv:2008.05341}, 2020.

\bibitem{liu2020unified}
Huikang Liu, Man-Chung Yue, and Anthony Man-Cho So.
\newblock A unified approach to synchronization problems over subgroups of the
  orthogonal group.
\newblock {\em arXiv preprint arXiv:2009.07514}, 2020.

\bibitem{lugosi2020concentration}
G{\'a}bor Lugosi, Shahar Mendelson, and Nikita Zhivotovskiy.
\newblock Concentration of the spectral norm of erd{\H{o}}s--r{\'e}nyi random
  graphs.
\newblock {\em Bernoulli}, 26(3):2253--2274, 2020.

\bibitem{massoulie2014community}
Laurent Massouli{\'e}.
\newblock Community detection thresholds and the weak ramanujan property.
\newblock In {\em Proceedings of the forty-sixth annual ACM symposium on Theory
  of computing}, pages 694--703, 2014.

\bibitem{mcsherry2001spectral}
Frank McSherry.
\newblock Spectral partitioning of random graphs.
\newblock In {\em Proceedings 42nd IEEE Symposium on Foundations of Computer
  Science}, pages 529--537. IEEE, 2001.

\bibitem{meckes2019random}
Elizabeth~S Meckes.
\newblock {\em The random matrix theory of the classical compact groups},
  volume 218.
\newblock Cambridge University Press, 2019.

\bibitem{mossel2012stochastic}
Elchanan Mossel, Joe Neeman, and Allan Sly.
\newblock Reconstruction and estimation in the planted partition model.
\newblock {\em Probability Theory and Related Fields}, 162(3):431--461, 2015.

\bibitem{mossel2018proof}
Elchanan Mossel, Joe Neeman, and Allan Sly.
\newblock A proof of the block model threshold conjecture.
\newblock {\em Combinatorica}, 38(3):665--708, 2018.

\bibitem{newman2003structure}
Mark~EJ Newman.
\newblock The structure and function of complex networks.
\newblock {\em SIAM review}, 45(2):167--256, 2003.

\bibitem{nguyen2011optimization}
Andy Nguyen, Mirela Ben-Chen, Katarzyna Welnicka, Yinyu Ye, and Leonidas
  Guibas.
\newblock An optimization approach to improving collections of shape maps.
\newblock In {\em Computer Graphics Forum}, volume~30, pages 1481--1491. Wiley
  Online Library, 2011.

\bibitem{peng2007approximating}
Jiming Peng and Yu~Wei.
\newblock Approximating $k$-means-type clustering via semidefinite programming.
\newblock {\em SIAM journal on optimization}, 18(1):186--205, 2007.

\bibitem{perry2017semidefinite}
Amelia Perry and Alexander~S Wein.
\newblock A semidefinite program for unbalanced multisection in the stochastic
  block model.
\newblock In {\em 2017 International Conference on Sampling Theory and
  Applications (SampTA)}, pages 64--67. IEEE, 2017.

\bibitem{pisier2003introduction}
Gilles Pisier.
\newblock {\em Introduction to operator space theory}, volume 294.
\newblock Cambridge University Press, 2003.

\bibitem{rayleigh1896theory}
John William Strutt~Baron Rayleigh.
\newblock {\em The theory of sound}, volume~2.
\newblock Macmillan, 1896.

\bibitem{rayleigh1880xii}
Lord Rayleigh.
\newblock {XII.} on the resultant of a large number of vibrations of the same
  pitch and of arbitrary phase.
\newblock {\em The London, Edinburgh, and Dublin Philosophical Magazine and
  Journal of Science}, 10(60):73--78, 1880.

\bibitem{shi2020message}
Yunpeng Shi and Gilad Lerman.
\newblock Message passing least squares framework and its application to
  rotation synchronization.
\newblock {\em arXiv preprint arXiv:2007.13638}, 2020.

\bibitem{singer2011angular}
Amit Singer.
\newblock Angular synchronization by eigenvectors and semidefinite programming.
\newblock {\em Applied and computational harmonic analysis}, 30(1):20--36,
  2011.

\bibitem{singer2012vector}
Amit Singer and H-T Wu.
\newblock Vector diffusion maps and the connection laplacian.
\newblock {\em Communications on pure and applied mathematics},
  65(8):1067--1144, 2012.

\bibitem{singer2011viewing}
Amit Singer, Zhizhen Zhao, Yoel Shkolnisky, and Ronny Hadani.
\newblock Viewing angle classification of cryo-electron microscopy images using
  eigenvectors.
\newblock {\em SIAM Journal on Imaging Sciences}, 4(2):723--759, 2011.

\bibitem{stewart1998perturbation}
Gilbert~W Stewart.
\newblock Perturbation theory for the singular value decomposition.
\newblock Technical report, University of Maryland, College Park, 1998.

\bibitem{tao2012topics}
Terence Tao.
\newblock {\em Topics in random matrix theory}, volume 132.
\newblock American Mathematical Soc., 2012.

\bibitem{tropp2012user}
Joel~A. Tropp.
\newblock User-friendly tail bounds for sums of random matrices.
\newblock {\em Foundations of computational mathematics}, 12(4):389--434, 2012.

\bibitem{tropp2015introduction}
Joel~A. Tropp.
\newblock An introduction to matrix concentration inequalities.
\newblock {\em Foundations and Trends{\textregistered} in Machine Learning},
  8(1-2):1--230, 2015.

\bibitem{vandenberghe1996semidefinite}
Lieven Vandenberghe and Stephen Boyd.
\newblock Semidefinite programming.
\newblock {\em SIAM review}, 38(1):49--95, 1996.

\bibitem{vershynin2010introduction}
Roman Vershynin.
\newblock Introduction to the non-asymptotic analysis of random matrices.
\newblock {\em arXiv preprint arXiv:1011.3027}, 2010.

\bibitem{vershynin2018high}
Roman Vershynin.
\newblock {\em High-dimensional probability: An introduction with applications
  in data science}, volume~47.
\newblock Cambridge university press, 2018.

\bibitem{yan2017provable}
Bowei Yan, Purnamrita Sarkar, and Xiuyuan Cheng.
\newblock Provable estimation of the number of blocks in block models.
\newblock In {\em International Conference on Artificial Intelligence and
  Statistics}, pages 1185--1194. PMLR, 2018.

\bibitem{zhao2014rotationally}
Zhizhen Zhao and Amit Singer.
\newblock Rotationally invariant image representation for viewing direction
  classification in cryo-{EM}.
\newblock {\em Journal of structural biology}, 186(1):153--166, 2014.

\end{thebibliography}

\end{document}